\newcommand{\abs}[1]{\lvert#1\rvert}
\algrenewcommand\algorithmicrequire{\textbf{Input:}}
\algrenewcommand\algorithmicensure{\textbf{Output:}}
\definecolor{codegreen}{rgb}{0,0.6,0}
\definecolor{codegray}{rgb}{0.5,0.5,0.5}
\definecolor{codepurple}{rgb}{0.58,0,0.82}
\definecolor{backcolour}{rgb}{1,1,1}
\lstdefinestyle{mystyle}{
    backgroundcolor=\color{backcolour},   
    commentstyle=\color{codegreen},
    keywordstyle=\color{magenta},
    numberstyle=\tiny\color{codegray},
    stringstyle=\color{codepurple},
    basicstyle=\ttfamily\footnotesize,
    breakatwhitespace=false,         
    breaklines=true,                 
    captionpos=b,                    
    keepspaces=true,                 
    numbers=left,                    
    numbersep=5pt,                  
    showspaces=false,                
    showstringspaces=false,
    linewidth=.99\textwidth,
    showtabs=false,                  
    tabsize=2,
    numbers=none, 
}
\theoremstyle{plain}
\newtheorem{theorem}{Theorem}[section]
\newtheorem{proposition}[theorem]{Proposition}
\newtheorem{lemma}[theorem]{Lemma}
\newtheorem{corollary}[theorem]{Corollary}
\theoremstyle{definition}
\newtheorem{definition}[theorem]{Definition}
\theoremstyle{remark}
\newtheorem{remark}[theorem]{Remark}
\newcommand{\cmark}{\ding{51}}
\newcommand{\xmark}{\ding{55}}
\newcommand{\window}{\textsc{tcWindow}\xspace}
\newcommand{\sysname}{\textsc{Based}\xspace}
\newcommand{\syntheticname}{\textsc{MQAR}\xspace}
\icmltitlerunning{Simple linear attention language models balance the recall-throughput tradeoff}
\begin{document}

\twocolumn[
\icmltitle{Simple linear attention language models balance the recall-throughput tradeoff}



\icmlsetsymbol{equal}{*}

\begin{icmlauthorlist}
\icmlauthor{Simran Arora}{equal,yyy}
\icmlauthor{Sabri Eyuboglu}{equal,yyy}
\icmlauthor{Michael Zhang}{equal,yyy}
\icmlauthor{Aman Timalsina}{sch}
\icmlauthor{Silas Alberti}{yyy}
\icmlauthor{Dylan Zinsley}{sch}
\icmlauthor{James Zou}{yyy}
\icmlauthor{Atri Rudra}{sch}
\icmlauthor{Christopher Ré}{yyy}
\end{icmlauthorlist}

\icmlaffiliation{yyy}{Stanford University}
\icmlaffiliation{sch}{University of Buffalo}

\icmlcorrespondingauthor{Simran Arora}{simarora@stanford.edu}
\icmlcorrespondingauthor{Sabri Eyuboglu}{eyuboglu@stanford.edu}
\icmlcorrespondingauthor{Michael Zhang}{mzhang20@stanford.edu}

\icmlkeywords{Machine Learning, ICML}

\vskip 0.3in
]



\printAffiliationsAndNotice{\icmlEqualContribution} 

\begin{abstract}
Recent work has shown that attention-based language models excel at \textit{recall}, the ability to ground generations in tokens previously seen in context. However, the efficiency of attention-based models is bottle-necked during inference by the KV-cache's aggressive memory consumption. 
In this work, we explore whether we can improve language model efficiency (\textit{e.g.} by reducing memory consumption) without compromising on recall.
By applying experiments and theory to a broad set of architectures, we identify a key tradeoff between a model's \textit{state size} and recall ability.
We show that efficient alternatives to attention (\textit{e.g.} H3, Mamba, RWKV) maintain a fixed-size recurrent state, but struggle at recall. 
We propose ${\sysname}$ a simple architecture combining linear and sliding window attention. 
By varying ${\sysname}$ window size and linear attention feature dimension, we can dial the state size and traverse the Pareto frontier of the recall-memory tradeoff curve, recovering the full quality of attention on one end and the small state size of attention-alternatives on the other. 
We train language models up to $1.3$b parameters and show that ${\sysname}$ matches the strongest sub-quadratic models (\textit{e.g.} Mamba) in perplexity and outperforms them on real-world recall-intensive tasks by 10.36 accuracy points.
We further develop IO-aware algorithms that enable {\sysname} to provide $24\times$ higher throughput on language generation than FlashAttention-2, when generating 1024 tokens using $1.3$b parameter models.
Overall, {\sysname} expands the Pareto frontier of the throughput-recall tradeoff space beyond prior architectures.
\end{abstract}

\section{Introduction}

The choice of sequence mixer (\textit{e.g.} attention, convolution) in a language model affects both its quality and efficiency~\citep{arora2023zoology,vaswani2018attention}. 
Prior work shows that attention excels at \textit{recall}, the ability to ground generations in previously seen tokens~\citep{olsson2022context,arora2023zoology}. 
On the other hand, the throughput of attention-based models is bottle-necked during training by quadratic compute complexity and during inference by aggressive memory consumption. 
The natural question is: \textit{can we improve the real-world speed and memory-use of language models without comprising on quality? }

\begin{figure*}[ht]
    \includegraphics[width=\textwidth]{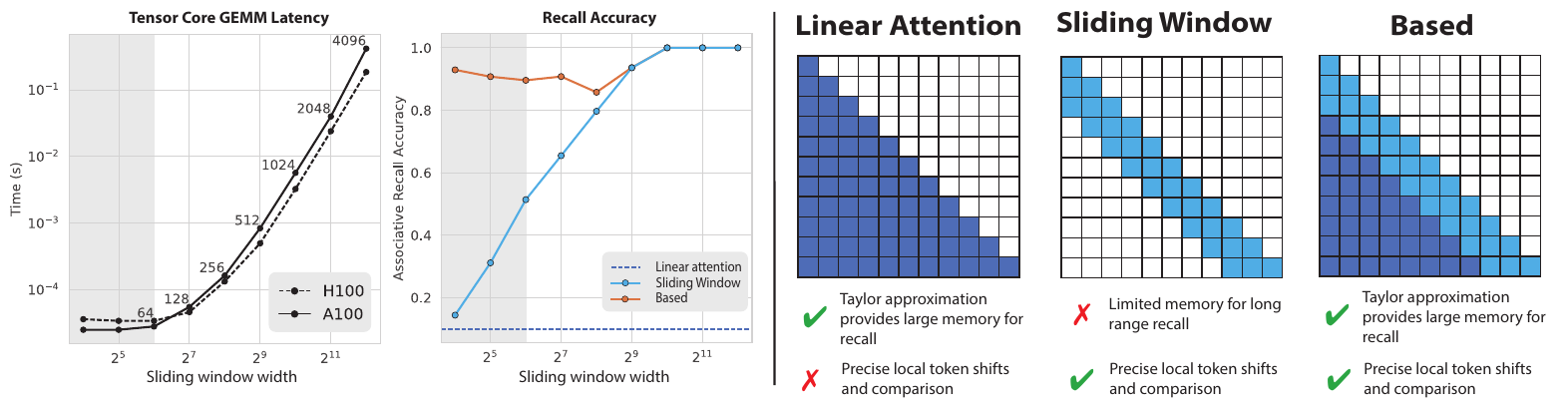}
    \caption{\textbf{\sysname overview.} Combining linear attention with \textit{tiny} sliding window softmax attention (e.g., 64 or 128 tokens in width) enables improved recall accuracy with limited efficiency overhead vs. smaller tile sizes. (\textit{Left}) Time to execute Cutlass GEMMs ($y$) vs. sliding window attention size ($x$), with batch size $512$ 
    on tensor cores.
    (\textit{Center)} Model recall accuracy ($y$) vs. sliding window attention size ($x$). 
    We compare linear attention alone (dark blue), sliding window attention alone (light blue), and their combination (\sysname, orange).  (\textit{Right}) Schematic diagram of \sysname illustrating how the two components complement each other.
    }
    \label{fig:based-design}
    \vspace{-2mm}
\end{figure*}

Recently, a number of architectures have been proposed that enable substantially higher throughput while competing with attention in perplexity~\citep{wang2022pretraining,gu2023mamba,yang2023gated,poli2023hyena,peng2023rwkv}. 
However, coarse metrics like overall perplexity can obscure important differences in model quality. For example, recent work shows that a specific class of architectures, \textit{gated-convolutions}, despite complexity scaling sub-quadratically in sequence length, are asymptotically less efficient than attention at performing  recall~\citep{arora2023zoology}. Building on this analysis, we evaluate a broader class of architectures across real-world recall-intensive tasks and show attention improves over a currently-popular attention-free alternative, Mamba, by 32.2 accuracy points~(\Cref{table:main-quality}). \footnote{Examples of recall-intensive tasks include information extraction, reading comprehension, summarization and code generation. These require using in context information (contrasting memorized information) during generation.}

Motivated by these observations, we explore the Pareto frontier of the tradeoff between high-recall and high-throughput models. 
We evaluate a range of architectures on a popular synthetic \textit{associative recall} task \cite{arora2023zoology, dao2022hungry, olsson2022context}. Since generation throughput is bottle-necked by memory consumption, we vary hyperparameters (e.g. model dimension) that affect the size of the \textit{recurrent} state during generation and demonstrate a fundamental recall-memory tradeoff that holds across architecture classes (\Cref{fig:tradeoff}).  
Attention performs associative recall perfectly, but the recurrent state (\textit{i.e.} the KV-cache) grows linearly with the sequence length. Sliding window attention (SWA) can cap the size of the recurrent state at the cost of worse long-range recall~\citep{mistral7b}.
However, Mamba, a recently proposed SSM architecture expands the Pareto frontier beyond SWA. 
This begs the question: \textit{are there other, perhaps simpler, models that can also expand the Pareto frontier?}

To reduce the memory consumption, we consider using two simple  techniques: SWA and softmax-approximating linear attention.
Our results on language modeling (\Cref{table:main-quality}) and synthetic recall experiments (\Cref{fig:based-design}, center) suggest neither primitive alone suffices to navigate the Pareto frontier.  
\begin{enumerate}[leftmargin=*]
    \item We find that \textit{linear attention} alone struggles to solve associative recall (\Cref{fig:based-design}, center). We hypothesize that this is because linear attention lacks the precision to perform local token shifts and comparisons~\cite{dao2022hungry,arora2023zoology}. 
    \item In \textit{sliding window attention}, associative recall range is limited by the width of the windows (\Cref{fig:based-design}, center). As we increase the window size, the recurrent state grows linearly and has a non-linear affect on speed during parallel training and inference (\Cref{fig:based-design}, left).
\end{enumerate}

We combine these two techniques into a single architecture, which we call \sysname (\Cref{fig:based-design}, right). We find that SWA
and linear attention
complement each other, enabling \sysname to expand the pareto frontier of the recall-memory tradeoff (\Cref{fig:tradeoff}). We suspect that (1) the large recurrent memory of linear attention could help model long-range token interactions in the sequence and (2) SWA handles the precise local shifts needed to perform associative recall.

To make \sysname competitive with SoTA attention~\citep{dao2023flashattention2} and recurrent~\citep{gu2023mamba} models under wall-clock and throughput metrics, we introduce several IO-aware optimizations.
\begin{enumerate}[leftmargin=*]
    \item Despite the theoretical efficiency benefits, \textit{linear attention} implementations are often \textit{slower} than well-optimized attention implementations ~\citep{dao2022flashattention}. To make our attention competitive in real-world wall-clock time and memory usage, we provide hardware-efficient CUDA algorithms for liner attention generation prefill (\Cref{alg:a0_fwd}) and decoding (\Cref{alg:lin_gen}). 
    
    In \sysname, we show that the 2nd-order Taylor approximation of softmax as the linear attention feature map is hardware-efficient. 
    With sequence length $N$ and head dimension $d$, this naïvely requires $\mathcal{O}(Nd^3)$ time and space complexity 
    ~\citep{hedgehog2023,keles2023on}. Relative to the baseline, our algorithm reduces data movement from HBM (slower-to-access memory) to SRAM (faster-to-access memory)
    by $\mathcal{O}(Nd^2)$ bytes and from SRAM to register by $O(Nd^3)$ bytes
    (Section~\ref{sec:efficiency}).

    \item \textit{Sliding window attention} exploits tensor cores, specialized units on modern GPUs for performing matrix multiplications (GEMMs). While prior architectures use large window sizes (e.g. 4096 for Mistral-7B \cite{mistral7b}), we propose to use small $64-128$ windows, guided by hardware properties. Size $64-128$ window sizes keep the tensor  cores occupied \Cref{fig:based-design} (left).
\end{enumerate}

In experiments, we show that ${\sysname}$ competes in quality with strong Transformer++ \citep{touvron2023llama} and SoTA sub-quadratic baselines in models up to the 1.3Bn parameters across language modeling on the Pile language, DNA modeling, and the LM Eval Harness \citep{eval-harness}. Beyond this, ${\sysname}$ outperforms a strong sub-quadratic architecture, Mamba, on the associative recall slice of the Pile and in downstream recall-intensive tasks by $10.36$ accuracy points. In efficiency, ${\sysname}$ enables up to $24\times$ higher throughput than the strong FlashAttention-2 implementation on generation. Code for this work is provided at: \url{https://github.com/HazyResearch/based}.

\vspace{-1mm}
\section{Preliminaries and Related Work}
\vspace{-2mm}
We discuss the key relevant work in this section and provide an extended discussion in \Cref{app:extended_related_work}.
\vspace{-2mm}

\paragraph{Attention}  The \textit{de facto} language modeling primitive, softmax attention~\citep{vaswani2018attention} takes inputs $\bm{x} \in \mathbb{R}^{N \times d}$ of length $N$ and head dimension $d$, and computes outputs $\bm{y} \in \mathbb{R}^{N \times d}$ via the softmax over projections 
\(
\bm{q}, \bm{k}, \bm{v} = \bm{x} \bm{W}_q, \bm{x} \bm{W}_k , \bm{x}\bm{W}_v
\),
\textit{i.e.},
\begin{equation}
\bm{y}_i = \sum_{j=1}^i\frac{\exp(\bm{q}_i^\top \bm{k}_j /\sqrt{d}) \bm{v}_j }{\sum_{m = 1}^{i} \exp(\bm{q}_i^\top \bm{k}_m /\sqrt{d})} 
\label{eq:softmax_attention}
\end{equation}
in the causal case where $\bm{W}_q, \bm{W}_k, \bm{W}_v \in \mathbb{R}^{d \times d}$ are learnable matrices . 
While effective at recall~\cite{arora2023zoology} and efficient to train (Eq~\ref{eq:softmax_attention} is parallelizable on GPUs and $\mathcal{O}(N)$ in memory with recent advances~\citep{dao2022flashattention}), attention remains expensive for generation. For every new output $\bm{y}_n$, we require $nd$ operations over a growing \textit{KV-cache} of prior $\{\bm{k}_i, \bm{v}_i\}_{i=1}^{n-1}$. This results in larger memory consumption and lower-throughput for longer sequences.

\vspace{-2mm}
\paragraph{Efficient attentions} Various works thus try to improve on attention's efficiency without sacrificing quality.
\emph{Sparse attentions} reduce attention's time and memory requirements by only attending over specific strided patterns or local \emph{sliding windows}~\cite{parmar2018image,child2019generating,beltagy2020longformer}. While further popularized in large language models (Mistral,~\citet{mistral7b}), prior works either underperform full attention with sparse patterns that fail to capture dense interactions, or use large window sizes that still permit large KV-caches and subsequent inefficiency. 

Meanwhile, \emph{linear attentions} replace the softmax in standard attention with alternative kernel functions~\cite{katharopoulos2020transformers, choromanski2020rethinking, choromanski2021hybrid, qin2022cosformer,keles2023on}. By removing  the $\exp(\bm{q}^\top \bm{k})$ in favor of feature map dot-products $\phi(\bm{q})^\top\phi(\bm{k})$, these methods use matrix product associativity to compute attention in $\mathcal{O}(Nd^2)$ time and space~\cite{katharopoulos-et-al-2020}. Furthermore, they permit a \emph{recurrent view} for constant memory and $\mathcal{O}(1)$ time per-token generation~\cite{kasai-etal-2021-finetuning,schlag2021linear}. However, present linear attention feature maps either fail to match standard attention on recall or remain expensive to compute~\citep{hedgehog2023}. Linear attentions are slower in wall-clock time compared to optimized attention implementations~\citep{dao2022flashattention}.

The line of work studying how to combine sparse and linear attention into a single layer is also closely related to our work~\citep{zaheer2020bigbird,beltagy2020longformer,chen2021scatterbrain,zeng2022mra}. 

\vspace{-2mm}
\paragraph{Attention alternatives} Finally, various models use attention-free sequence mixers such as state-space models (SSMs)~\cite{gu2021efficiently, sun2023retentive}, gated convolutions~\cite{dao2022hungry,poli2023hyena} and input-dependent recurrences~\cite{peng2023rwkv, gu2023mamba} to rival attention performance while improving its efficiency. However, while recent such models can match attention in overall perplexity, further study suggests they may underperform Transformers on tasks such as recall and in-context learning~\cite{arora2023zoology, akyurek2024incontext}. 

\section{No Free Lunch: Memory-Recall Tradeoff}

In this section, we demonstrate a fundamental tradeoff between a model's memory consumption during inference (\textit{i.e.,} the size of its recurrent state) and its capacity to perform recall. We use a combination of experiments on synthetic data and theoretical analysis.

\begin{itemize}[leftmargin=*]
    \item \textbf{Empirical study of memory-recall tradeoff :} In Section~\ref{sec:motivation_empirical}, we evaluate a number of popular architecture classes (\textit{e.g.} Mamba, Hyena) on a synthetic associative recall task, varying hyperparameters that affect the model's recurrent state size (\Cref{fig:tradeoff}). Within each architecture class, we observe a clear tradeoff: the larger the recurrent state size, the better recall.
    However, for a fixed recurrent state size, performance is not consistent across architectures.
    We observe that some sequence mixers fall well-below the Pareto-frontier. This motivates the design of sequence mixers that can expand the Pareto frontier. 
    \item \textbf{Lower bounds on memory required for recall}: In Section~\ref{sec:motivation_theory}, we lower bound the recurrent state size required to perform exact recall with \textit{any} recurrent model~\Cref{thm: gen-ac}. 
    This analysis reinforces our empirical observations on the throughput-recall tradeoff.
\end{itemize}

\begin{figure}[h]
    \begin{center}
    \includegraphics[width=0.36\textwidth] 
    {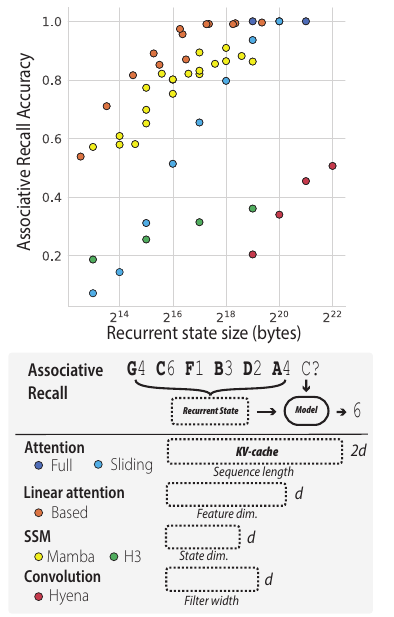}
    \caption{
        \textbf{Throughput (memory) - recall tradeoff.} $x$-axis shows state size (bytes) during generation; $y$-axis shows accuracy on the MQAR
        \syntheticname
        recall task \citep{arora2023zoology}.
        For each architecture, we train several models varying hyperparameters that affect the recurrent state size (\textit{e.g.} model dimension). 
        The plot shows a fundamental tradeoff between the recurrent state size and recall capacity that applies to broad class of models~\cite{arora2023zoology,gu2023mamba, dao2022hungry}.
    }
    \vspace{-2em}
    \label{fig:tradeoff}
    \end{center}
\end{figure}

\subsection{Empirical study of memory-recall tradeoff}
\label{sec:motivation_empirical}
\textbf{Setup.} We use a synthetic AR task called Multi-Query Associative Recall (${\Task}$) \cite{arora2023zoology} to demonstrate the trade-off.
In this task, input sequences consist of a number of key-value pairs followed by queries. For a given query, the model must recall the corresponding key-value pair from earlier in the sequence in order to predict the next token. 
For example, the correct output for input below would be $\text{4, 6, 1, 2, 3}$:
$$
\text{A 4 B 3 C 6}\underbrace{\text{F 1}}_{\mathclap{\textbf{Key-Value}}}  \text{E 2} \rightarrow \text{A ? C ?} \underbrace{\text{F ?}}_{\mathclap{\textbf{Query}}} \text{E ? B ?}
$$
We train on sequences of length 256 tokens containing between 4 and 64 key-value pairs. During evaluation, we measure accuracy on sequences of length 1,024 tokens containing between 4 and 256 key-value pairs.

We train and evaluate six sequence mixers: attention~\cite{vaswani2018attention}, sliding window attention~\cite{beltagy2020longformer}, Mamba~\cite{gu2023mamba}, H3~\cite{dao2022hungry}, Hyena~\cite{poli2023hyena}, and \sysname.  
For each, we vary hyperparameters that affect the memory consumption during inference (e.g., in sliding window attention we vary the window width).
We measure how \syntheticname accuracy varies with the size of the recurrent state and \Cref{app:exp-architectures} contains details on how state sizes are calculated.

\Cref{fig:tradeoff,fig:feature-map-tradeoff} can be reproduced or extended to new architectures using the scripts provided at \url{https://github.com/HazyResearch/zoology}.

\paragraph{Results}
In Figure \ref{fig:tradeoff}, we demonstrate a fundamental tradeoff between recurrent state size and accuracy on ${\Task}$ that holds within and across architecture classes. 
Within each architecture class (\textit{e.g.} H3 models), increasing the recurrent state size almost always leads to an improvement in accuracy. 
Across architecture classes, we see a tradeoff as well. Attention achieves perfect recall accuracy, but its recurrent state size grows with the length of the sequence. Other architecture classes like Mamba and H3 admit models with much smaller recurrent states, but these models have limited recall capacity.

Given a fixed recurrent state, not all architectures have the same recall capacity. 
Among architectures proposed in prior work, Mamba makes the best use of a limited memory budget. Notably, architectures with a convolutional view (\textit{e.g.} Hyena and H3) fall well below the Pareto frontier. Our proposed architecture, ${\sysname}$ (introduced in \Cref{sec:architecture}), expands the Pareto-frontier beyond Mamba. By varying hyper-parameters that determine its state size (\textit{e.g.} feature dimension and model dimension), we can smoothly navigate the tradeoff between efficient models and memory-hungry models with high recall capacity.

\subsection{Theoretical Analysis}
\label{sec:motivation_theory}
Our theoretical analysis provides further insight into the empirical observations described above. First, using results from communication complexity theory, we show that the recall capacity of \textit{any} causal model (\textit{e.g.} Mamba, Attention) is bounded by the size of its recurrent state~(\Cref{thm: space-reg} in \Cref{sec: theory}). 
\begin{theorem}
    Any recurrent model\footnote{For Mamba~\cite{gu2023mamba}, see \Cref{cor: mamba-ar}.} depending causally on input $\vu \in \{0,1\}^{N \times d}$ requires $\Omega(N)$-bits\footnote{Here, we need the entries of the state to be bounded.} in state size to solve $\Task$.
\end{theorem}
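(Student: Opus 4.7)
The plan is to reduce the one-way communication complexity of the \textsc{Index} problem to solving $\Task$ with a recurrent model, and then invoke the classical $\Omega(n)$ lower bound on the one-way randomized communication complexity of \textsc{Index}. Concretely, \textsc{Index} is the problem in which Alice holds $x \in \{0,1\}^n$, Bob holds an index $i \in [n]$, and Bob must output $x_i$ after receiving a single message from Alice. Any such protocol requires $\Omega(n)$ bits of communication.

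The reduction goes as follows. Given an $\Task$-solving recurrent model $M$ with per-step hidden state of size $s$ bits, I would have Alice encode her string $x$ as a sequence of $n$ key--value pairs: for each $j \in [n]$, she places the pair $(\text{key}_j, x_j)$ into consecutive positions of the input stream, where $\text{key}_j$ is a distinct identifier for position $j$ (this fits in the $\{0,1\}^d$ token alphabet as long as $d = \Omega(\log n)$, which I would state as a mild assumption; otherwise one pads the width dimension). Alice feeds this prefix through $M$ and obtains a hidden state $h_n$ of size $s$ bits. She then sends $h_n$ as her single message to Bob. Bob, holding $i$, appends the query token $\text{key}_i$ and continues running $M$ from state $h_n$; by assumption $M$ then emits $x_i$. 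This is a valid one-way protocol using exactly $s$ bits of communication, so $s = \Omega(n)$. Choosing $n = \Theta(N)$ key--value pairs in the stream of length $N$ yields $s = \Omega(N)$, as desired. For Mamba and related gated-recurrence models, the corollary in \Cref{cor: mamba-ar} then follows by plugging in their specific state dimension.

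The main step that needs care is causality and correctness: I must argue that the state $h_n$ is, by definition of a \emph{causal} recurrent model, a deterministic (or randomized, under shared randomness) function of the prefix alone, so that transmitting it and resuming the recurrence from the suffix faithfully simulates running $M$ on the concatenation. This is essentially the definition of a recurrent model, but stating it cleanly is what makes the communication reduction go through. I would also mention that the hypothesis ``entries of the state are bounded'' from the footnote is needed precisely so that encoding $h_n$ costs only $s = O(\dim(h) \cdot \log(\text{range}))$ bits---without boundedness a single real-valued coordinate could smuggle arbitrary information.

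The hard part, and the one place I would be most careful, is the parameter accounting: showing that the reduction yields $n = \Theta(N)$ rather than, say, $n = \Theta(N/\log N)$, so the conclusion really is $\Omega(N)$ bits and not a weaker polylog-loss bound. This hinges on the encoding of keys and values into the $d$ input channels; with $d \geq \log_2 n + 1$ one can pack a distinct key and a one-bit value into each token, and the $N$ stream positions carry $\Theta(N)$ independent bits of Alice's input. I would spell out this encoding explicitly, verify that the resulting input lies in $\{0,1\}^{N\times d}$ as the theorem requires, and then close by invoking the \textsc{Index} lower bound.
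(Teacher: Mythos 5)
Your proposal is correct and follows essentially the same route as the paper: a reduction from the one-way \textsc{Index} problem in which Alice streams the key--value pairs $\{(j,\bm{x}_j)\}$ through the recurrent model, sends the resulting state to Bob, and Bob resumes the recurrence on the query token to recover $\bm{x}_i$, contradicting the $\Omega(N)$ \textsc{Index} lower bound if the state were $o(N)$ bits. Your added remarks on causality, boundedness of state entries, and the $d \ge \log_2 n + 1$ encoding are consistent with (and slightly more explicit than) the paper's treatment.
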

This result suggests that the tradeoff observed in \Cref{fig:tradeoff} is fundamental, not an artifact of architectural quirks. 

Next, we focus on \textit{gated-convolutions}, a broad class of architectures built from gating and convolutions (\textit{e.g.} H3, Hyena, RWKV v4). 
To make progress in theoretically analyzing the broad set of gated convolution proposals, prior work develops a \textit{canonical} gated-convolution, referred to as \BaseConv\, which can provably simulate \textit{any} architecture built from gating and convolution primitives.

Building on this work, we show that \BaseConv\ cannot solve $\Task$ in constant-many layers~(\Cref{thm: mqar-1hot} and \Cref{thm: mqar-phot} in \Cref{sec: theory}).

\begin{theorem}
\label{thm:loglogc-lb}
Given an input sequence $\vu \in \{0,1\}^{3\inputLength\times \modelDim}$, where $\inputLength$ and $\modelDim$ denote the sequence length and head dimension, respectively, a data-independent \BaseConv\ model needs $\log(2d)$-layers to solve $\Task$ for $d = \log_2(c)$, where $c$ denotes the vocabulary size\footnote{That is, each token from the vocabulary has the natural binary encoding in $\{0,1\}^{\log_2(c)}$}.
\end{theorem}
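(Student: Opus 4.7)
My plan is to track the multivariate polynomial degree of each output coordinate as a function of the binary input $\vu \in \{0,1\}^{3N\times d}$, and to prove two matching claims: (a) each data-independent \BaseConv\ layer at most doubles this degree, and (b) solving $\Task$ on the natural binary encoding forces the model to realize a polynomial of degree at least $2d$ at some query position. Combining these gives $2^L \geq 2d$, i.e.\ $L \geq \log(2d)$.

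\textbf{Step 1: degree growth per layer.} A data-independent \BaseConv\ layer has the schematic form $y = (W_1 u + b_1) \odot (h_\ast u)$, where $W_1$, $b_1$, and the convolution kernel $h$ are all fixed (they do not depend on $\vu$). View each output entry as a multilinear polynomial in the input bits. Linear projections and (fixed) convolutions act as linear combinations of coordinates, so they preserve polynomial degree; the elementwise product of two polynomials of degrees $\leq d_1, d_2$ has degree $\leq d_1 + d_2$. By induction on the number of stacked layers, starting from degree $1$ at the input, after $L$ layers every output coordinate is a polynomial of total degree at most $2^L$ in the $3Nd$ input bits.

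\textbf{Step 2: $\Task$ requires degree $\geq 2d$.} Restrict attention to the subclass of $\Task$ instances with a single key--value pair $(k,v)$ in the prefix and a single query token $q$; the model's output at the query position must equal $v$ when $q=k$ and must be ``unused'' when $q \neq k$, simultaneously across all $q,k,v \in \{0,1\}^d$. This forces some output coordinate to compute, as a function of $(k,q,v)$, an expression whose multilinear expansion contains the top-degree monomial of the equality indicator $\prod_{i=1}^{d}\bigl(k_i q_i + (1-k_i)(1-q_i)\bigr)$ multiplied by a coordinate of $v$. A standard multilinear-polynomial uniqueness argument on $\{0,1\}^{2d}$ shows this indicator cannot be represented by any lower-degree polynomial; hence the realized polynomial has degree at least $2d$.

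\textbf{Combining and main obstacle.} Putting the two steps together, $2^L \geq 2d$ and the claimed bound follows. The main obstacle is making Step~2 fully rigorous: one must cleanly separate the roles of $k,q,v$ in the output polynomial so that the degree lower bound for equality is not circumvented by absorbing bits into the value channel or by exploiting the multi-query structure. The cleanest route I would attempt is to fix $v$ to be a chosen linear function of auxiliary input bits (so $v$ contributes degree exactly $1$) and to argue in the remaining $2d$ key/query variables that the restriction of the model's output to the query position must agree with $v \cdot \mathbb{1}[k=q]$, whose unique multilinear representative has degree $2d+1$ --- thereby still forcing degree $\geq 2d$ purely in the key/query bits and completing the lower bound.
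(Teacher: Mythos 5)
Your proposal follows essentially the same route as the paper's proof: the paper's Lemma on \BaseConv\ degree growth is exactly your Step 1 (each layer at most doubles degree, so $L$ layers give a multilinear representative of degree at most $2^L$), and its Lemma on $\Task$ writes down the same equality-indicator polynomial $\prod_{k}\bigl(\bm{u}_{ik}\bm{u}_{jk} + (1-\bm{u}_{ik})(1-\bm{u}_{jk})\bigr)$ times a value coordinate, of degree $2d+1$, then invokes uniqueness of the multilinear representative over Boolean inputs to force $2^L \geq 2d$. The rigor gap you flag in your Step 2 is handled in the paper by exactly the mechanism you sketch (the value bit contributes degree one and the key/query block contributes the unavoidable degree $2d$), so your argument is correct and matches the paper's.
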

\begin{remark}
For a class of input encodings that generalizes one-hot encodings termed as $p$-hot encodings (\Cref{def: phot}), input-dependent \BaseConv\ needs at least $\floor{\log(2p)}$-layers to solve ${\Task}$ where $d = p \cdot \sqrt[p]{c}$.
\end{remark}

The above result is not as strong when $c\ll N$, for which we prove a complementary lower bound (\Cref{thm: poly-layer} in \Cref{sec: theory}):
\begin{theorem}
\label{thm: poly-layer-main}
    Given an input $\vu \in \{0,1\}^{\inputLength \times \modelDim}$ to the $\Task$ with any encoding such that $\log{c} \le d \le 2^{(\log{N})^{1-\epsilon}}$ for $\epsilon > 0$,
    and $c$ possible tokens from the vocabulary with $c \le N$,
    a data-independent \BaseConv\ model with model parameters taking $O(\log{N})$ bits needs $\Omega(\epsilon\log\log{N})$ layers to solve AR.
\end{theorem}

In contrast, \citet{arora2023zoology} show that attention solves ${\Task}$ in constant-many layers. 
This result helps to explain why the gated-convolution architectures (H3 and Hyena) in \Cref{fig:tradeoff} lie below the Pareto frontier established by newer architectures.

Note that \Cref{thm:loglogc-lb} and \Cref{thm: poly-layer-main} imply that we need $\Omega(\max(\log\log{c},\log\log{N}))$ many \BaseConv\ layers to solve MQAR. One might wonder if we can improve this lower bound. In \Cref{MQAR:math}, we show that this is the best possible lower bound by showing that for certain settings, $O(\max(\log\log{c},\log\log{N}))$ \BaseConv\ layers {\em are} enough to solve MQAR.

Finally, we show that we can simulate linear attention~\citep{katharopoulos2020transformers}, the foundation of ${\sysname}$, using \BaseConv\ \citep{arora2023zoology} with a poly-log blowup in the number of layers~(\Cref{prop: based-sim} in \Cref{sec: theory}), pointing to the relative efficiency of linear attention over gated-convolution architectures.

\section{The ${\sysname}$ Architecture}
\label{sec:architecture}

In this section, we introduce \sysname. Our objective in designing this architecture is to demonstrate how we can navigate the Pareto-frontier of the memory-recall tradeoff using well-known architectural building blocks.

Softmax attention excels at recall, but since its recurrent state, the KV-cache, grows unconstrained with the length of sequence, it is stuck in the upper right quadrant of \Cref{fig:tradeoff}.
We study two simple approaches for constraining the size of attention's recurrent state: linear attention and sliding window attention.
The recurrent state size of linear attention (\textit{i.e.} attention without softmax) does not grow with the sequence length and can be modulated by changing simple hyperparameters~\cite{katharopoulos2020transformers}. With sliding window attention, we cap the recurrent state size to be the width of the window.

However, our experiments on real-world language modeling (\Cref{table:ablations}) and synthetic associative recall (\Cref{fig:based-design} middle) suggest that neither primitive alone suffices to navigate the pareto frontier. 
Linear attention lacks the precision to perform local token shifts and comparisons \citep{fu2023simple,arora2023zoology}. 
In sliding window attention, associative recall range is limited by the width of the windows (Figure 2, center). As we increase the window size, the recurrent state grows linearly and has a non-linear effect on speed during parallel training and inference (Figure 2, left).

\sysname combines (1) softmax-approximating linear attention applied globally and (2) exact softmax attention applied locally in small sliding windows  (\Cref{fig:based-design}, right). This allows us to use softmax attention in surprisingly small sliding windows (\textit{e.g.,} $64-128$ tokens) that recover $90.8\%$ of full softmax attention's recall accuracy at 1e-5$\times$ its latency. 

\subsection{Taylor Linear Attention}
\label{sec:taylor_series_linear_attention}
By approximating softmax attention using linear feature maps, we can constrain the size of the recurrent state while maintaining global token interactions (\textit{i.e.} each token depends on every token before it in the sequence).

\citet{katharopoulos2020transformers,tsai2019transformer,choromanski2020rethinking} show that we can select a feature map $\phi: \mathbb{R}^d \rightarrow \mathbb{R}^{\tilde{d}}$ such that $\phi(\bm{q}_i)^\top \phi(\bm{k}_j) \approx \exp(\bm{q}_i^\top \bm{k}_j / \sqrt{d})$. We can then rewrite the formula for softmax attention in \Cref{eq:softmax_attention} as

\begin{equation}
\label{eq:linear_attention} 
    \sum_{j = 1}^i\frac{ \phi(\bm{q}_i)^\top  \phi(\bm{k}_j) \bm{v}_j 
    }{\phi(\bm{q}_i) \sum_{j = 1}^i \phi(\bm{k}_j)}
    = \frac{\phi(\bm{q}_i) \sum_{j = 1}^i \big( \phi(\bm{k}_j)^\top \bm{v}_j \big)
    }{\phi(\bm{q}_i) \sum_{j = 1}^i \phi(\bm{k}_j)}
\end{equation}
where every query attends to every past key in $\mathcal{O}(Nd^2)$ time and space complexity. 
Furthermore,~\citet{katharopoulos-et-al-2020} show that linear attention has a fixed size recurrent state during generation. Letting $\bm{s}_i = \sum_{j=1}^i \phi(\bm{k}_j)^\top \bm{v}_j$ and $\bm{z}_i = \sum_{j=1}^i \phi (\bm{k}_j)^\top$ be a ``KV-state'' and ``K-state'' respectively, we can compute \Cref{eq:linear_attention} as
\[
    \bm{s}_{i} = \bm{s}_{i-1} + \phi(\bm{k}_i)^\top \bm{v}_i,\;\;\; 
    \bm{z}_i = \bm{z}_{i - 1} + \phi (\bm{k}_i)^\top,
\]
\begin{equation}
\label{eq:linear_attention_recurrent_3} 
    \bm{y}_{i} = \frac{\phi(\bm{q}_i)\bm{s}_i}{ \phi(\bm{q}_i)\bm{z}_i}
\end{equation}
where $\bm{s}_{i} \in \mathbb{R}^{d \times \tilde{d}}$ and $\bm{z}_i \in \mathbb{R}^{\tilde{d}}$.

\paragraph{Feature map.} To approximate $\exp(\bm{q}_i^\top \bm{k}_j / \sqrt{d})$, we use the $2^\text{nd}$-order Taylor series feature map, picking $\phi: \mathbb{R}^d \rightarrow \mathbb{R}^{d^2} $ such that
\begin{equation}
    \label{eq:taylor_feature_map}
    \phi(\bm{q}_i)^\top \phi(\bm{k}_j) = 1 + \bm{q}_i^\top \bm{k}_j + \frac{(\bm{q}_i^\top \bm{k}_j)^2}{2}
\end{equation}

While~\citet{hedgehog2023} note that picking a feature map with $\tilde{d} = d^2$ results in linear attention with $\mathcal{O}(Nd^3)$ time and space complexity and large recurrent state of size $O(d^3)$, we can tradeoff efficiency for recall capacity by projecting queries and keys to smaller dimensions  \textit{i.e.,} $\bm{W}_q, \bm{W}_k \in \mathbb{R}^{d \times d'}$ with $d' = 16$. By changing $d'$ we modulate the size of the recurrent state.

\begin{figure}[h]
    \begin{center}
    \includegraphics[width=0.31\textwidth]
    {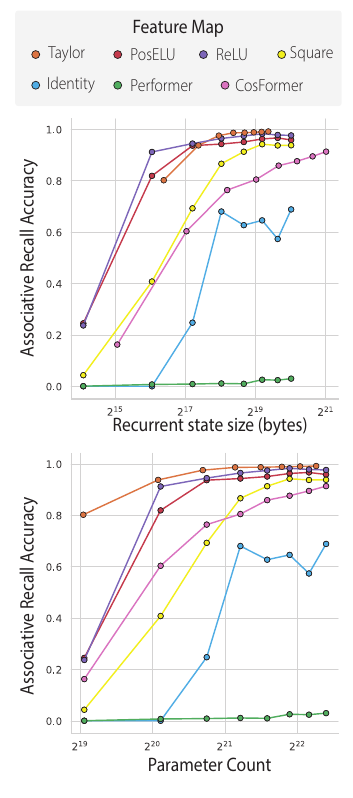}
    \vspace{-5mm}
    \caption{ \textbf{Linear attention feature maps on AR.} $x$: state size (bytes) during generation or param. count; $y$: \syntheticname accuracy. This  setting is harder than \cref{fig:tradeoff} (256 key-value pairs). }
    \vspace{-2.0em}
    \label{fig:feature-map-tradeoff}
    \end{center}
\end{figure}

\textit{How does the choice of feature map affect the memory-recall tradeoff?} Prior work demonstrates the strong performance of the Taylor feature map on associative recall~\citep{hedgehog2023}. Building on this analysis, we evaluate a broad set of feature maps 
($\phi_\text{ReLU}(x) = \max(x, 0)$, $\phi_\text{PosELU}(x) = \text{ELU}(x) + 1$, $\phi_\text{Square}(x) = x^2$, $\phi_\text{Identity}(x) = x$, $\phi_\text{CosFormer}$ as defined in \citep{qin2022cosformer}, and $\phi_\text{Performer}$ as defined in \citep{choromanski2020rethinking}) using the experimental setup described in \Cref{sec:motivation_empirical}.
In \Cref{fig:feature-map-tradeoff} (top), we plot the memory-recall tradeoff curves for these feature maps. The Taylor series feature map, along with the simple $\phi_\text{PosELU}$ and $\phi_\text{ReLU}$ feature maps, sits at the Pareto frontier. 
One advantage of the Taylor feature map over these alternatives is that it expands the recurrent state size (improving recall capacity) without changing the number of parameters. As shown in \Cref{fig:feature-map-tradeoff} (bottom), the Taylor series feature map requires fewer parameters than alternatives to achieve high recall capacity. This analysis and the ablations in \Cref{table:ablations} informed our decision to use the Taylor approximation, though other simple feature maps may be effective as well.

\subsection{Local Exact Attention with Small Sliding Windows}
\label{sec:tensor_core_window_attention}
To efficiently model fine-grained local interactions, ${\sysname}$ uses sliding window attention (SWA) with window sizes set at small multiples of $16$ (up to 64 tokens). Similar to past (causal) implementations~\citep{child2019generating,beltagy2020longformer}, for window size $w$ each query $\bm{q}_i$ only attends to past keys $\{\bm{k}_{i-w+1}, \ldots, \bm{k}_i\}$. This enables $\mathcal{O}(Nw)$ time and space complexity for linear scaling in sequence length $N$, with a $w$-sized KV-cache for constant-memory generation. 

However, unlike past SWA that keep $w$ at sizes 256~\citep{parmar2018image} to 4096~\citep{mistral7b}, \sysname uses only $w \leq 128$ to best exploit modern GPUs. In \Cref{sec:efficiency}, we discuss how this ``Tensor core-aware'' window (\window) achieves 1e-5$\times$ the latency than the $w=4096$ windows in modern LLMs (\textit{e.g.,} Mistral 7B~\citep{mistral7b}).

While the small $w$ in \window enable fast local and exact attention, it presents a challenge for long range modeling. With just $w=64$, for every layer of $w=4096$ Mistral sliding window attention we would require $64$ layers of \sysname to achieve the same receptive field. Controlling for model depth and sequence length, \Cref{fig:tradeoff} indeed shows smaller $w$ linearly decreasing in associative recall accuracy. \sysname's global \emph{linear attention} described above overcomes the lack of long-range modeling presented with low $w$. 

Finally, we find that including gated convolution layers with short convolutions (e.g., filter size $3$) gives additional benefit over only using \window layers. Short convolutions can help perform local, precise shifts for token comparisons since they operate over the full sequence, while \window does not. These local mixers can complement one-another.

Additional architectural details for {\sysname} are discussed in \Cref{app:extended-arch} and the hybridization of layers used in experiments are provided in \Cref{tab:based-training-details}. We include ablations of architectural choices in \Cref{table:ablations} and evaluate the overall quality and efficiency of {\sysname} in \Cref{sec:results}.

\section{Efficient Implementation}
\label{sec:efficiency}
In this section we focus on the efficiency of ${\sysname}$. A naïve implementation is \textit{slower} than the most efficient standard attention implementations (shown in \Cref{fig:forward_throughput}) as it requires large amounts of high latency memory movement. We first describe preliminaries of the GPU execution model and memory hierarchy. We next present the baseline and our hardware-aware algorithms for linear attention in \Cref{sec:eff-tay} and for sliding window attention in \Cref{sec:eff-slide}.

\subsection{Preliminaries} GPU operations, or \textit{kernels}, are executed by many parallel threads. GPU \textit{streaming multiprocessors} launch \textit{thread blocks} at the software level. These blocks are divided into \textit{warps} (\textit{e.g.} 32 threads) that are assigned to cores at the hardware level. Threads need to read inputs into their \textit{registers} to perform computations and write  the outputs. The time taken to read and write is referred to as the IO cost.

Operations could either be memory or compute bound, depending on the time to load data vs. perform computations on loaded data. In designing our IO-aware algorithms, we would like to exploit two key properties of modern GPUs. First, tensor core units (fast matrix multiply units) achieve 312 TFLOP/s speeds relative to 19 TFLOP/s for the non-matrix multiply cores. Second, GPUs face a memory hierarchy with large amounts of slow-to-access memory and smaller amounts of fast-to-access memory. 
For instance, the hierarchy on a modern NVIDIA 80GB A100 GPU is:  80GB of HBM with 2 TB/s bandwidth, 80MB of L2 cache, 192KB of L1 cache / shared memory (implemented via SRAM) with 19 TB/s bandwidth per SM, and 256 KB of register file per SM \cite{nvidia2022nvidia}. Register memory is private to an executing thread, so threads need to write to shared memory to communicate data to other threads in the block. To reduce the IO cost, a key principle is to \textit{fuse} multiple operations on the same data slice while it's in fast memory before writing it back to slow memory.  

\vspace{-2mm}
\subsection{Taylor Exponential Linear Attention}
\label{sec:eff-tay}
Despite the theoretical efficiency, the popular linear attention implementations are less efficient than well-optimized softmax attention implementations when measured in real-world wall-clock time and memory usage \citep{dao2022flashattention}. We next present hardware-aware algorithms to make Taylor linear attention efficient. We focus on two operations: (1) prefill (this section), corresponding to processing the prompt during generation or the forward pass during training,  and (2) next token prediction during generation (\Cref{app:implementation}), which also requires updating the recurrent hidden state state. 

In this section, we refer to the batch size as $B$, number of heads as $H$, head dimension as $d$, sequence length as $N$ and feature dimension as $d'$, following \Cref{sec:architecture}. For ease of notation, let $D = 1+d'+d'^2$ in this section. Additional details for these algorithms are in \Cref{app:implementation}

\vspace{-2mm}
\paragraph{Baseline Implementation} 
The naïve implementation detailed in Appendix \ref{app:implementation} only uses a CUDA kernel to compute the causal dot product between $\bm{q}$, $\bm{k}$, and $\bm{v}$ projections \citep{vyas_et_al_2020}, but computes the feature maps in python (non IO-aware).
This is inefficient given the computation required for the feature map computation.

\textit{Analysis} In overall IO cost, ignoring the input and output projections in the linear attention layer, this procedure requires $2BHND$ bytes for writing featurized $\bm{q}, \bm{k}$ to HBM. During the causal dot product, this requires $2BHND + BHNd$ bytes to read $\bm{q}, \bm{k}, \bm{v}$ tiles and $BHNd$ bytes to write the result. Throughout the computation, $\mathcal{O}(BHNDd)$ bytes (note this is the shape $KV$ state during the forward pass) are read in and out of thread registers to SRAM to update the running output and $KV$ state at 19TB/s bandwidth.

\paragraph{Algorithm} Our kernel computes \textit{both} the feature map and causal dot product, detailed in \Cref{alg:a0_fwd}. Here we describe the key insights. 
First, to handle \textit{causality} in the dot-product computation, for each \textit{tile} of output $y_i \in \mathbb{R}^{16 \times d}$, we split the computation as shown, where $\bm{q_i}, \bm{k_i}, \bm{v_i}$ are also now tiles of 16 tokens, handled in parallel by the kernel.
$$\bm{y_i} = \mathrm{Causal}(\bm{q_i}^T \bm{k_i})\bm{v_i} + \bm{q_i}\sum_{j=0}^{i-1} (\bm{k_j}\bm{v_j})$$
where the first term uses the quadratic attention view and requires applying causal masking. The second term uses the linear view and its causality has already been handled. 

Second the large KV-state, $\sum_{j=0}^{i-1} (\bm{k_j}\bm{v_j})$, $\in \mathbb{R}^{D \times d}$, needs to be stored as we iterate over the length-$16$ tiles. By partitioning across workers (warps), we can store the state \textit{in thread registers} (fastest memory). The partitioning is restricted by (1) each warp has a limited quantity of threads and (2) warps cannot access the thread memory of other warps.

\textit{Analysis}  In IO cost, again ignoring the input and output projections in the linear attention layer, our procedure requires $2BHNd'$ bytes for reading $q, k$ and $2BHNd$ bytes for reading $v$ and writing output $y$ between HBM and SRAM. Overall, our algorithm avoids in HBM $\mathcal{O}(2BHND)$ bytes in HBM to SRAM data movement. We further improve upon the baseline by storing the KV-state \textit{in-register} to avoid the $\mathcal{O}(BHNDd)$ bytes in SRAM to register data movement.

\noindent End-to-end benchmarks for \sysname implemented with these IO-aware algorithms are provided in \Cref{sec:results}. Micro-benchmarks for each kernel against the baseline implementations are provided in \Cref{app:implementation}.

\vspace{-2mm}
\section{Results}
\label{sec:results}
\begin{table*}[]
\centering
\scriptsize
\begin{tabular}{lccccccccccc}
\toprule
\multirow{3}{*}{Architecture} &
  \multirow{3}{*}{Params/Tokens} &
  \multicolumn{2}{c}{\textbf{Efficiency}} &
  \multicolumn{3}{c}{\textbf{Language Modeling (Pile)}} &
  \multicolumn{2}{c}{\textbf{Info. Extraction}} &
  \multicolumn{1}{c}{\textbf{QA}} & 
  \multicolumn{1}{c}{\textbf{Common}} 
   \\

   &
   &
   Prefill&
   Generate&
  All &
  AR &
  Other &
  SWDE &
  FDA &
  \multicolumn{1}{c}{SQUAD} &
  LM-Evals
   \\

   &
   &
   Tok./ms $\uparrow$&
   Tok./ms $\uparrow$&
  Ppl. $\downarrow$ &
  Ppl. $\downarrow$ &
  Ppl. $\downarrow$ &
  Acc $\uparrow$ &
  Acc $\uparrow$ &
  F1 $\uparrow$ &
  Avg. Acc. $\uparrow$ \\ \hline \hline
Transformer++ & 1.33b/10b       &103.50  &0.99     &\textbf{7.26}     &\textbf{1.74}  &\textbf{8.10}  &\textbf{71.92} &\textbf{73.23} &\textbf{36.19}  &\textbf{47.64}      \\
\sysname     & 1.35b/10b        &\textbf{161.71}  &\underline{24.28}        &\underline{7.43}  &\underline{1.87}  &\underline{8.26}   &\underline{48.06}  &\underline{24.41} &\underline{30.46}            &46.68      \\
Mamba     & 1.32b/10b           &\underline{112.22}    &\textbf{25.69}        &7.48     &1.96  &8.29  &34.74 &12.89 &28.20    &\underline{46.84}      \\ \hline
Transformer++ & 1.33b/50b  & 103.50  & 0.99  & \textbf{6.28}   & \textbf{1.65} & \underline{6.82} & \textbf{76.50} & \textbf{80.47} & \textbf{43.47} &  53.33 \\
\sysname     & 1.35b/50b      &\textbf{161.71}  &\underline{24.28}   &  6.30  & \underline{1.71} & \underline{6.82} & \underline{64.45} & \underline{30.40} & \underline{41.62} &  \textbf{53.81} \\
Mamba     & 1.32b/50b         & \underline{112.22}    &\textbf{25.69}      &   \textbf{6.28}  & 1.74 & \textbf{6.78} & 52.75 & 18.51 & 35.92 & \underline{53.50} \\
\hline
Transformer++ & 360m/10b        &207.77  & 23.82       & \textbf{8.39}    &\textbf{1.87}  &\textbf{9.42}  &\textbf{57.97} &\textbf{58.00} &\textbf{27.18} &\textbf{44.08}       \\
\sysname      & 363m/10b        &\textbf{514.57}      &\textbf{47.23}     & 8.65    &\underline{2.07}   & 9.64   &\underline{29.16} &\underline{11.71} &\underline{25.07}  &43.03        \\
Mamba         & 358m/10b        &\underline{267.09}   &\underline{39.95}  & \underline{8.64}    &2.21  &\underline{9.59}  &{23.67} & \underline{6.53}  &24.06   & \underline{43.51}      \\

GLA           & 362m/10b        &---     &---     & 9.12    &2.36  &10.68 &---      &---      &---          &---      \\
RWKV v5       & 362m/10b        &---     &---     &9.79     &2.40  &10.90 &---     &---      &---           &---      \\ 
H3            & 362m/10b        &---     &---     & 10.60    & 4.88  & 11.23 & 6.75  & 0.64  & 7.87  & 39.35      \\ \hline

Transformer++ & 360m/30b & 103.50  & 0.99 &  \textbf{7.68}  & \textbf{1.80} & \underline{8.40} & \textbf{70.75} & \textbf{63.79} &  25.07 & 44.75 \\
\sysname      & 363m/30b      &\textbf{161.71}  & \underline{24.28}   &  7.77  & \underline{1.93} & 8.46 & \underline{45.01} & \underline{16.45} & \textbf{32.67} & \underline{45.36} \\
Mamba & 358m/30b & \underline{112.22} & \textbf{25.69}  &  \underline{7.73}  & 2.02 & \textbf{8.38} & 27.63 & 8.71 & \underline{26.71} & \textbf{45.62} \\ \hline

\end{tabular}
\caption{\textbf{Evaluation of pre-trained language models.} Models were trained on the same sets of $10$b to $50$b tokens drawn from the Pile corpus~\citep{pile}. 
We report inference throughput on $4,096$ tokens ($16,384$ for 360m param.) of pre-fill and $2,048$ tokens of recurrent generation for a subset of architectures. 
We report language model perplexity on the overall Pile test set as well as perplexity on two slices of the test set: associative recall tokens and other tokens (see \Cref{sec:results-quality}, \citep{arora2023zoology}). 
We report zero-shot performance on three \textit{recall-intensive} tasks: information retrieval on SWDE and FDA as well as question answering on SQUAD. 
Finally, we report average performance on the set of \textit{LM Eval Harness} \citep{eval-harness} common sense reasoning tasks used in \citet{gu2023mamba}, details in \Cref{sec:appendix-downstream-evaluations}. 
These tasks do not require significant recall capacity because the input text is typically very short.  
See \Cref{sec:results-quality}. 
Some proposed architectures that do not implement recurrent views for generation are marked with a ---. 
}

\label{table:main-quality}
\vspace{-3mm}
\end{table*}
\begin{figure*}[ht]
    \vspace{2mm}
    \includegraphics[width=\textwidth]{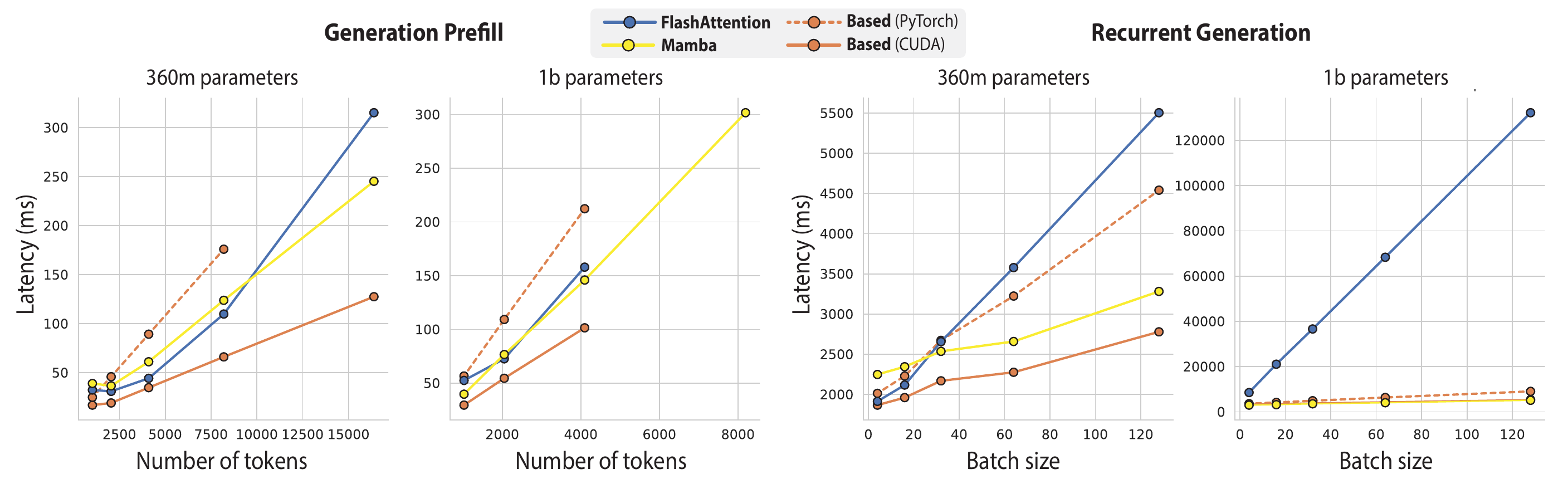}
    \caption{(\textbf{Left}) Throughput numbers for the varied prefill sequence lengths at a fixed batch size of $2$. \textbf{Right} generation throughput at varied batch sizes at a fixed generation length of $1024$ tokens. The $y$-axis shows the in latency (ms). Lines are cutoff when the model runs out of memory. We show results for both 360M and 1.3Bn, and all numbers are computed on a single NVIDIA H100 GPU.}
    \label{fig:forward_throughput}
    \vspace{-4mm}
\end{figure*}

In this section, we present results for the following claims: 
\vspace{-2mm}
\begin{enumerate}[leftmargin=*]
    \item \textbf{Language modeling overall.} We evaluate architectures in pretraining on the Pile \cite{pile} and on standard natural language understanding benchmarks. We find ${\sysname}$ matches or outperforms prior sub-quadratic architectures across these settings. 
    \item \textbf{Language modeling recall.}
    ${\sysname}$ closes the gap to attention on the challenging associative recall slice of the Pile corpus (see \Cref{table:main-quality}). 
    We apply these pretrained models zero-shot to a suite of recall-intensive tasks (\textit{e.g.} information extraction, QA), showing that \sysname systematically outperforms Mamba by large margins (10.36 accuracy points at $1.3$b parameters and $50$b tokens).
    \item \textbf{Generation throughput.} Our IO-aware implementation of recurrent generation in Based enables $40-60\%$ speedups relative to FlashAttention-2 and Mamba for prefill at $4\mathrm{k}$ sequence length and up to $24\times$ higher throughput over FlashAttention-2 in generating $1024$ tokens at batch size $128$ (see \Cref{fig:forward_throughput}).  
\end{enumerate}

\vspace{-4mm}
\paragraph{Baselines} We compare to several key baselines at the $360$m and $1.3$b parameter scales, up to $50$b tokens of training. We first consider Transformer++, Transformers with modern improvements such as rotary encodings \citep{su2023roformer} and gated linear units~\cite{touvron2023llama}. We then consider a class of popular efficient architectures built from gating and long-convolution primitives including Hyena \citep{poli2023hyena}, RWKV \citep{peng2023rwkv}, and H3 \citep{dao2022hungry}. We finally compare to the recently popular Mamba \citep{gu2023mamba} and Gated Linear Attention \citep{yang2023gated} linear recurrent 
architectures with input-dependent recurrent-state updates. We give each architecture the Transformer++ improvements as relevant and use the implementations provided by prior work during training. 

{\sysname} combines familiar local and global sequence mixers to achieve high quality. We train {\sysname} as a hybrid of $\approx 20\%$ linear attention, $\approx 20\%$ sliding window attention, and $\approx 60\%$ gated convolution layers as discussed in \Cref{sec:appendix_experiment_details}. In contrast to recent baselines, {\sysname} requires \textit{no input-dependent decays} whatsoever. 

\vspace{-2mm}
\subsection{Language Modeling Evaluations}
\label{sec:results-quality}
\paragraph{Language Modeling Benchmarks} We pretrain language models from scratch at two parameter scales ($360$m and $1.3$b parameters) on the Pile~\citep{pile}. Each model sees the same tokens of pretraining data in the same order. The Pile data is tokenized using the GPT-2 BPE tokenizer~\citep{radford2019language}. We measure perplexity on the Pile and report results in \Cref{table:main-quality} and further experimental details are provided in \Cref{app:exp-pretraining}. 

We additionally evaluate the pretrained models on key natural language understanding downstream benchmarks using the LM Eval Harness (SuperGLUE, ARC, PIQA, WinoGrande, HellaSwag, LAMBADA). A detailed breakdown of tasks and metrics can be found in \Cref{sec:appendix-downstream-evaluations}. 

In both pretraining and on the downstream tasks, ${\sysname}$ consistently competes with the strongest Transformer++ and Mamba baselines. While these overall metrics are helpful, we next turn to a fine-grained analysis of recall and in-context learning ability on real-world data. 

\vspace{-2mm}
\paragraph{Recall Evaluations} We evaluate our pretrained models on a suite of in-context learning tasks selected to test the downstream recall capacity in \Cref{table:main-quality}. These tasks fall into three categories: (1)  \textbf{Real-world AR} Beyond perplexity scores, we slice the next token predictions on the Pile to understand each architecture's AR quality ( \Cref{sec:appendix_experiment_details}). (2) \textbf{Information extraction (IE)} SWDE and FDA are popular semi-structured and unstructured document IE benchmarks respectively \citep{wu2021medai, deng2022domlm,  arora2023evaporate}. SWDE has HTML for $8$ Movie and $5$ University websites (e.g. IMDB, US News) and annotations for 8-274 attributes per website (e.g., Movie \verb|runtime|), and (3) \textbf{Question answering} from in-context passages. 

We find ${\sysname}$ outperforms the baseline sub-quadratic architectures across these evaluations, closing the gap to Transformer++. These trends track the ${\Task}$ synthetic results from \Cref{sec:motivation_empirical}. We further observe that as we train for longer (more tokens), the improvements from {\sysname} over Mamba grow (from $3.9$ to $9.0$ points on average at $360$m scale and from $9.0$ to $10.4$ points at the $1.3$b scale).

\vspace{-2mm}
\paragraph{Quality Ablations}
In \Cref{table:ablations}, we ablate the feature maps, feature dimensions, and sliding window and convolution dimensions using the same Pile setting as prior experiments. In feature maps, we consider replacing the Taylor approximation with CosFormer \cite{qin2022cosformer} or Performers \cite{choromanski2020rethinking}, and varying the state size using linear projections. We observe with larger sate size, CosFormer closes the gap to the Taylor map though note the projections increase the parameter count. In feature dimension, we find $24$ and $32$ provide diminishing returns. Further discussion is in \Cref{app:exp-architectures}.

\vspace{-2mm}
\subsection{Efficiency Benchmarks}
We benchmark the throughput of ${\sysname}$, with and without our proposed IO-Aware algorithms (Section \ref{sec:efficiency}, Figure \ref{fig:forward_throughput}). We consider both the \textbf{forward pass / generation prefill} and \textbf{next token prediction} stages. Experiments were run using an H100 NVIDIA GPU and averaged over $20$ repetitions. 

\vspace{-2mm}
\paragraph{End-to-end benchmarks} Using our efficient implementation (Section \ref{sec:efficiency}), ${\sysname}$ achieves 56\% faster prefill than FlashAttention-2 \citep{dao2023flashattention2} and 44\% faster than Mamba at $4k$ sequence length and $1.3$b parameters (28\% faster than FlashAttention-2 and 76\% faster than Mamba at $360m$ parameters). We find that next token generation, with \textit{no prefill}, provides $24\times$ higher throughput (tokens/second) over the highly optimized FlashAttention-2 implementation and achieves 95\% and the throughput of the recurrent Mamba architecture at batch size $128$ and $1.3$b parameters (98\% higher throughput vs. FlashAttention-2 and 118\% higher throughput vs. Mamba at $360$m parameters). All benchmarks is on a single NVIDIA H100 GPU, using CUDA cache graphs during next token prediction \citep{nvidia2019graph}.

In \Cref{fig:forward_throughput}, we also include results for the baseline implementation of ${\sysname}$ that uses the popular Fast Transformers CUDA kernel to compute the causal dot product \citep{vyas_et_al_2020} (discussed in \Cref{sec:efficiency}). The \textit{custom kernel} introduced in our work unlocks the efficiency of {$\sysname$}.

\vspace{-2mm}
\paragraph{Micro benchmarks} As the end-to-end ${\sysname}$ architecture is a hybrid architecture, we provide micro benchmarks of the individual kernels against key baseline implementations in \Cref{app:implementation}. Kernels are accessible at: \url{https://github.com/HazyResearch/ThunderKittens}.

\vspace{-2mm}
\section{Conclusion}
This work identifies a fundamental tradeoff between recall, a critical skill for in-context learning, and throughput through theory and experiments.
Attention performs recall perfectly, but requires retaining a KV cache that grows with the sequence length. As an alternative, we propose the ${\sysname}$ architecture, which combines two simple techniques --- local fine-grained attention and long-range linear attention via a Taylor approximation of the softmax exponential function -- that are sub-quadratic during training and permit an efficient recurrent inference view. To enable wall clock efficiency, we introduce IO-aware algorithms for the Taylor linear attention inference that lead ${\sysname}$ to perform generation up to $24\times$ faster than FlashAttention-2 at the $1.3$b parameter scale (generating 1024 tokens, batch size $128$). Beyond competing in overall perplexity, ${\sysname}$ outperforms prior sub-quadratic architectures in recall quality by 10.36 accuracy points on average. 
Overall, our results show that {\sysname} extends the Pareto frontier of the recall-throughput tradeoff space beyond prior architectures.

\section*{Acknowledgments}
We thank Benjamin Spector, Dylan Zinsley, Songlin Yang, Daniel Fu, Jessica Grogan, Eric Nguyen, Michael Wornow, Alyssa Unell, and Gautam Machiraju for their helpful feedback and discussion during this work. We thank the Hazy Research lab and Together AI for supporting this work. 
We gratefully acknowledge the support of NIH under No. U54EB020405 (Mobilize), NSF under Nos. CCF2247015 (Hardware-Aware), CCF1763315 (Beyond Sparsity), CCF1563078 (Volume to Velocity), and 1937301 (RTML); US DEVCOM ARL under Nos. W911NF-23-2-0184 (Long-context) and W911NF-21-2-0251 (Interactive Human-AI Teaming); ONR under Nos. N000142312633 (Deep Signal Processing), N000141712266 (Unifying Weak Supervision), N000142012480 (Non-Euclidean Geometry), and N000142012275 (NEPTUNE); Stanford HAI under No. 247183; NXP, Xilinx, LETI-CEA, Intel, IBM, Microsoft, NEC, Toshiba, TSMC, ARM, Hitachi, BASF, Accenture, Ericsson, Qualcomm, Analog Devices, Google Cloud, Salesforce, Total, the HAI-GCP Cloud Credits for Research program,  the Stanford Data Science Initiative (SDSI), and members of the Stanford DAWN project: Facebook, Google, and VMWare. The U.S. Government is authorized to reproduce and distribute reprints for Governmental purposes notwithstanding any copyright notation thereon. Any opinions, findings, and conclusions or recommendations expressed in this material are those of the authors and do not necessarily reflect the views, policies, or endorsements, either expressed or implied, of NIH, ONR, or the U.S. Government. AR's research is supported by NSF grant CCF\#2247014.

\section*{Impact Statement}
This paper presents work whose goal is to advance the field of Machine Learning. We intend for \sysname to aid in reducing the costs of machine learning and in unlocking new capabilities. There are many potential societal consequences of our work, none which we feel must be specifically highlighted here. Detailed discussions of the risks of using and developing LLMs are in  ~\citet{bommasani2021opportunities, weidinger2021ethical}. 

\bibliographystyle{unsrtnat}
\bibliography{references} \newpage

\begin{thebibliography}{87}
\providecommand{\natexlab}[1]{#1}
\providecommand{\url}[1]{\texttt{#1}}
\expandafter\ifx\csname urlstyle\endcsname\relax
  \providecommand{\doi}[1]{doi: #1}\else
  \providecommand{\doi}{doi: \begingroup \urlstyle{rm}\Url}\fi

\bibitem[Arora et~al.(2023{\natexlab{a}})Arora, Eyuboglu, Timalsina, Johnson, Poli, Zou, Rudra, and Ré]{arora2023zoology}
Simran Arora, Sabri Eyuboglu, Aman Timalsina, Isys Johnson, Michael Poli, James Zou, Atri Rudra, and Christopher Ré.
\newblock Zoology: Measuring and improving recall in efficient language models.
\newblock \emph{International Conference on Learning Representations}, 2023{\natexlab{a}}.

\bibitem[Vaswani et~al.(2017)Vaswani, Shazeer, Parmar, Uszkoreit, Jones, Gomez, Kaiser, and Polosukhin]{vaswani2018attention}
Ashish Vaswani, Noam Shazeer, Niki Parmar, Jakob Uszkoreit, Llion Jones, Aidan~N Gomez, Lukasz Kaiser, and Illia Polosukhin.
\newblock Attention is all you need.
\newblock volume~30, 2017.

\bibitem[Olsson et~al.(2022)Olsson, Elhage, Nanda, Joseph, DasSarma, Henighan, Mann, Askell, Bai, Chen, et~al.]{olsson2022context}
Catherine Olsson, Nelson Elhage, Neel Nanda, Nicholas Joseph, Nova DasSarma, Tom Henighan, Ben Mann, Amanda Askell, Yuntao Bai, Anna Chen, et~al.
\newblock In-context learning and induction heads.
\newblock \emph{arXiv preprint arXiv:2209.11895}, 2022.

\bibitem[Wang et~al.(2022)Wang, Yan, Gu, and Rush]{wang2022pretraining}
Junxiong Wang, Jing~Nathan Yan, Albert Gu, and Alexander~M Rush.
\newblock Pretraining without attention.
\newblock \emph{arXiv preprint arXiv:2212.10544}, 2022.

\bibitem[Gu and Dao(2023)]{gu2023mamba}
Albert Gu and Tri Dao.
\newblock Mamba: Linear-time sequence modeling with selective state spaces.
\newblock \emph{arXiv preprint arXiv:2312.00752}, 2023.

\bibitem[Yang et~al.(2023)Yang, Wang, Shen, Panda, and Kim]{yang2023gated}
Songlin Yang, Bailin Wang, Yikang Shen, Rameswar Panda, and Yoon Kim.
\newblock Gated linear attention transformers with hardware-efficient training.
\newblock \emph{arXiv preprint arXiv:2312.06635}, 2023.

\bibitem[Poli et~al.(2023)Poli, Massaroli, Nguyen, Fu, Dao, Baccus, Bengio, Ermon, and R{\'e}]{poli2023hyena}
Michael Poli, Stefano Massaroli, Eric Nguyen, Daniel~Y Fu, Tri Dao, Stephen Baccus, Yoshua Bengio, Stefano Ermon, and Christopher R{\'e}.
\newblock Hyena hierarchy: Towards larger convolutional language models.
\newblock \emph{arXiv preprint arXiv:2302.10866}, 2023.

\bibitem[Peng et~al.(2023)Peng, Alcaide, Anthony, Albalak, Arcadinho, Cao, Cheng, Chung, Grella, Kiran~GV, He, Hou, Kazienko, Kocon, and Kong]{peng2023rwkv}
Bo~Peng, Eric Alcaide, Quentin Anthony, Alon Albalak, Samuel Arcadinho, Huanqi Cao, Xin Cheng, Michael Chung, Matteo Grella, Kranthi Kiran~GV, Xuzheng He, Haowen Hou, Przemyslaw Kazienko, Jan Kocon, and Jiaming et~al. Kong.
\newblock Rwkv: Reinventing rnns for the transformer era.
\newblock \emph{arXiv:2305.13048}, 2023.

\bibitem[Fu et~al.(2023{\natexlab{a}})Fu, Dao, Saab, Thomas, Rudra, and R{\'e}]{dao2022hungry}
Daniel~Y. Fu, Tri Dao, Khaled~K. Saab, Armin~W. Thomas, Atri Rudra, and Christopher R{\'e}.
\newblock Hungry {H}ungry {H}ippos: Towards language modeling with state space models.
\newblock In \emph{International Conference on Learning Representations}, 2023{\natexlab{a}}.

\bibitem[Jiang et~al.(2023)Jiang, Sablayrolles, Mensch, Bamford, Chaplot, Casas, Bressand, Lengyel, Lample, Saulnier, et~al.]{mistral7b}
Albert~Q Jiang, Alexandre Sablayrolles, Arthur Mensch, Chris Bamford, Devendra~Singh Chaplot, Diego de~las Casas, Florian Bressand, Gianna Lengyel, Guillaume Lample, Lucile Saulnier, et~al.
\newblock Mistral 7b.
\newblock \emph{arXiv preprint arXiv:2310.06825}, 2023.

\bibitem[Dao(2023)]{dao2023flashattention2}
Tri Dao.
\newblock Flash{A}ttention-2: Faster attention with better parallelism and work partitioning.
\newblock 2023.

\bibitem[Dao et~al.(2022)Dao, Fu, Ermon, Rudra, and R{\'e}]{dao2022flashattention}
Tri Dao, Daniel~Y. Fu, Stefano Ermon, Atri Rudra, and Christopher R{\'e}.
\newblock Flash{A}ttention: Fast and memory-efficient exact attention with {IO}-awareness.
\newblock In \emph{Advances in Neural Information Processing Systems}, 2022.

\bibitem[Zhang et~al.(2024)Zhang, Bhatia, Kumbong, and Re]{hedgehog2023}
Michael Zhang, Kush Bhatia, Hermann Kumbong, and Christopher Re.
\newblock The hedgehog \& the porcupine: Expressive linear attentions with softmax mimicry.
\newblock In \emph{The Twelfth International Conference on Learning Representations}, 2024.

\bibitem[Keles et~al.(2023)Keles, Wijewardena, and Hegde]{keles2023on}
Feyza~Duman Keles, Pruthuvi~Mahesakya Wijewardena, and Chinmay Hegde.
\newblock On the computational complexity of self-attention.
\newblock In \emph{34th International Conference on Algorithmic Learning Theory}, volume 201, page 1–23, 2023.

\bibitem[Touvron et~al.(2023)Touvron, Martin, Stone, Albert, Almahairi, Babaei, Bashlykov, Batra, Bhargava, and Bhosale]{touvron2023llama}
Hugo Touvron, Louis Martin, Kevin Stone, Peter Albert, Amjad Almahairi, Yasmine Babaei, Nikolay Bashlykov, Soumya Batra, Prajjwal Bhargava, and Shruti Bhosale.
\newblock Llama 2: Open foundation and fine-tuned chat models.
\newblock \emph{arXiv:2307.09288}, 2023.

\bibitem[Gao et~al.(2023)Gao, Tow, Abbasi, Biderman, Black, DiPofi, Foster, Golding, Hsu, Le~Noac'h, Li, McDonell, Muennighoff, Ociepa, Phang, Reynolds, Schoelkopf, Skowron, Sutawika, Tang, Thite, Wang, Wang, and Zou]{eval-harness}
Leo Gao, Jonathan Tow, Baber Abbasi, Stella Biderman, Sid Black, Anthony DiPofi, Charles Foster, Laurence Golding, Jeffrey Hsu, Alain Le~Noac'h, Haonan Li, Kyle McDonell, Niklas Muennighoff, Chris Ociepa, Jason Phang, Laria Reynolds, Hailey Schoelkopf, Aviya Skowron, Lintang Sutawika, Eric Tang, Anish Thite, Ben Wang, Kevin Wang, and Andy Zou.
\newblock A framework for few-shot language model evaluation, 12 2023.

\bibitem[Parmar et~al.(2018)Parmar, Vaswani, Uszkoreit, Kaiser, Shazeer, Ku, and Tran]{parmar2018image}
Niki Parmar, Ashish Vaswani, Jakob Uszkoreit, Lukasz Kaiser, Noam Shazeer, Alexander Ku, and Dustin Tran.
\newblock Image transformer.
\newblock In \emph{International conference on machine learning}, pages 4055--4064. PMLR, 2018.

\bibitem[Child et~al.(2019)Child, Gray, Radford, and Sutskever]{child2019generating}
Rewon Child, Scott Gray, Alec Radford, and Ilya Sutskever.
\newblock Generating long sequences with sparse transformers.
\newblock \emph{arXiv preprint arXiv:1904.10509}, 2019.

\bibitem[Beltagy et~al.(2020)Beltagy, Peters, and Cohan]{beltagy2020longformer}
Iz~Beltagy, Matthew~E Peters, and Arman Cohan.
\newblock Longformer: The long-document transformer.
\newblock \emph{arXiv preprint arXiv:2004.05150}, 2020.

\bibitem[Katharopoulos et~al.(2020{\natexlab{a}})Katharopoulos, Vyas, Pappas, and Fleuret]{katharopoulos2020transformers}
Angelos Katharopoulos, Apoorv Vyas, Nikolaos Pappas, and Fran{\c{c}}ois Fleuret.
\newblock Transformers are rnns: Fast autoregressive transformers with linear attention.
\newblock In \emph{International conference on machine learning}, pages 5156--5165. PMLR, 2020{\natexlab{a}}.

\bibitem[Choromanski et~al.(2020)Choromanski, Likhosherstov, Dohan, Song, Gane, Sarlos, Hawkins, Davis, Mohiuddin, Kaiser, et~al.]{choromanski2020rethinking}
Krzysztof Choromanski, Valerii Likhosherstov, David Dohan, Xingyou Song, Andreea Gane, Tamas Sarlos, Peter Hawkins, Jared Davis, Afroz Mohiuddin, Lukasz Kaiser, et~al.
\newblock Rethinking attention with performers.
\newblock \emph{arXiv preprint arXiv:2009.14794}, 2020.

\bibitem[Choromanski et~al.(2021)Choromanski, Chen, Lin, Ma, Sehanobish, Jain, Ryoo, Varley, Zeng, Likhosherstov, et~al.]{choromanski2021hybrid}
Krzysztof Choromanski, Haoxian Chen, Han Lin, Yuanzhe Ma, Arijit Sehanobish, Deepali Jain, Michael~S Ryoo, Jake Varley, Andy Zeng, Valerii Likhosherstov, et~al.
\newblock Hybrid random features.
\newblock \emph{arXiv preprint arXiv:2110.04367}, 2021.

\bibitem[Qin et~al.(2022{\natexlab{a}})Qin, Sun, Deng, Li, Wei, Lv, Yan, Kong, and Zhong]{qin2022cosformer}
Zhen Qin, Weixuan Sun, Hui Deng, Dongxu Li, Yunshen Wei, Baohong Lv, Junjie Yan, Lingpeng Kong, and Yiran Zhong.
\newblock cosformer: Rethinking softmax in attention.
\newblock \emph{arXiv preprint arXiv:2202.08791}, 2022{\natexlab{a}}.

\bibitem[Katharopoulos et~al.(2020{\natexlab{b}})Katharopoulos, Vyas, Pappas, and Fleuret]{katharopoulos-et-al-2020}
A.~Katharopoulos, A.~Vyas, N.~Pappas, and F.~Fleuret.
\newblock Transformers are rnns: Fast autoregressive transformers with linear attention.
\newblock In \emph{Proceedings of the International Conference on Machine Learning (ICML)}, 2020{\natexlab{b}}.

\bibitem[Kasai et~al.(2021)Kasai, Peng, Zhang, Yogatama, Ilharco, Pappas, Mao, Chen, and Smith]{kasai-etal-2021-finetuning}
Jungo Kasai, Hao Peng, Yizhe Zhang, Dani Yogatama, Gabriel Ilharco, Nikolaos Pappas, Yi~Mao, Weizhu Chen, and Noah~A. Smith.
\newblock Finetuning pretrained transformers into {RNN}s.
\newblock In \emph{Proceedings of the 2021 Conference on Empirical Methods in Natural Language Processing}, pages 10630--10643, Online and Punta Cana, Dominican Republic, November 2021. Association for Computational Linguistics.
\newblock \doi{10.18653/v1/2021.emnlp-main.830}.

\bibitem[Schlag et~al.(2021)Schlag, Irie, and Schmidhuber]{schlag2021linear}
Imanol Schlag, Kazuki Irie, and J{\"u}rgen Schmidhuber.
\newblock Linear transformers are secretly fast weight programmers.
\newblock In \emph{International Conference on Machine Learning}, pages 9355--9366. PMLR, 2021.

\bibitem[Zaheer et~al.(2020)Zaheer, Guruganesh, Dubey, Ainslie, Alberti, Ontanon, Pham, Ravula, Wang, Yang, and et~al]{zaheer2020bigbird}
Manzil Zaheer, Guru Guruganesh, Avinava Dubey, Joshua Ainslie, Chris Alberti, Santiago Ontanon, Philip Pham, Anirudh Ravula, Qifan Wang, Li~Yang, and et~al.
\newblock Big bird: Transformers for longer sequences.
\newblock \emph{Proceedings of NeurIPS}, 2020.

\bibitem[Chen et~al.(2021{\natexlab{a}})Chen, Dao, Winsor, Song, Rudra, and R{\'e}]{chen2021scatterbrain}
Beidi Chen, Tri Dao, Eric Winsor, Zhao Song, Atri Rudra, and Christopher R{\'e}.
\newblock Scatterbrain: Unifying sparse and low-rank attention approximation.
\newblock \emph{arXiv preprint arXiv:2110.15343}, 2021{\natexlab{a}}.

\bibitem[Zeng et~al.(2022)Zeng, Pak, Kline, Fung, and Sing]{zeng2022mra}
Zhanpeng Zeng, Sourav Pak, Jeffrey Kline, Glenn Fung, and Vikas Sing.
\newblock Multi resolution analysis (mra) for approximate self-attention.
\newblock \emph{Proceedings of the 39 th International Conference on Machine Learning}, 2022.

\bibitem[Gu et~al.(2021)Gu, Goel, and R{\'e}]{gu2021efficiently}
Albert Gu, Karan Goel, and Christopher R{\'e}.
\newblock Efficiently modeling long sequences with structured state spaces.
\newblock \emph{arXiv preprint arXiv:2111.00396}, 2021.

\bibitem[Sun et~al.(2023)Sun, Dong, Huang, Ma, Xia, Xue, Wang, and Wei]{sun2023retentive}
Yutao Sun, Li~Dong, Shaohan Huang, Shuming Ma, Yuqing Xia, Jilong Xue, Jianyong Wang, and Furu Wei.
\newblock Retentive network: A successor to transformer for large language models, 2023.

\bibitem[Akyürek et~al.(2024)Akyürek, Wang, Kim, and Andreas]{akyurek2024incontext}
Ekin Akyürek, Bailin Wang, Yoon Kim, and Jacob Andreas.
\newblock In-context language learning: Architectures and algorithms.
\newblock 2024.

\bibitem[Fu et~al.(2023{\natexlab{b}})Fu, Epstein, Nguyen, Thomas, Zhang, Dao, Rudra, and R{\'e}]{fu2023simple}
Daniel~Y. Fu, Elliot~L. Epstein, Eric Nguyen, Armin~W. Thomas, Michael Zhang, Tri Dao, Atri Rudra, and Christopher R{\'e}.
\newblock Simple hardware-efficient long convolutions for sequence modeling.
\newblock \emph{arXiv preprint arXiv:2302.06646}, 2023{\natexlab{b}}.

\bibitem[Tsai et~al.(2019)Tsai, Bai, Yamada, Morency, and Salakhutdinov]{tsai2019transformer}
Yao-Hung~Hubert Tsai, Shaojie Bai, Makoto Yamada, Louis-Philippe Morency, and Ruslan Salakhutdinov.
\newblock Transformer dissection: a unified understanding of transformer's attention via the lens of kernel.
\newblock \emph{arXiv preprint arXiv:1908.11775}, 2019.

\bibitem[NVIDIA(2022)]{nvidia2022nvidia}
NVIDIA.
\newblock Nvidia {H}100 tensor core {GPU} architecture, 2022.

\bibitem[Vyas et~al.(2020)Vyas, Katharopoulos, and Fleuret]{vyas_et_al_2020}
A.~Vyas, A.~Katharopoulos, and F.~Fleuret.
\newblock Fast transformers with clustered attention.
\newblock 2020.

\bibitem[Gao et~al.(2020)Gao, Biderman, Black, Golding, Hoppe, Foster, Phang, He, Thite, Nabeshima, Presser, and Leahy]{pile}
Leo Gao, Stella Biderman, Sid Black, Laurence Golding, Travis Hoppe, Charles Foster, Jason Phang, Horace He, Anish Thite, Noa Nabeshima, Shawn Presser, and Connor Leahy.
\newblock The {P}ile: An 800gb dataset of diverse text for language modeling.
\newblock \emph{arXiv preprint arXiv:2101.00027}, 2020.

\bibitem[Su et~al.(2023)Su, Lu, Pan, Murtadha, Wen, and Liu]{su2023roformer}
Jianlin Su, Yu~Lu, Shengfeng Pan, Ahmed Murtadha, Bo~Wen, and Yunfeng Liu.
\newblock Roformer: Enhanced transformer with rotary position embedding, 2023.

\bibitem[Radford et~al.(2019)Radford, Wu, Child, Luan, Amodei, Sutskever, et~al.]{radford2019language}
Alec Radford, Jeffrey Wu, Rewon Child, David Luan, Dario Amodei, Ilya Sutskever, et~al.
\newblock Language models are unsupervised multitask learners.
\newblock \emph{OpenAI blog}, 1\penalty0 (8):\penalty0 9, 2019.

\bibitem[Wu et~al.(2021)Wu, Wu, Daneshjou, Ouyang, Ho, and Zou]{wu2021medai}
Eric Wu, Kevin Wu, Roxana Daneshjou, David Ouyang, Daniel Ho, and James Zou.
\newblock How medical ai devices are evaluated: limitations and recommendations from an analysis of fda approvals.
\newblock \emph{Nature Medicine}, 27:\penalty0 1--3, 04 2021.

\bibitem[Deng et~al.(2022)Deng, Shiralkar, Lockard, Huang, and Sun]{deng2022domlm}
Xiang Deng, Prashant Shiralkar, Colin Lockard, Binxuan Huang, and Huan Sun.
\newblock Dom-lm: Learning generalizable representations for html documents.
\newblock 2022.

\bibitem[Arora et~al.(2023{\natexlab{b}})Arora, Yang, Eyuboglu, Narayan, Hojel, Trummer, and R\'e]{arora2023evaporate}
Simran Arora, Brandon Yang, Sabri Eyuboglu, Avanika Narayan, Andrew Hojel, Immanuel Trummer, and Christopher R\'e.
\newblock Language models enable simple systems for generating structured views of heterogeneous data lakes.
\newblock \emph{arXiv:2304.09433}, 2023{\natexlab{b}}.

\bibitem[NVIDIA(2019)]{nvidia2019graph}
NVIDIA.
\newblock Getting started with cuda graphs, 2019.
\newblock URL \url{https://developer.nvidia.com/blog/cuda-graphs/}.

\bibitem[Bommasani et~al.(2021)Bommasani, Hudson, Adeli, Altman, Arora, von Arx, Bernstein, Bohg, Bosselut, Brunskill, et~al.]{bommasani2021opportunities}
Rishi Bommasani, Drew~A Hudson, Ehsan Adeli, Russ Altman, Simran Arora, Sydney von Arx, Michael~S Bernstein, Jeannette Bohg, Antoine Bosselut, Emma Brunskill, et~al.
\newblock On the opportunities and risks of foundation models.
\newblock \emph{arXiv preprint arXiv:2108.07258}, 2021.

\bibitem[Weidinger et~al.(2021)Weidinger, Mellor, Rauh, Griffin, Uesato, Huang, Cheng, Glaese, Balle, Kasirzadeh, et~al.]{weidinger2021ethical}
Laura Weidinger, John Mellor, Maribeth Rauh, Conor Griffin, Jonathan Uesato, Po-Sen Huang, Myra Cheng, Mia Glaese, Borja Balle, Atoosa Kasirzadeh, et~al.
\newblock Ethical and social risks of harm from language models.
\newblock \emph{arXiv preprint arXiv:2112.04359}, 2021.

\bibitem[Kitaev et~al.(2020)Kitaev, Kaiser, and Levskaya]{kitaev2020reformer}
Nikita Kitaev, {\L}ukasz Kaiser, and Anselm Levskaya.
\newblock Reformer: The efficient transformer.
\newblock \emph{arXiv preprint arXiv:2001.04451}, 2020.

\bibitem[Wang et~al.(2020)Wang, Li, Khabsa, Fang, and Ma]{wang2020linformer}
Sinong Wang, Belinda~Z Li, Madian Khabsa, Han Fang, and Hao Ma.
\newblock Linformer: Self-attention with linear complexity.
\newblock \emph{arXiv preprint arXiv:2006.04768}, 2020.

\bibitem[Zhu et~al.(2021)Zhu, Ping, Xiao, Shoeybi, Goldstein, Anandkumar, and Catanzaro]{zhu2021long}
Chen Zhu, Wei Ping, Chaowei Xiao, Mohammad Shoeybi, Tom Goldstein, Anima Anandkumar, and Bryan Catanzaro.
\newblock Long-short transformer: Efficient transformers for language and vision.
\newblock \emph{Advances in neural information processing systems}, 34:\penalty0 17723--17736, 2021.

\bibitem[Alberti et~al.(2023)Alberti, Dern, Thesing, and Kutyniok]{alberti2023sumformer}
Silas Alberti, Niclas Dern, Laura Thesing, and Gitta Kutyniok.
\newblock Sumformer: Universal approximation for efficient transformers.
\newblock \emph{arXiv preprint arXiv:2307.02301}, 2023.

\bibitem[Tay et~al.(2022)Tay, Dehghani, Bahri, and Metzler]{tay2022efficient}
Yi~Tay, Mostafa Dehghani, Dara Bahri, and Donald Metzler.
\newblock Efficient transformers: A survey.
\newblock \emph{ACM Computing Surveys}, 55\penalty0 (6):\penalty0 1--28, 2022.

\bibitem[Xiong et~al.(2021)Xiong, Zeng, Chakraborty, Tan, Fung, Li, and Singh]{xiong2021nystromformer}
Yunyang Xiong, Zhanpeng Zeng, Rudrasis Chakraborty, Mingxing Tan, Glenn Fung, Yin Li, and Vikas Singh.
\newblock Nystr{\"o}mformer: A nystr{\"o}m-based algorithm for approximating self-attention.
\newblock In \emph{Proceedings of the AAAI Conference on Artificial Intelligence}, volume~35, pages 14138--14148, 2021.

\bibitem[Chen et~al.(2021{\natexlab{b}})Chen, Zeng, Ji, and Yang]{chen2021skyformer}
Yifan Chen, Qi~Zeng, Heng Ji, and Yun Yang.
\newblock Skyformer: Remodel self-attention with gaussian kernel and nystr{\textbackslash}''om method.
\newblock In A.~Beygelzimer, Y.~Dauphin, P.~Liang, and J.~Wortman Vaughan, editors, \emph{Advances in Neural Information Processing Systems}, 2021{\natexlab{b}}.

\bibitem[De~Brebisson and Vincent(2015)]{de2015exploration}
Alexandre De~Brebisson and Pascal Vincent.
\newblock An exploration of softmax alternatives belonging to the spherical loss family.
\newblock \emph{arXiv preprint arXiv:1511.05042}, 2015.

\bibitem[Candes et~al.(2009)Candes, Li, Ma, and Wright]{candes2009robust}
Emmanuel Candes, Xiaodong Li, Yi~Ma, and John Wright.
\newblock Robust principal component analysis?
\newblock \emph{arXiv:0912.3599}, 2009.

\bibitem[Zhu and Soricut(2021)]{zhu2021htrans}
Zhenhai Zhu and Rau Soricut.
\newblock H-transformer-1d: Fast onedimensional hierarchical attention for sequences.
\newblock \emph{n Annual Meeting of the Association for Computational Linguistics}, 2021.

\bibitem[Qin et~al.(2022{\natexlab{b}})Qin, Han, Sun, Li, Kong, Barnes, and Zhong]{qin-etal-2022-devil}
Zhen Qin, Xiaodong Han, Weixuan Sun, Dongxu Li, Lingpeng Kong, Nick Barnes, and Yiran Zhong.
\newblock The devil in linear transformer.
\newblock In \emph{Proceedings of the 2022 Conference on Empirical Methods in Natural Language Processing}, pages 7025--7041, Abu Dhabi, United Arab Emirates, December 2022{\natexlab{b}}. Association for Computational Linguistics.
\newblock \doi{10.18653/v1/2022.emnlp-main.473}.

\bibitem[Ba et~al.(2016)Ba, Kiros, and Hinton]{ba2016layer}
Jimmy~Lei Ba, Jamie~Ryan Kiros, and Geoffrey~E Hinton.
\newblock Layer normalization.
\newblock \emph{arXiv preprint arXiv:1607.06450}, 2016.

\bibitem[Zhang and Sennrich(2019)]{zhang2019root}
Biao Zhang and Rico Sennrich.
\newblock Root mean square layer normalization.
\newblock \emph{Advances in Neural Information Processing Systems}, 32, 2019.

\bibitem[Cooley and Tukey(1965)]{cooley1965algorithm}
James~W Cooley and John~W Tukey.
\newblock An algorithm for the machine calculation of complex fourier series.
\newblock \emph{Mathematics of computation}, 19\penalty0 (90):\penalty0 297--301, 1965.

\bibitem[Romero et~al.(2022)Romero, Kuzina, Bekkers, Tomczak, and Hoogendoorn]{romero2022ckconv}
David~W. Romero, Anna Kuzina, Erik~J. Bekkers, Jakub~M. Tomczak, and Mark Hoogendoorn.
\newblock Ckconv: Continuous kernel convolution for sequential data.
\newblock 2022.

\bibitem[Gupta et~al.(2022)Gupta, Gu, and Berant]{gupta2022diagonal}
Ankit Gupta, Albert Gu, and Jonathan Berant.
\newblock Diagonal state spaces are as effective as structured state spaces, 2022.

\bibitem[Gu et~al.(2022)Gu, Gupta, Goel, and Ré]{gu2022parameterization}
Albert Gu, Ankit Gupta, Karan Goel, and Christopher Ré.
\newblock On the parameterization and initialization of diagonal state space models, 2022.

\bibitem[Mehta et~al.(2022)Mehta, Gupta, Cutkosky, and Neyshabur]{mehta2022long}
Harsh Mehta, Ankit Gupta, Ashok Cutkosky, and Behnam Neyshabur.
\newblock Long range language modeling via gated state spaces, 2022.

\bibitem[Ma et~al.(2022)Ma, Zhou, Kong, He, Gui, Neubig, May, and Luke]{ma2022mega}
Xuezhe Ma, Chunting Zhou, Xiang Kong, Junxian He, Liangke Gui, Graham Neubig, Jonathan May, and Zettlemoyer Luke.
\newblock Mega: Moving average equipped gated attention.
\newblock \emph{arXiv preprint arXiv:2209.10655}, 2022.

\bibitem[Elhage et~al.(2021)Elhage, Nanda, Olsson, Henighan, Joseph, Mann, Askell, Bai, Chen, Conerly, et~al.]{elhage2021mathematical}
Nelson Elhage, Neel Nanda, Catherine Olsson, Tom Henighan, Nicholas Joseph, Ben Mann, Amanda Askell, Yuntao Bai, Anna Chen, Tom Conerly, et~al.
\newblock A mathematical framework for transformer circuits.
\newblock \emph{Transformer Circuits Thread}, 1, 2021.

\bibitem[Dauphin et~al.(2017)Dauphin, Fan, Auli, and Grangier]{dauphin2017language}
Yann~N Dauphin, Angela Fan, Michael Auli, and David Grangier.
\newblock Language modeling with gated convolutional networks.
\newblock In \emph{International conference on machine learning}, pages 933--941. PMLR, 2017.

\bibitem[Ren et~al.(2023)Ren, Liu, Wang, Xu, Zhu, and Zhai]{ren2023sparse}
Liliang Ren, Yang Liu, Shuohang Wang, Yichong Xu, Chenguang Zhu, and ChengXiang Zhai.
\newblock Sparse modular activation for efficient sequence modeling.
\newblock \emph{arXiv preprint arXiv:2306.11197}, 2023.

\bibitem[Liu et~al.(2023)Liu, Zaharia, and Abbeel]{liu2023ring}
Hao Liu, Matei Zaharia, and Pieter Abbeel.
\newblock Ring attention with blockwise transformers for near-infinite context.
\newblock \emph{arXiv preprint arXiv:2310.01889}, 2023.

\bibitem[Kwon et~al.(2023)Kwon, Li, Zhuang, Sheng, Zheng, Yu, Gonzalez, Zhang, and Stoica]{kwon2023efficient}
Woosuk Kwon, Zhuohan Li, Siyuan Zhuang, Ying Sheng, Lianmin Zheng, Cody~Hao Yu, Joseph Gonzalez, Hao Zhang, and Ion Stoica.
\newblock Efficient memory management for large language model serving with pagedattention.
\newblock In \emph{Proceedings of the 29th Symposium on Operating Systems Principles}, pages 611--626, 2023.

\bibitem[Fu et~al.(2023{\natexlab{c}})Fu, Kumbong, Nguyen, and R{\'e}]{fu2023flashfftconv}
Daniel~Y Fu, Hermann Kumbong, Eric Nguyen, and Christopher R{\'e}.
\newblock Flashfftconv: Efficient convolutions for long sequences with tensor cores.
\newblock \emph{arXiv preprint arXiv:2311.05908}, 2023{\natexlab{c}}.

\bibitem[Rabe and Staats(2021)]{rabe2021self}
Markus~N Rabe and Charles Staats.
\newblock Self-attention does not need $o(n^2)$ memory.
\newblock \emph{arXiv preprint arXiv:2112.05682}, 2021.

\bibitem[Jang et~al.(2019)Jang, Kim, Jo, Lee, and Kim]{8980322}
Hanhwi Jang, Joonsung Kim, Jae-Eon Jo, Jaewon Lee, and Jangwoo Kim.
\newblock Mnnfast: A fast and scalable system architecture for memory-augmented neural networks.
\newblock In \emph{2019 ACM/IEEE 46th Annual International Symposium on Computer Architecture (ISCA)}, pages 250--263, 2019.

\bibitem[Liu and Abbeel(2023)]{liu2023blockwise}
Hao Liu and Pieter Abbeel.
\newblock Blockwise parallel transformer for long context large models.
\newblock \emph{arXiv preprint arXiv:2305.19370}, 2023.

\bibitem[Hua et~al.(2022)Hua, Dai, Liu, and Le]{hua2022transformer}
Weizhe Hua, Zihang Dai, Hanxiao Liu, and Quoc Le.
\newblock Transformer quality in linear time.
\newblock In \emph{International Conference on Machine Learning}, pages 9099--9117. PMLR, 2022.

\bibitem[Hendrycks and Gimpel(2023)]{hendrycks2023gaussian}
Dan Hendrycks and Kevin Gimpel.
\newblock Gaussian error linear units (gelus), 2023.

\bibitem[Wang et~al.(2019)Wang, Pruksachatkun, Nangia, Singh, Michael, Hill, Levy, and Bowman]{wang2019superglue}
Alex Wang, Yada Pruksachatkun, Nikita Nangia, Amanpreet Singh, Julian Michael, Felix Hill, Omer Levy, and Samuel~R. Bowman.
\newblock \emph{SuperGLUE: a stickier benchmark for general-purpose language understanding systems}.
\newblock Curran Associates Inc., Red Hook, NY, USA, 2019.

\bibitem[Paperno et~al.(2016)Paperno, Kruszewski, Lazaridou, Pham, Bernardi, Pezzelle, Baroni, Boleda, and Fernández]{paperno2016lambada}
Denis Paperno, Germán Kruszewski, Angeliki Lazaridou, Quan~Ngoc Pham, Raffaella Bernardi, Sandro Pezzelle, Marco Baroni, Gemma Boleda, and Raquel Fernández.
\newblock The lambada dataset: Word prediction requiring a broad discourse context, 2016.

\bibitem[Zellers et~al.(2019)Zellers, Holtzman, Bisk, Farhadi, and Choi]{zellers2019hellaswag}
Rowan Zellers, Ari Holtzman, Yonatan Bisk, Ali Farhadi, and Yejin Choi.
\newblock Hellaswag: Can a machine really finish your sentence?, 2019.

\bibitem[Bisk et~al.(2019)Bisk, Zellers, Bras, Gao, and Choi]{bisk2019piqa}
Yonatan Bisk, Rowan Zellers, Ronan~Le Bras, Jianfeng Gao, and Yejin Choi.
\newblock Piqa: Reasoning about physical commonsense in natural language, 2019.

\bibitem[Clark et~al.(2018)Clark, Cowhey, Etzioni, Khot, Sabharwal, Schoenick, and Tafjord]{clark2018think}
Peter Clark, Isaac Cowhey, Oren Etzioni, Tushar Khot, Ashish Sabharwal, Carissa Schoenick, and Oyvind Tafjord.
\newblock Think you have solved question answering? try arc, the ai2 reasoning challenge, 2018.

\bibitem[Sakaguchi et~al.(2019)Sakaguchi, Bras, Bhagavatula, and Choi]{sakaguchi2019winogrande}
Keisuke Sakaguchi, Ronan~Le Bras, Chandra Bhagavatula, and Yejin Choi.
\newblock Winogrande: An adversarial winograd schema challenge at scale, 2019.

\bibitem[Nguyen et~al.(2023)Nguyen, Poli, Faizi, Thomas, Birch-Sykes, Wornow, Patel, Rabideau, Massaroli, Bengio, Ermon, Baccus, and Ré]{nguyen2023hyenadna}
Eric Nguyen, Michael Poli, Marjan Faizi, Armin Thomas, Callum Birch-Sykes, Michael Wornow, Aman Patel, Clayton Rabideau, Stefano Massaroli, Yoshua Bengio, Stefano Ermon, Stephen~A. Baccus, and Chris Ré.
\newblock Hyenadna: Long-range genomic sequence modeling at single nucleotide resolution, 2023.

\bibitem[Gresova et~al.(2022)Gresova, Martinek, Cechak, Simecek, and Alexiou]{Gresova2022.06.08.495248}
Katarina Gresova, Vlastimil Martinek, David Cechak, Petr Simecek, and Panagiotis Alexiou.
\newblock Genomic benchmarks: A collection of datasets for genomic sequence classification.
\newblock \emph{bioRxiv}, 2022.
\newblock \doi{10.1101/2022.06.08.495248}.

\bibitem[Lockard et~al.(2019)Lockard, Shiralkar, and Dong]{lockard-etal-2019-openceres}
Colin Lockard, Prashant Shiralkar, and Xin~Luna Dong.
\newblock {O}pen{C}eres: {W}hen open information extraction meets the semi-structured web.
\newblock In Jill Burstein, Christy Doran, and Thamar Solorio, editors, \emph{Proceedings of the 2019 Conference of the North {A}merican Chapter of the Association for Computational Linguistics: Human Language Technologies, Volume 1 (Long and Short Papers)}, pages 3047--3056, Minneapolis, Minnesota, June 2019. Association for Computational Linguistics.
\newblock \doi{10.18653/v1/N19-1309}.

\bibitem[B{\"u}rgisser et~al.(2013)B{\"u}rgisser, Clausen, and Shokrollahi]{burgisser2013algebraic}
Peter B{\"u}rgisser, Michael Clausen, and Mohammad~A Shokrollahi.
\newblock \emph{Algebraic complexity theory}, volume 315.
\newblock Springer Science \& Business Media, 2013.

\bibitem[Jayram et~al.(2008)Jayram, Kumar, and Sivakumar]{jayram2008one}
Thathachar~S Jayram, Ravi Kumar, and Dandapani Sivakumar.
\newblock The one-way communication complexity of hamming distance.
\newblock \emph{Theory of Computing}, 4\penalty0 (1):\penalty0 129--135, 2008.

\bibitem[Kopparty(2020)]{kopparty2020notes}
Swastik Kopparty.
\newblock Topics in algorithms and complexity theory: Spring 2020.
\newblock 2020.

\end{thebibliography}

\newpage
\appendix
\onecolumn

The appendix is organized as follows:
\begin{enumerate}
    \item \Cref{app:extended_related_work} includes an extended related works discussion. 
    \item \Cref{app:implementation} includes details on the IO-aware implementation and benchmarking for \sysname.
    \item \Cref{app:extended-arch} includes additional discussion of \sysname architectural details. 
    \item \Cref{sec:appendix-downstream-evaluations} provides additional experimental results.
    \item \Cref{app:exp} provides experimental details.
    \item \Cref{sec: theory} includes theoretical results and proofs.
\end{enumerate}

To facilitate reproducing this work we release:
\begin{enumerate}
    \item Code for model training and inference at \url{https://github.com/HazyResearch/based}
    \item Model checkpoints at \url{https://huggingface.co/collections/hazyresearch/}
    \item CUDA kernels at \url{https://github.com/HazyResearch/ThunderKittens}
    \item Code for synthetic MQAR experiments at \url{https://github.com/HazyResearch/zoology}
\end{enumerate}

\clearpage

\section{Extended Related Work}
\label{app:extended_related_work}

Our work relates broadly to various developments in efficient sequence modeling. In this section, we organize these related works into (1) model-based or algorithmic contributions (\cref{sec:related-eff-arch}) and (2) implementation or systems-based contributions (\cref{sec:related-eff-sys}).

\subsection{Efficient Language Modeling Architectures}
\label{sec:related-eff-arch}

While Transformers often achieve state-of-the-art language modeling quality, their design motivates various efficiency improvements when both processing input sequences and generating outputs. In particular, various works try to retain their modeling quality, while improving on their quadratic scaling ($\calO(N^2)$ in input sequence length $N$) when processing inputs and $\calO(NM)$ time and space when decoding outputs for outputs of length $M$ (when caching prior keys and values in the attention mechanism).  

We note that most related lines of work build on one of two primitives: \emph{attention approximations} (\textit{e.g.,} linear attentions, sparse attentions, sparse and low-rank attentions), or \emph{state-space models} (SSMs) (which have alternative parameterizations as either ``long'' convolutional models or recurrent neueral networks). Both model classes achieve subquadratic time and space complexity when processing inputs, while linear attentions and SSMs also enable better than $\calO(NM)$ decoding via their ability to process inputs recurrently like a recurrent neural network (RNN). 

We describe each of these model classes next. 

\subsubsection{Efficient Attentions}
\label{sec:related-eff-attn}

We focus on two of the most related paradigms for efficiently computing attention here, \emph{structured sparse attentions} and \emph{linear attentions}. We acknowledge a great deal of prior work to compute attention more efficiently, such as via locality-sensitive hashing~\citep{kitaev2020reformer}, random sparse attentions~\citep{zaheer2020bigbird}, and sequence compression~\citep{wang2020linformer, zhu2021long, alberti2023sumformer}. Please see \cite{tay2022efficient} for a comprehensive survey. 

\paragraph{Structured sparse attentions} Structured sparse attentions reduce attention's time and memory requirements by only attending over specific strided patterns or local \emph{sliding windows}~\citep{parmar2018image,child2019generating,beltagy2020longformer}. For example, \cite{parmar2018image} propose computing attention only over a local window of the past $w$ tokens, such that processing sequences $N$ tokens long only takes $\calO{Nw}$ time and space. \cite{child2019generating} note that this window alone may not all capture all desired dependencies (such as long-term interactions), and propose two strided patterns to compute dot products between queries and keys further away. \cite{beltagy2020longformer} further propose allowing specific tokens to attend to all other tokens in a dense manner. 

While further popularized in recent large language models (Mistral,~\citet{mistral7b}), we note that these implementations use large window sizes that still leave room for improving efficiency. In Based, we introduce a hardware-guided design (using small windows) and sliding window implementation that allows us to capitalize on sparse attention's efficiency.

\paragraph{Linear attentions} Linear attentions preserve the same ``sequence-mixing'' operations as standard attention, computing dot products between queries and keys to weight corresponding values. However, their key insight is to replace the softmax in standard attention with alternative kernel functions~\cite{katharopoulos2020transformers}. Mechanically, by removing the $\exp(\bm{q}^\top \bm{k})$ in favor of feature map dot-products $\phi(\bm{q})^\top\phi(\bm{k})$, these methods use matrix product associativity to compute attention in $\mathcal{O}(Nd^2)$ time and space~\citep{katharopoulos-et-al-2020} (\Cref{eq:linear_attention}). Furthermore, they permit a \emph{recurrent view} for constant memory and $\mathcal{O}(1)$ time per-token generation~\cite{kasai-etal-2021-finetuning,schlag2021linear} (\Cref{eq:linear_attention_recurrent_3}).

Prior works propose different feature maps $\phi$ to improve linear attention modeling quality.  \cite{katharopoulos2020transformers} originally use the \emph{positive elu} function $1 + \text{elu}$ such that  $\phi(\bm{q})^\top\phi(\bm{k})$ remains positive and attention weights remain affine. \cite{qin2022cosformer} instead use the $\text{ReLU}$ function combined with a cosine-based reweighting function to add a locality bias. Other approaches propose feature maps that aim to approximate the Softmax, such as Random Fourier Features \citep{choromanski2020rethinking, choromanski2021hybrid} the Nystrom method~\citep{xiong2021nystromformer, chen2021skyformer}, or deterministic low-degree polynomial approximations~\citep{hedgehog2023,de2015exploration,keles2023on}. Finally, recent works treat the feature map as a learnable function~\citep{kasai-etal-2021-finetuning}, and optionally train the feature map explicitly to recover the softmax kernel~\citep{hedgehog2023}.

\paragraph{Combining sparse and linear attentions}
Finally, our work is closely related to a long line of work on combining sparse and linear attention. Scatterbrain~\citep{chen2021scatterbrain}, building on works such as BigBird \citep{zaheer2020bigbird} and Longformer \citep{beltagy2020longformer}, shows how a sparse and low-rank approximations can be combined into a single unbiased approximation. This approximation is inspired by robust PCA \cite{candes2009robust}. As motivation, they show that any low rank approximation of attention's $\exp(QK^T)$ will have a much larger approximation error than a sparse plus low rank approximation. Note that the Scatterbrain method is largely agnostic to the details of any specific architecture or choice of hyperparameters used in the sparse and low-rank approximations. The focus is on how to combine them so as to maintain an unbiased estimate. In contrast, our work studies how the choice of architecture and hyperparameters affect the model’s efficiency and quality (we’re agnostic to the specific approach for combining the attention). For example, Scatterbrain uses a fixed low-rank approximation (i.e. $\tilde{d} << d$) in experiments. In contrast, we focus on the recall-memory tradeoff and study what happens when we increase the size of $d$. A major takeaway from our study of this tradeoff is that we actually need $d_\text{query} > d_\text{model}$ to match attention’s recall capacity. Our IO-aware implementation shows how to achieve large speedups even when $d_\text{query} > d_\text{model}$.

There are a number of other works which can also be viewed as combinations of sparse and linear attention. 
Multi-resolution analysis attention (MRA-2) uses wavelets to approximate the attention matrix \citep{zeng2022mra}. 
A special form of MRA-2 can be viewed as a combination of sparse and low rank attention for a specific wavelet decomposition. H-transformer-1D uses a hierarchy of matrices including full dense attention on the diagonal and low-rank approximations elsewhere~\citep{zhu2021htrans}. 
TransNormer \citep{qin-etal-2022-devil} apply normalizations such as LayerNorm~\citep{ba2016layer} or RMSNorm~\citep{zhang2019root} to linear attention outputs in certain layers, and apply softmax attention in local chunks in other layers.

\subsubsection{Attention Alternatives}
We now review other attention alternatives, which focus on improving upon the quadratic scaling of attention. Initial work in this vein uses linear time invariant state space models (SSMs) or long convolutions, which can efficiently process sequences of length $N$ in $O(N \log N)$ time invoking the FFT-convolution theorem \citep{cooley1965algorithm}, as the sequence mixer \citep{gu2021efficiently, romero2022ckconv, gupta2022diagonal, gu2022parameterization, mehta2022long, ma2022mega, wang2022pretraining, fu2023simple}. SSMs can also be rewritten as recurrences to permit fast $O(1)$ inference. 

Subsequent work identified that the long convolution alone is not expressive enough to perform particular sub-tasks in language modeling. Prior work shows pure linear SSMs cannot perform associative recall, a skill that is correlated with a model's in-context learning capability \cite{elhage2021mathematical, olsson2022context}, and introduces multiplicative interactions (via gating or Hadamard product \citep{dauphin2017language}) between tokens to allow the model to compare tokens in the sequence \citep{dao2022hungry, poli2023hyena, peng2023rwkv}. However, \citet{arora2023zoology} show empirically and theoretically the class of gated convolution architectures, any architectures built from the two gating and convolution primitives, struggles to learn associative recall (on synthetic and real language data) as efficiently as attention. They show that while attention solves AR in constant many layers / with model dimension that is independent of sequence length, any gated convolution architecture uses dimensionality that scales with the sequence length --- we build upon their upper bound theoretical results with a lower bound argument in \Cref{sec:motivation_theory}. We also study a broader set of architectures in this work beyond gated convolutions. 

\citet{gu2023mamba, arora2023zoology, yang2023gated} identify that the use of \textit{input-dependent} sequence mixers is important for an architecture to perform AR as efficiently as attention. AR requires shifting information that appears prior in a sequence to interact with the current (last) tokens in the sequence, in order to predict the next token \citep{dao2022hungry}. While gating is one way to introduce data-dependence \citep{poli2023hyena}, allowing comparing tokens in two (e.g. a shifted and unshifted) sequences, it is difficult to \textit{select which information} from the prefix of the sequence to shift forwards in the first place, using gating alone. Intuitively, the information to shift \textit{depends on the input's properties}. Thus, several subquadratic architectures consider alternate strategies to introduce input-dependence \citep{katharopoulos-et-al-2020, gu2023mamba, ren2023sparse, ma2022mega, yang2023gated}. We present another strategy for efficient input-dependent sequence mixing in our work.

\subsection{Efficient Implementations}
\label{sec:related-eff-sys}

Beyond designing new model architectures, various works introduce systems-level innovations to improve training and inference efficiency. These include alternative implementations of architecture primitives such as attention~\citep{dao2023flashattention2, liu2023ring, kwon2023efficient}, long convolutions~\citep{fu2023flashfftconv, fu2023simple}, and linear attention~\citep{katharopoulos2020transformers, yang2023gated}. They frequently achieve both reduced memory and increased computational speed on modern GPUs by ``fusing'' operations such as matrix multiplications into a single CUDA kernel, and designing ``IO-aware'' ways to distribute and compute the results of various read and write operations between different levels of GPU memory.

\subsubsection{Efficient Attention Implementations}
\cite{dao2022flashattention} introduce FlashAttention, an alternative yet exact implementation of softmax attention that improves memory and speed by both fusing attention operations into a single CUDA kernel and distributing the attention operations to better exploit High Bandwidth Memory (HBM) and Static Random Access Memory (SRAM). They first  compute attention's query-key-value dot-products, masking, and softmax, together as a single kernel. By doing so after a single load to SRAM before moving the output back to HRAM, they exploit SRAM's fast compute and reduce the total number of read-write operations. To get around SRAM's small memory size and avoid attention's quadratic memory size over input sequence length, they use \emph{tiling} to split up the query, key, and value inputs into smaller ``blocks'', compute the attention operations for each block, and adjust the outputs after computing all blocks to properly normalize the softmax~\citep{rabe2021self,8980322}. To perform backpropagation fast on SRAM, they get around SRAM's limited storage by \emph{recomputing} the gradients rather than storing them. Despite the extra operations, this IO-aware implementation still significantly improves wall-clock time during training. 

Similarly making use of block-wise computation, \cite{liu2023ring} instead compute attention blocks across different \emph{devices} in RingAttention, enabling training and inference over much larger context lengths that scale with device count. They distribute and compute the attention operations in each block across multiple hosts in parallel, likewise keeping track of summary statistics to gather results correctly into exact attention. However, they introduce an ``overlapping'' mechanism to coordinate communication of blocks to reduce overhead. They further make use of Blockwise Parallel Transformers~\citep{liu2023blockwise} to reduce memory, which similar to FlashAttention removes the quadratic in memory scaling of attention by dividing the attention operation into separate blocks before gathering back the adjusted softmax output with block-wise normalization statistics.

As a complement to attention training and inference, \cite{kwon2023efficient} improve attention generation with PagedAttention. PagedAttention similarly uses block-wise computation to address memory utilization issues during generation, where the KV cache can grow an undetermined amount. Existing systems may naïvely handle this by pre-allocating large amounts of contiguous memory. However, this can result in low utilization and computational bottlenecks. Accordingly, PagedAttention divides attention's growing KV cache into \emph{KV blocks} that can be stored separately on physical memory. This enables more flexible memory management, where smaller chunks can be allocated in different locations when needed to reduce memory-based bottlenecks.

In Based, we use similar blocking strategies to more efficiently compute both the second-order Taylor series linear attention and the sliding window softmax attention, and for both training and inference.

\subsubsection{Efficient Attention-Alternative Implementations}
Beyond optimizations for attention, various works also introduce similar ``IO-aware'' implementations to improve memory usage and speed for convolutional and recurrent operations. We overview the most relevant works to Based, which make use of similar techniques such as fusing operations and blocking (tiling) to compute results in SRAM.
\paragraph{Long convolutions}
\cite{fu2023flashfftconv} improve the efficiency of long convolutions on modern GPUs. They build on using the Fast Fourier Transform (FFT), which enables computing convolutions with filter sizes equal to input sequence length from $\mathcal{O}(N^2)$ (if $N$ is filter size and sequence length) to $\mathcal{O}(N \log N)$. However, to compute this algorithm efficiently on GPUs, they break down the convolution into separate matrix multiply operations via a \emph{Monarch} decomposition of the FFT, which allows both (1) fusing multiple steps of the FFT together (for reduced read-write operations) and (2) scheduling these operations for fast computation in SRAM while remaining under the smaller SRAM memory constraints.
\paragraph{Recurrence} \cite{gu2023mamba} improve the efficiency of recent neural state-space models (SSMs)~\citep{gu2021efficiently} using several similar techniques to FlashAttention, specifically with regard the recurrent view. They load the SSM parameters into SRAM for computation before saving results back in HBM, and also use \emph{recomputation} where during backpropagation the intermediate states are not saved but rather recomputed when inputs are loaded from HBM to SRAM. They finally improve wall-clock time by parallelizing the recurrent view of the SSM as a parallel scan. 
\paragraph{Linear Attention} Finally, several works propose techniques to improve the real-world wall-clock time and memory-usage of linear attention. \cite{katharopoulos2020transformers} fuse several operations in the causal dot product of linear attention. \citep{yang2023gated} use blocking to divide the linear attention matrices into SRAM-computable chunks in FlashLinearAttention. As a trade-off between the slow yet memory-efficient RNN view of linear attention and faster but memory-intensive parallel ``standard attention'' view, they further optimize a ``chunk-wise'' implementation of linear attention~\citep{hua2022transformer}. When processing input sequences, the input is first divided into several non-overlapping chunks, where we save memory by computing ``kv states'' at the end of each chunk, and save time by computing the tokens in a given chunk in parallel.

\clearpage

\section{IO Aware Implementations}

\label{sec:app_benchmarking}
\label{app:implementation}

In this section, we provide additional details pertaining to the benchmarking experiments and we provide micro-benchmarking results for the individual kernels we contribute, to complement the end-to-end benchmarking results in the \Cref{sec:results}. Each kernel operates over $16 \times 16$ tiles of data, where dimension $16$ is motivated by the matrix multiplication sizes computed by GPU tensor cores.

\subsection{Forward / Generation Prefill} 

\paragraph{Baselines} In Figure \ref{fig:forward_throughput}, we implement \sysname using our IO-aware Taylor linear attention Algorithm \ref{alg:a0_fwd}. The baseline approach presented in \cite{hedgehog2023}, prior to our kernel, uses the popular linear attention CUDA kernel from Fast Transformers for computing the causal dot product \cite{katharopoulos2020transformers, vyas_et_al_2020}. \footnote{\url{https://github.com/idiap/fast-transformers/blob/master/fast_transformers/attention/causal_linear_attention.py}}. The listing below shows the baseline implementation for reference (where line 76-77 can be computed using pure PyTorch or the Fast Transformers kernel) \cite{hedgehog2023}.

\paragraph{Micro Benchmark} To complement the end-to-end architecture benchmarks in \Cref{sec:results}, we provide micro benchmark results for only the linear attention forward pass in \Cref{fig:la_fwd_micro}.

\begin{figure}[h]
    \centering
    \includegraphics[width=0.47\linewidth]{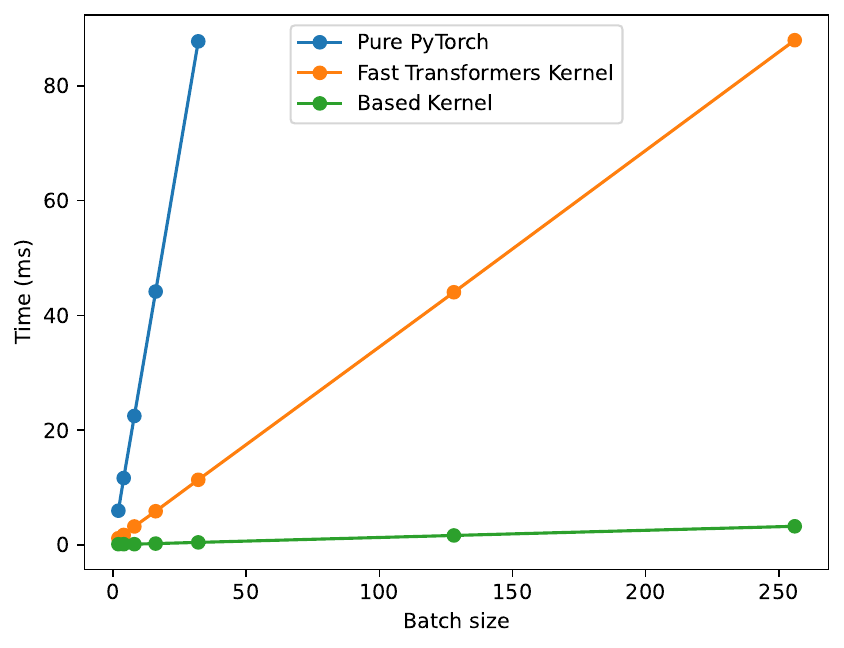}
    \includegraphics[width=0.47\linewidth]{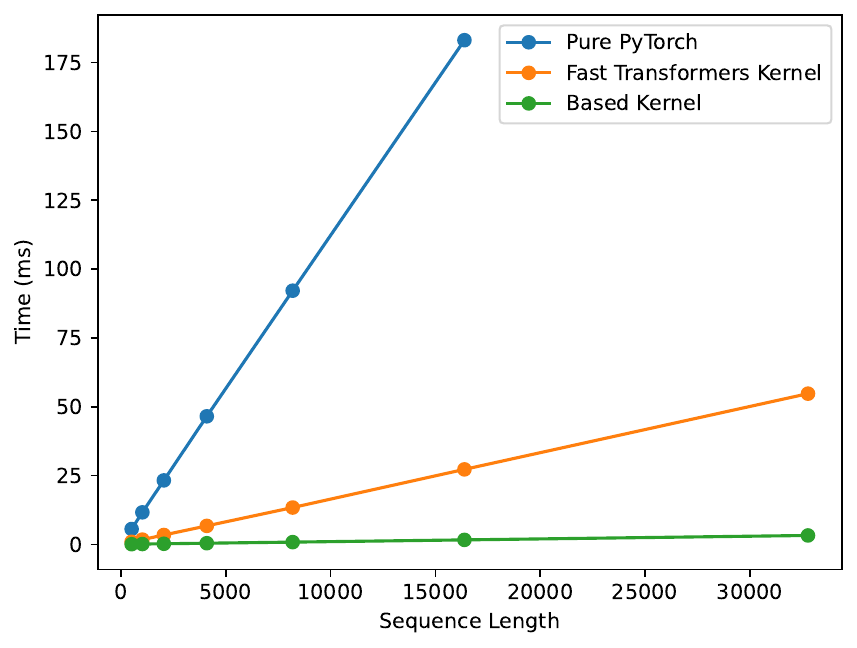}
    \caption{Time (ms) for different ways of computing the Taylor linear attention forward pass --- using Pure PyTorch (shown in the Listing and introduced in \cite{hedgehog2023}), Fast Transformers kernel (as indicated in the listing) \cite{vyas_et_al_2020, katharopoulos-et-al-2020}, or our \sysname  kernel (\Cref{alg:a0_fwd}).  \textbf{(Left)}  Varying the batch size at fixed sequence length $1024$. \textbf{(Right)} Varying the sequence length at fixed batch size $4$. \textbf{(All)} Benchmarking uses $16$ feature dimension, $16$ heads, $64$ head dimension, and focuses on the \textit{numerator} of the linear attention. Each point represents the median across $10$ iterations is measured on a single NVIDIA H100 GPU. Lines terminate on out-of-memory errors.}
    \label{fig:la_fwd_micro}
    \vspace{-2mm}
\end{figure}

\begin{lstlisting}[language=Python,frame=single, linewidth=\textwidth,style=mystyle,caption={PyTorch implementation of Taylor linear attention.}]
from einops import rearrange
import torch
from torch import nn

class TaylorExp(nn.Module):
    """
    Feature map to compute 2nd-order Taylor approx. of exp(q^T k / sqrt(d))
    """

    def __init__(self, input_dim, head_dim_idx, temp=None, eps=1e-12):
        super().__init__()

        self.input_dim = input_dim
        self.head_dim_idx = head_dim_idx
        self.temp = 1.0 if temp is None else temp
        self.eps = eps

        self.r2 = math.sqrt(2)
        self.rd = math.sqrt(self.input_dim)
        self.rrd = math.sqrt(self.rd)

    def forward(self, x: torch.Tensor):
        # Get 2nd-order terms (rearrange(x * x), '... m n -> ... (m n)')
        x2 = (x.unsqueeze(-1) * x.unsqueeze(-2)).flatten(start_dim=-2) / self.r2
        term1 = torch.ones(x[..., :1].shape).to(x.device)
        term2 = x / self.rrd
        term3 = x2 / self.rd
        terms = [term1, term2, term3]
        return torch.cat(t for t in terms), dim=self.head_dim_idx)


class TaylorLinAttn(nn.Module):
    def __init__(self):
        super().__init__()
        self.d_model = d_model
        self.feature_dim = 16
        self.num_heads = 16
        self.num_key_value_heads = 16
        self.head_dim = self.d_model // self.num_key_value_heads
        self.eps = 1e-12

        feature_map_kwargs = {
            "input_dim": self.feature_dim,
            "head_dim_idx": -1,
            "eps": 1e-12,
        }
        self.feature_map = TaylorExp(**feature_map_kwargs)
        self.proj_q = nn.Linear(
            self.d_model, self.feature_dim * self.num_heads, bias=False
        )
        self.proj_k = nn.Linear(
            self.d_model, self.feature_dim * self.num_heads, bias=False
        )
        self.proj_v = nn.Linear(
            self.d_model, self.num_key_value_heads * self.head_dim, bias=False
        )
        self.proj_o = nn.Linear(
            self.num_heads * self.head_dim, self.d_model, bias=False
        )

    def forward(self, hidden_states: torch.Tensor, *args, **kwargs):
        b, l, _ = hidden_states.size()
        q = self.proj_q(hidden_states)
        k = self.proj_k(hidden_states)
        v = self.proj_v(hidden_states)
        q = q.view(b, l, self.num_heads, self.feature_dim).transpose(1, 2)
        k = k.view(b, l, self.num_key_value_heads, self.feature_dim).transpose(1, 2)
        v = v.view(b, l, self.num_key_value_heads, self.head_dim).transpose(1, 2)

        # Linear attention
        q, k = self.feature_map(q), self.feature_map(k)
        q, k, v = q.unsqueeze(-2), k.unsqueeze(-2), v.unsqueeze(-1)

        # Compute attention causal (alternatively use the Fast Transformers kernel)
        num = (q * (k * v).cumsum(dim=2)).sum(dim=-1) 
        denom = (q * k.cumsum(dim=2)).sum(dim=-1) + self.eps
        y = (num / denom)

        y = rearrange(y, "b h l d -> b l (h d)")
        y = self.proj_o(y)
        return y
\end{lstlisting}

\begin{algorithm*}
  \caption{\label{alg:a0_fwd} Computing the $0^{th}$ ($T0$), $1^{st}$ ($T1$), $2^{nd}$ ($T2$) Order Taylor Linear Attention Terms}
  \small
  \begin{algorithmic}
    \Require{Input projected hidden states $q, k, v \in \mathbb{R}^{N \times d}$.}
    \Ensure{Output $y = T0 + T1 + T2 \in \mathbb{R}^{N \times d}$}
    \State \textbf{Parallelize} into $\mathrm{batch} \times \mathrm{heads}$ parallel computations, with $\mathrm{n_{warps}} = 8$ warps per block.
    \State \textbf{Within a block:} 
    \Statex Define tile size $T$ \Comment{$T = 16$ in ${\sysname}$}
    \Statex Define $\mathrm{n_{tiles}} = \frac{N}{T}$ \Comment{Block along the sequence dimension}
    \Statex Define $\mathrm{n_{blocks}} = \mathrm{n_{tiles}} / \mathrm{n_{warps}}$  \Comment{Block along the number of warps}
    \Statex Define $\mathrm{tic}=0$, $\mathrm{toc}=1$ \Comment{Flags for asynchronous data loading} \newline

    \Statex Create SRAM buffers $B_q$, $B_k$ (Size $2 \times \mathrm{n_{warps}} \times  T \times T$) and $B_v$ (Size $2 \times \mathrm{n_{warps}} \times T \times 4T$)
    \Statex Create SRAM buffers $\mathrm{A0}, \mathrm{A1}, \mathrm{A2}$ (Size $\mathrm{n_{warps}} \times T \times 4T$) for storing interim. results for $T0, T1, T2$ as warps process the sequence
    \Statex Create SRAM buffers $total_{A0}$ and $total_{A1}$ to hold cumulative (``KV'') state corresponding to $T0, T1$
    \Statex Create SRAM buffers $y$ of (Size $\mathrm{n_{warps}} \times T \times 4T$) for storing the final output \newline

    \Statex Create register fragments $\mathrm{q_a}, \mathrm{q_b}, \mathrm{k_a}, \mathrm{k_b}, \mathrm{q_{frag}}, \mathrm{k_{frag}}$, $\mathrm{qk_{accum}}$ of size $16 \times 16$. We create register fragments $\mathrm{v_{frag}}, \mathrm{a0_{frag}}$, $\mathrm{a1_{accum}}$, $A2_0$, $A2_1$, $\mathrm{qA2_{accum}}$, $\mathrm{o_{accum}}$ of size $16 \times 64$. These fragments are for holding data during in-register computation. Initialize the fragments to $0$.
    
    \State Each warp loads initial tiles $B_q[\mathrm{tic}][\mathrm{warp id}] \leftarrow Q_{t}, B_k[\mathrm{tic}][\mathrm{warp id}] \leftarrow K_{t}$ and $B_v[\mathrm{tic}][\mathrm{warp id}] \leftarrow V_{t}$ \Comment{HBM into SRAM} \newline
    \For{$\mathrm{cur_{block}} \in [0 .. \mathrm{n_{blocks}-1}]$; $\mathrm{tic}=0 \oplus = 1$, $\mathrm{toc} \oplus 1$} \Comment{XORs $\mathrm{tic}$ and $\mathrm{toc}$ to toggle.}
    \State Warp loads $B_q[\mathrm{toc}][\mathrm{warp id}] \leftarrow Q_{t}$ for $\mathrm{cur_{block}}+1$ \Comment{HBM to SRAM} 
    \State Warp loads $B_k[\mathrm{toc}][\mathrm{warp id}] \leftarrow K_{t}$ for $\mathrm{cur_{block}}+1$
    \State Warp loads $B_v[\mathrm{toc}][\mathrm{warp id}] \leftarrow V_{t}$ for $\mathrm{cur_{block}}+1$ \newline 

    \State 
    Warp loads $\mathrm{q_{frag}} \leftarrow q[\mathrm{tic}][\mathrm{warp id}]$ \Comment{SRAM into register}
    \State  Warp loads $\mathrm{k_{frag}} \leftarrow k[\mathrm{tic}][\mathrm{warp id}]$
    \State  Warp loads $\mathrm{v_{frag}} \leftarrow v[\mathrm{tic}][\mathrm{warp id}]$
    \newline

    \State 
    Compute the warp-local cumulative sum on $\mathrm{v_{frag}} \rightarrow \mathrm{a0_{frag}}$. \Comment{\textcolor{blue}{T0} computation}  
    \State Add the running $A0$ to the current $\mathrm{a0_{frag}}$ 
     \newline

    \State 
    Compute $\mathrm{q_{frag}} \mathrm{k_{frag}}^T$ (attention) and make it causal and store in a $\mathrm{qk_{accum}}$ \Comment{\textcolor{red}{T1} computation}
    \State Compute $\mathrm{qk_{accum}}\mathrm{v_{frag}} \rightarrow \mathrm{o_{accum}}$ \Comment{Store causal $qk^Tv$}
    \State Warps store $\mathrm{k_{frag}^T} \mathrm{v_{frag}} \rightarrow \mathrm{a1_{accum}}$ and write $\mathrm{a1_{accum}} \rightarrow A1[\mathrm{warpid}$] \Comment{Register to SRAM} 
    \State Compute cumulative sum over $A1$ in SRAM, updating $A1$ entries
    \State Warps read $A1$ tiles back to registers \Comment{Each warp now contains its preceeding $A1$}
    \State Warps multiply the values in register with $\mathrm{q_{frag}}$ to update $\rightarrow \mathrm{o_{accum}}$ \Comment{Add in \textcolor{red}{T1} to the running result} 
    \State 
    Update $\mathrm{a0_{frag}} \rightarrow \mathrm{o_{accum}}$
    \Comment {Add in \textcolor{blue}{T0} to the running result} 
    \State 
    Square $\mathrm{qk_{accum}}$, multiply with $\mathrm{v_{frag}}$ and add $\rightarrow \mathrm{o_{accum}}$ 
    \Comment {Add in \textcolor{green}{diagonal T2} to the running result}  
    \State Sum the values of $\mathrm{o_{accum}}$ into $y[\mathrm{warpid}]$ 
    \newline

    \For {$\mathrm{block}$ in $\mathrm{n_{warps}}$ iterations}    \Comment{\textcolor{green}{Remaining T2} computation; Assumes feature dimension $16$} 
    \State  
    Each of $8$ warps copies the \textit{same} slice of $q[\mathrm{tic}][\mathrm{warp id}]$ to $2$ registers $q_a$, $q_b$ 
    \State Each thread $j$ in the warp computes $q_a[:, 2j] q_a$ for dimension $2j$, and for $2j+1$ (and for $q_b$). Together the threads compute the $256$ elements resulting from the second order outer product in the feature map.
    \State 
    Each warp stores two slices of $A2$: $A2_0$ and $A2_1$ \Comment{Partitioning the large $A2$ across warp registers}
    \State Accumulate both $q_a A2_0$ and $q_b A2_1$ $\rightarrow \mathrm{qA2_{accum}}$
    \State 
    Warp writes $\mathrm{qA2_{accum}} \rightarrow A2[\mathrm{warp id}]$ 
    \Comment{Register to SRAM}
    \State Sum results across all in $A2[\mathrm{warp id}]$ and store the sum in $y[\mathrm{block}]$ \Comment{Add in \textcolor{green}{T2}} \newline
    
    \State Each of $8$ warps copies the \textit{same} slice of $k[\mathrm{tic}][\mathrm{block}]$ to $2$ registers $k_a$, $k_b$ 
    \Comment{\textcolor{orange}{KV state} update}
    \State Square $k_a$ and $k_b$
    \State Each of the $8$ warps loads $v[\mathrm{tic}][\mathrm{block}]$ to $\mathrm{v_{frag}}$ in register
    \State Multiply $k_a$ and $\mathrm{v_{frag}}$, $k_b$ and $\mathrm{v_{frag}}$ and accumulate the results into $A2_0$ and $A2_1$, the two in-register slices of $A2$ for the warp, respectively 
    \EndFor{}
    \EndFor{}
    \State \textbf{End.} Store $y$. Optionally store $A0$, $A1$, $A2$ (comprising the ``KV state'') for generation. \Comment{SRAM to HBM}
  \end{algorithmic}
\end{algorithm*}

\paragraph{Algorithm} Here we revisit the key equations we aim to compute and then describe \Cref{alg:a0_fwd} in detail.\newline

\noindent \textit{Objective} First recall from \Cref{sec:architecture}:
\begin{equation}
    \label{eq:linar_attention_reshow}
     \bm{o}_i = \sum_{j = 1}^i\frac{ \phi(\bm{q}_i)^\top  \phi(\bm{k}_j) \bm{v}_j 
    }{\phi(\bm{q}_i) \sum_{j = 1}^i \phi(\bm{k}_j)}
    = \frac{\phi(\bm{q}_i) \sum_{j = 1}^i \big( \phi(\bm{k}_j)^\top \bm{v}_j \big)
    }{\phi(\bm{q}_i) \sum_{j = 1}^i \phi(\bm{k}_j)}
\end{equation}
where $q_i$ reflects the $i^{th}$ of $N$ total tokens in the sequence and every query attends to every past key in $\mathcal{O}(Nd^2)$ time and space complexity for embedding dimension $d$. 

To approximate $\exp(\bm{q}_i^\top \bm{k}_j / \sqrt{d})$, we use the $2^\text{nd}$-order Taylor series feature map, picking $\phi: \mathbb{R}^d \rightarrow \mathbb{R}^{d^2} $ such that
\begin{equation}
    \label{eq:taylor_feature_map_reshow}
    \phi(\bm{q}_i)^\top \phi(\bm{k}_j) = 1 + \bm{q}_i^\top \bm{k}_j + \frac{(\bm{q}_i^\top \bm{k}_j)^2}{2}
\end{equation}
In this section, we will refer to $q_i$ as a \textbf{tile} of data (e.g. of $16$ tokens) instead of as a single token since the hardware operates on chunks of data in parallel. 
\newline

\noindent \textit{Algorithm description} In \Cref{alg:a0_fwd}, we allow each thread block to compute the result for a particular $(\mathrm{batch}, \mathrm{head})$ input. Within the thread block, we use $8$ warps / workers to produce the result. We initialize data structures $B_q, B_k, B_v$ in SRAM and $\mathrm{q_a}, \mathrm{q_b}, \mathrm{k_a}, \mathrm{k_b}, \mathrm{q_{frag}}, \mathrm{k_{frag}}, \mathrm{v_{frag}}$ in register to hold chunks or \textit{tiles} of the $q, k, v$ inputs. We initialize data structures $A0, A1, A2$ in SRAM and $\mathrm{a0_{frag}}$, $\mathrm{a1_{accum}}$, $\mathrm{qA2_{accum}}$ in register to hold computation for the running $KV$ state for the $0^{th}, 1^{st}, 2^{nd}$ order Taylor polynomial terms.

We partition the computation along the sequence dimension into $\mathrm{n_{blocks}}$, where in each loop from $1$ to $\mathrm{n_{blocks}}$, the warps load the next $8$ chunks into fast memory. Note that for $2048$ sequence length and $8$ warps, $16$ tile size, we end up with $\mathrm{n_{tiles}} = 128$ and $\mathrm{n_{blocks}} = 16$. In each iteration, each warp loads in $16 \times 16$ tiles of $q, k$ and $16 \times 64$ tiles of $v$, where $16$ indicates a chunk of $16$ tokens along the sequence dimension and $16, 64$ are the feature and head dimensions respectively. Once tiles are streamed in, we do not need to reuse them, which is key to the efficiency of linear attention.

\textbf{Overall approach} Our overall approach is to compute $\bm{o_i}$ by splitting the $\bm{q}, \bm{k}, \bm{v}$ multiplications as such:
$$\bm{y_i} = \mathrm{Causal}(\bm{q_i}^T \bm{k_i})\bm{v_i} + \bm{q_i}\sum_{j=0}^{i-1} (\bm{k_j}\bm{v_j})$$

where the first term uses the quadratic attention view and requires applying causal masking. Imagining the square attention matrix, we refer to the first term as computing the interactions \textbf{on the diagonal}. The second term uses the linear view and its causality has already been handled. We refer to this term as \textbf{off-diagonal}.


\textbf{Zeroeth order Taylor terms:} During the computation, for the $0^{th}$ term in the Taylor polynomial,  $q, k$ are $1$ after we apply the feature map (\Cref{eq:taylor_feature_map_reshow}). Therefore, computing a cumulative sum over $q(k^Tv)$ reduces to maintaining a cumulative sum of $v$ as we iterate across the sequence. 

\textbf{First order Taylor terms:} Next we consider the $1^{st}$ order terms.
\textbf{On-diagonal:} First consider the on-diagonal blocks, e.g. with respect to tiles $q_i, k_i, v_i$. For these, we simply multiply $q^Tk$, masking (making it causal), and then multiplying with $v$, following the order of operations in standard attention (i.e., a \textit{quadratic attention} view). This makes it easy to apply the masking ($0$ out non-causal elements).
Now each warp contains a local result for its set of on-diagonal tiles of $q_i,k_i,v_i$.

\textbf{Off-diagonal:} However, we need to obtain a \textit{global} cumulative sum where $(q_i^Tk_j)v_j$ depends on all $j \in [1 ..i]$ (\Cref{eq:linar_attention_reshow}). 
Each warp is therefore missing values for tiles $j \in [1 .. i-1]$. To incorporate this computation, we will now compute the cumulative $KV$ hidden state for the warp up until $i-1$ and multiply this with the local tile of $q$ (i.e. $\mathrm{q_{frag}}$). 
To accomplish this, in \Cref{alg:a0_fwd}, we multiply $\mathrm{k_{frag}}^T$ and $\mathrm{v_{frag}}$ to compute local tiles of the hidden state, local to each warp, in thread register. To perform the global cumulative sum across the $8$ warps' local results, we write from registers (thread specific) to $A1$ in SRAM (shared across warp threads). After computing the global cumulative sum in shared memory, each warp loads back the $KV$ state (in $A1$) into its registers such that it contains all the preceeding $KV$ (history) for tiles $[1 .. i-1]$. We then multiply the local $\mathrm{q_{frag}}$ in register with this $KV$ state to update the final output for the $1^{st}$ up until the current $\mathrm{n_{blocks}}$. Note that we maintain the running $KV$ state corresponding to the $1^{st}$ order term in $A1$ shared memory for the next iteration along $\mathrm{n_{blocks}}$. 

\textbf{Second order Taylor terms:}  We finally need to compute the $2^{nd}$ order term. Similar to the $1^{st}$ order term, we'll consider 
\textbf{On-diagonal:}
We can leverage the computation from above. We'll square the causal $(qk^T)^2$ from above and multiply with $\mathrm{v_{frag}}$ to obtain the \textit{portion} of the $2^{nd}$ order term corresponding to the on-diagonal tiles $q_i, k_i, v_i$. 
\textbf{Off-diagonal:} Again, we also need to compute the result with respect to tiles $[1..i-1]$.
\begin{itemize}
    \item \textbf{Partitioning KV hidden state for $2^{nd}$ order} Because the hidden state for the second order term is large ($\mathcal{O}(d^2D)$ in feature dimension $d$ and head dimension $D$) and warps have a limited number of registers, we slice its storage across the registers of the $8$ warps. Considering the the $16^2 \times 64$ ($d^2 \times D$) hidden state (stored in $A2$ SRAM in \Cref{alg:a0_fwd}), we divide this into $16$ slices along the sequence dimension and let each of the $8$ warps handle $2$ of the $16 \times 64$ slices (stored in $A2_0, A2_1$ fragments in thread registers in \Cref{alg:a0_fwd}). Warp $i$ will maintain slices $2i$ and $2i+1$ in two registers per thread. 
    
    \item \textbf{Computing output for $2^{nd}$ order} Each warp $i$ loads in one tile of $q_i$ into $2$ registers. We will use the $32$ threads in the warp to compute the $256$ outer product terms for each token computed by the Taylor $2^{nd}$ order term (for feature dimension $16$).

    Next, the threads multiply these $256$ terms with the running $A2_0$ and $A2_1$ slices. The results for the two slices are summed in register and then stored in SRAM ($A2[\mathrm{warpid}]$). Since $o_i$ is ultimately the sum of $q_i$ terms multiplied with \textit{all} slices of $A2$ (\Cref{eq:linar_attention_reshow}), we then sum the results from all the warps together (which hold the remaining slices of $A2$) and store the result in $y[\mathrm{block}]$. We can think of $y[\mathrm{block}]$ as holding the result up until the $(8\times\mathrm{cur_{block}} + \mathrm{block})$ tile of tokens (note $8$ is because in each increment of $\mathrm{cur_{block}}$, the $8$ warps handle $8$ different tiles of the sequence).

    \item \textbf{Updating the $KV$ state:} For $\mathrm{block}=i$, we load in $k[i], v[i]$ tiles of size $16 \times 16$ and $16 \times 64$ respectively to registers $k_a, k_b, \mathrm{v_{frag}}$. We compute the $256$ outer product terms on $k[i]$ using the $32$ threads, multiply with $\mathrm{v_{frag}}$, and store the result in the $A2_0, A2_1$ running state. 
\end{itemize}
The final result in $y$ is summed into the output to complete the $2^{nd}$ order computation.

\clearpage

\subsection{Next Token Prediction}

During next token prediction in generation, we contribute IO-aware algorithms for the expensive KV-state update in Taylor linear attention and for the sliding window attention computation. 

\subsubsection{Taylor linear attention recurrent update} 
During next token prediction, an important consideration is how to efficiently update the recurrent state $KV_{t} \in \mathbb{R}^{BHDd}$ at timestep $t$. The expensive operation during next token prediction is computing the outer product between projected hidden states $k_{t+1} \in \mathbb{R}^{BHD}$ and $v_{t+1} \in \mathbb{R}^{BHd}$. The outer product requires $\mathcal{O}(BHDd)$ computation and space, and the result is summed with $KV_{t}$ to produce $KV_{t+1}$. We provide an IO-aware algorithm for the state updates in \Cref{alg:lin_gen}. This algorithm incurs $\mathcal{O}(BHD + BHd)$ bytes of HBM to SRAM data movement (to load the $q, k, v$ projections).

The KV update in PyTorch is provided in the following listing. In \Cref{fig:la_inf} we benchmark the speed of the PyTorch implementation against our kernel. 

\begin{figure}[h]
    \centering
    \includegraphics[width=0.6\linewidth]{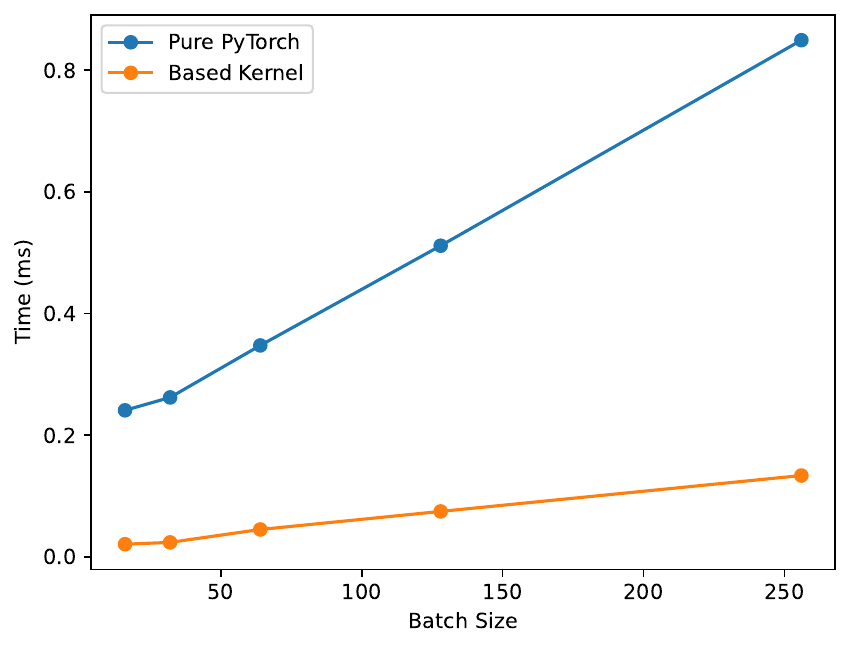}
    \caption{Time (ms) for computing the Taylor linear attention recurrent update using Pure PyTorch (shown in the Listing and introduced in \cite{hedgehog2023}) vs. our \sysname  kernel (\Cref{alg:lin_gen}). Benchmarking uses $16$ feature dimension, $16$ heads, $64$ head dimension, and focuses on the \textit{numerator} of the linear attention. Each point represents the median across $10$ iterations is measured on a single NVIDIA H100 GPU.}
    \label{fig:la_inf}
    \vspace{-2mm}
\end{figure}

\begin{lstlisting}[language=Python,frame=single, linewidth=0.8\textwidth,style=mystyle,caption={PyTorch implementation of Taylor linear attention KV update}]
from einops import rearrange
import torch
from torch import nn

def step(self, kv_state: torch.Tensor, k_state: torch.Tensor, q: torch.Tensor, k: torch.Tensor, v: torch.Tensor):
        """
        Compute linear attention with recurrent view
        -> Assume q.shape is (b, h, 1, D); k and v.shape are (b, h, l, d), where D is the dimension after applying the feature map and d is the head dimension. 
        """
        b, h, l, d = q.shape
        assert l == 1, f'q.shape is {q.shape} but should be ({b}, {h}, 1, {d})'
        # Expand dims for broadcasting to compute linear attention
        q, k, v = q.unsqueeze(-2), k.unsqueeze(-2), v.unsqueeze(-1)

        kv_state += k[:, :, -1:] * v[:, :, -1:]
        k_state  += k[:, :, -1:]

        # Compute linear attention
        num = (q * kv_state).sum(dim=-1)
        y = num / ((q * k_state).sum(dim=-1) + self.eps)

        y = rearrange(y, 'b h l d -> b l (h d)').to(q.dtype)
        return self.dropout(self.out_proj(y))

\end{lstlisting}

\begin{algorithm*}
  \caption{\label{alg:lin_gen} Computing $KV$ State Updates}
  \small
  \begin{algorithmic}
    \Require{$KV_{t-1}$ state $\in \mathbb{R}^{Hd'^2d}$, at time $t$. Featurized $q, k \in \mathbb{R}^{B \times H \times 1 \times D}$ and $V \in \mathbb{R}^{B \times H \times 1 \times d}$, for $d$ as the head dimension (e.g. $64$) and $D$ as the expanded feature map dimension (e.g. $273 = 1+16+16^2$ for feature dim $16$). To be hardware-friendly, we let $D=320$ (s.t. $320 \mod 64 = 0$) via padding.}
    \Ensure{Updated $KV_{t}$ state.}
    
    \State \textbf{Parallelize} 
    into $\mathrm{batch} \times \mathrm{heads}$ parallel computations, with $\mathrm{n_{warps}}=8$ warps per block.

    \State \textbf{Within a block:}
    \State Define $\mathrm{n_{threads}}= \mathrm{n_{warps}} \times 32$ 
    \Comment{Assuming $32$ threads per warp}
    \State Define $\mathrm{buffer_{size}} = \mathrm{n_{warps}} \times 8 \times d$ 
    \State Define $\mathrm{total_{batches}} = \frac{D}{\mathrm{n_{warps}\times 8}}$ \Comment{E.g. $\mathrm{total_{batches}} = 5$ if $D=320$; For $k$, $\frac{320}{5} = 64$ values per batch}
    \State Define $\mathrm{tic} = 0, \mathrm{toc} = 1$
    \newline

    \State Create SRAM buffer $B_q$ (Size $D$) for $q$
    \State Create SRAM buffer $B_k$ (Size $D$) for $k$
    \State Create SRAM buffer $B_v$ (Size $d$) for $V$ 
    \State Create SRAM buffer $B_{kvs}$ (Size $2 \times \mathrm{buffer_{size}} $) for storing blocks of $\mathrm{kv_{state}}$
    \State Create SRAM buffer $\mathrm{o}$ (Size $d$) for output.
    \State Create SRAM buffer $\mathrm{A}$ (Size $\mathrm{n_{warps}} \times d$) for intermediate computation
    \newline

    \State Create register buffer $\mathrm{v_{reg}}$ (Size $2$) to store $V$ data
    \State Create register $\mathrm{A_{reg}}$ (Size $2$) for intermediate computation 
    \newline

    \State Warps load $B_q \leftarrow q$ \Comment{HBM to SRAM; Load all $D=320$ elements of $q$}
    \State Warps load $B_k \leftarrow k$
    \State  Warps load $B_v \leftarrow V$
    \State  Warps load chunk $B_{kvs}[\mathrm{tic}]  \leftarrow \mathrm{kv_{state}}$  \Comment{Load $(1 \times 64) \times 64$ of the $(\mathrm{total_{batches}} \times 64) \times 64$ elements in $KV_{t-1}$}
    \newline

    \State Initialize $m=0$
    \For{Threads $j \in [0 .. 31]$; $j < d$; $j += 32, m += 1$} 
    \Comment{Each thread holds $2$ values ($d = 64$; $32$ threads)}
    \State Load $\mathrm{v_{reg}}[m] \leftarrow {v[j]}$ 
    \Comment{SRAM to Register; Now $v[j]$ is stored in thread $j \mod 32$} 
    \EndFor{}
    \newline

    \For{$i \in [0.. \mathrm{total_{batches}}]$;  $i = i + 1$, $\mathrm{tic} \oplus 1$, $\mathrm{toc} \oplus 1$}
    \State Loads $B_{kvs}[\mathrm{toc}] \leftarrow$ next batch of $\mathrm{kv_{state}}$ 
    \Comment{Asynchronous loads of next batch}
    \newline 
    
    \For{$j = \mathrm{warpid}$; $j < d$; $j += \mathrm{n_{warps}}$} 
    \Comment{Each of the $8$ warps loads $8$ of the $64$ \textit{rows} of $k$, $q$ in the batch}
    \State $\mathrm{k_{val}} \leftarrow B_{k}[i*d + j]$ 
    \Comment{Grab single rows $q[i]$ and $k[i]$, Broadcast to all threads}
    \State $\mathrm{q_{val}} \leftarrow B_{q}[i*d + j]$ 
    \newline
    
    \State $p = B_{kvs}[\mathrm{tic}] + j*d$ 
    \Comment{Point to output rows of $KV_t$; We write $d \times \frac{D}{\mathrm{total_{batches}}}$ sub-matrix for this batch}
    \State Initialize $m=0$
    
    \For{Thread $k \in [0..31]$; $k < d$; $k += 32, m += 1$} 
    \State $p[k] += \mathrm{k_{val}}*\mathrm{v_{reg}}[m]$ 
    \Comment{Update running state by multiplying broadcasted $\mathrm{k_{val}}$ with the full $\mathrm{v_{reg}}$}
    \State \Comment{This updates a $1 \times d$ strip of the $d \times D$ full $KV_t$ outer product}
    \State $\mathrm{A_{reg}}[m] += \mathrm{q_{val}}*p[k]$ 
    \Comment{Multiply $\mathrm{q_{val}}$ with the running state, updating all values in the $1 \times d$ output}
    \EndFor{} 
    
    \EndFor{}\newline
    
    \State Write out new $KV_{t}$ state for this batch: $B_{kvs}[\mathrm{tic}][k]$ 
    \Comment{SRAM to HBM}
    \EndFor{} \newline

    \State Initialize $m=0$
    \For{Threads $j \in [0 .. 31]$; $j < d$; $j += 32, m += 1$} 
    \Comment{Each thread holds info for $2$ of the $64$ output values}
    \State Store $A[\mathrm{warpid}][j] \leftarrow \mathrm{A_{reg}}[m]$
    \Comment{Register to SRAM}
    \EndFor{} \newline
    \For{Thread $j$; $j < d$; $j += \mathrm{n_{threads}}$} 
    \Comment{$d=64$ threads put values from first warp in $n_j$}
    \State $n_j = A[0][j]$ 
    \Comment{Each warp had only computed output values for a subset of (e.g. $8$) rows of $k$ and $q$}
    \For{$w \in [0..\mathrm{n_{warps}}]$}
    \State Sum the $n_j += A[w][j]$ across  \Comment{Need to combine results across warps}
    \EndFor{}
    \State Store $o[j] \leftarrow n_j$
    \EndFor{}
    \State Write output $\mathrm{o}$ \Comment{SRAM to HBM}
  \end{algorithmic}
\end{algorithm*}
\clearpage

\begin{figure}[h]
    \centering
    \includegraphics[width=0.65\linewidth]{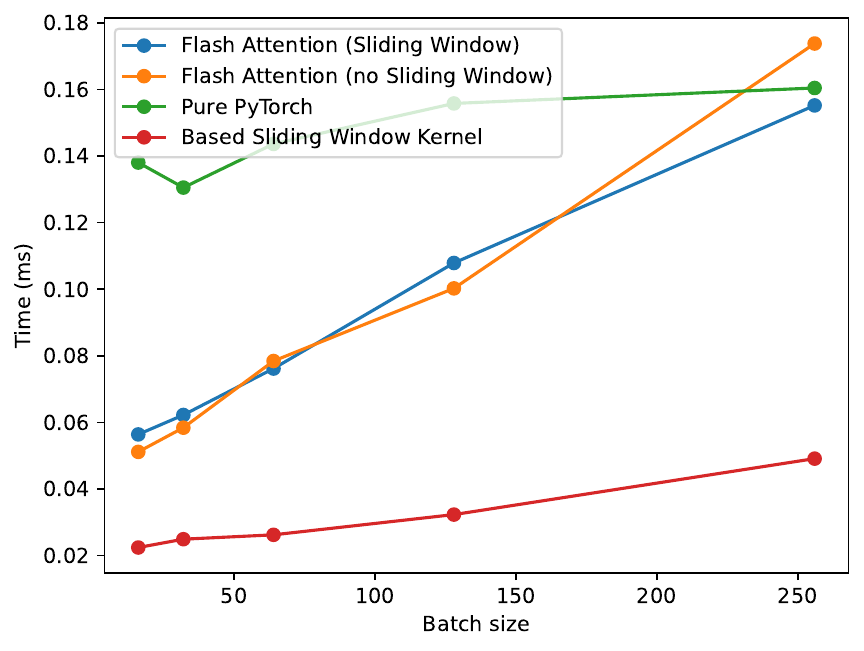}
    \caption{Time (ms) for different ways of computing sliding window attention next token prediction --- using PyTorch, Flash Attention (which supports a sliding window function), or our inference kernel. Each point represents the median across query tokens at different token positions in the generation $\in \{100, 250, 500, 750\}$.}
    \label{fig:sw_micro}
    \vspace{-2mm}
\end{figure}

\subsubsection{Sliding window attention} 

\label{sec:eff-slide}
Next we motivate the choice of window size for ${\window}$. In contrast to sliding-window style models such as the popular Mistral models, which use large window sizes $w=4096$ \citep{mistral7b}, ${\sysname}$ chooses a window size based on hardware specifications. GPU tensor cores operate on $16 \times 16$ tiles. Large GEMMs are compute bound (for e.g. in long-context attention). But, we need sufficient occupancy to hide the latency of the tensor core units. \Cref{fig:based-design} (Right) shows $64 \times 64$ dimension matrix multiplications are approximately the same latency as $16 \times 16$. ${\sysname}$ sets $w$ to use $64 \times 64$ tiles (Figure \ref{fig:based-design}). To distinguish from prior sliding windows, we refer to this approach as ${\window}$. We use the Flash Attention sliding window implementation during training \citep{dao2023flashattention2} and in
 \Cref{app:implementation} \Cref{alg:slide_gen}, we provide an IO-aware algorithm of ${\window}$ for next token prediction. 
 The naïve sliding window approach reads and writes $\mathcal{O}(BHwd)$ bytes between SRAM and HBM between each step of the attention computation. Our approach fuses computation in thread registers to improve upon the baselines.

\paragraph{Baselines} During training / prefill, we use the Flash Attention sliding window implementation \citep{dao2023flashattention2}. 

Our IO-aware implementation focuses on next token prediction. In the listing below, we include a Torch reference. Our IO-aware sliding window attention algorithm is provided in \ref{alg:slide_gen}. The key insight is to fuse operations \textit{in thread registers} to minimize slower SRAM to register data movement.

\paragraph{Micro Benchmark} We benchmark key baselines (Torch, Flash Attention-2 \cite{dao2023flashattention2}, and the {\sysname} kernel on an NVIDIA H100 GPU in \Cref{fig:sw_micro}. The benchmark uses window size $64$, head dimension $64$, and number of heads $16$. We vary the batch size on the $x$ axis and repeat the median timing across iterations on the $y$ axis. Note that these timings include only the attention computation and not the time for updating the KV-cache. These timings also do not include any processing for Rotary encodings (as shown below).

\begin{lstlisting}[language=Python,frame=single,style=mystyle,caption={PyTorch implementation of Sliding Window}]
import torch
from torch import nn

"""
b: batch size
h: number of heads
n: sequence length
d: head dimension

w: window size

qw: b x h x 1 x d
kw: b x h x w x d
vw: b x h x w x d
"""

w = torch.einsum("bhod, bhnd-> bhn",qw, kw)
a = torch.nn.functional.softmax(w, dim=-1)
result = torch.einsum("bhn,bhnd->bhd", a, vw)
\end{lstlisting}
\clearpage
\begin{algorithm*}
  \caption{\label{alg:slide_gen} Sliding window generation}
  \small
  \begin{algorithmic}
    \Require{$KV_{t-1}$ state $\in \mathbb{R}^{Hwd}$, at time $t$ and projected hidden states  $q, k, v \in \mathbb{R}^{B \times H \times 1 \times d}$, for $H$ heads, head dimension $d$, sliding window size $w$, and batch size $B$.}
    \Ensure{Updated $KV_{t}$ state.}
    
    \State \textbf{Parallelize} into $\mathrm{batch} \times \mathrm{heads}$ parallel computations, with $\mathrm{n_{warps}} = 4$ warps per block.

    \State \textbf{Within a block:}
    \State Define tile size $T$ \Comment{$T=16$ in {\sysname}} 
    \State Define $\mathrm{n_{threads}} = \mathrm{n_{warps}} \times 32$ \Comment{Assuming $32$ threads per warp}
    \newline 
    
    \State Create SRAM buffers $B_k$ and $B_v$ (Each of size $4T \times 4T$) to hold $k, v$. 
    \Comment{Assumes $4T=64$ is the $w$, $d$}
    \State Create SRAM vector $B_q$ (Size $1 \times 4T$) to hold $q$ during the kernel execution.
    \Comment{Single query, assume $d=64$}
    \State Create SRAM vector $B_w$ (Size $1 \times 4T$) of type float for intermediate attention computation.
    \State Create SRAM vector $B_o$ (Size $1 \times 4T$) to hold the output.
    \Comment{Single output, assume $d=64$}
    \State Create SRAM buffers $\mathrm{max}$ and $\mathrm{sum}$ (Each of $\mathrm{workers}$ by float size). \newline

    \State Create register fragments $\mathrm{q_{reg}}$, $\mathrm{k_{reg}}$, $\mathrm{v_{reg}}$ to hold data during fused computation in-register. 
    \State Create register fragments $\mathrm{w_{reg}}$ (size $1 \times 4T$) and $\mathrm{{wv}_{reg}}$ (size $4T \times 1$) to store intermediate computation in-register.
    \State Create register fragment $\mathrm{o_{reg}}$ (size $4T \times 1$) to store output in-register.\newline

    \State \textbf{Loads} $B_k \leftarrow k$ using $\mathrm{n_{threads}}$; $B_v \leftarrow v$ using $\mathrm{n_{threads}}$; $B_q \leftarrow q$ using one warp.
    \Comment{HBM to SRAM}

    \State \textbf{Loads} $\mathrm{q_{reg}} \leftarrow B_q$. $q$ gets broadcasted to all warps. 
    \Comment{SRAM to Register}
    \State \textbf{Loads} $\mathrm{k_{reg}} \leftarrow B_k[\mathrm{warpid}]$. Each warp gets $T \times 4T$ of the $4T \times 4T$ in $B_k$ (\textit{i.e.} a column).
    \State \textbf{Loads} $\mathrm{v_{reg}} \leftarrow B_v[\mathrm{warpid}]$. Each warp gets $T \times 4T$ of the $4T \times 4T$ in $B_v$  (\textit{i.e.} a column). \newline

    \State Initialize $\mathrm{w}_{reg}$ to zero
    \State $\mathrm{w_{reg}} \leftarrow \mathrm{q_{reg}} \mathrm{k_{reg}}$ \Comment{Matrix-vector (GEMV) multiplication} \newline

    \State Initialize float $m =- \infty$ for the max \Comment{Obtain the max across tiles for Softmax}
    \State Update $m \leftarrow \max(\mathrm{w_{reg}})$ with the max from the local data
    \State $\textrm{max[warpid]} \leftarrow m$  for all warps to access
    \State Iterate over $\mathrm{n_{warps}}$ entries in $\textrm{max}$ buffer to compute the global max of $\mathrm{w_{reg}}$
    \State Put global max back into each warp's $m$ float \newline

    \State Initialize float $s=0$ for the sum \Comment{Obtain the sum across tiles for Softmax}
    \State Update $s \leftarrow \mathrm{sum}(\mathrm{w_{reg}})$ with the sum from the local data
    \State $\textrm{sum}[\mathrm{warpid}] \leftarrow s$  for all warps to access
    \State Iterate over $\mathrm{n_{warps}}$ entries in $\textrm{sum}$ buffer to compute the global sum of $\mathrm{w_{reg}}$
    \State Put global sum back into each warp's $s$ float \newline

    \State $\mathrm{w_{reg}} \leftarrow \mathrm{w_{reg}} - m$ \Comment{Start attention computation in register}
    \State $\mathrm{w_{reg}} \leftarrow \exp(\mathrm{w_{reg}})$
    \State $\mathrm{w_{reg}} \leftarrow \frac{\mathrm{w_{reg}}}{s}$ 
    \State $B_w[\mathrm{warpid}] \leftarrow \mathrm{w_{reg}}$ \Comment{Register to SRAM; storing for the slice of $k$} 
    \State $\mathrm{wv_{reg}} \leftarrow B_w$ \Comment{SRAM to Register. Warp loads entirety of $B_w$; all slices}

    \State Initialize $\mathrm{o_{reg}}$ to zero.
    \State $\mathrm{o_{reg}} \leftarrow \mathrm{wv_{reg}} \mathrm{v_{reg}}$ \Comment{Matrix-vector (GEMV) multiplication}   \newline
    \State Write $\mathrm{o_{reg}}$ to global memory \Comment{Register to SRAM, SRAM to HBM}
  \end{algorithmic}
\end{algorithm*}

\clearpage

\section{Extended Architecture Details}
\label{app:extended-arch}
In this section, we describe two additional architectural details for {\sysname}
that can enable small improvements in language model perplexity. 
We emphasize, however, that the combination of Taylor linear attention and \window layers alone is sufficient to come within $0.1$ perplexity points of our best models using these additional components (\Cref{table:ablations}).  

\paragraph{Convolution.} We find that replacing some of the linear attention and \window layers with gated convolution layers enables small improvements in language modeling performance. A gated convolution layer uses a combination of gating (Hadamard product, elementwise product) and convolution operations. In {\sysname}, we use \BaseConv\ layers \citep{arora2023zoology} with short convolutions and a SilU non-linearity~\citep{hendrycks2023gaussian}. 
By keeping the convolutions short (\textit{e.g.} width 3), we keep the recurrent state size for these layers low and improve throughput. The projections expand the dimensionality by a factor $c=4$. 
\begin{equation}
        \label{eq: coyote-recursion}
        \begin{aligned}
            \bm{y}
            &:= 
            (\underbrace{({\bm{u} \cdot \bm{W}_1+\bm{b}_1})}_{\mathclap{\textbf{Linear Projection}}}
            \odot 
            \sigma\underbrace{({\bm{h} \ast \bm{u} \cdot \bm{W}_2 +\bm{b}_2})}_{\mathclap{\textbf{Convolution}}}) \cdot \bm{W}_3 + \bm{b}_3
            \quad 
        \end{aligned}
    \end{equation}
where $\bm{u} \in \R^{N \times d}$ is a projected input, $h\in\R^{N\times c d}$ is a learned filter, $\odot$ is the Hadamard product, and $\bm{W}_1, \bm{W}_2 \in\R^{d\times c d}$, $\bm{W}_3 \in\R^{c d\times d}$, $\bm{b}_1,\bm{b}_2 \in \R^{c d}$, and $\bm{b}_3,  \in \R^{d}$ define weights and biases of three linear projections.

\paragraph{Decay.} Recent recurrent architectures include the use of \textit{decay} terms, implemented in a variety of ways \citep{gu2021efficiently, sun2023retentive, gu2023mamba, yang2023gated}. As intuition, decay terms control how much a token should attend to ``recent'' tokens  vs. ``early" tokens in the sequence. Prior work falls in two categories: using input-independent \citep[inter alia.]{gu2021efficiently, sun2023retentive} or input-dependent \citep{gu2023mamba, yang2023gated} decay rates. The latter offers improved quality, but requires the use of a parallel scan during sequence processing \cite{gu2023mamba}. 

Instead, we explore a coarser input-dependent decay technique for the linear attention layer, avoiding the parallel scan. We first use a unique decay rate \textit{per head}, fixed across all inputs. We introduce a linear projection that takes in the inputs $\in \mathbb{R}^{N x d}$ and projects to $\mathbb{R}^{N \times h}$, where $N$ is the sequence length, $d$ is the model dimension, and $H$ is the number of heads. We use the result of this projection to scale the attention combination across heads. 

In our main experiments \Cref{table:main-quality}, we use no decay when training the models to $50$b and $30$b tokens. We observe that decay can help small in our \Cref{table:ablations} ablations, but removing the decay does not affect the overall trends for {\sysname} relative to other architectures.


\clearpage

\section{Extended Results}
\label{sec:appendix-downstream-evaluations}

\subsection{Extended empirical study of memory-recall tradeoff}
In \Cref{fig:extended-tradeoff}, we provide additional experimental results using the setup described in \Cref{sec:motivation_empirical}. The results in \Cref{fig:extended-tradeoff} include additional efficient architectures beyond those in \Cref{fig:feature-map-tradeoff} and \Cref{fig:tradeoff}. Specifically we include NystromFormer~\cite{xiong2021nystromformer}, BigBird~\cite{zaheer2020bigbird}, and ScatterBrain~\cite{chen2021scatterbrain}.

\begin{figure}[h]
    \begin{center}
    \includegraphics[width=0.6\linewidth]{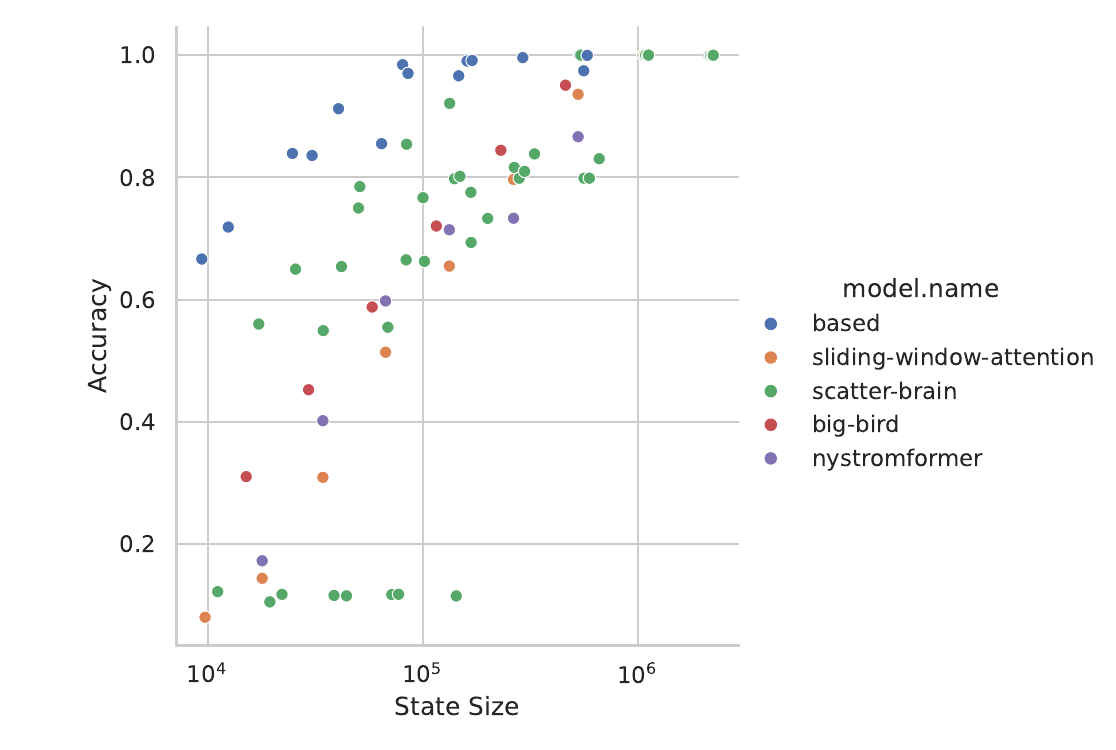}
    \vspace{-2em}
    \caption{
        \textbf{Extended Throughput (memory) - recall tradeoff.} $x$-axis shows state size (bytes) during generation; $y$-axis shows accuracy on the \syntheticname recall task \citep{arora2023zoology}.
        For each architecture, we train several models varying hyperparameters that affect the recurrent state size (\textit{e.g.} model dimension). 
        The plot shows a fundamental tradeoff between the recurrent state size and recall capacity that applies to broad class of models.
    }
    \label{fig:extended-tradeoff}
    \end{center}
\end{figure}

\subsection{Downstream Language Results}
To further evaluate \sysname's performance in language modeling, we evaluate the PILE-pretrained models on several downstream tasks that test general natural language understanding.

\begin{table*}[h]
\centering
\scriptsize 
\begin{tabular}{lccccccccc}
\toprule
\multirow{3}{*}{Architecture} &
  \multirow{3}{*}{Params/Tokens} & \\
   &
   &
  \multicolumn{2}{l}{LAMBADA} &
  HellaSwag &
  PIQA &
  Arc-E &
  Arc-C &
  WinoGrande &
  Average
   \\

   &
   &
  Ppl. $\downarrow$ &
  Acc. $\uparrow$ &
  Acc. Norm. $\uparrow$ &
  Acc $\uparrow$ &
  Acc $\uparrow$ &
  Acc. Norm. $\uparrow$ &
  Acc. $\uparrow$ &
  Acc. $\uparrow$ \\ \hline \hline
Transformer++ (LLaMa) & 1.33b/10b     & 11.12    &49.10  &39.29  &66.16 &51.68 &26.19  &53.43  &47.64      \\
\sysname     & 1.35b/10b               & 12.35    &46.96  &39.11  &66.32 &50.72 &26.54  &50.43  &46.68      \\
Mamba     & 1.32b/10b               & 13.11    &46.13  &39.41  &66.38 &52.36 &25.94  &50.83  &46.84      \\ \hline
Transformer++ (LLaMa) & 1.33b/50b  
& 7.38 &  57.50 & 49.62  & 70.46 & 57.58 & 27.99 & 56.83 & 53.33 \\
\sysname     & 1.35b/50b  
& 6.96 &  57.85 & 50.79 &  71.65 & 58.84 & 28.75 & 55.80 & 53.81 \\
Mamba     & 1.32b/50b 
& 7.19 & 57.56 & 50.94 & 71.87 & 59.39 & 28.41 & 53.83 & 53.50 \\ \hline

Transformer++ (LLaMa) & 360m/10b & 18.39    &42.52  &33.48  &63.98 &46.04 &24.49  &53.99  &44.08       \\
Transformer (Pythia)& 356m/10b & 25.17    &37.16  &31.32  &63.76 &44.82 &23.8   &51.54  &42.08       \\
\sysname      & 363m/10b   & 21.80    &38.66  &33.43  &64.42 &45.79 &24.66  &51.22  &43.03       \\
Mamba         & 358m/10b   & 20.23    &39.65  &33.63  &65.02 &47.01 &25.00  &50.75  &43.51      \\
H3            & 362m/10b   & 57.59    &23.58  &30.62  &63.11 &45.20 &23.29  &50.28  &39.35      \\ \hline
Transformer++ (LLaMa) & 360m/30b  
& 15.79 & 44.44 & 36.90 & 66.05 & 48.27 & 20.56 & 52.25 & 44.75 \\
\sysname     & 363m/30b  
& 14.43 & 45.20 & 37.41 & 67.46 & 49.45 & 21.42 & 51.22 & 45.36 \\
Mamba     & 358m/30b 
& 14.27 & 45.06 & 38.02  & 66.38 & 50.55 & 20.01 & 51.70 & 45.62 \\ \hline
\hline
\end{tabular}
\caption{\textbf{Downstream evaluation of pre-trained language models.} The same set of models as in \cref{table:main-quality}, all were trained on the same data drawn from the Pile~\citep{pile}, evaluated zero-shot using the default LM-Eval Harness settings from EleutherAI \cite{eval-harness}. These averages are computed across the $6$ tasks, excluding LAMBADA perplexity. These averages are included in \Cref{table:main-quality}.}
\label{table:lm-eval-harness}
\vspace{-3mm}
\end{table*}
\begin{table*}[]
\centering
\scriptsize 
\begin{tabular}{lcccccccccccc}
\toprule
\textbf{Model}                                                               & \multicolumn{1}{l}{\textbf{Shots}} & \multicolumn{1}{l}{\textbf{BoolQ}}  & \multicolumn{2}{c}{\textbf{CB}}                                         & \multicolumn{1}{l}{\textbf{COPA}}   & \multicolumn{1}{l}{\textbf{MultiRC}} & \multicolumn{2}{l}{\textbf{ReCoRD}}                                   & \multicolumn{1}{l}{\textbf{RTE}}    & \multicolumn{1}{l}{\textbf{WiC}}    & \multicolumn{1}{l}{\textbf{WSC}}    & \multicolumn{1}{l}{\textbf{Avg}} \\
                                                                                    & \multicolumn{1}{l}{}               & \multicolumn{1}{l}{Acc. $\uparrow$} & \multicolumn{1}{l}{Acc. $\uparrow$} & \multicolumn{1}{l}{F1 $\uparrow$} & \multicolumn{1}{l}{Acc. $\uparrow$} & \multicolumn{1}{l}{Acc. $\uparrow$}  & \multicolumn{1}{l}{F1 $\uparrow$} & \multicolumn{1}{l}{EM $\uparrow$} & \multicolumn{1}{l}{Acc. $\uparrow$} & \multicolumn{1}{l}{Acc. $\uparrow$} & \multicolumn{1}{l}{Acc. $\uparrow$} & \multicolumn{1}{l}{}                 \\ \hline\hline
\multirow{3}{*}{\begin{tabular}[c]{@{}l@{}}Based\\ (363m/10b)\end{tabular}}               & 0                                  & 59.0                                & 41.1                                & 19.4                              & 69.0                                & 54.9                                 & 14.5                              & 14.0                              & 52.0                                & 50.0                                & 36.5                                & 45.7                                 \\
                                                                                    & 1                                  & 57.5                                & 37.5                                & 26.8                              & 68.0                                & 52.5                                 & 19.9                              & 19.2                              & 47.7                                & 50.9                                & 49.0                                & 47.2                                 \\
                                                                                    & 5                                  & 56.6                                & 44.6                                & 28.9                              & 73.0                                & 53.6                                 & 24.9                              & 24.1                              & 48.7                                & 51.1                                & 39.4                                & 48.0                                 \\ \hline
\multirow{3}{*}{\begin{tabular}[c]{@{}l@{}}Transformer++ \\ (360m/10b)\end{tabular}} & 0                                  & 57.3                                & 41.1                                & 21.3                              & 67.0                                & 57.0                                 & 16.6                              & 16.1                              & 53.8                                & 50.0                                & 37.5                                & 46.3                                 \\
                                                                                    & 1                                  & 54.2                                & 39.3                                & 25.3                              & 69.0                                & 51.5                                 & 22.2                              & 21.6                              & 50.9                                & 47.0                                & 55.8                                & 47.8                                 \\
                                                                                    & 5                                  & 50.7                                & 58.9                                & 49.9                              & 64.0                                & 46.9                                 & 24.2                              & 23.6                              & 47.3                                & 52.2                                & 51.9                                & 48.9                                 \\ \hline
\multirow{3}{*}{\begin{tabular}[c]{@{}l@{}}Mamba\\ (358m/10b)\end{tabular}}               & 0                                  & 57.5                                & 35.7                                & 24.4                              & 71.0                                & 57.2                                 & 18.8                              & 18.3                              & 52.4                                & 50.0                                & 36.5                                & 46.6                                 \\
                                                                                    & 1                                  & 51.1                                & 39.3                                & 27.4                              & 71.0                                & 52.9                                 & 21.6                              & 21.0                              & 46.6                                & 46.2                                & 52.9                                & 46.9                                 \\
                                                                                    & 5                                  & 41.1                                & 37.5                                & 23.6                              & 69.0                                & 49.2                                 & 20.4                              & 19.9                              & 48.4                                & 51.7                                & 51.9                                & 45.2                          
                                                                                   \\
                                                                                   \bottomrule
\end{tabular}
\caption{\textbf{Few-shot downstream evaluation on SuperGLUE of pre-trained language models.} The same set of models as in \cref{table:main-quality}, all were trained on the same 10 billion tokens drawn from the Pile~\citep{pile}, evaluated on the SuperGLUE benchmark \cite{wang2019superglue} using the LM eval harness by EleutherAI \cite{eval-harness}. When computing the average, we first average the metrics by task and then average across tasks.}
\label{tab:super-glue}
\end{table*}

\paragraph{LM-Eval Harness Standard Tasks} We use the same protocol as \citep{gu2023mamba, yang2023gated}, utilizing the LM evaluation harness by EleutherAI \citep{eval-harness}. In particular, we use the following set of metrics and tasks: 
\begin{itemize}
    \item LAMBADA (perplexity and accuracy) \cite{paperno2016lambada}
    \item HellaSwag (normalized accuracy) \citep{zellers2019hellaswag}
    \item PIQA (accuracy) \citep{bisk2019piqa}
    \item ARC-challenge (normalized accuracy) and, separately, the easy subset ARC-easy (accuracy) \citep{clark2018think}
    \item WinoGrande (accuracy) \citep{sakaguchi2019winogrande}
\end{itemize} 
Normalized accuracy refers to accuracy normalized by sequence length and is used to maintain the equivalent setting to \citep{gu2023mamba}. We report results in Table \ref{table:lm-eval-harness}. For both 360 million and 1.3 billion parameter models, \sysname performs competitively with recent and state-of-the art architectures, including Mamba and Transformer++ (LLaMa).

\paragraph{SuperGLUE Fewshot Results} In order to understand in-context-learning performance, we next perform few-shot  evaluations on the SuperGLUE benchmark \cite{wang2019superglue} for \sysname, Mamba and Transformer++ in \Cref{tab:super-glue}. Each model was evaluated on all tasks using under 0 shot (\textit{i.e.}, number of in-context examples), 1 shot and 5 shot prompting, respectively. Transformer++ and \sysname both see monotonic improvement from increasing the number of shots. For Mamba, however, albeit getting a slight improvement from 0-shot to 1-shot, it performs worse on 5-shot than even on 0-shot. This result suggests that the limited recall ability observed in Mamba could also impact few-shot abilities.

\subsection{DNA Modeling}
Towards understanding the capability of ${\sysname}$ beyond natural English language, we next evaluate each architecture on its ability to model DNA sequences. 

\paragraph{Pretraining} In Table \ref{table:dna-quality},  we evaluate architectures on the HG38 (human genome) benchmark at $1k$, $4k$, and $8k$ sequence lengths used in prior architecture evaluations \citep{nguyen2023hyenadna, gu2023mamba}. The DNA tasks uses a byte-level tokenizer wherein the vocabulary consists of characters corresponding to the nucleotide bases. 
We find ${\sysname}$ is competitive with state-of-the-art architectures across evaluated sequence lengths.
\begin{table}[]
\centering
\scriptsize
\begin{tabular}{lcccc}
\toprule
\multirow{3}{*}{Model} &
  \multirow{3}{*}{Params} &
  \multicolumn{3}{c}{\textbf{HG38  PPL $\downarrow$}} \\
              &        & $N=$1024 &  $N=$4096 & $N=$8192  \\ \hline  \hline 
Transformer++ & 46.2   & 2.52 &  2.50 & 2.51  \\
Mamba         & 46.1   & \textbf{2.51} &  \textbf{2.49} & \textbf{2.49}  \\
Based         & 48.8   & \textbf{2.51} &  2.50 & \textbf{2.49}  \\
\hline
\end{tabular}
\caption{\textbf{DNA modeling performance on the HG38 dataset.} All models are pretrained from scratch for 10Bn tokens at $N=$ 1k, 4k, and 8k sequence lengths respectively. We report results after hyperparameter sweeping the learning rate for each architecture.}
\label{table:dna-quality}
\vspace{-3mm}
\end{table}

\paragraph{Downstream DNA Classification} We further evaluate how different architectures compare for DNA modeling. We take the pretrained models described and evaluate them on DNA sequence classification using a popular benchmark (GenomicBenchmarks)~\citep{Gresova2022.06.08.495248} in Table~\ref{table:genomics_benchmarks}. We find similar performance across tasks, indicating that prior matching in quality during pretraining transfers to downstream classification. For reference, we also include results from \cite{nguyen2023hyenadna}. Although not directly comparable to due differences in tokenization, the evaluations suggest \sysname can perform strongly on different modalities, and that recent sequence modeling architectures are also able to outperform or compete with prior state-of-the-art on evaluated DNA tasks.
\begin{table*}
\centering
\scriptsize 
\begin{tabular}{@{}lccccc@{}}
\toprule
Dataset       & Enhancer Cohn & Enhancer Ens & Human Reg. & Non-TATA Promoters & Human OCR Ens. \\ \midrule
CNN           & 69.5                 & 68.9                     & 93.3             & 84.6                     & 68.0               \\
DNABERT       & 74.0                 & 85.7                     & 88.1             & 85.6                     & 75.1               \\
GPT           & 70.5                 & 83.5                     & 91.5             & 87.7                     & 73.0               \\
HyenaDNA      & 74.2                 & 89.2                     & \textbf{93.8}    & 96.6                     & \textbf{80.9}      \\ \midrule
Transformer++ & 73.4                 & \textbf{89.5}            & 89.9             & 94.4                     & 79.5               \\
Mamba         & 73.0                 & -                        & -                & 96.6                     & -                  \\
Based         & \textbf{74.6}        & \textbf{89.5}            & 89.5             & \textbf{96.8}            & 79.0               \\ \bottomrule
\end{tabular}
\caption{\textbf{Downstream evaluation of pre-trained DNA models on GenomicsBenchmarks~\citep{Gresova2022.06.08.495248}.} We report top-1 classification accuracy ($\%$) with pretrained models (Transformer++, Mamba, \sysname) along with prior reported results in \cite{nguyen2023hyenadna}. We find the similar quality-matching in pretraining transfers to downstream tasks. Modern architectures are also able to achieve state-of-the-art results on the classification tasks.}
\label{table:genomics_benchmarks}
\vspace{-3mm}
\end{table*}

\subsection{{\sysname} Quality Ablations}
Our objective with \sysname is to measure the throughput and recall of the simplest possible linear attention model that achieves strong performance. Therefore, we ablate the key design decisions --- choice of feature map, feature dimension for the Taylor map, use of sliding window and convolutions --- to understand their contributions to the quality of \sysname. We ablate using the Pile dataset \cite{pile} with the same number of tokens and data ordering as the prior experiments.

In \textbf{feature map ablations}, we consider the CosFormer \cite{qin2022cosformer} and Performers \cite{choromanski2020rethinking} feature maps, which have been demonstrated as strong choices in prior work \cite{hedgehog2023}. We also include a baseline that expands the state size using learned projections and applies CosFormer towards comparing to the larger state size of the Taylor map. For these baselines, we keep the rest of the {\sysname} architecture the same (\textit{i.e.} in the number of linear attention layers and hybridization with sliding window and gated convolution layers). We observe that with the larger state size, CosFormer quality is increasingly competitive with the Taylor map. We note that expanding the state size requires increasing the model's overall parameter count (due to the learned projections) for CosFormer, in contrast to the Taylor map.

Next, we ablate the \textbf{feature dimension}, holding the feature map fixed to the Taylor map. We find larger feature dimension improves quality, with diminishing returns going from $24$ to $32$ dimension. Note that feature dimension $\sqrt{1024} = 32$, where $1024$ is the attention model dimension at the $360$ parameter scale in our experiments.

Next, the ablations show that eliminating the \textbf{convolutions} and/or the \textbf{sliding window attention} degrades quality. We observe that adding \textit{either} convolutions or sliding window helps on the associative recall slice relative to \textit{neither} (e.g. $2.29$ AR Ppl. on the Pile with \textit{neither} vs. $2.09$ or $2.11$ with sliding window \textit{or} convolutions.). Increasing the window size from $0$ to $64$ vs. $64$ to $128$ (also an efficient design point in \Cref{fig:based-design}, left) continues to help quality, but with marginal improvements. 

Finally, we ablate the use of the input-dependent decay strategy introduced in \Cref{app:extended-arch}. In our main results \Cref{table:main-quality}, we use \textbf{no input-dependent decay} whatsoever when training to $30$b and $50$b tokens for the $360$m and $1.3$b parameter models respectively. At $10$b tokens, we use the decay strategy and provide ablations without decay in \Cref{table:ablations}. We find that the decay can provide a small boost in quality, but removing the decay does not affect the overall trends.

\begin{table*}[]
\centering
\scriptsize
\begin{tabular}{lcccccccccc}
\toprule
  \multicolumn{5}{c}{\textbf{Hyperparameters}} &
  \multicolumn{3}{c}{\textbf{Language Modeling (Pile)}} &
  \multicolumn{2}{c}{\textbf{Info. Extraction}} &
  \multicolumn{1}{c}{\textbf{QA}} \\

  \multicolumn{1}{c}{\multirow{2}{*}{Feat. Map}} &
  \multirow{2}{*}{Feat. Dim.} &
  \multirow{2}{*}{Sliding} &
  \multirow{2}{*}{Convs.} &
  \multirow{2}{*}{Decay} &
  All &
  AR &
  Other &
  SWDE &
  FDA &
  \multicolumn{1}{c}{SQUAD}
   \\
\multicolumn{1}{c}{} &
   &
   &
   &
   &
  Ppl. $\downarrow$ &
  Ppl. $\downarrow$ &
  Ppl. $\downarrow$ &
  Acc. $\uparrow$ &
    Acc. $\uparrow$ &

  Acc. $\uparrow$ \\ \hline


Taylor Exp. (\nth{2}) & 16 (153) & \cmark (64)  & \cmark  &\cmark  & \textbf{ 8.65 }  &\textbf{2.07 }  &\textbf{9.64 }& \textbf{29.16}    &\textbf{11.71}   &25.07                \\

Performer             & 16 (16)  & \cmark (64)   & \cmark   &\cmark  &9.08   &8.53    &11.62  &  8.10&0.36    &7.47                  \\
CosFormer             & 16 (32)  & \cmark (64)  & \cmark   &\cmark  &9.03    &2.42   &9.98    &  19.35&7.71    &24.63            \\ 

CosFormer             & 64 (128)  & \cmark (64) & \cmark  &\cmark  &8.82    &2.18   &9.80     & 25.47 &9.07        &\textbf{27.85 }          \\ 
\hline
Taylor Exp. (\nth{2}) & 32 (561) & \cmark (64) & \cmark  &\cmark  &\textbf{8.56 }   &\textbf{2.00}    &\textbf{9.57  } &  \textbf{37.62}  &12.89   &\textbf{26.74}                 \\

Taylor Exp. (\nth{2}) & 24 (325) & \cmark (64) & \cmark  &\cmark  &8.58    &2.02    &9.58     &34.38 &\textbf{20.87}   &24.77                 \\  
Taylor Exp. (\nth{2}) & 16 (153) & \cmark (64) & \cmark  &\cmark  & 8.65   &2.07    &9.64     &{29.16}  &11.71   &25.07               \\

Taylor Exp. (\nth{2}) & 8  (45)  & \cmark (64) & \cmark  &\cmark  &8.77    &2.18    &9.75     &23.40  &12.79   &22.35                \\   
\hline

Taylor Exp. (\nth{2}) & 16 (153) & \cmark (64)  & \cmark  &\cmark  &\textbf{8.65 }   &2.07    &\textbf{9.64} &\textbf{29.16}   &11.71  &25.07             \\
Taylor Exp. (\nth{2}) & 16 (153) & \cmark(64)  & \cmark  &\xmark  &\textbf{8.65 }   &\textbf{2.04  } &9.66    & 22.95 &\textbf{12.34}    &\textbf{27.45}                 \\ 

Taylor Exp. (\nth{2}) & 16 (153) & \xmark   & \cmark  &\cmark  &8.91    &2.11    &9.94      &28.62 &10.16   &24.5                  \\
Taylor Exp. (\nth{2}) & 16 (153) & \cmark (64)  & \xmark  &\cmark  &8.74    &2.09    &9.74   &24.66   &2.36    &18.87                 \\
Taylor Exp. (\nth{2}) & 24 (325) & \xmark  & \xmark  &\cmark  &9.49    &2.29    &10.58       &19.62 &8.71    &11.33                 \\

\hline
Taylor Exp. (\nth{2}) & 16 (153) & \cmark (128)  & \cmark  &\cmark  &\textbf{8.61}    &\textbf{2.06 }   &\textbf{9.60 } & \textbf{32.13}  &\textbf{14.39}   &\textbf{31.84}            \\
Taylor Exp. (\nth{2}) & 16 (153) & \cmark (64)  & \cmark  &\cmark  &8.65    &2.07    &9.64    &29.16  &11.71   &25.07               \\

\hline

\end{tabular}
\caption{\textbf{Ablations.} All models are 362M param variants of the \sysname architecture described in \Cref{sec:architecture}, trained to 10 billion tokens on the Pile. We ablate the hyperparameters central to the design of \sysname: (1) the choice of feature map $\phi$ (see \Cref{sec:taylor_series_linear_attention}), (2) the size of the feature dim $d'$ (we show the effective size of the feature after applying the feature map in parantheses, see \Cref{sec:taylor_series_linear_attention}), (3) the use of local sequence mixers (sliding window attention and short convolutions), and (4) the data-dependent decay defined in \Cref{sec:architecture}.  
}

\label{table:ablations}
\vspace{-3mm}
\end{table*}

\clearpage

\section{Experimental Details}
\label{app:exp}

\subsection{Language Model Pretraining}
\label{app:exp-pretraining}

We use A100 80GB Nvidia GPUs to run all experiments. We use training infrastructure closely adapted from the FlashAttention code base: \url{https://github.com/Dao-AILab/flash-attention/tree/main} for all pretraining runs~\cite{dao2023flashattention2}. The Pile data is tokenized using the GPT2BPETokenizer and all models see the data in the same order.
\label{sec:appendix_experiment_details}
\label{app:exp-architectures}
Here we provide details on the hyperaparamters and configurations used for training each architecture. We also provide details on the FLOPs computation. 

\begin{itemize}
    \item \textbf{\sysname} We train using the specifications in \Cref{tab:based-training-details}. Our implementation is provided here: \url{https://github.com/HazyResearch/based}. The initial models were trained and evaluated using the Fast Transformer CUDA kernels discussed in \Cref{app:implementation} \citep{vyas_et_al_2020, katharopoulos-et-al-2020}. We use no input-dependent decay whatsoever when training the models to $30$b and $50$b tokens at $360$m and $1.3$b parameters respectively.

    \item \textbf{Transformer++ \citep{touvron2023llama}} We refer to the modern Llama architecture with Rotary encodings, RMSNorm and SwiGLU as Transformer++, following prior work \cite{gu2023mamba, yang2023gated}. We train using the the specifications in Table \ref{tab:attn-training-details} using the Flash Attention training code provided here: \url{https://github.com/Dao-AILab/flash-attention/tree/main}~\cite{dao2023flashattention2}. 

    \item \textbf{Mamba \citep{gu2023mamba} } We train using the specifications in \Cref{tab:mamba-training-details}, where the parameters are sourced from the Appendix of \cite{gu2023mamba}. The implementation is sourced from the provided reference at \url{https://github.com/state-spaces/mamba}.

    \item \textbf{Hyena \citep{poli2023hyena}} We train using the specifications in \Cref{tab:hyena-training-details}, where the parameters are sourced from the Appendix of \cite{poli2023hyena}. The implementation is sourced from the provided reference at \url{https://github.com/HazyResearch/safari}.
    
    \item \textbf{H3 \citep{dao2022hungry}} We train using the specifications in \Cref{tab:h3-training-details}. The implementation is sourced from the provided reference at \url{https://github.com/HazyResearch/safari}.

    \item \textbf{RWKV \citep{peng2023rwkv}} We train using the specifications in \Cref{tab:rwkv-training-details} and use the reference implementation at \url{https://github.com/BlinkDL/RWKV-LM}. We specifically evaluate RWKV-V5. 

    \item \textbf{Gated Linear Attention (GLA)} We train using the specifications in \Cref{tab:gla-training-details}. We train following the reference implementation at \url{https://github.com/berlino/gated_linear_attention}.
     
\end{itemize}

\noindent We give all models the improved Transformer++ recipe (e.g., SwiGLU) as relevant. 

\subsection{Computing Recurrent State Size}
\label{app:exp-recurrent-state-size} 

In this section, we provide details on how we compute the size of the recurrent hidden state for the results described in \Cref{sec:motivation_empirical}. We train and evaluate six sequence mixers on a synthetic associative recall task: attention~\cite{vaswani2018attention}, sliding window attention~\cite{beltagy2020longformer}, Mamba~\cite{gu2023mamba}, H3~\cite{dao2022hungry}, Hyena~\cite{poli2023hyena}, and \sysname.  
For each, we vary hyperparameters that affect the memory consumption during inference. We compare how \syntheticname accuracy varies with the size of the recurrent hidden state.

\paragraph{\sysname.} The recurrent state size in \sysname is determined by the model dimension $d$ and the size of the hidden dimension after applying the feature map $\tilde{d}$.  The $+ 1$ accounts for the K-state required for computing the denominator. For more details on the recurrent view of \sysname, see \ref{sec:architecture}.

\begin{equation}
    \text{sizeof}(\bm{s}_i) = (d + 1) \times \tilde{d}
\end{equation}

In Based, we use the Taylor Exponential feature map after projecting $d$ down to a smaller dimension $d'$. With this approach, recurrent state size is given by:
\begin{equation}
    \text{sizeof}(\bm{s}_i) = (d + 1) \times (1 + \frac{3d'}{2} + \frac{d'^2}{2})
\end{equation}

In our synthetic experiments, we run \sysname with $d \in \{48, 64, 128\}$ and $d' \in \{8, 16, 24\}$.

\paragraph{Attention.} The recurrent state size (\textit{i.e.} KV-cache size) in attention depends on two parameters: the model dimension $d$ and the sequence length $N$. The $2$ in the expression below accounts for the separate storage for keys and values in the KV-cache.

\begin{equation}
    \text{sizeof}(\bm{s}_i) = 2 \times d \times N 
\end{equation}

In our synthetic experiments we run attention with $d \in \{64, 128\}$. The sequence length $N$ is determined by the task, not the model architecture.

\paragraph{Sliding window attention.} The recurrent state size in sliding window attention is given by the model dimension $d$ and the width of the sliding window $k_\text{sliding}$. The $2$ in the expression below accounts for the separate storage for keys and values in the KV-cache.
\begin{equation}
    \text{sizeof}(\bm{s}_i) = 2 \times d \times \min(N, k_\text{sliding}) 
\end{equation}
In our synthetic experiment we run sliding window attention with $d \in \{128\}$ and $k_\text{sliding} \in \{8, 16, 32, 64, 128, 256, 512, 1024\}$.

\paragraph{Mamba.} The recurrent state size in Mamba is determined by the model dimension $d$ and the number of heads $h$. The $2$ in the expression below accounts for the expansion in the Mamba block.

\begin{equation}
    \text{sizeof}(\bm{s}_i) = 2 \times d \times d_\text{state} 
\end{equation}

In our synthetic experiments, we run Mamba with $d \in \{64, 128, 256\}$ and $d_\text{state} \in \{8, 16, 24\}$.

\paragraph{H3.} The recurrent state size in H3 is determined by the model dimension $d$ and the number of heads $d_\text{state}$. 

\begin{equation}
    \text{sizeof}(\bm{s}_i) =  d \times d_\text{state}
\end{equation}

In our synthetic experiments, we run H3 with $d \in \{64, 128, 256\}$ and $d_\text{state} = \frac{d}{4}$.

\paragraph{Hyena.} The recurrent state size in Hyena is determined by the model dimension $d$ and the number of heads $h$. The $2$ in the expression below accounts for the separate storage for keys and values in the KV-cache.

\begin{equation}
    \text{sizeof}(\bm{s}_i) = d \times N
\end{equation}

In our synthetic experiments, we run Hyena with $d \in \{64, 128, 256\}$.

\subsection{Language Model Evaluation}
\label{app:exp-downstream}
In this section, we provide details on each of the evaluations (columns) reported in \cref{table:main-quality,table:ablations}.  

\paragraph{Pile} \textit{(Language Modeling).} First, we report overall perplexity on the Pile test set~\cite{pile}. Then , to understand how much of the perplexity gap is due to recall capacity, we also evaluate perplexity on two slices (\textit{i.e.} subsets) of the test set:
\begin{enumerate}
    \item \textit{Associative recall(AR) tokens.} Tokens in the final position of a bigram which previously occured in context, but $\leq 1250$ times in the training data. 
    \item \textit{Other tokens.}  All other tokens. 
\end{enumerate}
To construct these slices, we exactly follow the protocol in \citet{arora2023zoology} and refer the reader to that work for more details.
We compute these slices on the first 16 million tokens in the test set.

\paragraph{SWDE} \textit{(Information Extraction).} 
The task in the SWDE benchmark is to extract semi-structured relations from raw HTML websites. 
For example, given an IMBD page for a movie (\textit{e.g.} \textit{Harry Potter and the Sorcerer's Stone}) and a relation key (\textit{e.g.} release date), the model must extract the correct relation value (\textit{e.g.} 2001). 
The SWDE benchmark was originally curated by ~\citet{lockard-etal-2019-openceres} for the task of open information extraction from the semi-structured web. Because we are evaluating the zero-shot capabilities of relatively small language models, we adapt the task to make it slightly easier. Our task setup is similar after to that used in \citet{arora2023evaporate}.

\paragraph{FDA} \textit{(Information Extraction).} 
The task is to extract key-value pairs from a set of PDFs scraped from the FDA website. We use the dataset and labels collected in \cite{arora2023evaporate}.
We break apart the documents into chunks of 1,920 tokens. For every key-value pair that appears in the chunk, we create a zero-shot prompt using the simple prompt template:
{
\centering \\
\texttt{\{chunk\} \textbackslash n \{key\}:} \\
}
We allow the model to generate a fixed number of tokens after the prompt and check (with case insensitivity) if the value is contained within the generation. We report \textbf{accuracy}, the fraction of prompts for which the generation contains the value. \newline 

\noindent Below we include one example of a zero-shot prompt for the key-value pair ``\textit{Type of Test: Quantitative, colorometric, pyranose oxidase (PROD)}". The actual chunk is substantially longer in the dataset (note the ellipsis).   
\begin{mdframed}[style=example]
510(k) SUBSTANTIAL EQUIVALENCE DETERMINATION DECISION SUMMARY ASSAY ONLY TEMPLATE A. 510(k) Number: k180209 B. Purpose for Submission: New Device C. Measurand: 1,5-Anhydroglucitol (1,5-AG) D. \textbf{Type of Test: Quantitative, colorometric, pyranose oxidase (PROD)} E. Applicant: Diazyme Laboratories Inc. F. Proprietary and Established Names: Diazyme 1,5-AG Assay G. Regulatory Information: 1. Regulation section: 21 CFR 864.7470; Glycosylated hemoglobin assay 2. Classification: Class II ... \textit{[1,920 tokens of context from the PDF]} ...
Diazyme’s 1,5-AG assay uses the enzyme pyranose oxidase (PROD) to oxidize the 2nd position hydroxyl group of 1,5-AG and to detect the generated hydrogen peroxide by colorimetry using peroxidase (POD). 
\textbf{Type of Test:}
\end{mdframed}  


\paragraph{SQUAD} \textit{(Question Answering).} 
The Stanford Question Answering Dataset (SQUAD) can be used to evaluate the reading comprehension of language models. The model is given a passage of text and a question whose answer is contained in the passage. \newline 

\noindent  Because the models trained in this work are relatively small-scale (up to 1.3 billion parameters trained on 10 billion tokens) and not instruction fine-tuned, they struggle to answer questions when asked directly. To make the task more amenable to these raw language models, we first use GPT-4 to reformat the questions to more closely resemble the next-token-prediction task the models were trained on:

{
\centering 
\texttt{
Can you rewrite this question and answer as a statement. Ensure that the answer is the last part of the statement. \textbackslash n \textbackslash n Question: \{question\} \textbackslash n\textbackslash n Answer: \{answer\} \textbackslash n\textbackslash n Rewrite:
}
}

For example, the question and answer ``\textit{Question: Which NFL team represented the AFC at Super Bowl 50? Answer: Denver Broncos}"  
was rewritten by GPT-4 as
``\textit{The NFL team that represented the AFC at Super Bowl 50 was the Denver Broncos.}"
We verify that the rewritten sentence does indeed end with the answer, discarding any sentences where it does not (40\% of questions). 

We run the reformatting on 5,000 squad questions from the validation set, yielding a final dataset of \textbf{2,984 questions} formatted as next token predictions. \newline 

\noindent Below we include one example of a zero-shot prompt. The reformatted question is in bold. 

\begin{mdframed}[style=example]
For the third straight season, the number one seeds from both conferences met in the Super Bowl. The Carolina Panthers became one of only ten teams to have completed a regular season with only one loss, and one of only six teams to have acquired a 15–1 record, while the Denver Broncos became one of four teams to have made eight appearances in the Super Bowl. The Broncos made their second Super Bowl appearance in three years, having reached Super Bowl XLVIII, while the Panthers made their second Super Bowl appearance in franchise history, their other appearance being Super Bowl XXXVIII. Coincidentally, both teams were coached by John Fox in their last Super Bowl appearance prior to Super Bowl 50.  \textbf{The team in Super Bowl 50 that had a 15-1 record was the}
\end{mdframed}

\clearpage

\section{Theoretical Results}
\label{sec: theory}

\subsection{Introduction}
Our focus in this section will be on the theoretical results of the paper. Specifically, we will show the equivalence of models $\Based$ and $\Mamba$~\citep{gu2023mamba} with $\BaseConv$, a minimal gated-convolution operator~\citep[Definition 4.1]{arora2023zoology}, and prove lower bounds for the MQAR problem~\citep[Section H.7.1]{arora2023zoology} in various settings. We begin by setting notation and introducing the theoretical formulations of the models.

\paragraph{Notation.} We will be denoting the all \(1\) row vector of size $k$, given by \(\begin{bmatrix}1&1&\ldots&1&1\end{bmatrix}\),  and the all \(0\) row vector of size $k$, given by \(\begin{bmatrix}0&0&\ldots&0& 0\end{bmatrix}\), as \(\bm{1}^{k}\) and \(\bm{0}^{k}\), respectively. We will also construe the standard basis vector $\mathbf{e}_i$ as a column vector in these notes, and adhere to the following matrix indexing convention: $\tbf{M}[i,j]$ is the entry in the $i$th row and the $j$th column,  $\tbf{M}[i,:] \in \F^{1 \times n}$ denotes the $i$th row, and $\tbf{M}[:,j] \in \F^{m \times 1}$ denotes the $j$th column of $\tbf{M} \in \F^{m \times n},$ where $\F$ is a field and the reader can substitute $\F$ for $\R$ for convenience. 
For a matrix $\tbf{M} \in \R^{n \times m}$, we define the pair-wise Hadamard product of columns of $\tbf{M}$ as $\squareMat{\mM} \in \R^{n \times m^2}$
, where
\begin{equation}
\label{eq: sqauareMat}
\begin{aligned}
(\squareMat{\mM})[:, i] := \mathbf{M}[:, j] \odot \mathbf{M}[:, k] \quad \text{for} \quad i \in [m^2],\\
j = \left\lfloor \frac{i - 1}{m} \right\rfloor + 1, \quad k = (i - 1) \mod m + 1.
\end{aligned}
\end{equation}
Moreover, we define the element-wise exponentiation of a matrix $\tbf{M}$ as $\exp[\tbf{M}]$ where $\exp[\tbf{M}]_{ij} = \exp(\tbf{M}_{ij})$.
Next, we denote the {\em Hadamard product} of vectors $\tbf{u}, \tbf{v} \in \F^n$ as $\tbf{u}\odot \tbf{v}$; the operation can be extended to matrices accordingly, and for vectors $\tbf{u}, \tbf{v} \in \F^n$, we denote their {\em linear (or acyclic) convolution} as $\tbf{u} \ast \tbf{v}$

\paragraph{Arithmetic Circuit Notation.}  We briefly introduce the notation of arithmetic circuits~\citep{burgisser2013algebraic}.
An {\em arithmetic circuit} $\calC$ with variables $X\triangleq \{x_1, x_2, \ldots, x_n\}$ over a field $\F$ is interpreted as a directed acyclic graph, where the input nodes are labelled by either the variables from $X$ or constants from $\F$ and the internal nodes are labelled by $+$ or $\times$ with the output being the polynomial computed at the output node. 
    
We shall also refer to the {\em size} of the circuit as the number of nodes, the {\em depth} of the circuit as the length of the longest path between an input node and the output node, and the {\em width} of the circuit as the number of parallel operations in the circuit, or `wires' which will be intersected by a horizontal `cut' through the circuit. Moreover, the {\em degree} of a circuit is defined as the degree of the polynomial computed by the circuit. We summarize this with the following definition:

\begin{definition}
\label{def: circuit-tuple}
   An arithmetic circuit $\calC$ is an {\em $(n,s,\circuitDegree,w)$-circuit} if $\calC$ is an $n$-variate arithmetic circuit of size $s$ and of depth at most $\circuitDegree$, and width $w$.
   \end{definition}
\subsection{The Models}
\label{sec: models}
We now introduce the definitions of the models Based and Mamba for the reader's convenience. Note that we have redefined these models to ensure consistency with the notation presented above.
\subsubsection{Based}
The Based model combines two layer types: BaseConv and LinAtt defined below.
\begin{definition}[\BaseConv~\citep{arora2023zoology}]
Given an input sequence $\vu \in \R^{\inputLength \times \modelDim},$ where $\inputLength$ is the sequence length and $\modelDim$ is the model dimension, a learned weight matrix $\mW^B \in \R^{\modelDim \times \modelDim}$ and biases $\mB^B, \mB^K \in \R^{\inputLength \times \modelDim}$ and a matrix of convolution filters $\mK \in \R^{\inputLength \times \modelDim}$, a \BaseConv layer computes the following:
\begin{equation}
\label{eq: baseconv}
    \bm{z}^{BaseConv} := (\vu{\mW}^B+\mB^B) \odot ({{\mK} \ast \vu + \mB^K}) \in \R^{\inputLength \times \modelDim},
\end{equation}
where the convolutions are applied across the input length $\inputLength$.
\end{definition}
\begin{definition}[\LinAtt~\citep{katharopoulos-et-al-2020}]
Given an input sequence $\vu \in \R^{\inputLength \times \modelDim},$ where $\inputLength$ is the sequence length and $\modelDim$ is the model dimension, a set of linear projections\footnote{By linear projections of a matrix $\vu \in \R^{m \times n},$ we mean $\vu\mW+\mB$ for some weight matrix $\mW \in \R^{n \times n}$ and bias $\mB \in \R^{m \times n}$.} $\ProjectionN_q, \ProjectionN_k \in \R^{\modelDim \times \featureDim}, \ProjectionN_v \in \R^{\modelDim \times \modelDim}$, where $\featureDim$ is the feature dimension, the \LinAtt\ layer computes the following:
\begin{equation}
\label{eq: linatt}
    \bm{z}^{\LinAtt} := ({\concatMat{\mQ}\ \concatMat{\mK}^\top}) \mV \in \R^{\inputLength \times \modelDim},
\end{equation}
where $\mQ := \ProjectionN_q(\vu), \mK := \ProjectionN_k(\vu), \mV := \ProjectionN_v(\vu)$, and we have
\[
    \begin{aligned}
        \concatMat{\mQ} &= [\bm{1}, \mQ, \squareMat{\mQ}] \in \R^{\inputLength \times (1+\featureDim+\featureDim^2)}, \\
        \concatMat{\mK} &= [\bm{1}, \mQ, \squareMat{\mK}] \in \R^{\inputLength \times (1+\featureDim+\featureDim^2)}. \\
    \end{aligned}
\]
\end{definition}

\subsubsection{Mamba}
We now introduce the \Mamba\ model from \citep{gu2023mamba}.
\begin{definition}[\Mamba~\citep{gu2023mamba}]
\label{def: mamba}
    Given an input sequence $\vu \in \R^{\inputLength \times \modelDim},$ where $\inputLength$ is the sequence length and $\modelDim$ is the model dimension, the \Mamba\ layer computes the following:
\begin{equation}
    \label{eq: mamba-ssm}
    \bm{z}^{\Mamba} := 
    \SSM(\overline{\mA}, \overline{\mB}, \mC)(\vu) \in \R^{\inputLength \times \modelDim},
\end{equation}
with the parameters, $\overline{\mA} \in \R^{\stateDim \times \stateDim}, \overline{\mB} \in \R^{\stateDim}$, defined as 
\begin{equation}
\label{eq: bar-param-mamba}
    \begin{aligned}
        \overline{\mA} &:= \exp({\Delta\mA})
        , \\
        \overline{\mB} &:= ({\Delta\mA})^{-1}(\exp({\Delta\mA}) - \mI)\cdot \Delta\mB ,\\
        &= \mA^{-1}(\exp({\Delta\mA}) - \mI)\cdot \mB,
    \end{aligned}
\end{equation}
where $\stateDim$, the state dimension, and $\mA \in \R^{\stateDim \times \stateDim}$ are parameters of the model and do not depend on the input $\vu$, along with the following {\em input-dependent} parameters $\mB, \mC \in \R^{\inputLength \times \stateDim}, \Delta \in \R^{\inputLength \times \modelDim}$ defined as
\begin{equation}
\label{eq: dep-param-mamba}
    \begin{aligned}
        \mB &:= \Linear{\inputLength \times \stateDim}(\vu) \in \R^{\stateDim},\\
        \mC &:= \Linear{\inputLength \times \stateDim}(\vu) \in \R^{\stateDim},\\
        \Delta &:= \Linear{\inputLength \times \modelDim}(\vu) \in \R
    \end{aligned}
\end{equation}
for $i \in [N]$. It is important to note here that the parameters $\overline{\mB}, \mC, \Delta$ are causal\footnote{That is, $\mB[i,:], C[i,:]$ and $\Delta[i,:]$ depend only on $\vu[0 \cdots i-1]$.} and we denote the dependence on upto the $i$th row of the input $\vu$ for $i \in [N]$ by adding a subscript $i$ where the dependence for  $\overline{\mA}_{i} \in \R^{\stateDim \times \stateDim}$ is inherited from $\Delta_{i}$ in \eqref{eq: bar-param-mamba} and we denote $\overline{\mB}[i,:] =: \mB_i, \overline{\mC}[i,:] =: \mC_i$.

Finally, the $\SSM$ in \eqref{eq: mamba-ssm} is realized as a linear recurrence. That is, for every $(i,j) \in [N] \times [\modelDim]$, we have
\begin{equation}
\label{eq: ssm-rec}
    \begin{aligned}
        \bm{h}[i,j] &= \overline{\mA}_{i}\bm{h}[i-1,j] + \overline{\mB}_i\vu[i,j] \\
        \bm{z}[i,j] &= \mC_i^{\top}\bm{h}[i,j]
    \end{aligned}
\end{equation}
where $\bm{h}[i,j] \in \R^{\stateDim}, \bm{z}[i,j] \in \R$ denote the latent state and the output of the $\SSM$ in \cref{eq: mamba-ssm}, respectively.
\end{definition}
\subsection{Equivalency to \BaseConv} 
For a polynomial with variables $X$ over a field $\F$, there exists a corresponding arithmetic circuit $\calC$ over $X$ that computes the output of the polynomial at its terminating node when interpreted as a directed acyclic graph. For any such arithmetic circuit $\calC$ of size $s$ and depth $\Delta$, \citep[Theorem 4.2]{arora2023zoology} showed the existence of an equivalent $\BaseConv$ operator that uses $\tilde{\calO}(s\Delta)$ parameters and $\tilde{\calO}(\Delta)$ layers.
In the sequel, we use this result by expressing the model outputs computed in \eqref{eq: linatt} and \eqref{eq: mamba-ssm} as polynomials in $\vu$ and $\exp({\vu})$ to show the equivalency between these disparate models. We would now like to recall \citep[Theorem 4.2]{arora2023zoology}. Before doing so, we first establish the following definitions from \citep{arora2023zoology}.

\begin{definition}
\label{def: gated-conv}
   An $\modelTuple{\inputLength}{\depth}{\headDim}{\innerN}{\innerD}{\text{Gated Convolution Model}}$ is a stacked sequence to sequence model with $L$ layers such that:
   \begin{enumerate}
     \item  input and output are $\inputLength\times \headDim$ matrices, 
    \item  each layer's operations consist of element-wise gating, convolution, linear projection,  and
    \item all the individual gated convolution layers take in $\innerN \times \innerD$ matrices and output $\innerN \times \innerD$ matrices. We refer to the tuple $\innerDim$ as the \emph{inner dimension} of the model.  
    \end{enumerate}
\end{definition}
We also assume that the input $\modelInput \in \R^{\inputDim}$ is embedded into $\modelInput' \in \R^{\innerN \times \innerD}$  such that

\[
    \modelInput'[n,t] = \begin{cases}
    \modelInput[n,t] \ \ \text{ if } n < \inputLength, \ t < \headDim \ \\
    0 \ \ \text{ otherwise. }
    \end{cases}
\]
The output from the last layer $\vz \in \R^{\innerN \times \innerD}$ is transformed into output $\bm{y} \in R^{\inputDim}$ by extracting the top left $\inputDim$ entries in $\vz$.

\begin{theorem}[\cite{arora2023zoology}, Theorem 4.2]
\label{thm: baseconv-ac}
    For any {\em $(n\headDim,s,\circuitDegree,w)$-arithmetic circuit} $\calC$, there exists an equivalent $\coyoteTuple{\inputLength}{\circuitDegree'}{\headDim}{\innerN}{\innerD}$ with $\inputLength = n, \circuitDegree'=\calO(\Delta\log{w})$, $\innerN=\calO(w), \innerD= \headDim$ that simulates $\calC$.
    \label{thm: gen-ac}
\end{theorem}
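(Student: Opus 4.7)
The plan is to simulate $\calC$ level by level, storing the at most $w$ live wires of each level along the sequence-length axis $\innerN$ while the $\innerD = \headDim$ axis carries the original channels of $\calC$. At each of the $\Delta$ levels I would spend $O(1)$ \BaseConv\ layers on the gates of that level, preceded by an $O(\log w)$-depth routing block that brings each gate's operands into a common coordinate. Summing over the $\Delta$ levels then yields $\circuitDegree' = O(\Delta \log w)$ at inner dimensions $(\innerN, \innerD) = (O(w), \headDim)$, matching the theorem.

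Simulating a single gate exploits the two branches of \eqref{eq: baseconv}. The linear branch $\vu \mW^B + \mB^B$ forms an arbitrary linear combination of wires, which handles any $+$-gate; Hadamard-multiplying it against the convolutional branch $\mK \ast \vu + \mB^K$ handles a $\times$-gate. Setting either branch to the all-ones vector (zero filter and additive bias $1$) reduces a \BaseConv\ layer to a pure linear projection or a pure convolution, and the biases let me zero-mask unused coordinates. Thus every gate at a given level contributes only a constant number of \BaseConv\ layers once its operands are in place.

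The main obstacle is routing: the operands of a gate at level $\ell + 1$ may sit in any of the $w$ slots produced at level $\ell$, whereas a single convolution along the sequence axis is translation-invariant and a single $\mW^B$ mixes only across the $\innerD$ axis. My plan is to decompose each level's required many-to-many wiring into an $O(\log w)$-depth network of shift-and-mask steps, and to implement each step with one \BaseConv\ layer whose filter $\mK$ realizes a shift by a prescribed power of two and whose bias $\mB^K$ (or an immediately following Hadamard mask) selects which coordinates actually move. Fan-out of a single wire to several consumers is handled by staging copies across $\innerD$ through $\mW^B$. This block is precisely where the $\log w$ factor in $\circuitDegree'$ comes from, and writing down the explicit filters and biases for each shift-and-mask stage is the chief technical work.

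Finally I would stitch the pieces together: the input $\modelInput$ is embedded into the top-left $n \times \headDim$ block as required by \Cref{def: gated-conv}, each of the $\Delta$ levels of $\calC$ is realized by a routing block of depth $O(\log w)$ followed by a gate block of depth $O(1)$, and the output of $\calC$ is read off from the top-left of the final tensor. Straightforward bookkeeping then yields $\widetilde{O}(s \Delta)$ total parameters, since each of the $s$ circuit nodes contributes $O(\log w)$ \BaseConv\ layers of width $O(w)$.
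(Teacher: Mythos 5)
This theorem is imported from \citet{arora2023zoology} (their Theorem 4.2) and the paper gives no proof of it, so your sketch can only be judged against the construction in that source. Your overall strategy is the same one used there: lay the at most $w$ live wires of each circuit level out over an $O(w)\times \headDim$ grid, simulate all gates of a level in parallel in $O(1)$ \BaseConv\ layers (sums via the linear branch $\vu\mW^B+\mB^B$, products via the Hadamard gate against the convolution branch, with biases used to turn a layer into a pure projection or pure convolution), and pay $O(\log w)$ layers per level for data movement along the sequence axis via power-of-two shifts plus masking. The accounting $\circuitDegree'=O(\Delta\log w)$, $\innerN=O(w)$, $\innerD=\headDim$ then comes out correctly, and you are right that the routing block is where all the real work lives.

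There is one concrete step that fails as written: handling fan-out ``by staging copies across $\innerD$ through $\mW^B$.'' The inner channel dimension is pinned to $\innerD=\headDim$, whereas a single wire at level $\ell$ may feed up to $w\gg \headDim$ gates at level $\ell+1$; the channel axis therefore cannot absorb the required copies, and $\mW^B$ alone cannot produce them. Replication has to happen along the sequence axis, inside the $O(\log w)$ routing network itself --- e.g.\ by a doubling/broadcast phase interleaved with the shift stages, or by isolating a wire in its own channel and using a convolution filter with several nonzero taps --- which is where the source's construction places it. A smaller quibble: ``one \BaseConv\ layer per shift-and-mask step'' is optimistic, since the selection $m\odot\mathrm{shift}(\vu)+(\bm{1}-m)\odot\vu$ is a \emph{sum} of two gated terms and a single layer produces only one product; you need a constant number of layers per stage together with the remembering primitive to add previously computed quantities. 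Neither repair changes the asymptotics, but the fan-out mechanism you propose is the one piece of the argument that genuinely does not go through in the stated form.
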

\begin{remark}
For notational simplicity, we will use $\vu_{i,j}$ as the symbol for the variable in the polynomial in $\vu$ representing the entry $\vu[i,j]$.
\end{remark}
We now present the results showing equivalency between the models in \cref{sec: models} and the $\BaseConv$ layer in \eqref{eq: baseconv} using \cref{thm: baseconv-ac}.
\begin{proposition}
\label{prop: based-sim}
    Given an input $\vu \in \R^{\inputLength \times \modelDim}$, there exists an equivalent $\coyoteTuple{\inputLength}{O(\log^2(Nd))}{\modelDim}{O(\inputLength(\modelDim + \featureDim^2)}{O(\max(d, \featureDim^2))}$ that computes the output of the $\LinAtt$ layer with feature dimension $\featureDim$, cf. \cref{eq: linatt}. 
\end{proposition}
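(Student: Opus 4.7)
The plan is to express the $\LinAtt$ output as an arithmetic circuit and then invoke \Cref{thm: gen-ac} to obtain the equivalent $\BaseConv$ model. I would first construct the circuit layer by layer following \eqref{eq: linatt}. Start by computing the linear projections $\mQ, \mK \in \R^{\inputLength \times \featureDim}$ and $\mV \in \R^{\inputLength \times \modelDim}$ as matrix products of $\vu$ with the appropriate projection matrices; each entry is a length-$\modelDim$ dot product, so this block has depth $O(\log \modelDim)$. Next, form $\squareMat{\mQ}$ and $\squareMat{\mK} \in \R^{\inputLength \times \featureDim^2}$ by taking all pairwise Hadamard products of columns of $\mQ$ and $\mK$; each entry is a single multiplication, so this block adds $O(1)$ to the depth and contributes $O(\inputLength \featureDim^2)$ new wires. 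Then, using associativity to avoid materializing an $\inputLength \times \inputLength$ score matrix, compute $\concatMat{\mK}^\top \mV \in \R^{(1+\featureDim+\featureDim^2) \times \modelDim}$ by summing over the sequence length in depth $O(\log \inputLength)$, and finally left-multiply by $\concatMat{\mQ}$ to produce $\bm{z}^{\LinAtt} \in \R^{\inputLength \times \modelDim}$ in additional depth $O(\log(1+\featureDim+\featureDim^2))$.

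Next I would tally parameters. The total circuit depth is $\Delta = O(\log(\inputLength\modelDim))$ (taking $\featureDim \leq \modelDim$, as in the recommended $\featureDim = 16$). Inspecting horizontal cuts, the live wire count is dominated either by the projections (size $O(\inputLength\modelDim)$) or by the squared feature blocks (size $O(\inputLength\featureDim^2)$), giving width $w = O(\inputLength(\modelDim + \featureDim^2))$; the per-row head dimension required to hold $\mV$ and $\squareMat{\mQ}, \squareMat{\mK}$ simultaneously is $O(\max(\modelDim, \featureDim^2))$. Feeding this $(\inputLength\modelDim, s, \Delta, w)$-circuit into \Cref{thm: gen-ac} produces a $\BaseConv$ model of depth $O(\Delta \log w) = O(\log^2(\inputLength\modelDim))$, inner length $O(w) = O(\inputLength(\modelDim + \featureDim^2))$, and inner head dimension $O(\max(\modelDim, \featureDim^2))$, matching the claimed parameters.

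The main obstacle is not the depth or width accounting, which is mechanical, but the embedding step that translates the flat arithmetic circuit into the two-axis $(\innerN, \innerD)$ shape required by \Cref{def: gated-conv}. One must place intermediate quantities such as the columns of $\squareMat{\mQ}$ into the grid so that (i) the per-position feature count stays at $O(\max(\modelDim, \featureDim^2))$, and (ii) the sequence-axis reduction required by $\concatMat{\mK}^\top \mV$ can still be implemented by convolutions and element-wise gates without extra depth blow-up. This embedding choice is what drives the $\max(\modelDim, \featureDim^2)$ term in the inner dimension; once it is fixed, all remaining steps follow directly from applying \Cref{thm: gen-ac}.
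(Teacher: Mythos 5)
Your proposal is correct and follows essentially the same route as the paper's proof: express the $\LinAtt$ output as a low-depth arithmetic circuit and invoke \Cref{thm: gen-ac} to obtain a $\BaseConv$ model of depth $O(\Delta \log w) = O(\log^2(Nd))$ with the stated inner dimensions. The only organizational difference is that the paper realizes the projections and the pairwise Hadamard products $\squareMat{\mQ}, \squareMat{\mK}$ directly as $O(1)$ explicit $\BaseConv$ layers (via the remembering primitive, a permutation layer, and the stacking lemma), reserving the generic circuit simulation for the final sum $\sum_{m,n}\overline{\mQ}[i,m]\,\overline{\mK}[n,m]\,\mV[n,j]$, whereas you feed the entire pipeline (with the products reassociated as $\concatMat{\mQ}\bigl(\concatMat{\mK}^\top \mV\bigr)$) into a single application of \Cref{thm: gen-ac}; both yield the same asymptotics.
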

\begin{proof}
    For the matrices $\mQ, \mK \in \R^{\inputLength \times \featureDim},\mV \in \R^{\inputLength \times \modelDim}$ with the corresponding projection matrices $\mW^Q, \mW^k \in \R^{\modelDim \times \featureDim}, \mW^V \in \R^{\modelDim \times \modelDim}$, a single $\BaseConv$ layer that computes each of these matrices by simply taking identical projection and $\vh^s, \vh^l, \mB^s \equiv 0$ and $\mB^{\ell} \equiv \bbone^{\inputLength \times \modelDim}$, the all $1$ matrix. Using the remembering primitive~\citep[Proposition H.10]{arora2023zoology}, we can compute each of these in turn while remembering others using $O(1)$ layers and $Nd$ parameters. 
    
    Next, we derive an expression for each entry $(i,j) \in [\inputLength] \times [\featureDim^2]$ of $\squareMat{\mQ}, \squareMat{\mK}\in \R^{\inputLength \times \featureDim^2}$. From \eqref{eq: sqauareMat}, observe that each entry of $\squareMat{\mM}$ can be written as the product of entries from $\mM$. Hence we have
    \begin{equation}
        \label{eq: kk-qq-poly}
        \begin{aligned}
            (\squareMat{\mQ})[i, j] &\equiv \mQ[i,k] \cdot \mQ[i,\ell] \\(\squareMat{\mK})[i, j] &\equiv \mK[i,k] \cdot \mK[i,\ell] \\
        \end{aligned}
    \end{equation}
    for $k = \left\lfloor \frac{j - 1}{\featureDim} \right\rfloor + 1, \quad \ell = (j - 1) \mod \featureDim + 1.$ Note, however, that we can simulate the above by first increasing the inner dimension and copying over columns of $\mQ$ to get $\mQ_1, \mQ_2 \in \R^{N \times d}$ defined as 
    \(
        \mQ_1[i,j] := \mQ[i,k]\text{ and } \mQ_2[i,j] := \mQ[i,\ell]
    \)
    for $k = \left\lfloor \frac{j - 1}{\featureDim} \right\rfloor + 1, \quad \ell = (j - 1) \mod \featureDim + 1$ so that $(\squareMat{\mQ}) = \mQ_1 \odot \mQ_2$, which, {\em mutatis mutandis}, also applies to $(\squareMat{\mK})$ We can achieve the copying of the columns by simply using the projection matrix $\mW^B$ and another permutation matrix $\mP$. Apart from the multiplication by $\mP$, we only need to use $O(1)$ layers, and moreover, since the circuit that computes $\mP\vu$ simply rearranges the input, there exists a single \BaseConv\ layer that computes $\mP \vu$~\citep[Corollary H.20]{arora2023zoology}. By the stacking lemma \citep[Lemma H.11]{arora2023zoology}, we can stack these layers to get a composition of the outputs so far to get a $\coyoteTuple{\inputLength}{O(1)}{\modelDim}{O(\inputLength(\modelDim + \featureDim^2)}{O(\max(d, \featureDim^2))}$ model. Moreover, the concatenated matrices $\overline{\mQ}, \overline{\mK}$ $ \in \R^{\inputLength \times (1+\featureDim+\featureDim^2)}$  then take the addition of the computed components so far which again takes $O(1)$ layers of \BaseConv.

    Finally, we can express each entry $(i,j)\in [\inputLength] \times [\modelDim]$ of the output of $\LinAtt$ as a polynomial as follows:
    \begin{align}
        \vz_{i,j}(\vu) &\equiv \sum_{m \in [1+\featureDim+\featureDim^2], n \in [\inputLength]} \overline{\mQ}[i,m] \cdot \overline{\mK}[n,m] \cdot \mV[n,j].
        \label{eq: linatt-poly}
    \end{align}
    Thus, we can derive the arithmetic circuit that computes $\vz_{i,j}(\vu)$ by taking in the outputs of the \BaseConv\ layers so far as input and compute each of the terms inside the sum by multiplying the outputs from all three and compute the sum using additional $\log{\ceil{\inputLength\modelDim}}$ depth. 
    Each term inside the sum requires two multiplication gates with depth $2$, each of which serve as inputs to the circuit with size $\inputLength\modelDim$ computing the sum. Moreover, there are $\inputLength \cdot \modelDim$ such output gates each of which is computed in parallel resulting in a circuit of size $O(\inputLength\cdot\modelDim)$, depth $O(\log(\inputLength\modelDim))$ and width $O(\inputLength\modelDim)$. O Overall, applying \cref{thm: baseconv-ac} then results in an equivalent $\coyoteTuple{\inputLength}{O(\log^2(Nd))}{\modelDim}{O(\inputLength(\modelDim + \featureDim^2)}{O(\max(d, \featureDim^2))}$ model that computes $\vz$.
\end{proof}

\subsection{The Lower Bounds}
In the sequel, we consider the {\em multiple-query associative recall} problem ($\Task$) as defined in \citep[Section H.7.1]{arora2023zoology}. We briefly recall the definition here. 
\begin{displayquote}
    Suppose we are given an input sequence 
$
\bm{u}[0 \cdots 3\inputLength-1] \triangleq \{\paren{\bm{k}_0, \bm{v}_0, \bm{q}_0}, \ldots, \paren{\bm{k}_{\tupleLength-1}, \bm{v}_{\tupleLength-1}, \bm{q}_{\tupleLength-1}}\}
$
with each $\bm{k}_i, \bm{v}_i, \bm{q}_i \in C$ is a token drawn from a vocabulary of size $c = |C|$.
Our goal is then to check, for each $1 \le i \le \tupleLength-1$, whether there exists $0 \le j < i$ such that $\bm{q}_i \equiv \bm{k}_j$, and if so, output $\bm{v}_{j}$. 
\end{displayquote}

\subsubsection{The Space Complexity of AR}
We will start by providing a lower bound on the space complexity of solving the standard associative recall (AR) problem. As AR is a subclass of MQAR, this naturally provides a lower bound on the space complexity of MQAR as well. Here, we formally recall the associative recall problem.
\begin{displayquote}
     The AR problem takes key-value pairs $\{\bm{k}_i, \bm{v}_i\}_{i = 0}^{n-1}$ along with a query $\bm{q}$ appended at the end as input and the goal is to output $\bm{v}_i$ if $\bm{q} = \bm{k}_i$ for some $i \in [0, N-1]$.
\end{displayquote} 
 We now require a randomized communication complexity lower bound result for the {\em index problem}:
\begin{displayquote}
    The index problem has two agents, Alice and Bob, where Alice has a string $\bm{x} \in \{0,1\}^n$ and Bob has an index $i \in [n]$, and the goal for the players is to output the $i$-th entry $\bm{x}_i$. Moreover, we also require the communication to be {\em one-way}: only Alice is allowed to send a single message to Bob and Bob needs to output the answer.
\end{displayquote} 
We will make use of the following lower-bound result.
\begin{theorem}[\cite{jayram2008one}]
\label{thm: space-index}
    The one-way randomized communication complexity\footnote{The randomized communication complexity of function $f$ is defined as $\min_{\pi} \norm{\pi}$, where $\pi$ ranges over all randomized protocols that can solve $f$ with probability of success at least $2/3$.} of the index problem for sending an $n$-length bit string is $\Omega(n)$.
\end{theorem}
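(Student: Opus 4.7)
The plan is to establish this classical lower bound via a standard one-round information-complexity argument against the uniform hard distribution. Let $X = (X_1, \dots, X_n)$ be uniform on $\{0,1\}^n$ and let $I$ be uniform on $[n]$, independent of $X$. Denote by $M = M(X, R_A)$ Alice's (possibly randomized) message and by $\hat{Y} = \hat{Y}(M, I, R_B)$ Bob's output; by hypothesis $\Pr[\hat{Y} = X_I] \geq 2/3$ over the joint distribution of $(X, I, R_A, R_B)$. Since the number of bits Alice transmits is at least $H(M)$ in expectation, and $H(M) \geq I(X; M)$, it suffices to prove $I(X; M) = \Omega(n)$.

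The main step is to decompose $I(X; M)$ coordinate-wise. Because $X_1, \dots, X_n$ are mutually independent, the chain rule gives
\begin{equation*}
I(X; M) \;=\; \sum_{i=1}^n I(X_i; M \mid X_{<i}) \;\geq\; \sum_{i=1}^n I(X_i; M),
\end{equation*}
where the inequality uses that $X_i$ is independent of $X_{<i}$. For each fixed index $i$, conditioning the protocol on $I = i$ yields a predictor for $X_i$ from $M$ whose error, averaged over $i$, is at most $1/3$. By a Markov-style averaging argument, for at least a constant fraction of coordinates the per-coordinate error is at most some constant $\delta < 1/2$. Applying Fano's inequality to each such coordinate gives $H(X_i \mid M) \leq h(\delta) < 1$ (where $h$ is the binary entropy), so $I(X_i; M) \geq 1 - h(\delta) = \Omega(1)$ for $\Omega(n)$ coordinates. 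Summing across those coordinates yields $I(X; M) = \Omega(n)$ and hence $|M| = \Omega(n)$.

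The main obstacle is the careful handling of the three independent sources of randomness (Alice's coins $R_A$, Bob's coins $R_B$, and the uniform index $I$) so that Fano is applied in a regime where the per-coordinate error is a true constant rather than something that averages to a constant. The cleanest route is to first fix $R_A, R_B$ by averaging (the message-length lower bound is preserved, since the expectation of $|M|$ is at least its minimum over fixings), reducing to a deterministic protocol on the uniform product distribution, and then perform the coordinate-wise Fano argument above. Extending this argument from the $2/3$-success regime to any $(1 - \varepsilon)$-success regime with $\varepsilon < 1/2$ only affects the hidden constants, which is why the $\Omega(n)$ bound is robust and serves as a suitable drop-in tool for the associative-recall space lower bound the paper is building toward.
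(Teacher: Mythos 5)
The paper does not prove this statement: it imports the INDEX lower bound wholesale from the cited reference (\citealp{jayram2008one}) and uses it as a black box in the proofs of Theorems~\ref{thm: space-reg} and~\ref{thm: poly-layer}. So there is no in-paper argument to compare yours against; what matters is whether your reconstruction of the classical proof is sound, and it essentially is. Your outline is the standard information-complexity argument: reduce to a deterministic protocol over the uniform hard distribution by fixing the coins (Yao averaging), use superadditivity $I(X;M) \geq \sum_i I(X_i;M)$ for independent coordinates, and convert the $2/3$ success guarantee into a per-coordinate Fano bound via Markov. Two small points to tighten if you were to write this out fully: (i) the communication cost is the worst-case message length $c$, and the correct chain is $c \geq \log|\mathrm{range}(M)| - O(1) \geq H(M) \geq I(X;M)$, rather than an ``in expectation'' statement; (ii) in the Markov step you need $\delta$ strictly bounded away from $1/2$ (e.g.\ $\delta = 5/12$ gives at least $n/5$ coordinates with $I(X_i;M) \geq 1 - h(5/12) = \Omega(1)$), since coordinates with error merely less than $1/2$ contribute vanishing information. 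With those details filled in, your argument is a correct self-contained proof of the theorem the paper only cites.
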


\subsubsection{Lower Bound for Recurrent Models}
We now use \cref{thm: space-index} to first provide a lower bound on the number of bits required by the following class of models to solve AR.
\begin{definition}[Recurrent Models]
\label{def: reg-model}
A model $\calM$ taking an input $\vu \in \R^{\inputLength \times \modelDim}$, where $\inputLength$ is the input length and $d$ is the model dimension, is termed a \emph{recurrent model} if its $i$-th state, representing the output at location $i$, $\mZ_{\calM}^i \in \R^{\innerD}$, with $\innerD$ denoting the state size, is determined exclusively by the preceding elements of the input $\vu[0 \ldots i-1]$. The state $\mZ_{\mathcal{M}}^i$ represents the accumulated information of the model depending on the inputs up to the $i$-th element, and is distinct from learned parameters that are static with respect to the input sequence.

Specifically, $\mZ_{\calM}^i(\vu) = \phi(\vu[0 \ldots i-1])$, indicating that the state is a function of the input history but not of the entire input sequence simultaneously. Moreover, we can express this as:
\begin{equation}
\label{eq: reg-dep}
\mZ_{\calM}^i(\vu) = f_{\calM}^i(\mZ_{\calM}^{i-1}, \vu[i]),
\end{equation}
for a sequence of functions $\{f_{\calM}^i\}_{i \in [\inputLength]}$, where each function is tailored to evolve the state based on the immediate past state and the current input. 
\end{definition}
\begin{remark}
    Note that \cref{def: reg-model} excludes models that inherently require the entire input sequence for computation at any state, such as those based on non-causal convolutional operations over the full input.
\end{remark}
\begin{theorem}
\label{thm: space-reg}
    Any recurrent model $\calM$ (\cref{def: reg-model}) that solves AR requires $\max_i \abs{\mZ_{\calM}^i}$ to be at least $\Omega(\inputLength)$-bits.
\end{theorem}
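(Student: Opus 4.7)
The plan is to reduce the one-way randomized communication complexity lower bound for the index problem (\Cref{thm: space-index}) to the problem of solving AR with a recurrent model. Given any recurrent model $\calM$ that solves AR, I will construct a one-way communication protocol for the index problem whose message length is exactly $\max_i \abs{\mZ_{\calM}^i}$. By \Cref{thm: space-index}, this message length must be $\Omega(n)$, yielding the claimed lower bound.

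Concretely, suppose Alice holds $\bm{x} \in \{0,1\}^n$ and Bob holds an index $i \in [n]$. Alice encodes her string as a sequence of key-value pairs $\{(\bm{k}_j, \bm{v}_j)\}_{j=0}^{n-1}$, where $\bm{k}_j$ is a token uniquely encoding the position $j$ (drawn from a vocabulary of size at least $n$) and $\bm{v}_j$ encodes the bit $\bm{x}_j$. Alice then simulates the recurrent model $\calM$ on the prefix $\vu[0\ldots 2n-1] = (\bm{k}_0,\bm{v}_0,\ldots,\bm{k}_{n-1},\bm{v}_{n-1})$, using the recurrence in \eqref{eq: reg-dep} to produce the state $\mZ_{\calM}^{2n}$. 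She sends this state to Bob as her single message. Bob, holding index $i$, constructs the query token $\bm{q} = \bm{k}_i$ and continues the recurrence by computing $\mZ_{\calM}^{2n+1} = f_{\calM}^{2n+1}(\mZ_{\calM}^{2n}, \bm{q})$, from which $\calM$ emits an output. Since $\calM$ solves AR, this output equals $\bm{v}_i$, i.e., the bit $\bm{x}_i$, so Bob has solved the index problem.

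The crux of the argument is that \Cref{def: reg-model} guarantees the state $\mZ_{\calM}^{2n}$ is a sufficient statistic of Alice's input for all subsequent computations of $\calM$: by induction using \eqref{eq: reg-dep}, every future state and output depends on Alice's prefix only through $\mZ_{\calM}^{2n}$. Thus Alice's message length equals the bit-size of $\mZ_{\calM}^{2n}$, which is at most $\max_i \abs{\mZ_{\calM}^i}$. Applying \Cref{thm: space-index} with $n$ set to the length of Alice's string, we conclude
\[
    \max_i \abs{\mZ_{\calM}^i} \;\geq\; \Omega(n) \;=\; \Omega(\inputLength),
\]
as the AR instance has input length $\Theta(n)$.

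The main obstacle is somewhat subtle: one must take care that the vocabulary size used for the reduction is consistent with the AR problem formulation (tokens drawn from a vocabulary $C$), and that the encoding of positions as keys does not itself blow up the state in an artifactual way. This is handled by choosing $c = |C| \geq n$, so each position has a distinct key token, and by noting that the construction of the keys and values is oblivious to $\bm{x}$ other than through the values. A second minor issue is that \Cref{thm: space-index} gives a randomized lower bound with success probability $2/3$; this is fine since we may allow the model $\calM$ to be randomized and inherit the same success probability, and deterministic models are a special case.
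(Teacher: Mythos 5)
Your proposal is correct and follows essentially the same route as the paper's proof: a reduction from the one-way index problem in which Alice encodes $\bm{x}$ as key-value pairs $\{(j,\bm{x}_j)\}$, runs the recurrence, and transmits the recurrent state, which Bob then advances with the query $i$ to recover $\bm{x}_i$, so that \Cref{thm: space-index} forces the state to be $\Omega(\inputLength)$ bits. Your added remarks on the vocabulary size and on inheriting the randomized success probability are fine elaborations but do not change the argument.
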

\begin{proof}
    Consider an instance $(\bm{x}, i)$ of the index problem with $\bm{x} \in \{0,1\}^N$. We now describe the corresponding instance of the AR problem:
    \begin{equation}
        \label{eq: ar-index}
        \{j, \bm{x}_j\}_{j = 0}^{N-1}, i.
    \end{equation}
    Next, consider the following one-way protocol for solving the index problem using the regressive model $\calM$. Alice with their access of $\bm{x} \in \{0,1\}^N$ generate an input for AR (without the query) as in \eqref{eq: ar-index}. Alice then runs the model $\calM$ on $\{i, \bm{x}_j\}_{j = 0}^{N-1}$ and sends the memory content of running the model $\calM$ to Bob. This should include the state $\mZ^{N-1}_{\calM}$ of size $\innerD$ as we can reasonably assume that both have access to the set of functions $\{f_{\calM}^j\}_{j \in [N]}$. Since we assume that this model solves AR, the output ${\tt Out}[N, :] = \bm{x}_i$ should contain the associated value of $i$. Here, Bob can compute ${\tt Out}[N, :]$ by using the memory content sent by Alice and applying the function $f^N$ as follows.
    \[
    \bm{x}_i = {\tt Out}[N, :] = f^N(\mZ^{N-1}, \bm{u}[N]).
    \]
    That is, the total number of bits that are communicated in this protocol is $\abs{\mZ_{\calM}^{N-1}}$. Now, if $\max_j \abs{\mZ_{\calM}^j}$ is $o(N)$ bits, we have shown that a one-way communication protocol exists for solving the index problem exists that uses $o(N)$ communication complexity. This contradicts \cref{thm: space-index} and hence, we conclude that the model $\calM$ solving AR also needs $\Omega(N)$ bits.
\end{proof}

\begin{corollary}
    \label{cor: mamba-ar}
    Given an input $\vu \in \R^{\inputLength \times \modelDim}$ to the AR problem, a causal \Mamba\ model with all entries in its computation taking $O(1)$ bits needs $\modelDim + \stateDim \ge \Omega(N)$ to solve AR. 
\end{corollary}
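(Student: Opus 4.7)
The plan is to reduce \Cref{cor: mamba-ar} to \Cref{thm: space-reg} by verifying that \Mamba\ fits the recurrent-model template of \Cref{def: reg-model} and then counting bits in its state. The bulk of the argument is bookkeeping; once the match is made, \Cref{thm: space-reg} does the heavy lifting.

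First I would identify the recurrent state at time $i$ with the concatenated latent tensor $\mZ^{i}_{\calM} := \bm{h}[i, :, :] \in \R^{\modelDim \times \stateDim}$. By \Cref{eq: dep-param-mamba}, the input-dependent parameters $\mB, \mC, \Delta$ arise from row-wise linear projections of $\vu$, so $\overline{\mB}_i, \mC_i$, and (via \Cref{eq: bar-param-mamba}) $\overline{\mA}_i$ depend only on $\vu[i, :]$. Combined with the static $\mA$, the recurrence \Cref{eq: ssm-rec} therefore has the form $\mZ^{i}_{\calM} = f^{i}_{\calM}\bigl(\mZ^{i-1}_{\calM}, \vu[i, :]\bigr)$ demanded by \Cref{def: reg-model}; the output $\bm{z}[i, :] = \mC_i^{\top} \bm{h}[i, :]$ is likewise computable from $\mZ^{i}_{\calM}$ and $\vu[i, :]$, which is what the indexing-reduction inside the proof of \Cref{thm: space-reg} requires.

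Next, \Cref{thm: space-reg} forces $\max_i |\mZ^{i}_{\calM}| = \Omega(\inputLength)$ bits. Under the corollary's $O(1)$-bit precision hypothesis, the $\modelDim \cdot \stateDim$ real entries of $\bm{h}[i, :, :]$ consume $O(\modelDim \cdot \stateDim)$ bits, yielding $\modelDim \cdot \stateDim = \Omega(\inputLength)$, from which the stated bound on $\modelDim$ and $\stateDim$ follows.

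The main obstacle is Step 1: ruling out any information channel from $\vu[0 \ldots i-1]$ into the $i$-th update outside of the latent tensor $\bm{h}[i-1, :, :]$. This is where the per-token nature of the $\Linear{\cdot}$ maps producing $\mB, \mC, \Delta$ is crucial --- if these projections aggregated across positions rather than acting row-wise, $\mZ^{i}_{\calM}$ as defined would no longer be a sufficient statistic and the reduction would break. A minor accounting point worth noting is that the static weights (the matrix $\mA$ together with the $\Linear{\cdot}$ projection parameters) are shared between Alice and Bob at no communication cost, so they do not enter the bit budget.
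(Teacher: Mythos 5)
Your overall strategy---verify that causal \Mamba\ satisfies \Cref{def: reg-model} and then invoke \Cref{thm: space-reg}---is exactly the paper's, but your bit accounting does not deliver the stated bound. You take the state to be the full latent tensor $\{\vh[i,j]\}_{j\in[\modelDim]}$, which has $\modelDim\cdot\stateDim$ entries, so under the $O(1)$-bits-per-entry hypothesis \Cref{thm: space-reg} gives you $\modelDim\cdot\stateDim \ge \Omega(\inputLength)$. That does \emph{not} imply the corollary's claim $\modelDim+\stateDim \ge \Omega(\inputLength)$: a product lower bound of $\Omega(\inputLength)$ is consistent with $\modelDim=\stateDim=\sqrt{\inputLength}$ and hence only forces $\modelDim+\stateDim\ge\Omega(\sqrt{\inputLength})$. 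The paper avoids this by counting the communicated state as only $3\stateDim+\modelDim$ entries, namely $\vh[N-1,:]\in\R^{\stateDim}$ together with the running quantities $\Delta^1_N\in\R^{\modelDim}$ and $\overline{\mB}^1_N,\mC^1_N\in\R^{\stateDim}$, from which the additive bound is read off directly. As written, your final sentence (``from which the stated bound \ldots follows'') is a non sequitur; you must either justify an $O(\modelDim+\stateDim)$-entry sufficient state as the paper does, or settle for the weaker $\Omega(\sqrt{\inputLength})$ additive bound.

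A second, smaller divergence: you assume $\mB,\mC,\Delta$ are per-token (row-wise) functions of $\vu[i,:]$ and correctly flag that the reduction would otherwise break. In the paper's formalization (\Cref{def: mamba} and its footnote) these parameters are causal linear functions of the entire prefix $\vu[0\cdots i-1]$, and the paper repairs the reduction not by assuming row-wise maps but by augmenting the state with the linear partial sums $\Delta^1_N,\overline{\mB}^1_N,\mC^1_N$, which can be accumulated online at a cost of only $O(\modelDim+\stateDim)$ extra entries precisely because the maps are linear. You should either adopt that augmentation or state explicitly that you are specializing to per-token projections, which is a restriction relative to the paper's definition.
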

\begin{proof}
    We will first show that causal \Mamba\ is a recurrent model. To see this, first observe \eqref{eq: ssm-rec} and note the fact that the input-dependent parameters $\overline{\mA}, \overline{\mB}, \mC, \Delta$ are causal as mentioned in \cref{def: mamba}.

    Next, due to \eqref{eq: ssm-rec}, in order to compute $\vz_{\inputLength,:} \in \R^\modelDim$, we need $\mC_{N} \in \R^{\stateDim}, \overline{\mB}_{N} \in \R^{\stateDim}$ and $\Delta_{N} \in \R^{\modelDim}$ along with $\vh[N-1,:] \in \R^{\stateDim}$. Here, we have the $(N-1)$-st state $\mZ_{\Mamba}^{N-1} \in \R^{3\stateDim + \modelDim}$ given by
    \[
     \mZ_{\Mamba}^{N-1} := \{\vh[i-1,:], \Delta^1_N, \overline{\mB}^1_N, \mC^1_N\},
    \]
    where $\Delta^1_N, \overline{\mB}^1_N, \mC^1_N$ are all linear functions of $\vu[0 \cdots N-1]$ that we receive from the $(N-1)$-st state and we compute $\Delta^2_N, \overline{\mB}^2_N, \mC^2_N$ as linear functions of $\vu[N]$ so that we have $\Delta_N = \Delta^1_N + \Delta^1_N, \overline{\mB}_N =\overline{\mB}^1_N + \overline{\mB}^2_N, \mC_N = \mC^1_N+ \mC^2_N$. We can then define the function $f^N$ as follows:
    \[
    \begin{aligned}
        \mZ_{\Mamba}^{N}[j] 
        &= \exp(\Delta_N[j]\mA)\bm{h}[N-1,j] + \overline{\mB}_N\vu[N,j]\\
        &= \overline{\mA}_N\bm{h}[N-1,j] + \overline{\mB}_N\vu[N,j],\\
        {\tt Out}[N, j] &= f^{N}(\mZ_{\Mamba}^{N-1})[j] = \mC_N^{\top}\mZ_{\Mamba}^{N}[j].
    \end{aligned}
    \]
    Thus, due to \cref{thm: space-reg}, we can conclude that $\abs{\mZ_{\Mamba}^{N-1}}$ does require $\Omega(\inputLength)$-bits to solve AR. Finally, assuming each entry of $\mZ_{\Mamba}^{N-1}$ needs $O(1)$ bits to represent, the overall state $\mZ_{\Mamba}^{N-1}$ needs $O(\modelDim + \stateDim)$ to represent, which completes the proof of the claim.
\end{proof}

\subsubsection{Lower Bound on the Number of Layers for AR}
Next, we will again use \cref{thm: space-index} to provide a better bound on the number of layers required to solve AR. (Note that since AR is a special case of $\Task$, the result below immediately implies \Cref{thm: poly-layer-main}.)

\begin{theorem}
\label{thm: poly-layer}
    Given an input $\vu \in \{0,1\}^{\inputLength \times \modelDim}$ to the AR problem with any encoding such that $\log{c} \le d \le 2^{(\log{N})^{1-\epsilon}}$ for $\epsilon > 0$,
    and $c$ possible tokens from the vocabulary with $c \le N$,
    a data-independent \BaseConv\ model with model parameters taking $O(\log{N})$ bits needs $\Omega(\epsilon\log\log{N})$ layers to solve AR.
\end{theorem}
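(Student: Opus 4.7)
The plan is to combine a polynomial degree bound on the function computed by $L$ layers of a data-independent \BaseConv\ with a dimension-counting argument to force the claimed layer lower bound. The starting observation is that on Boolean inputs $\vu \in \{0,1\}^{N \times d}$, the output of $L$ stacked \BaseConv\ layers at any single position is a multilinear polynomial of total degree at most $2^L$ in the input bits: each layer (see \eqref{eq: baseconv}) is a Hadamard product of two affine-linear functions of its input, so the polynomial degree at most doubles per layer; starting from degree one and inducting gives the $2^L$ bound, and reducing $x^2 \to x$ collapses it to the same-degree multilinear representative over $\{0,1\}$.

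Next, I would pass to a hard restriction of the AR instance. Set the vocabulary size to $c = N$ and fix an arbitrary injective encoding $\mathrm{enc}: [N] \to \{0,1\}^d$; restrict AR inputs to those whose $j$-th key equals $\mathrm{enc}(j)$. Let $p(\bm{v},\bm{q})$ denote the polynomial computed by the model at the designated output position, projected onto one output coordinate (say the zeroth). Correctness of AR then forces $p(\bm{v},\bm{q}) = \bm{v}_{\mathrm{enc}^{-1}(\bm{q})}[0]$ for every Boolean $\bm{v}$ and every $\bm{q} \in \mathrm{enc}([N])$. Expanding $p$ as a multilinear polynomial in $\bm{v}$, the coefficient $\alpha_j(\bm{q})$ of $\bm{v}_j[0]$ is a polynomial in $\bm{q}$ of degree at most $D := 2^L - 1$ satisfying $\alpha_j(\mathrm{enc}(j')) = \mathbf{1}[j=j']$ for every $j,j' \in [N]$. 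The family $\{\alpha_j\}_{j\in[N]}$ is linearly independent, since if $\sum_j c_j \alpha_j \equiv 0$ then evaluating at $\mathrm{enc}(j')$ gives $c_{j'} = 0$; hence these polynomials sit inside the space of multilinear polynomials of degree at most $D$ on $d$ Boolean variables, which has dimension at most $\binom{d}{\le D} \le (ed/D)^D$.

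This forces $N \le (ed/D)^D$, so $\log N \le D\log d + O(D)$. Plugging in $\log d \le (\log N)^{1-\epsilon}$ yields $\log N \le D \cdot (\log N)^{1-\epsilon} + O(D)$, hence $D \ge \Omega((\log N)^{\epsilon})$ and therefore $L \ge \log(D+1) = \Omega(\epsilon \log \log N)$, as claimed. I expect the main obstacle to be two-fold: first, threading the $O(\log N)$-bit parameter-precision hypothesis through the argument, since the degree bound above is syntactic and does not obviously use precision, so the assumption is most likely needed either to rule out degenerate parameterizations or to make the ``solves AR'' condition well-defined in an exact-arithmetic sense; and second, handling the ``any encoding'' quantifier cleanly, since one must verify that restricting the input to a key-fixed AR instance does not increase the polynomial degree (it does not, since substitution only decreases degree) and that the counting step's lower bound on $D$ is independent of the particular injective encoding chosen.
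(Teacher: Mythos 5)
Your proof is correct in its essentials but takes a genuinely different route from the paper. The paper also starts from the degree bound (its Lemma~\ref{lem: based-degree}: $L$ layers of data-independent \BaseConv\ compute a polynomial of degree at most $2^L$), but then runs a one-way communication reduction from the \emph{index} problem: Alice substitutes the known key--value rows into $P$ to obtain a polynomial $Q$ in the $d$ query variables, sends its at most $d^{2^L}$ coefficients of $B$ bits each to Bob, and invokes Theorem~\ref{thm: space-index} to force $B\cdot d^{2^L}\ge \Omega(N)$; the $O(\log N)$-bit parameter hypothesis is used exactly where you guessed, namely to bound $B$ (each coefficient is a product of at most $Nd$ bounded quantities, giving $\log B \le O(L+\log\log N)$), after which a short calculation yields $L=\Omega(\epsilon\log\log N)$. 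Your dimension-counting argument replaces the bit-counting step by a linear-algebra step: the coefficient functions $\alpha_j(\bm{q})$ interpolate the identity matrix on $N$ points, hence are linearly independent inside the space of multilinear polynomials of degree $\le D$ in $d$ variables, forcing $N\le\binom{d}{\le D}$. This buys you something real: the precision hypothesis on the parameters becomes unnecessary, since no coefficient ever needs to be transmitted or rounded, whereas the paper's argument cannot dispense with it. What the paper's route buys is uniformity with the rest of Section~\ref{sec: theory} (the same index-problem machinery gives the $\Omega(N)$ state-size bound of Theorem~\ref{thm: space-reg}) and robustness to the exact form of the computed function. Two small repairs you should make explicit: (i) ``correct on every Boolean $\bm{v}$'' is too strong if $d>\log_2 c$, since arbitrary Boolean value rows need not be valid token encodings; fix this by letting each value be one of two fixed tokens selected by a bit $\bm{x}_j$, which is an affine substitution and preserves the degree bound, and read $\alpha_j$ off as the coefficient of $\bm{x}_j$; (ii) the estimate $\binom{d}{\le D}\le (ed/D)^D$ requires $D\le d$, so add the trivial case split that $D>d\ge\log_2 N$ already gives $L\ge\log\log N$.
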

\begin{proof}
    For a \BaseConv\ model that solves AR using $L$ layers, by definition, there exists a polynomial $P(\vu)$ of degree at most $2^L$ that solves AR for any $\vu \in \{0,1\}^{\inputLength \times \modelDim}$\footnote{Since \BaseConv\ is data independent, note that the polynomial $P(\cdot)$ is defined once we fix $N$ and $d$.}. This is because for the output of the $i$th layer of \BaseConv, given by $\mZ_{\BaseConv}^i$,  we have
    \[
    \mZ_{\BaseConv}^i(\mY_{\calM}^{i-1}) \equiv P^i(\mZ_{\BaseConv}^{i-1}), \quad \deg(P^i) \le 2,
    \]
    for some polynomial $P^i$ of degree $2$ which simply takes the inner products allowing the model to solve AR, where $\mZ_{\BaseConv}^{0} := \vu$. Further, for such a model with $L$ layers, by composition, the output of the $i$-th layer for $i \in [L]$ is also a polynomial over the input $\vu$ and has degree at most $2^i$. At the end, we have a polynomial $P(\vu)$ of degree $\le 2^L$ for $\vu \in \{0,1\}^{\inputLength \times \modelDim}.$
    As in the proof of \cref{thm: space-reg}, again take the instance instance $(\bm{x}, i)$ of the index problem with $\bm{x} \in \{0,1\}^N$ and the corresponding instance of the AR problem as before
    \begin{equation}
        \label{eq: ar-index-dup}
        \vu := \{j, \bm{x}_j\}_{j = 0}^{N-1}, i.
    \end{equation}
    Next, we build the following one-way protocol for solving the index problem using the \BaseConv\ model from the hypothesis that it solves AR. Alice with their access of $\bm{x} \in \{0,1\}^N$ will again generate an input  $\bm{u}$ for AR (without the query) as in \eqref{eq: ar-index-dup}. 
    
    Alice first takes the values $\va:= \vu[0: N-2, :] \in \{0,1\}^{(N-1)\times d}$ and substitutes these known $(N-1)d$ values to define the following polynomial:
    \begin{equation}
        \label{eq: q-poly}\
        Q(\vu_{N-1,0},\ldots,\vu_{N-1,d-1})=P(\va,\vu_{N-1,0},\ldots,\vu_{N-1,d-1}).
    \end{equation}
    Here, note that $Q$ is a polynomial in $d$ variables that correspond to the values $\vu[N-1, :]$ that Bob has and trivially has degree $D \le 2^L$. 
    Now, Alice can run the model $\calM$, retrieve the coefficients of $Q$, and send it to Bob. Since we assume that $P$ solves AR, Bob can take the coefficients of $Q$ and substitute $\vu[N-1, :]$ to $Q$ to compute $P(\vu)$ which is the associated value of $i$.

    Here, the polynomial $Q$ that Alice sends has at most $d^{2^L}$ coefficients as each term in $Q$ can have degree at most $2^L$.  If each such coefficient has $B$ bits, then using \cref{thm: space-index}, the total number of bits being communicated must satisfy $B \cdot d^{2^L} \ge \Omega(N)$. This follows from the fact that if $B\cdot d^{2^L} \le o(N)$, then since the associated value of $i$ in \eqref{eq: ar-index-dup} is the answer to the indexing problem, we have shown that a one-way communication protocol for solving the index problem uses $o(N)$ communication complexity, which then contradicts \cref{thm: space-index}. Thus, we must have
    \[
        B\cdot d^{2^L} \ge \Omega(N) 
    \implies 2^L\log(d) \ge \log\paren{\frac{N}{B}} - O(1).
    \]
    Taking logarithm of both sides then yields
    \begin{align}
    L &\ge {\log\paren{\frac{\log\paren{\frac{N}{B}}}{\log\paren{d}}}} - O(1) \ge {\log\paren{\frac{\log{N} - \log{B}}{\log\paren{d}}}} - O(1) \nonumber\\
    &\ge {\log\paren{\frac{\log{N} - \log{B}}{(\log{N})^{1-\epsilon}}}}\label{eq: L-N-B},
    \end{align}  
    where we use the fact that $d \le 2^{(\log{N})^{1-\epsilon}}$ for any $\epsilon > 0$ in \eqref{eq: L-N-B}.
    
    Moreover, as the model parameters are assumed to be $O(\log{N})$ bits, any coefficient in $Q$ should have absolute value at most $\paren{2^{O(\log{N})}\cdot Nd}^{2^L}$ as each coefficient can be a product of at most $Nd$ variables. That is, for some $\alpha>0$, we have the following bound on each coefficient:
    \[
       2^B \le (N^{\alpha+1}d)^{2^L} \le (N^{(\alpha+2)})^{2^L}
    \]
    where the last equality uses the fact that $d \le N$. We thus have 
    \begin{equation}
        \label{eq: q-coefficient}
    \log(B) \le \log(\alpha+2) + L + \log\log{N}.
    \end{equation}
 Substituting \eqref{eq: q-coefficient} to \eqref{eq: L-N-B}, we get
    \begin{align}
        L \ge \log\paren{\frac{\log{N} - \log(\alpha+2) - L - \log\log{N}}{(\log{N})^{1-\epsilon}}} \label{eq: L-lower}
    \end{align}
    Now, if $L > \log\log{N}$, we are done. Otherwise, if $L \le \log\log{N}$, then we can substitute this to \eqref{eq: L-lower} to get
    \begin{align}
        L &\ge  \log\paren{\frac{\log{N} - \log(\alpha+2) - 2\log\log{N}}{(\log{N})^{1-\epsilon}}} \nonumber\\
        &= \log\paren{\log{N} - \log(\alpha+2) - 2\log\log{N}} - (1-\epsilon)\log\log{N}\label{eq: L-explicit}
    \end{align}
    We now claim that first term in \eqref{eq: L-explicit} satisfies the following:
    \begin{equation}
        \label{eq: claim}
        \log\paren{{\log{N} - \log(\alpha+2) - 2\log\log{N}}} \ge (1-\frac{\epsilon}{2}) \log\log{N}.
    \end{equation}
    To see this, note that, for sufficiently large enough $N$, the following holds:
    \begin{align*}
        \frac{\log{N}}{2} \ge \log(\alpha + 2) + 2\log\log{N},
    \end{align*}
    hence, we get
    \[
    \log\paren{{\log{N} - \log(\alpha+2) - 2\log\log{N}}} \ge \log\paren{\frac{\log{N}}{2}} \ge \log\log{N} -1 \ge (1-\frac{\epsilon}{2}) \log\log{N}.
    \]
    This proves the claim in \eqref{eq: claim}. Finally, using \eqref{eq: claim}, \eqref{eq: L-explicit} leads to the following:
    \[
    L \ge (1-\frac{\epsilon}{2}) \log\log{N} - (1-\epsilon)\log\log{N} = \frac{\epsilon}{2}\log\log{N},
    \]
    which still provides the lower bound $L = \Omega(\epsilon\log\log{N})$, as desired. 
\end{proof}

\begin{remark}
We remark that it is possible to extend \cref{thm: poly-layer} to any model whose output from each layer is a polynomial of some degree $\Delta \ge 2$ ot get a lower bound of $\Omega(\epsilon\log\log{N}/\log{\Delta})$.
\end{remark}
\subsubsection{Lower Bound on the Number of Layers for $\Task$ with $d=\log_2{c}$}
\label{sec: d_log}
\paragraph{Setup.} We take $d = \log_2{c}$ to encode all $c$ possible tokens from $C$. That is, all the $2^d$ possible $d$-bit vectors can appear as a token in the input for $\Task$. We will show that data-independent \BaseConv\ needs $\Omega(\log{d})$ = $\Omega(\log{\log{c}})$-layers to solve this setting of $\Task$, while Attention (+ReLU) can solve this in $O(1)$ layers.

We first provide the trivial solution using Attention (+ReLU).
\begin{proposition}
    Attention (with linear biases and ReLU) followed by two layers of MLPs can solve MQAR for an input sequence $\bm{u} \in \{0,1\}^{3N \times d}$ such that $d = \log_2(c)$ in $O(1)$ layers.
\end{proposition}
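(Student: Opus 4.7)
The approach is to exhibit an explicit $O(1)$-layer construction — two causal attention layers interleaved with MLPs — mirroring the standard induction-heads argument, but enhanced with a bit-complement augmentation so that the ReLU attention realizes an exact equality test on the $d$-bit tokens. Throughout I use positional embeddings that encode $p \bmod 3$, so every position ``knows'' whether it is a key, value, or query slot.

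\paragraph{Step 1: positional shift.} The first attention layer is used purely as a shift. Using a relative-position linear bias, I make the query at position $p$ score $+1$ against position $p-1$ and $\le 0$ against every other position; under ReLU attention only the $p-1$ term survives. With the residual stream, the representation at each value position $3j+1$ ends up containing both $k_j$ (copied from position $3j$) and $v_j$ (its own content) in disjoint coordinate blocks. A position-wise MLP then routes these blocks into designated slots for the second stage.

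\paragraph{Step 2: equality-check lookup.} At query positions $3i+2$ I form the attention query $[q_i, \bar q_i]$ with $\bar q_i := \bm{1}^d - q_i$ the bitwise complement (an affine map, hence implementable by a linear projection). At value positions $3j+1$ (which now hold $(k_j, v_j)$) I form the attention key $[k_j, \bar k_j]$ and set the attention value to $v_j$. For $q_i, k_j \in \{0,1\}^d$,
\[
\bigl\langle [q_i, \bar q_i],\; [k_j, \bar k_j] \bigr\rangle \;=\; q_i^\top k_j + \bar q_i^\top \bar k_j \;=\; d - \|q_i - k_j\|_1,
\]
which equals $d$ iff $q_i = k_j$ and is at most $d-1$ otherwise. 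Adding a constant linear bias of $-(d-1)$ to every score and applying ReLU collapses the distribution to a hard $0/1$ indicator of exact match. Causal masking together with the $p \bmod 3$ positional filter (realized as additive biases on non-value or non-prior positions) forces the weighted sum at position $3i+2$ to equal $v_{j^\star}$ for the unique preceding index $j^\star$ with $k_{j^\star} = q_i$. The two MLP layers following attention then project this output to the model's token space to produce the prediction.

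\paragraph{Main obstacle.} The construction itself is standard, so the proof is mostly bookkeeping; the two technical points requiring care are (i) arranging positional embeddings and relative-position biases so that the shift layer and the lookup layer each realize exactly the intended attention pattern under ReLU normalization, and (ii) verifying that the bit-complement augmentation combined with a single ReLU and a $-(d-1)$ bias implements a bit-exact equality test — this is precisely where the hypothesis $d = \log_2 c$ is used, since it guarantees that distinct tokens differ in at least one coordinate, so the non-matching scores stay strictly below $d$ and get zeroed. Summing $0/1$ indicators against the values then returns $v_{j^\star}$ cleanly because MQAR guarantees a unique preceding match.
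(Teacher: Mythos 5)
Your core construction is sound and is essentially the same idea as the paper's: turn the query--key dot product into an exact equality test that a single ReLU can threshold. Where the paper re-encodes bits into $\{-1,1\}$ so that $\langle \bm{q}_i, \bm{k}_j\rangle = d$ iff $\bm{q}_i = \bm{k}_j$, you concatenate the bitwise complement to get $\langle [\bm{q}_i, \bar{\bm{q}}_i], [\bm{k}_j,\bar{\bm{k}}_j]\rangle = d - \|\bm{q}_i-\bm{k}_j\|_1$; these two tricks are interchangeable, and your explicit first-layer shift is in fact a more careful account of the key/value alignment that the paper folds silently into its definition of $\mathbf{V}$.

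There is, however, a genuine gap: you assume that ``MQAR guarantees a unique preceding match,'' and the formulation used in this paper does not. A query may match several earlier keys, possibly carrying different values, and the paper's proof explicitly confronts this (``as we may have multiple matches and we only need to return 1''). This is precisely what the two MLP layers in the statement are for: after forming the $0/1$ match matrix $\mathbf{Z}$, the paper computes suffix sums along each row, subtracts the row total, thresholds with ReLU, and takes adjacent differences so as to isolate the indicator of the \emph{first} matching key before contracting with $\mathbf{V}$. Your construction instead sums $\bm{v}_j$ over all matches, which returns a sum of several values whenever a key repeats, and your MLP layers are spent on a trivial output projection rather than on this disambiguation. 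To close the gap you would need to add the first-match selection step (or explicitly restrict to instances with distinct keys, which proves a strictly weaker statement than the one claimed).
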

\begin{proof}
Given a row $\vu[i,:] \in \{0,1\}^d$, we express each row as $\vw[i,:] \in \{-1,1\}^d$ by applying the projection $\vu\mW + \mB$, where $\mW:= \diag(2, \ldots, 2) \in \R^{d \times d}$ and the bias matrix $\mB$ is the matrix of all $-1$'s so that $\vw[i,j] = 2\vu[i,j]-1$.
Then, we can specify the query and key projection matrices $\tbf{Q}, \tbf{K}, \tbf{V} \in \R^{3N \times d}$ as follows:
\[
\begin{aligned}
    \tbf{K}[i,:]&\equiv \begin{cases}
        \vw[i,:] = \vk_{\floor{i/3}} & \text{if } i \equiv 0\mod 3\\
        \bm{0} &\text{otherwise}
    \end{cases}\\
    \tbf{Q}[i,:] &\equiv \begin{cases}
        \vw[i,:] = \vq_{\floor{i/3}} & \text{if } i\equiv 2 \mod 3\\
        \bm{0} &\text{otherwise}
    \end{cases}\\
    \tbf{V}[i,:]&\equiv \begin{cases}
        \vw[i+1,:] = \vv_{\floor{i/3}} & \text{if } i \equiv 0\mod 3\\
        \bm{0} &\text{otherwise}
    \end{cases}
\end{aligned},
\]
where the values are shifted to the corresponding key index.
Computing the pair-wise inner products then yields
\[
\tbf{QK}^{\top}[i,j] \equiv \begin{cases}
    \angles{\bm{q}_{\floor{i/3}}, \bm{k}_{\floor{j/3}}} & \text{if } i\equiv 2 \mod 3\text{ and }j \equiv 0\mod 3\\
    \bm{0} &\text{otherwise}
\end{cases}
\]
However, since both $\bm{q}_{\floor{i/3}}, \bm{k}_{\floor{j/3}} \in \{-1, 1\}^d$, we have $\angles{\bm{q}_{\floor{i/3}}, \bm{k}_{\floor{j/3}}} \le d$ with equality iff $\bm{q}_{\floor{i/3}} \equiv \bm{k}_{\floor{j/3}}$. We then subtract off $d-1$ from each of the $3N \times 3N$ entries by taking the bias $\tbf{B} \in \R^{3N \times 3N}$ as the matrix with each entry $-d+1$. Let $\tbf{Z} := \textsc{ReLU}(\tbf{QK}^{\top}+ \tbf{B})$ so that we have
\[
\tbf{Z}[i,j] = \bbone\{\bm{q}_{\floor{i/3}} \equiv \bm{k}_{\floor{j/3}}\}.
\]
Next, as we may have multiple matches and we only need to return $1$, we modify $\mZ$ by multiplying with the matrices $\mW_1, \mW_2 \in \R^{d \times d}$ and adding the bias $\mB \in \R^{d \times d}$ defined as follows:
\[
\mW_1[k, j]:= \begin{cases}
    1 &\text{if } k \ge j \\
    0 &\text{otherwise}
\end{cases},\quad 
\mW_2[\ell, k]:= \begin{cases}
    -1 &\text{if } k = 0 \\
    1 &\text{if } k = \ell, \ell \neq 0\\
    0 &\text{otherwise}
\end{cases},\quad
\mB[i,j] = 1.
\]
For $\mZ_1:= \mZ\mW_1$ and $\mZ_2:= \mZ\mW_1\mW_2$, we have:
\begin{align*}
    \mZ_1[i,j] &= \sum_{k} \mZ[i, k] \mW_1[k, j] = \sum_{k \ge j} \mZ[i,k],\\
    \mZ_2[i,j] &= \sum_{k} \mZ_1[i, k] \mW_2[k, j] = \mZ_1[i,j] - \mZ_1[i,0].
\end{align*}
That is, each entry in $\mZ_1$ sums the entries in the row that are at the same or higher column index while each column in $\mZ_2$ subtracts the first entry—the sum of all entries in the row—from each entry in the row. Semantically, for each row in $\mZ_1$, the entries from $0$ to the index of the first match must have the same value, and thus, are the only non-negative entries in $\mZ_2$. Next, we add the bias and activate under $\textsc{ReLU}$ to get $\mZ' \in \R^{3N \times d}$:
\[
\mZ'[i,k] := \textsc{ReLU}(\mZ_2 + \mB)[i,k] = \begin{cases}
    1 &\text{if } k \le \min\{j \lvert\ \vq_{\floor{i/3}} \equiv \vk_{\floor{j/3}}\}\\
    0 &\text{otherwise.}
\end{cases}
\]
Now, we multiply by the weight matrix $\mW_3 \in \R^{3N \times d}$ defined as
\[\mW_3[k,j] := \begin{cases}
   -1 &\text{if } k = j+1 \\
    1 &\text{if } k = j\\
    0 &\text{otherwise}
\end{cases}
\] 
This yields the retriever $\overline{\mZ} = \mZ'\mW_3 \in \R^{3N \times d}$ given by
\[
\overline{\mZ}[i,k] := \sum_{\ell} \mZ'[i,\ell]\mW_3[\ell,k] = \mZ'[i,k] - \mZ'[i,k+1] = \bbone\{k = \min\{j \lvert\ \vq_{\floor{i/3}} \equiv \vk_{\floor{j/3}}\}\}.
\]
Finally, we multiply with the values $\tbf{V}$ to get
\[
(\overline{\mZ}\tbf{V})[i,:] \equiv \overline{\mZ}[i,:]\tbf{V} \equiv \overline{\mZ}[i,j^*]\cdot \tbf{V}[j^*,:] \equiv \begin{cases}
    \bm{v}_{j^*} &\text{if }\vq_{\floor{i/3}} \equiv \vk_{\floor{j^*/3}}, j^* =  \min\{j \lvert\ \vq_{\floor{i/3}} \equiv \vk_{\floor{j/3}}\}. \\
    \bm{0} &\text{if no such $j^*$ exists.}
\end{cases}
\]
That is, the row corresponding to the query returns the value associated to the first matching key. Thus, the model with Attention (computing $\mZ$) followed by two MLPs computing $\mZ'$ and $\overline{\mZ}$, respectively, solves the MQAR problem.
\end{proof}

Next, we relate the output of $L$ layers of $\BaseConv$ to the degree of the polynomial that it computes.
\begin{lemma}
\label{lem: based-degree}
    For any input sequence $\bm{u}$, there exists a multilinear polynomial equivalent (over Boolean inputs) to the polynomial computed by $L$ layers of $\BaseConv$ with degree at most $2^L$.
\end{lemma}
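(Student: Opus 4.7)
The plan is a two-stage argument. The first stage establishes a raw degree bound by induction on $L$, treating each output entry as a formal polynomial in the input variables $\{\vu_{i,j}\}$. The second stage invokes Boolean multilinearization to convert this polynomial into an equivalent multilinear one of no greater degree.

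For the first stage, I would induct on the number of layers. The base case $L=0$ is immediate: the ``computation'' is the identity, so every output entry equals a single variable $\vu_{i,j}$, which is a polynomial of degree $1 = 2^0$. For the inductive step, suppose that after $L$ layers every entry of the output $\bm{z}$ is a polynomial in $\{\vu_{i,j}\}$ of degree at most $2^L$. Applying one more $\BaseConv$ layer computes $(\bm{z}\mW^B + \mB^B) \odot (\mK \ast \bm{z} + \mB^K)$ per \eqref{eq: baseconv}. Each entry of the first factor is an affine combination of entries of $\bm{z}$, and hence of degree at most $2^L$; the same is true of each entry of the second factor, since convolution against a fixed filter $\mK$ is linear in $\bm{z}$ and the bias $\mB^K$ only adds an affine shift. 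Taking the Hadamard product of two such polynomials produces degree at most $2^L + 2^L = 2^{L+1}$, closing the induction.

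For the second stage, I would appeal to the standard Boolean multilinearization. Since $\vu_{i,j} \in \{0,1\}$, the identity $x^k = x$ holds for every $k \ge 1$, so we may replace $\vu_{i_1,j_1}^{a_1}\cdots \vu_{i_r,j_r}^{a_r}$ by $\vu_{i_1,j_1}\cdots \vu_{i_r,j_r}$ in every monomial. The resulting polynomial is multilinear, agrees with the original on $\{0,1\}^{N\times d}$, and has each surviving monomial of multilinear degree at most the total degree of the original monomial. Combining with the bound from the first stage yields a multilinear polynomial of degree at most $2^L$ equivalent to the $L$-layer $\BaseConv$ output on Boolean inputs. I do not anticipate a real obstacle; the only points requiring care are verifying that both factors of a $\BaseConv$ layer are affine in that layer's input (so a layer at most doubles the degree) and that collapsing $x^k \mapsto x$ never increases degree, both of which are immediate from the definitions.
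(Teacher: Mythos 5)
Your proposal is correct and follows essentially the same route as the paper's proof: each \BaseConv\ layer is a Hadamard product of two expressions affine in that layer's input, so $L$ layers yield degree at most $2^L$, and then reducing modulo $u_{i}^2-u_{i}$ (equivalently, collapsing $x^k\mapsto x$ on Boolean inputs) produces an equivalent multilinear polynomial of no greater degree. Your version merely makes the degree-doubling step an explicit induction, which the paper states more tersely.
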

\begin{proof} 
Let $P(\bm{u})$ be the polynomial computed by $L$ layers of \BaseConv. Since the output of a single layer of $\BaseConv$ is equivalent to a polynomial over the input variables with degree at most $2$, composing $L$ such layers yields a polynomial of degree at most $2^L$. However, $P(\bm{u})$ need not be multi linear, but the polynomial defined as
\[
{Q}(\bm{u}):= (\cdots ((P(\bm{u}) \mod(u_1^2-u_1))\mod(u_2^2-u_2))\cdots ) \mod(u_{3Nd}^2-u_{3Nd})
\] is equivalent to $P(\bm{u})$ as $(u_i^2-u_i)$ evaluates to $0$ for each input var $u_i \in \{0,1\}$. However, $\deg({Q}(\bm{u})) \le \deg(P(\bm{u}))$, and thus, the claim holds.
\end{proof}
We now relate the MQAR (in the above setting) to the degree of the polynomial that it computes.
\begin{lemma}
\label{lem: unqiue-mqar}
    The MQAR problem with $d = \log_2(c)$ is represented by a multi-linear polynomial of degree $2d+1$.
\end{lemma}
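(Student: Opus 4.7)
The plan is to exhibit explicitly, for each query index $i$ and each output bit position $m \in [d]$, a multi-linear polynomial in the $3Nd$ Boolean input variables whose value equals the $m$-th bit of the MQAR answer at position $i$. First, I would use the fact that for $a, b \in \{0, 1\}$ the equality indicator is $\bbone\{a = b\} = 1 - a - b + 2ab$, which is multi-linear of degree two. Taking the product over the $d$ coordinates, I would define the key/query equality indicator
\[
E_{ij}(\bm{u}) := \prod_{\ell = 1}^d \bigl(1 - q_{i,\ell} - k_{j,\ell} + 2\, q_{i,\ell} k_{j,\ell}\bigr),
\]
and observe that, since the variable pairs $\{q_{i,\ell}, k_{j,\ell}\}$ across different $\ell$ are disjoint, the expansion of $E_{ij}$ is multi-linear in $2d$ fresh Boolean variables, with the leading monomial $\prod_\ell 2\, q_{i,\ell} k_{j,\ell}$ showing that its degree is exactly $2d$.

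Next, I would form the output polynomial
\[
P_{i,m}(\bm{u}) := \sum_{j = 0}^{i-1} E_{ij}(\bm{u}) \cdot v_{j,m}.
\]
The variable $v_{j,m}$ is distinct from the $2d$ variables appearing in $E_{ij}$, so each term $E_{ij}\cdot v_{j,m}$ is multi-linear of degree exactly $2d + 1$; summing across $j$ preserves both multi-linearity and the degree bound. Under the MQAR assumption that keys $\bm{k}_0, \dots, \bm{k}_{N-1}$ are distinct, at most one $E_{ij}$ is $1$ for a given $\bm{q}_i$, so $P_{i,m}$ returns $v_{j^\star, m}$ for the unique matching key index $j^\star < i$ (or $0$ when none exists), recovering the correct $m$-th bit of $\bm{v}_{j^\star}$.

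The final step is to verify that the stated degree $2d+1$ is actually attained (so that the representation is tight) rather than merely an upper bound. The one thing to be careful about is that the variables involved in $E_{ij}$ are pairwise distinct across $\ell$ (which they are, since they index different bit positions of the same token), so the product of the $d$ quadratic factors stays multi-linear without generating any squared variable, and that $v_{j,m}$ is fresh relative to these, which is immediate from the encoding. I do not expect a serious obstacle: the heart of the lemma is the linearization $1 - a - b + 2ab$ of the equality indicator, which isolates a degree-$2d$ contribution from the key/query match and a degree-$1$ contribution from the retrieved value bit.
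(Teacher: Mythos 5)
Your proof is correct and essentially identical to the paper's: your equality indicator $1-a-b+2ab$ is just the expanded form of the paper's XNOR polynomial $ab+(1-a)(1-b)$, and both arguments then take the product over the $d$ bit positions and multiply by a value bit to reach degree $2d+1$. The only (cosmetic) difference is that you aggregate matches with a sum over $j<i$ while the paper leaves the output as a per-pair polynomial $\bm{z}^{ij}$, and you are slightly more careful in checking that the top-degree coefficient survives.
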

\begin{proof}
We will start by specifying the obvious Boolean circuit that solves MQAR. First, we take the XNOR of keys and queries bitwise as follows.
\begin{equation}
\label{eq: xnor}
        \bm{x}^{ij} = {\bm{q}_i\ {\tt xnor}\ \bm{k}_j} := \paren{\bm{q}_i \wedge \bm{k}_j} \vee \paren{\neg \bm{q}_i \wedge \neg \bm{k}_j} \text{ for } i > j,
\end{equation}
where, for $\vx, \vy \in \{0,1\}^d$, we have
\[
[\vx \ {\tt xnor}\  \vy][k]:= \begin{cases}
    1 &\text{if }\vx[k] = \vy[k] \\
    0 &\text{othwerise}
\end{cases}
\]
That is, each bit from $\bm{x}^{ij}$ is set to $1$ iff the corresponding bits from $\bm{q}_i$ and $\bm{k}_j$ match. Next, we take the AND of the $d$-bits to get 
\begin{equation}
\label{eq: and}
        \bm{y}^{ij} := \bigwedge_{k \in [d]} \bm{x}^{ij}_k, i > j.
\end{equation}
Thus, $\bm{y}^{ij}$ is set to $1$ iff the query $\bm{q}_i$ matches with the key $\vk_j$. Finally, we AND with each bit of the values to get the output $\bm{z}^{ij}$ with the $k$th bit for $k \in [d]$ given by
\begin{equation}
\label{eq: val}
        \bm{z}^{ij}_k := \bm{y}_{ij} \wedge [\bm{v}_j]_k.
\end{equation}
Thus, the output of the circuit can be represented as
\[
\bm{z}^{ij} = 
\begin{cases}
    \bm{v}_i & \text{if }\bm{q}_i \equiv \bm{k}_j, i > j\\
    \bm{0} & \text{otherwise.}
\end{cases}
\]
We can now directly translate the above circuit into a multi-linear polynomial. With slight abuse of notation, we have the following correspondence for \eqref{eq: and}, where $\bm{u}_i \equiv \bm{q}_i, \bm{u}_j \equiv \bm{k}_j, i > j$ and we use $\bm{u}_{ij}$ to represent the variable corresponding to the entry $\bm{u}[i,j]$.
\[
        \bm{x}^{ij}_k(\bm{u}) := \bm{u}_{ik}\bm{u}_{jk} + (1-\bm{u}_{ik})(1-\bm{u}_{jk})\quad \text{for each }k \in [d], i > j.
\]
Next, we translate \eqref{eq: and} as follows.
\[
        \bm{y}^{ij}(\bm{u}) := \prod_{k \in [d]} \paren{\bm{u}_{ik}\bm{u}_{jk} + (1-\bm{u}_{ik})(1-\bm{u}_{jk})}.
\]
Finally, we can write the polynomial that computes MQAR as follows.
\begin{equation}
    \label{eq: out-poly}
        \bm{z}^{ij}(\bm{u}) := \paren{\prod_{k \in [d]} \bm{u}_{ik}\bm{u}_{jk} + (1-\bm{u}_{ik})(1-\bm{u}_{jk})}\bm{u}_{(i+1)k} \quad \text{for each }k \in [d], i > j,
\end{equation}
where $\bm{u}[i+1,:] \equiv \bm{v}_j$. It is then easy to observe that \eqref{eq: out-poly} is multi-linear and has degree $2d+1$.
\end{proof}

We are now ready to provide the lower bound.
\begin{theorem}
\label{thm: mqar-1hot}
    A data-independent \BaseConv\ model needs $\log(2d)$-layers to solve $\Task$ for an input sequence $\bm{u}\in \{0,1\}^{3N\times d}$ with $d = \log_2(c)$.
\end{theorem}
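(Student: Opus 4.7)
The plan is to combine Lemma \ref{lem: based-degree} and Lemma \ref{lem: unqiue-mqar} through the uniqueness of multilinear polynomial representations of Boolean functions. The standard fact I would invoke is that every function $f\colon \{0,1\}^n \to \R$ has a unique multilinear polynomial representation (obtainable, for instance, via Möbius inversion on the Boolean cube). This upgrades the existence statement in Lemma \ref{lem: unqiue-mqar} into a matching degree lower bound, once I verify that the polynomial exhibited there is not artificially inflated in degree.

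To pin down the exact degree, I would fix a single output bit, say the $k$-th coordinate of $\bm{z}^{ij}$, which equals $v_{j,k}\cdot \mathbf{1}[\bm{q}_i \equiv \bm{k}_j]$. Expanding the product in Equation \ref{eq: out-poly}, the monomial $\bm{u}_{(i+1)k}\prod_{\ell \in [d]} \bm{u}_{i\ell}\, \bm{u}_{j\ell}$ of degree $2d+1$ arises only from the unique choice of selecting the summand $\bm{u}_{i\ell}\bm{u}_{j\ell}$ in each of the $d$ factors of the XNOR product, so its coefficient is $1 \neq 0$. Hence the unique multilinear polynomial representing this bit has degree exactly $2d+1$.

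Now suppose an $L$-layer data-independent \BaseConv\ model solves $\Task$. By Lemma \ref{lem: based-degree}, the polynomial it computes on this output coordinate is equivalent, over $\{0,1\}^{3N\times d}$, to some multilinear polynomial of degree at most $2^L$. By uniqueness, that multilinear polynomial must coincide with the one isolated above from Lemma \ref{lem: unqiue-mqar}, forcing $2^L \geq 2d+1 > 2d$ and hence $L \geq \log(2d)$.

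The main obstacle is verifying that the top-degree monomial truly survives with nonzero coefficient rather than being cancelled by contributions from the $(1-\bm{u}_{i\ell})(1-\bm{u}_{j\ell})$ summands in the expansion; the combinatorial argument above, which isolates the unique selection of summands producing that monomial, handles this cleanly. The data-independence of \BaseConv\ is precisely what makes the uniqueness argument applicable, since it guarantees that the polynomial depends only on $N$ and $d$ and not on the particular input instance, so the lower bound on degree transfers directly to a lower bound on the number of layers.
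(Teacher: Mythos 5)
Your proposal matches the paper's proof essentially step for step: both combine Lemma \ref{lem: based-degree} with the degree-$(2d+1)$ multilinear polynomial from Lemma \ref{lem: unqiue-mqar} via the uniqueness of multilinear representations of Boolean functions, yielding $2^L \ge 2d+1$ and hence $L \ge \log(2d)$. One minor quibble: the top-degree monomial's coefficient is $2^d$ rather than $1$ (each factor $(1-\bm{u}_{i\ell})(1-\bm{u}_{j\ell})$ also contributes a $+\bm{u}_{i\ell}\bm{u}_{j\ell}$ term, so the selection is not unique), but it is still nonzero, so your argument stands.
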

\begin{proof}
Due to \Cref{lem: unqiue-mqar}, we know there exists a multi-linear polynomial that solves MQAR, and due to \citep[Lecture 3, Proposition 4]{kopparty2020notes}, it is unique. Specifically we cannot solve MQAR with a multi-linear polynomial of degree $\le 2d$. Now, assume that there is a \BaseConv\ model with $L$ layers that exactly solves MQAR. Then, due to \Cref{lem: based-degree}, this yields a multilinear polynomial $P(\bm{u})$ of degree at most $2^L$. Here, if $L \le \log(2d)$, then the resulting \BaseConv\ with $L$ layers results in a multilinear polynomial of degree $\le 2d$. This contradicts the above claim that we cannot have a multi linear polynomial of degree $< 2d+1$ that exactly represents MQAR. Consequently, a data-independent \BaseConv\ model needs $\ge \log(2d)$-layers to solve $\Task$.
\end{proof}

\subsection{Lower Bound on the Number of Layers for $d \ge \log_2{c}$ with Specific Encodings}
\label{sec: root-hot}
\subsubsection{The Equality Problem}
For an input pair $\bm{u}_1, \bm{u}_2$ where each $\bm{u}_i$ is a token drawn from a vocabulary of size $c = |C|$ and embedded in $\{0,1\}^d$, we define the {\em equality problem} (EQ) as checking whether the two encodings are equal: $\bm{u}_1 \equiv \bm{u}_2$. 

We first note that any model that solves $\Task$ also solves EQ via the following proposition.
\begin{proposition}
    \label{prop: eq_mqar}
    Any model $M_{\Task}$ that solves MQAR also solves EQ using the same number of layers.
\end{proposition}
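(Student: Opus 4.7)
The plan is to establish the statement by a direct reduction: given any EQ instance, we package it as a length-one MQAR instance, feed it to $M_{\Task}$, and read off the answer from the MQAR output. Specifically, for inputs $\bm{u}_1, \bm{u}_2 \in \{0,1\}^{d}$ to EQ, I would construct the MQAR input $\bm{u} \in \{0,1\}^{3 \times d}$ as the triple $(\bm{k}_0, \bm{v}_0, \bm{q}_0) = (\bm{u}_1, \bm{v}^\star, \bm{u}_2)$, where $\bm{v}^\star$ is any fixed token chosen so that the ``match'' output $\bm{v}^\star$ is distinguishable from the ``no match'' output (for example, $\bm{v}^\star$ with first coordinate $1$, contrasted with the default $\bm{0}$ output when no key matches the query).

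Next, I would invoke the assumed behavior of $M_{\Task}$: by the definition of MQAR in the excerpt, on the query $\bm{q}_0 = \bm{u}_2$ the model must output $\bm{v}_0 = \bm{v}^\star$ if $\bm{k}_0 \equiv \bm{q}_0$, i.e., if $\bm{u}_1 \equiv \bm{u}_2$, and otherwise must not produce $\bm{v}^\star$ (by the convention of no-match). The EQ answer is then obtained by a trivial post-processing check of the query position's output against $\bm{v}^\star$. Since this post-processing is just reading a coordinate, it requires no additional layers.

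The crucial observation is that this reduction neither alters the model nor adds layers: the same $M_{\Task}$ with $L$ layers is applied to a padded instance with $N=1$ key-value-query tuple, and the layer count is invariant to the choice of input. Hence EQ is solved in exactly $L$ layers, which is what the proposition claims.

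The main subtlety, rather than a genuine obstacle, is fixing a convention that lets one unambiguously read off EQ from the MQAR output when there is no match; choosing $\bm{v}^\star$ to differ in at least one coordinate from the no-match output suffices, and one could alternatively phrase the reduction with two keys $(\bm{u}_1, \bm{v}^\star), (\bm{k}', \bm{v}')$ where $\bm{v}' \neq \bm{v}^\star$ and $\bm{k}'$ is known never to collide with $\bm{u}_2$, to sidestep any ambiguity about the ``no match'' behavior of $M_{\Task}$. Either way, the reduction preserves layer count, yielding the claim.
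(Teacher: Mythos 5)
Your reduction is the same one the paper uses---embed the EQ pair into a small $\Task$ instance, run $M_{\Task}$ unchanged, and read the answer off a query's output---so the layer-counting part of your argument is fine. The one concrete problem is your primary, single-tuple instantiation. Under the paper's formal definition of $\Task$, the model is only required to compare $\bm{q}_i$ against keys $\bm{k}_j$ with $0 \le j < i$; for an instance consisting of the single tuple $(\bm{u}_1, \bm{v}^\star, \bm{u}_2)$ the only query sits at tuple index $i=0$, so no comparison is mandated at all, and the model's output on that instance carries no guaranteed information about whether $\bm{u}_1 \equiv \bm{u}_2$. Your fallback with two tuples is the right fix and is essentially what the paper does: it feeds the instance $\{(\bm{u}_1, \bbone, \bm{u}_1), (\bm{u}_2, \bbone, \bm{u}_2)\}$, so that the query at index $1$ (namely $\bm{u}_2$) is compared against the key at index $0$ (namely $\bm{u}_1$), and the output at that position is $\bbone$ iff $\bm{u}_1 \equiv \bm{u}_2$. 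The paper also sidesteps your worry about distinguishing a match from a non-match exactly as you propose, by choosing a fixed, recognizable value token ($\bbone$ in its case, $\bm{v}^\star$ in yours). So: promote the two-tuple version to be the main construction, and the proof is complete and matches the paper's.
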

\begin{proof}
    If there exists a model $\textsc{M}_{\Task}$ that solves MQAR using $L$ layers, then for an arbitrary input instance for EQ given by $\bm{u}_1, \bm{u}_2 \in \R^{2 \times d}$,
    we can produce the following input instance for MQAR: $\bm{u}:= \{(\bm{u}_1, \bbone, \bm{u}_1), (\bm{u}_2, \bbone, \bm{u}_2)\}$ and solve EQ using $L$ layers with $\textsc{M}_{\Task}$ returning $\bbone$ iff there is a match.
\end{proof}

Due to \Cref{prop: eq_mqar}, we obtain the following corollary.
\begin{corollary}
\label{cor: eq-mqar-lower}
    Any lower bound $\overline{L}$ on the number of layers $L$ of \BaseConv\ to solving EQ is also a lower bound on the number of layers required for solving $\Task$. 
\end{corollary}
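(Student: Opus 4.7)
The plan is to prove this corollary by a direct contrapositive argument that packages the reduction already established in Proposition \ref{prop: eq_mqar}. Suppose $\overline{L}$ is a lower bound on the number of layers of any \BaseConv\ model that solves EQ, meaning no \BaseConv\ model with strictly fewer than $\overline{L}$ layers can solve EQ. I want to show that no \BaseConv\ model with strictly fewer than $\overline{L}$ layers can solve $\Task$ either.

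First I would assume for contradiction that there exists a \BaseConv\ model $M_{\Task}$ solving $\Task$ with $L < \overline{L}$ layers. Then I would invoke Proposition \ref{prop: eq_mqar}, whose reduction takes any EQ instance $(\bm{u}_1, \bm{u}_2)$ and embeds it as the $\Task$ instance $\{(\bm{u}_1, \bbone, \bm{u}_1), (\bm{u}_2, \bbone, \bm{u}_2)\}$, then reads off the equality answer from whether $M_{\Task}$ produces a match. This reduction does not add any \BaseConv\ layers: the construction of the $\Task$ input is purely a relabeling of the tokens of the EQ input, and the output is read off directly. Thus the resulting model solves EQ using exactly the same $L$ layers.

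This contradicts the assumption that $\overline{L}$ is a lower bound for EQ, so no such $M_{\Task}$ with $L < \overline{L}$ layers can exist, which is precisely the claim that $\overline{L}$ is also a lower bound for $\Task$.

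The proof is essentially immediate given Proposition \ref{prop: eq_mqar}, and the only point that would need any care is verifying that the reduction truly preserves the layer count within the \BaseConv\ architecture class (i.e., that constructing the $\Task$ input from the EQ input and reading off the output does not require any additional \BaseConv\ layers). Since the reduction is purely at the level of input formatting and output selection, this is transparent, and no obstacle arises.
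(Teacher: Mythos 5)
Your proposal is correct and matches the paper's intent exactly: the paper derives this corollary immediately from Proposition~\ref{prop: eq_mqar} via precisely the contrapositive you spell out, with the reduction adding no layers since it only reformats the input and reads off the output. The only difference is that you write out the argument explicitly while the paper leaves it implicit.
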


We now try to prove a lower bound for the case of $d \ge \log_2{c}$. First, note that there are embeddings here where the lower bound from \ref{thm: mqar-1hot} holds: consider the embedding where the first $\log_2{c}$ has the compact binary embedding as before but the last $d-\log_2{c}$ bits are the same for all the tokens. We will instead prove a lower bound for a more interesting set of embeddings.

\subsubsection{The $p$-Hot Encoding for $p \ge 1$}
\begin{definition}[(Almost) $p$-Hot Encoding]
\label{def: phot}
    We define the {\em $p$-hot encoding} to be the collection of embeddings for a token $\bm{x}_t$ with $0\le t <c$ such that we express $t$ in base $\sqrt[p]{c}:(t_0,..,t_{p-1}) \in [0,\sqrt[p]{c})^p$ and represent each $t_i$ as one hot encoding in $\{0,1\}^{\sqrt[p]{c}}$. That is, we take $d = p\cdot\sqrt[p]{c}$.

    Moreover, we define the {\em almost $p$-hot encoding} to be the collection of embeddings where each $t_i$ is mapped in $\{0,1\}^{\sqrt[p]{c} - 1}$ obtained by dropping the last bit of its one-hot encoding in $\{0,1\}^{\sqrt[p]{c}}$.

    Note that both of the encodings have $p$-many blocks derived from each of the one-hot encodings. 
\end{definition}

\begin{definition}[Block-Exclusive]
\label{def: block-exlusive}
    We say that a polynomial $P$ with variables in $\bm{u}:= \paren{\bm{u}_0, \ldots, \bm{u}_{p-1}}$ is {\em block-exclusive} if each non-zero monomial in $P$ given by the product 
    \[
    \prod_{{i \in [p],\ j \in [\sqrt[p]{c}]}} \bm{u}_{i,j}
    \]
    does not contain any product of the form $\bm{u}_{i, j}\bm{u}_{i, j'}$ for $i \in [p], j,j' \in [\sqrt[p]{c}]$.
\end{definition}

\begin{remark}
    The condition specified in \Cref{def: block-exlusive} ensures that a block-exclusive polynomial is necessarily multilinear, as it disallows the term $\bm{u}_{i, j}\bm{u}_{i, j'}$ for $j = j'$ in any non-zero monomial.
\end{remark}

\begin{lemma}
\label{lem: block-ex}
    For any Boolean function $f: \{0,1\} \to \{0,1\}$ with inputs from the almost $p$-hot encoding {\em or} the $p$-hot encoding setting, there exists a block-exclusive polynomial equivalent to $f$.
\end{lemma}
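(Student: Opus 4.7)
}

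The plan is to start from \emph{any} polynomial representation of $f$ over the variables $\bm{u} = (\bm{u}_0,\ldots,\bm{u}_{p-1})$ and reduce it modulo two collections of relations that are satisfied by every valid encoding: the Boolean idempotency relations $\bm{u}_{i,j}^2 - \bm{u}_{i,j}$ (which hold because each coordinate lies in $\{0,1\}$) and the block-exclusivity relations $\bm{u}_{i,j}\bm{u}_{i,j'}$ for $j \neq j'$ in the same block~$i$ (which hold because each block of a $p$-hot or almost $p$-hot encoding contains at most one~$1$). Reducing modulo both families will, by construction, yield a block-exclusive polynomial, and I must verify that this reduction preserves the value of the polynomial on every encoded input.

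Concretely, I would proceed in three steps. First, take the unique multilinear polynomial $Q(\bm{u})$ over $\{0,1\}^d$ that represents the extension of $f$ to all of $\{0,1\}^d$ (for instance via Lagrange/Möbius interpolation); equivalently, one can start from any polynomial representation of $f$ and reduce each variable's exponents modulo $\bm{u}_{i,j}^2 - \bm{u}_{i,j}$, exactly as in the proof of Lemma~\ref{lem: based-degree}. Second, for every monomial of $Q$ that contains a product $\bm{u}_{i,j}\bm{u}_{i,j'}$ with $j \neq j'$, delete that monomial (equivalently, reduce modulo the ideal generated by these products). Call the resulting polynomial $\widetilde{Q}$. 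By construction $\widetilde{Q}$ is block-exclusive in the sense of \Cref{def: block-exlusive}. Third, verify $\widetilde{Q}(\bm{u}) = f(\bm{u})$ on every valid encoding: for any almost $p$-hot encoded input, every deleted monomial contains a factor $\bm{u}_{i,j}\bm{u}_{i,j'}$ with $j\neq j'$ in the same block, and since at most one coordinate in each block is~$1$, that factor already evaluates to~$0$, so deleting it does not change the evaluation. The same argument a fortiori covers the $p$-hot case.

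The only subtle point, and the one I would treat most carefully, is to make sure the two reductions can be carried out consistently and in a well-defined order: applying the block-exclusivity reduction may create repeated variables that then need to be re-multilinearized, which in turn might produce new within-block products. To avoid this issue, I would perform the reductions simultaneously by working in the quotient ring
\[
R \;=\; \mathbb{F}[\bm{u}_{i,j}]\,\big/\,\bigl(\{\bm{u}_{i,j}^2-\bm{u}_{i,j}\}_{i,j}\,\cup\,\{\bm{u}_{i,j}\bm{u}_{i,j'}\}_{i,\,j\neq j'}\bigr),
\]
choosing the natural monomial basis of $R$ consisting of products of at most one variable per block (precisely the block-exclusive monomials). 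Any polynomial for $f$, viewed in $R$, has a unique representative in this basis, and that representative is block-exclusive and agrees with $f$ on all encoded inputs because every generator of the ideal vanishes on those inputs. This gives both existence and (relative to the basis) uniqueness of the block-exclusive representation, completing the proof.
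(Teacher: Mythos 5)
Your proposal is correct and follows essentially the same route as the paper's proof: reduce modulo the Boolean idempotency relations $\bm{u}_{i,j}^2-\bm{u}_{i,j}$ to obtain multilinearity, and discard monomials containing within-block products $\bm{u}_{i,j}\bm{u}_{i,j'}$ ($j\neq j'$) since every such product vanishes on any $p$-hot or almost $p$-hot encoded input. Your quotient-ring formulation is a cleaner packaging of the same two reductions (and resolves an ordering worry that is in fact vacuous, since deleting monomials cannot introduce new repeated variables), but it is not a genuinely different argument.
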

\begin{proof}
    Given an input $\bm{u}$ to $f$ from the almost $p$-hot encoding or the $p$-hot encoding such that $\bm{u}:= \paren{\bm{u}_0, \ldots, \bm{u}_{p-1}}$, we first observe that the polynomial $P(\bm{u})$ representing $f(\vu)$ cannot have a non-zero monomial with variables from the same block. Specifically, for $0 \le j < p$, any non-zero monomial in $P$ cannot have a product of the form $\vu_{j, k}\vu_{j,k'}$ for $k \neq k'$. To see this, assume that there exists a non-zero monomial in $P$ with at least two terms $\vu_{j,k}\vu_{j,k'}$ from the same $j$th block in $\vu$, then monomial always evaluates to $0$ as the $j$th block is derived from the one-hot encoding in $\{0,1\}^{\sqrt[p]{c}}$ or the almost one-hot encoding in $\{0,1\}^{\sqrt[p]{c}-1}$, and hence, cannot have more than one bit set to $1$.

    Next, if a non-zero monomial in $P$ does contain a product of the form  $\vu_{j, k}\vu_{j,k'}$ for $k, k' \in [\sqrt[p]{c}]$, we can define the polynomial 
    \[
    {Q}(\bm{u}):= (\cdots ((P(\bm{u}) \mod(u_{0,0}^2-u_{0,0}))\mod(u_{0,1}^2-u_{0,1}))\cdots ) \mod(u_{p-1, \sqrt[p]{c}-1}^2-u_{p-1, \sqrt[p]{c}-1}).
    \]
    Since each entry is Boolean, $Q$ is equivalent to $P$ over Boolean inputs, and thus, $Q$ is the block-exclusive polynomial equivalent to $f$.
\end{proof}
\begin{proposition}
\label{prop: unique-almost}
    Any Boolean function $f: \{0,1\} \to \{0,1\}$ with inputs from the almost $p$-hot encoding setting has a unique representation as a block-exclusive polynomial.
\end{proposition}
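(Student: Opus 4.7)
Existence of a block-exclusive representation is already provided by \Cref{lem: block-ex}, so the plan is to establish uniqueness by a dimension-counting / linear algebra argument. First I would count the dimension of the vector space of block-exclusive polynomials. By \Cref{def: block-exlusive}, each block-exclusive monomial is determined by independently choosing, for each of the $p$ blocks, either ``no variable from this block'' or one of the $\sqrt[p]{c}-1$ variables in it. This gives $(\sqrt[p]{c})^p = c$ distinct monomials, so the space of block-exclusive polynomials has dimension exactly $c$. Under the almost $p$-hot encoding, the valid inputs are in bijection with tuples $(t_0,\ldots,t_{p-1}) \in \{0,1,\ldots,\sqrt[p]{c}-1\}^p$ (where $t_i = 0$ denotes the all-zeros block and $t_i = k>0$ denotes $\mathbf{e}_k$ in block $i$), so there are also exactly $c$ valid inputs, and hence a $c$-dimensional space of field-valued functions on them.

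The main technical step is to show that the evaluation map from block-exclusive polynomials to such functions is injective. I would prove this by induction on $k = |\{i : t_i^* \neq 0\}|$ for the tuple $t^*$ indexing a coefficient $a_{t^*}$. The base case $k=0$ reads off $a_{(0,\ldots,0)}$ by evaluating at the all-zeros input, where every non-constant monomial dies. For the inductive step, I would evaluate $P$ at the input corresponding to $t^*$: thanks to block-exclusivity, each surviving monomial is indexed by a tuple whose support is contained in $\{i : t_i^* \neq 0\}$ and whose nonzero entries agree with those of $t^*$, so by the inductive hypothesis all such coefficients except $a_{t^*}$ vanish, forcing $a_{t^*} = P(t^*) = 0$.

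Once injectivity is in hand, a linear injection between vector spaces of the same finite dimension $c$ is automatically a bijection, which gives uniqueness of the block-exclusive representation for any field-valued function on valid inputs, and in particular for Boolean $f$. I expect the only subtle point to be the support bookkeeping in the inductive step: one must check that evaluating at $t^*$ really zeros out every monomial whose index tuple has some entry $j_i \notin \{0, t_i^*\}$, which is where block-exclusivity (rather than mere multilinearity) is essential, since otherwise a monomial containing $\bm{u}_{i,j}\bm{u}_{i,j'}$ could survive off-support evaluations and spoil the induction.
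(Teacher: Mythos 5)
Your proposal is correct and follows essentially the same route as the paper: the paper also gets existence from \Cref{lem: block-ex} and proves uniqueness by taking the difference of two representations, selecting a minimal monomial with nonzero coefficient, and evaluating at the corresponding valid almost $p$-hot input so that every other monomial vanishes — which is exactly your support-induction argument phrased as a minimal-counterexample. The dimension count ($c$ block-exclusive monomials versus $c$ valid inputs) is a harmless extra observation that the paper omits, since injectivity of the evaluation map already yields uniqueness directly.
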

\begin{proof}
    Due to \citep[Proposition 4]{kopparty2020notes}, we know that every Boolean function $f$ is represented by a multilinear polynomial. Moreover, from \Cref{lem: block-ex}, we know that the polynomial $P(\bm{u})$ representing $f(\vu)$ is block-exclusive for $\vu$ with the almost $p$-hot encoding.

    To show uniqueness, we replicate the argument from \citep[Lecture 3, Proposition 4]{kopparty2020notes}: Given two block-exclusive polynomials $P$ and $P'$ equivalent to $f$ with inputs from the almost $p$-hot encoding, we have $(P-P')(\bm{u}) \equiv 0$. Now, assume, for the sake of contradiction, that $P - P' \not\equiv 0$. Here, note that as $P-P'$ is not identically zero and we have a non-zero monomial, and since the inputs are from the almost $p$-hot encoding, we know that this monomial cannot contain any product of the form $\bm{u}_{j,k}\bm{u}_{j, k'}$. Let $S \subseteq [p] \times [\sqrt[p]{c}-1]$ be a minimal set of indices such that the monomial $\prod_{(j,k) \in S} \bm{u}_{j,k}$ appears in $P-P'$ with non-zero coefficient. Note that $\chi_S$ forms a valid input to $f$ as each block in $S$ can be assigned at most one non-zero entry. Then, since $(P-P')(\chi_S) \neq 0$ as every other monomial will get at least one variable that is assigned to $0$ for $\chi_S$ , we achieve a contradiction, and thus, $P - P'$ must be identically zero on inputs from the almost $p$-hot encoding.
\end{proof}

\begin{lemma}
\label{lem: eq-almost}
    The EQ problem in the almost $p$-hot encoding setting is represented by a block-exclusive polynomial of degree $2p$.
\end{lemma}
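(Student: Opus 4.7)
The plan is to exhibit an explicit block-exclusive polynomial of degree $2p$ that agrees with EQ on all valid inputs, and then appeal to uniqueness from \Cref{prop: unique-almost} (or simply read the degree off directly). First I would decompose the equality test block by block: two tokens $\bm{u}_1, \bm{u}_2$ in the almost $p$-hot encoding agree iff their $j$-th blocks agree for every $j \in [p]$, so the target polynomial factors as $P(\bm{u}_1,\bm{u}_2) = \prod_{j=0}^{p-1} E_j(\bm{u}_1,\bm{u}_2)$, where $E_j$ is a per-block equality indicator to be constructed.

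For each block index $j$ I would define
\[
E_j(\bm{u}_1, \bm{u}_2) := \Bigl(1 - \textstyle\sum_{k} (\bm{u}_1)_{j,k}\Bigr)\Bigl(1 - \textstyle\sum_{k} (\bm{u}_2)_{j,k}\Bigr) + \textstyle\sum_{k} (\bm{u}_1)_{j,k}(\bm{u}_2)_{j,k},
\]
and verify by case analysis that $E_j$ evaluates to $1$ exactly when the $j$-th blocks coincide. The cases are: (i) both blocks are the all-zero vector (representing the dropped symbol), in which case the first product contributes $1$ and the second sum contributes $0$; (ii) both blocks have their single $1$-bit at the same position $k$, in which case one sum in the first factor equals $1$ (killing that product) while exactly one summand in the second term equals $1$; and (iii) otherwise the two blocks disagree, and both the first factor and the second sum evaluate to $0$ because one side has a $1$-bit at a coordinate where the other has $0$.

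Each $E_j$ has degree $2$ and involves only variables indexed by block $j$ of $\bm{u}_1$ or block $j$ of $\bm{u}_2$, so distinct $E_j, E_{j'}$ use disjoint variable sets. The product $P = \prod_{j=0}^{p-1} E_j$ therefore has degree exactly $2p$ and correctly computes $[\bm{u}_1 = \bm{u}_2]$. For block-exclusivity, I would observe that because the $E_j$'s have disjoint supports, every monomial of the expanded product picks at most one factor from each $E_j$; within a single $E_j$, no monomial contains two variables from the same block of the same token (each monomial of $E_j$ either has one variable from block $j$ of $\bm{u}_1$ and one from block $j$ of $\bm{u}_2$, or is of lower degree). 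Hence no monomial of $P$ contains a forbidden product of the form $(\bm{u}_i)_{j,k}(\bm{u}_i)_{j,k'}$.

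There is no real obstacle here beyond careful bookkeeping: the argument is a short explicit construction plus case-checking. The only subtlety to flag is the dual role of the word ``block'' --- the polynomial has $2p$ blocks of variables in total (one per pair of token index and block index $j$), and the cross-token products $(\bm{u}_1)_{j,k}(\bm{u}_2)_{j,k'}$ that appear in $E_j$ are permitted by \Cref{def: block-exlusive} precisely because they sit in different blocks. Matching degree $2p$ with the unique block-exclusive representation guaranteed by \Cref{prop: unique-almost} will then complete the lemma.
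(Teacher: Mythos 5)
Your proposal is correct and follows essentially the same route as the paper: the per-block degree-$2$ equality indicator you define as $E_j$ is exactly the paper's $P^j$ (with the two summands written in the opposite order), and the final polynomial is the same product $\prod_{j=0}^{p-1} E_j$ of degree $2p$, with the same disjoint-support argument for block-exclusivity. No gaps.
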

\begin{proof}
Each input pair $\bm{u}^1, \bm{u}^2$ to the EQ problem can be represented as $\bm{u}^i:= (\bm{u}^i_0, \ldots, \bm{u}^i_{p-1})$ for $i \in \{1,2\}$, where for each $0 < j < p$ such that we have
\[
\bm{u}^i_{j} := (\bm{u}^i_{j,0}, \ldots, \bm{u}^i_{j, \sqrt[p]{c}-2}) \in \{0,1\}^{\sqrt[p]{c}-1}.
\]
The following polynomial takes the inner product of each of these one-hot encodings:
\[
P^j(\bm{u}) := \sum_{k= 0}^{\sqrt[p]{c}-2} \bm{u}^1_{j,k}\cdot \bm{u}^2_{j,k} + (1 - \sum_{k=0}^{\sqrt[p]{c}-2} \bm{u}^1_{j,k})(1 - \sum_{k=0}^{\sqrt[p]{c}-2} \bm{u}^2_{j,k})
\]
for $0 < j < p$. Here, note that there can be only be at most $1$ in both $\bm{u}^1_j$ and $\bm{u}^2_j$, and thus, $P^j(\bm{u}) = 1$ iff the $j$th block agree.

Next, the following polynomial is equivalent to the Boolean function that solves the EQ problem:
\[
P(\bm{u}) := \prod_{j=0}^{p-1} P^j(\bm{u}),
\]
and we have $P(\bm{u}) = \bbone\{\bm{u}^1 \equiv \bm{u}^2\}$. Here, note that $P$ is multi-linear and has degree $2p$ as each $P^j$ is a degree-2 polynomial. Moreover, $P$ is block-exclusive as each $P^j$ is block-exclusive and we only multiply monomials from different blocks in $P$.

\end{proof}

\begin{proposition}
\label{prop: deg-almost}
    Let $P$ be the block-exclusive polynomial that solves the EQ problem in the $p$-hot encoding. Then, $\deg(P) \ge 2p$. 
\end{proposition}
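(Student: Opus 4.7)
The plan is to reduce the $p$-hot case to the almost $p$-hot case via a substitution that drops the last coordinate of each block, and then invoke Proposition~\ref{prop: unique-almost} together with Lemma~\ref{lem: eq-almost} to get the degree lower bound. Concretely, in the $p$-hot encoding the last coordinate of block $(i,j)$ is determined by the others via $\bm{u}^i_{j,\sqrt[p]{c}-1} = 1 - \sum_{k=0}^{\sqrt[p]{c}-2}\bm{u}^i_{j,k}$, so given a block-exclusive polynomial $P$ solving EQ in the $p$-hot setting, I would define $P'$ to be the polynomial obtained by making exactly this substitution for every block $(i,j)$.

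First I would check that $P'$ still solves EQ but now over almost $p$-hot inputs: any almost $p$-hot input $\tilde{\bm{u}}$ extends uniquely to a $p$-hot input $\bm{u}$ via the same formula, the extension is an equality-preserving bijection between tokens, and by construction $P'(\tilde{\bm{u}}) = P(\bm{u})$. Next I would argue $\deg(P') \le \deg(P)$: each monomial of $P$ that contains $\bm{u}^i_{j,\sqrt[p]{c}-1}$ becomes, after substitution, a signed sum of monomials each of which either drops that variable (strictly smaller degree) or replaces it with some $\bm{u}^i_{j,k}$ for $k<\sqrt[p]{c}-1$ (same degree). Finally, block-exclusivity of $P'$ follows because the original monomial, being block-exclusive, contained at most the single variable $\bm{u}^i_{j,\sqrt[p]{c}-1}$ from block $(i,j)$, so after substitution the only surviving block-$(i,j)$ variable in any resulting monomial is one of the $\bm{u}^i_{j,k}$ with $k<\sqrt[p]{c}-1$.

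Once $P'$ is shown to be a block-exclusive polynomial solving EQ over almost $p$-hot inputs, Proposition~\ref{prop: unique-almost} forces $P'$ to equal the unique block-exclusive representative of EQ, which by Lemma~\ref{lem: eq-almost} has degree exactly $2p$. Therefore $\deg(P) \ge \deg(P') = 2p$, as required.

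The main obstacle is verifying that the substitution preserves block-exclusivity; this is the step that crucially uses the hypothesis that $P$ itself is block-exclusive, since without this assumption a monomial could already contain another block-$(i,j)$ variable that would then combine with the expanded $1 - \sum_k \bm{u}^i_{j,k}$ to produce forbidden products $\bm{u}^i_{j,k}\bm{u}^i_{j,k'}$. The degree and correctness claims are essentially bookkeeping, but the block-exclusivity check is where one must be careful to confirm that, after expanding and collecting, no resulting monomial contains two distinct variables from the same almost-$p$-hot block.
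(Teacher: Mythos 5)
Your proposal is correct and follows essentially the same route as the paper: both reduce to the almost $p$-hot setting by substituting $\bm{u}^i_{j,\sqrt[p]{c}-1} \mapsto 1 - \sum_{k}\bm{u}^i_{j,k}$, observe the degree does not increase, and conclude via Proposition~\ref{prop: unique-almost} and Lemma~\ref{lem: eq-almost} (the paper phrases it as a contradiction from $\deg(P)\le 2p-1$, you argue directly). Your explicit check that the substitution preserves block-exclusivity is a detail the paper leaves implicit, but it is not a different argument.
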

\begin{proof}
    For the sake of contradiction, assume that there exists a block-exclusive polynomial $P$ that solves EQ in the $p$-hot encoding setting with degree $\le 2p-1$. Then, given an input $\vu:= (\vu_0, \ldots, \vu_{p-1})$ from the almost $p$-hot encoding, where each block $\vu_i$ corresponds to the truncated bit string from the one-hot encoding in $\{0,1\}^{\sqrt[p]{c}-1}$, we can convert this input to the $p$-hot encoding $\vv:= (\vv_0, \ldots, \vv_{p-1})$ as follows:
    \[
    \vv_i := \paren{\vu_{i, 0}, \ldots, \vu_{i, \sqrt[p]{c}-2}, 1 - \sum_{j=0}^{\sqrt[p]{c}-2}\vu_{i,j}}
    \]
    Then, the block-wise multilinear polynomial $Q(\vu) = P(\vv)$ solves the EQ problem in the almost one-hot encoding setting and has $\deg(Q) \le \deg(P) \le 2p-1$ which contradicts the combination of \Cref{prop: unique-almost} and \Cref{lem: eq-almost}.
\end{proof}

\begin{theorem}
\label{thm: mqar-phot}
     A data-independent \BaseConv\ model needs at least $\floor{\log(2p)}$-layers to solve $\Task$ for an input sequence $\bm{u}\in \{0,1\}^{3N\times d}$ in the {$p$-hot encoding} setting, where $d = p\cdot\sqrt[p]{c}$.
\end{theorem}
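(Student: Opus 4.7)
The plan is to chain the reductions and degree bounds established earlier in the subsection. Suppose, for contradiction, that a data-independent \BaseConv\ model with $L$ layers solves $\Task$ in the $p$-hot encoding setting. By \Cref{prop: eq_mqar} (and the transfer in \Cref{cor: eq-mqar-lower}), the same $L$-layer model solves the equality problem EQ on inputs drawn from the $p$-hot encoding. So I just need to lower bound $L$ for EQ.

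Next I would invoke \Cref{lem: based-degree} to conclude that the output of this $L$-layer \BaseConv\ stack is equivalent, over Boolean inputs, to a multilinear polynomial $P(\bm{u})$ of degree at most $2^L$. Since EQ is Boolean-valued and $P$ represents it on the $p$-hot inputs, \Cref{lem: block-ex} gives a block-exclusive polynomial $Q(\bm{u})$ equivalent to $P$ on every valid $p$-hot input. The construction in \Cref{lem: block-ex} proceeds by reducing modulo $u_{i,j}^2-u_{i,j}$ (which cannot increase degree) and by discarding same-block product monomials $u_{j,k} u_{j,k'}$ which vanish on $p$-hot inputs; in particular $\deg(Q) \le \deg(P) \le 2^L$.

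Now I would apply \Cref{prop: deg-almost}, which states that any block-exclusive polynomial solving EQ in the $p$-hot encoding has degree at least $2p$. Combining with the bound $\deg(Q) \le 2^L$ yields $2^L \ge 2p$, i.e.\ $L \ge \log_2(2p)$; since $L$ is an integer this forces $L \ge \lfloor \log(2p) \rfloor$, proving the claim.

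The only non-routine step is the degree-preservation in passing from $P$ to a block-exclusive equivalent $Q$. I would argue it carefully as follows: each modular reduction $P \bmod (u^2 - u)$ replaces $u^k$ ($k\ge 2$) by $u$, which can only decrease the degree of each monomial; and the same-block cancellation only deletes monomials. Hence $\deg(Q)\le\deg(P)$, closing the argument. Everything else is a clean composition of the lemmas above, and no new structural insight about \BaseConv\ beyond \Cref{lem: based-degree} is needed.
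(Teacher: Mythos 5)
Your proposal is correct and follows essentially the same route as the paper's proof: reduce $\Task$ to EQ via \Cref{cor: eq-mqar-lower}, bound the degree of the $L$-layer \BaseConv\ output by $2^L$ via \Cref{lem: based-degree}, pass to a block-exclusive equivalent without increasing degree, and invoke \Cref{prop: deg-almost} to force $2^L \ge 2p$. The only cosmetic differences are that you conclude directly from $2^L \ge 2p$ where the paper argues by contrapositive, and you cite \Cref{lem: block-ex} (arguably the more apt reference) where the paper cites \Cref{prop: unique-almost} for the block-exclusive step.
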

\begin{proof}
We know from \Cref{cor: eq-mqar-lower} that it suffices to show a lower bound for the EQ problem. Moreover, we know from \Cref{prop: deg-almost} that we cannot solve the EQ problem in the $p$-hot encoding setting with a block-exclusive polynomial of degree $\le 2p - 1$. Now, assume that there is a \BaseConv\ model with $L$ layers that exactly solves EQ in the $p$-hot encoding setting. Then, due to \Cref{lem: based-degree} and \Cref{prop: unique-almost}, this yields a block-exclusive polynomial $P(\bm{u})$ of degree at most $2^L$. Here, if $L < \floor{\log(2p)}$ which , then the resulting \BaseConv\ with $L$ layers results in a block-exclusive polynomial of degree $\le 2p-1$. This contradicts the above claim that we cannot have a block-exclusive polynomial of degree $< 2p$ that exactly represents EQ. Consequently, a data-independent \BaseConv\ model needs $\ge \floor{\log(2p)}$-layers to solve EQ.
\end{proof}
\newcommand{\bin}{\text{bin}}
\newcommand{\iMap}{\tbf{B}}
\newcommand{\hot}{\tbf{e}}
\newcommand{\bN}{b}
\newcommand{\ip}{\textsc{Input: }}
\newcommand{\op}{\textsc{Output: }}
\newcommand{\id}{\textsc{Internal Dimension: }}
\newcommand{\blockWidth}{B}
\newcommand{\vSize}{r}
\newcommand{\basedModelDim}{\modelDim}
\newcommand{\basedn}{n}
\newcommand{\baseconv}{\textsc{BaseConv}}
\newcommand{\primLength}{N'}
\newcommand{\primDim}{d'}
\newcommand{\qkv}{\tbf{Y}}
\newcommand{\aab}[1]{
   \mathrel{\overset{\uparrow}{\underset{\downarrow}{#1}}}
}

\newcommand{\bb}{\overline{b}}
\newcommand{\splitIdx}{s}
\newcommand{\hyenaInput}{\bm{y}}
\newcommand{\baseOutput}{\bm{z}}
\newcommand{\mb}{\bm{b}}
\newcommand{\combDim}{\overline{N}}
\newcommand{\posSize}{z}
\newcommand{\arbVal}{y}
\newcommand{\remStart}{r}
\newcommand{\remEnd}{t}
\newcommand{\primRepCol}{\texttt{repeat$\_$columns}}
\newcommand{\primRepMat}{\texttt{repeat$\_$matrix}}
\newcommand{\primCumSum}{\texttt{cumulative$\_$sum}}
\newcommand{\primSumAllCol}{\texttt{sum$\_$all$\_$columns}}
\newcommand{\primSumColBlo}{\texttt{sum$\_$column$\_$blocks}}
\newcommand{\primHot}{\texttt{one$\_$hot$\_$encoding}}
\newcommand{\primRem}{\texttt{remember}}
\newcommand{\shiftDown}{\texttt{shift-down}}
\newcommand{\shiftUp}{\texttt{shift-up}}
\newcommand{\arbitrary}{?}
\newcommand{\linxPad}{
\begin{pmatrix}
\bm{xW}\\
\bm{0}^{s\times\primDim}
\end{pmatrix}
}
\newcommand{\linvPad}{
\begin{pmatrix}
\bm{vW}\\
\bm{0}^t
\end{pmatrix}
}
\newcommand{\bcOut}{(\bm{xW}+\bm{b}_1)\odot(\bm{h}\ast\bm{x}+\bm{b}_2)}


\subsection{Upperbound on MQAR with sub-logarithmically many BaseConv layers}
\paragraph{Setup: }
For an input $\mQ, \mK, \mV \in \R^{\inputLength \times \modelDim}$, the MQAR problem is computing $(\mC\odot (\mQ\mK^{\top})) \times \mV$ where $\mC \in \R^{\inputLength \times \inputLength}$ is a lower triangular matrix with $1$s in all possible non-zero positions:
\[
\mC[i,j] \equiv
\begin{cases}
    1 &\text{if }j \leq i\\
    0 &\text{otherwise}.
\end{cases}
\]
Further, we define the following notation:
\[\bN = \lceil\log(\inputLength+1)\rceil,\]
\[\bb = \lceil\log(\modelDim+1)\rceil,\]
\[\combDim = \max(\inputLength,\modelDim).\] 

Finally, we define $\bin(i)\in \{0,1\}^{\bN}$ to be the binary representation of $1\leq i\leq N$ and $\bin(j)\in \{0,1\}^{\bb}$ to be the binary representation of $1\leq j\leq \modelDim$. All vectors are assumed to be in column form and all row and column indices will start from 1. 

We assume that:
\begin{enumerate}[label=(\roman*)] 

\item  Each row of $\mQ$ and $\mK$ use 1-hot encoding and 
\item Each query matches with at most one key. 
\end{enumerate}
We  show that BaseConv can compute the MQAR problem with $O(\log\log(\combDim))$ layers:
\begin{theorem}\label{MQAR:math}
    The MQAR problem with $1$-hot encoded tokens, at most one key match per query, and $\combDim\geq8$ can be solved with $\BaseConv(\inputLength,O(\log\log(\overline{\inputLength})),\modelDim,O(\overline{\inputLength}\log(\overline{\inputLength})),O(\overline{\inputLength}\log(\overline{\inputLength})))$.
\end{theorem}

\subsubsection{BaseConv Primitives}\label{prims}
In this section we show some basic primitives that will be helpful in proving Theorem \ref{MQAR:math}. 

We define $\bm{e}_i^{(j)}$ to be the the $i$th standard basis vector with a dimension of $j$ (remember, both are one based indexing).

We first define the primitives and then show how to implement them using BaseConv.

Note that if a convolution matrix, denoted with $\bm{h}$ are given as a single column, all columns of this matrix are identical as as defined. Specifically, if given $\bm{h}\in\R^\inputLength$ and $\bm{x}\in\R^{\inputLength\times\modelDim}$we have $\bm{h}\ast\bm{h}$ to denote $\bm{k}\ast\bm{x}$ where $\bm{k}[:,j]=\bm{h} \forall j\in[\modelDim]$.

    \begin{definition}
        $\primRepCol(\hyenaInput, r)$ \\$\ip \hyenaInput\in\R^{\primLength\times\primDim r}, r\in\Z^{+}$.\\ $\op \baseOutput\in\R^{\primLength\times\primDim r},$ where $\bm{z}$ has each of the first $\primDim$ columns of $\bm{y}$ repeated $r$ times. In other words,

        \[
            \bm{y} \equiv
            \begin{pmatrix}
                \aab{\bm{y}^{(1)}},\hdots,\aab{\bm{y}^{(\primDim)}}, \aab{\arbitrary},\hdots,\aab{\arbitrary}\\
            \end{pmatrix},
            \bm{z} \equiv
            \begin{pmatrix}
                \underbrace{\aab{\bm{y}^{(1)}},\aab{\hdots\bm{y}^{(1)}}}_{\text{r times}},\hdots,\underbrace{\aab{\bm{y}^{(\primDim)}}, \hdots, \aab{\bm{y}^{(\primDim)}}}_{\text{r times}}\\
            \end{pmatrix}.
        \]
    \end{definition}

    \begin{definition}
        \primRepMat($\hyenaInput, r)$ \\$\ip \hyenaInput\in\R^{\primLength r\times\primDim}, r\in\Z^{+}$ such that $\bm{y}[\primLength+1:,:]=\bm{0}^{\primLength(r-1)\times\primDim}$.\\ $\op \baseOutput\in\R^{\primLength r\times\primDim},$ where $\baseOutput$ is the first $\primLength$ rows of $\hyenaInput$ repeated $r$ times. In other words, 

        \[
                    \bm{y} \equiv
            \begin{pmatrix}
                \leftarrow\bm{y}[1:\primLength,:]\rightarrow\\
                \hline\\
                \bm{0}^{\primLength\times\primDim}\\
                \hline\\
                \vdots\\
                \hline\\
                \bm{0}^{\primLength\times\primDim}\\
            \end{pmatrix},
            \bm{z} \equiv
            \left.
            \begin{pmatrix}
                \leftarrow\bm{y}[1:\primLength,:]\rightarrow\\
                \hline\\
                \leftarrow\bm{y}[1:\primLength,:]\rightarrow\\
                \hline\\
                \vdots\\
                \hline\\
                \leftarrow\bm{y}[1:\primLength,:]\rightarrow\\
            \end{pmatrix}\right\}\text{$r$ times}
        \]
    
    \end{definition}
    
    \begin{definition}
        \primCumSum($\hyenaInput$)\\ $\ip \hyenaInput\in\R^{\primLength\times\primDim}$.\\ $\op \baseOutput\in\R^{\primLength\times\primDim}, $ where each row of $\baseOutput$ is the sum of all rows of $\hyenaInput$ with a smaller or equal index. In other words, 
    
            \[
            \bm{y} \equiv
            \begin{pmatrix}
                \leftarrow\bm{y}_1\rightarrow\\
                \hline\\
                \vdots\\
                \hline\\
                \leftarrow\bm{y}_i\rightarrow\\
                \hline\\
                \vdots\\
                \hline\\
                \leftarrow\bm{y}_{\primLength}\rightarrow\\
            \end{pmatrix},
            \bm{z} \equiv
            \begin{pmatrix}
                \leftarrow\bm{y}_1\rightarrow\\
                \hline\\
                \vdots\\
                \hline\\
                \leftarrow\sum_{j=1}^{i}\bm{y}_j\rightarrow\\
                \hline\\
                \vdots\\
                \hline\\
                \leftarrow\sum_{j=1}^{\primLength}\bm{y}_j\rightarrow\\
            \end{pmatrix}.
        \]
    \end{definition} 

    \begin{definition}
        \primSumAllCol($\hyenaInput$) \\$\ip \hyenaInput\in\R^{\primLength\times\primDim}$.\\ $\op \baseOutput\in\R^{\primLength\times\primDim}, \\$where the first column of $\bm{z}$ has the sum of all columns of $\bm{y}$ and the rest are all zeros. In other words, 

                \[
            \bm{y} \equiv
            \begin{pmatrix}
                \aab{\bm{y}^{(1)}}, \aab{\bm{y}^{(2)}},\hdots,\aab{\bm{y}^{(\primDim)}}\\
            \end{pmatrix},
            \bm{z} \equiv
            \begin{pmatrix}
                \aab{\sum_{j=1}^{\primDim}\bm{y}^{(j)}}, \aab{\bm{0}}, \hdots, \aab{\bm{0}}\\
            \end{pmatrix}.
        \]
    \end{definition}
    
    \begin{definition}
        \primSumColBlo($\hyenaInput, \blockWidth$) 
    \\$\ip \hyenaInput\in\R^{\primLength\times\primDim}, \blockWidth\in\Z$ such that $\blockWidth$ divides $\primDim$.\\ $\op \baseOutput\in\R^{\primLength\times\primDim},$ where of column blocks the first column block of $\baseOutput$ is the sum of all column blocks and the rest are zero. In other words,
    \[
    \baseOutput[:,1:\blockWidth] \equiv
        \sum_{j=0}^{\frac{\primDim}{\blockWidth}-1}\bm{y}[:,j\blockWidth+1:(j+1)\blockWidth]
    \]

    \[\baseOutput[:,j]=\bm{0}^{\primLength} \text{ for all }j>\blockWidth.\]
    \end{definition}
    Note that $\primSumAllCol(\bm{y})$ = $\primSumColBlo(\bm{y}, 1).$
    
    \begin{definition}
        \primHot($\bm{y}, \modelDim$)\\ $\ip \hyenaInput\in\R^{\primLength\times\primLength}$.\\ $\op \baseOutput\in\R^{\primLength\times\primLength},$ where the first $\lceil\log(\primLength)\rceil$ columns of each row of $\bm{y}$  represent a binary encoding $\overline{y}_i\in[1,\primLength]$ which is converted to a 1-hot encoding. In other words, 

    \[  \bm{y} \equiv
        \begin{pmatrix}
            \bin(\overline{\arbVal}_1)^{\top}, \leftarrow\arbitrary\rightarrow \\ 
            \hline \\ 
            \vdots \\
            \hline\\
            \bin(\overline{\arbVal}_i)^{\top}, \leftarrow\arbitrary\rightarrow\\
            \hline\\
            \vdots\\
            \hline\\
            \bin(\overline{\arbVal}_{\primLength})^{\top}, \leftarrow\arbitrary\rightarrow
        \end{pmatrix} 
        \qquad \qquad \bm{z} \equiv
        \begin{pmatrix}    
            \hot_{\overline{\arbVal}_{1}}^{\top} \\ 
            \hline \\ 
            \vdots \\
            \hline\\
            \hot_{\overline{\arbVal}_{i}}^{\top} \\
            \hline\\
            \vdots\\
            \hline\\
            \hot_{\overline{\arbVal}_{\primLength}}^{\top} 
        \end{pmatrix} 
    \] 
    \end{definition} 

    \begin{definition}
        $\primRem(\bm{y}, \remStart, \remEnd, f)$ \\$\ip \hyenaInput\in\R^{\primLength\times\primDim},r\in\Z,t\in\Z, f:\R^{t-r}\rightarrow\R^{t-r+s},\bm{v}_1\in\R^{r}, \bm{x}\in\R^{t-r}$, where $\bm{y}$ is defined as below.\\ $\op \baseOutput\in\R^{\primLength\times\primDim}$, which is defined as follows:
    \[
            \bm{y} \equiv
                    \begin{pmatrix}
                    \leftarrow\bm{v}_1\rightarrow\\
                    \hline\\
                     \leftarrow\bm{x}\rightarrow \\
                        \hline\\
                        \bm{0}^{s\times\primDim}\\
                        \hline\\
                        \leftarrow\bm{v}_2\rightarrow\\
                        \hline\\
                        \bm{0}\\
                        \hline\\
                        \vdots\\
                        \hline\\
                        \bm{0}
                    \end{pmatrix} 
                    \qquad\qquad \bm{z}  \equiv
                    \begin{pmatrix}    
                        \leftarrow\bm{v_1}\rightarrow\\
                        \hline\\ \\
                        \leftarrow f(\bm{x})\rightarrow \\
                        \\ 
                        \hline\\
                        \leftarrow\bm{v}_2\rightarrow\\
                        \hline\\
                        \bm{0}\\
                        \hline\\
                        \vdots\\
                        \hline\\
                        \bm{0}
                    \end{pmatrix} 
            \] 
    \end{definition} 

Recall, that $\shiftDown({\bm{y}}, {s})$ and $\shiftUp({\bm{y}}, {s})$ will shift the matrix $\bm{y}$ down or up by s rows, respectively. 
\begin{proposition}[~\citep{arora2023zoology}]
\label{prop: prim-shift}
For any $\bm{y} \in \R^{N \times d}$, there exist $\coyoteTuple{\inputLength}{1}{\headDim}{\inputLength}{\headDim}$ and $\coyoteTuple{\inputLength}{3}{\headDim}{\inputLength}{\headDim}$ that computes $\texttt{shift\_down}({\bm{y}}, {s})$ and $\texttt{shift\_up}({\bm{y}}, {s})$ for any $s \le N$.
\end{proposition} 
Now we will show how each primitive is implemented in terms of $\baseconv$ layers.

\begin{proposition}[The Repeat Columns primitive]\label{prim:repeatcols} For any $\bm{y}\in\R^{\primLength\times\primDim r}$ and any $r\in\Z^{+}$ there exists a $\coyoteTuple{\primLength}{1}{\primDim r}{\primLength}{\primDim r}$ that computes \primRepCol$(\bm{y}, r)$.
\end{proposition}

\begin{proof}
Define
\[\bm{z}\gets \baseconv (\bm{y}, \bm{W}, \bm{0}^{\primLength\times\primDim}, \bm{0}^{\primLength\times\primDim}, \bm{1}^{\primLength\times\primDim}),\]

where $\bm{W}\in\R^{\primDim\times\primDim}$ is defined as:
    \[
    \mW[i,j] =
    \begin{cases}
        1 &\text{if }\left\lceil\dfrac{i}{r}\right\rceil=j\\
        0 &\text{otherwise}.
    \end{cases}
    \]

Then note that the output of this layer is:

        \[
        \bm{z} =
        (\bm{yW}+\bm{0}^{\primLength\times\primDim})\odot(\bm{0}^{\primLength\times\primDim}\ast\bm{y}+\bm{1}^{\primLength\times\primDim})= \bm{yW}=
            \begin{pmatrix}
            \underbrace{\aab{\bm{y}^{(1)}},\aab{\hdots\bm{y}^{(1)}}}_{\text{r times}},\hdots,\underbrace{\aab{\bm{y}^{(\primDim)}}, \hdots, \aab{\bm{y}^{(\primDim)}}}_{\text{r times}}
            \end{pmatrix},
        \]
where the last equality follows from the definition of $\bm{W}$.
\end{proof}

\begin{proposition}[The Repeat Matrix primitive]\label{prim:repeatmat} For any $\bm{y}\in\R^{\primLength r\times\primDim}$ and any $r\in\Z^{+}$ there exists a $\coyoteTuple{\primLength r}{1}{\primDim}{\primLength r}{\primDim}$ that computes ${\tt repeat\_matrix}(\bm{y}, r)$.
\end{proposition}
\begin{proof}
Define
\[\bm{z}\gets \baseconv (\bm{y}, \bm{0}^{\primDim\times\primDim}, \bm{1}^{\primLength r\times\primDim}, \bm{h}, \bm{0}^{\primLength r\times\primDim}),\]

where $\bm{h}\in\R^{\primLength\times\primDim}$ is defined as:
        \[ \bm{h}(X) \equiv \sum_{j=0}^{r-1}\bm{x}^{\primLength j}.\]
The computation of the convolution will result in:
\begin{align*}
    \paren{\bm{h} \ast \bm{y}} &= \coeff((\bm{y}(X))\cdot(1+ X^{\primLength}+ \hdots+ X^{\primLength\times(r-1)})\\
    &= \coeff(\bm{y}(X)+\bm{y}(X)\cdot X^{\primLength}+ \hdots+\bm{y}(X)\cdot(X^{\primLength\times(r-1)}))\\
    &=\bm{y}+\shiftDown(\bm{y},\primLength)+\hdots+\shiftDown(\bm{y},r\primLength).
\end{align*}

In the above, the final equality follows from \Cref{prop: prim-shift}. The output of this layer will compute:
        \[
        \bm{z} = 
        (\bm{y}\cdot\bm{0}^{\primDim\times\primDim}+\bm{1}^{\primLength\times\primDim})\odot(\bm{h}\ast\bm{y}+\bm{0}^{\primLength\times\primDim})=\bm{h}\ast\bm{y}=
            \begin{pmatrix}
                \bm{y}[1:\primLength,:]\\
                \hline
                \vdots\\
                \hline\\
                \bm{y}[1:\primLength,:]\\
            \end{pmatrix}.
        \]

\end{proof}
In the above, the last equality follows from \Cref{prop: prim-shift} and the fact that $\bm{y}[\primLength+1:, :]=\bm{0}^{((\primLength-1)\times r)\times\primDim)}$

\begin{proposition}[The Cumulative Sum primitive]\label{prim:cumulativesum} For any $\bm{y}\in\R^{\primLength\times\primDim}$  there exists a $\coyoteTuple{\primLength}{1}{\primDim}{\primLength}{\primDim}$ that computes ${\tt cumulative\_sum}(\bm{y})$.
\end{proposition}
\begin{proof}
Define 
\[\bm{z}\gets \baseconv (\bm{y}, \bm{0}^{\primDim\times\primDim}, \bm{1}^{\primLength\times\primDim}, \bm{h}, \bm{0}^{\primLength\times\primDim}),\]

where $\bm{h}\in\R^{\primLength\times\primDim}$ is defined as:
        \[ \bm{h}(X) \equiv \sum_{j=0}^{\primLength-1}\bm{x}^{j}.\]
        
The computation of the convolution will result in:
\begin{align*}
    \paren{\bm{h} \ast \bm{y}} &= \coeff((\bm{y}(X))\cdot(1+ X+ \hdots+ X^{\primLength-1})\\
    &= \coeff(\bm{y}(X)+\bm{y}(X)\cdot X + \hdots+\bm{y}(X)\cdot X^{\primLength-1})\\
    &=\bm{y}+\shiftDown(\bm{y},1)+\hdots+\shiftDown(\bm{y},\primLength-1).
\end{align*}
     
The output of this layer is:
\[
    \begin{pmatrix}
        \leftarrow\bm{y}_1\rightarrow\\
        \hline\\
        \vdots\\
        \hline\\
        \leftarrow\sum_{j=1}^{i}\bm{y}_j\rightarrow\\
        \hline\\
        \vdots\\
        \hline\\
        \leftarrow\sum_{j=1}^{N}\bm{y}_j\rightarrow\\
    \end{pmatrix}.
\]
\end{proof}

\begin{proposition}[The Sum All Columns primitive]\label{prim:sumall} For any $\bm{y}\in\R^{\primLength\times\primDim}$ there exists a $\coyoteTuple{\primLength}{1}{\primDim}{\primLength}{\primDim}$ that computes ${\tt sum\_all\_columns}(\bm{y})$.
\end{proposition}
\begin{proof}
Define
\[\bm{z}\gets \baseconv (\bm{y}, \bm{W}, \bm{0}^{\primLength\times\primDim}, \bm{0}^{\primLength\times\primDim}, \bm{1}^{\primLength\times\primDim}),\]

where $\bm{W}\in\R^{\primDim\times\primDim}$ is defined as:
        \[
    \mW[i,j] \equiv
    \begin{cases}
        1 &\text{if }j=1\\
        0 &\text{otherwise}.
    \end{cases}
    \] 
The output of this layer will be:
\[            
(\bm{yW}+\bm{0}^{\primLength\times\primDim})\odot(\bm{0}^{\primLength\times\primDim}\ast\bm{y}+\bm{1}^{\primLength\times\primDim})= \bm{yW}=
    \begin{pmatrix}
    \aab{\sum_{j=1}^{\primDim}\bm{y}^{(j)}}, \aab{\bm{0}}, \hdots, \aab{\bm{0}}\\
    \end{pmatrix},
\]
where the last equality follows from the definition of $\bm{W}$. 
\end{proof}

\begin{proposition}[The Sum Block Columns primitive]\label{prim:sumblocks} For any $\bm{y}\in\R^{\primLength\times\primDim}$ and $B$ that divides $\primDim$ there exists a $\coyoteTuple{\primLength}{1}{\primDim}{\primLength}{\primDim}$ that computes $\primSumColBlo(\bm{y},B)$.
\end{proposition}

\begin{proof}
Define
\[\bm{z}\gets \baseconv (\bm{y}, \bm{W}, \bm{0}^{\primLength\times\primDim}, \bm{0}^{\primLength\times\primDim}, \bm{1}^{\primLength\times\primDim}),\]

where $\bm{W}\in\R^{\primDim\times\primDim}$ is defined as:
        \[
    \mW[i,j] \equiv
    \begin{cases}
        1 &\text{if }j\le B \text{ and } j-B\lfloor\frac{i}{B}\rfloor\\
        0 &\text{otherwise}.
    \end{cases}
    \]
The output of this layer will be:

\begin{align*}
(\bm{yW}+\bm{0}^{\primLength\times\primDim})\odot(\bm{0}^{\primLength\times\primDim}\ast\bm{y}+\bm{1}^{\primLength\times\primDim})&= \bm{yW}\\
 &=\begin{pmatrix}
            \aab{\sum_{j=0}^{\frac{\primDim}{\blockWidth}-1}\bm{y}[:,j\blockWidth+1]},\hdots, \aab{\sum_{j=0}^{\frac{\primDim}{\blockWidth}-1}\bm{y}[:,j\blockWidth+B-1]}, \underbrace{\bm{0},\hdots,\bm{0}}_{\primDim-\lfloor\frac{\primDim}{D}\rfloor \text{times}}
\end{pmatrix},
\end{align*}
where the last equality follows from definition of $\bm{W}$. 

\end{proof}

\begin{proposition}[The 1-hot primitive] For any $\bm{y}\in\R^{\primLength\times\primLength}$ there exists a $\coyoteTuple{\primLength}{\lceil\log\log(\primLength)\rceil+O(1)}{\primLength}{2\primLength\lceil\log\primLength\rceil}{\primLength}$ that computes $\primHot(\bm{y})$.
\end{proposition}
\begin{proof}
We first give a sketch of the proof. Each row has a binary representation of a number which we want to convert to it's 1-hot encoding. In order to do this, we need to know which position in the 1-hot encoded vector needs to be 1. We need to extract information from each bit, which details which subset of positions the 1-hot encoded vector could potentially have as 1. Concretely, if the least significant bit is 0, only the even position can be represented, if that same bit is 1, only the odd positions can be represented. This pattern continues for all bits in the binary number. Each bit in the binary representation gets its own row. Finally, we take the bit wise ANDs of each row of the same binary representation to get the resulting 1-hot encoded vector. Next, we present the details. 

First compute $\bm{z}_1\in\R^{2\primLength\lceil\log\primLength\rceil\times\primLength}$ defined as:
\[\bm{z}_1 \gets \primRepMat(\bm{y}, 2\lceil\log(\primLength)\rceil).\]

Define
\[\bm{z}_2 \gets \texttt{BaseConv}(\bm{z}_1, \bm{I}^{\primLength\times\primLength}, \bm{0}^{2\primLength\lceil\log\primLength\rceil\times\primLength}, \bm{0}^{2\primLength\lceil\log\primLength\rceil\times\primLength}, \bm{b}_2^2),\]
where $\bm{b}_2\in\R^{2\primLength\lceil\log\primLength\rceil\times\primLength}$ is defined as for $1\leq i \leq 2\primLength\lceil\log\primLength\rceil$ and $1\leq j \leq \primLength$:
\[\bm{b}_2^2[i,j] \equiv 
\begin{cases}
    1 &\text{if } \paren{\left\lfloor\frac{i}{\primLength}\right\rfloor \mod{\lceil\log(\primLength)\rceil}}+1 = j\\
    0 &\text{otherwise.}
\end{cases}\]

Note that $\bm{z}_1$ has $2\log(\primLength)$ copies of the binary representations in the first column block. When we zero them out to get $\bm{z}_2$, the $(i\mod\log(\primLength))$th matrix stores the value of the $i$th bit with all others being set to zero. \\
Compute $\bm{z}_3\in\R^{2\primLength\lceil\log\primLength\rceil\times\primLength}$ by storing the sum all columns in the first column. Being that each row only has a single non-zero entry, this is equivalent to moving every one of these non-zero entries to the first column. Define 
\[\bm{z}_3 \gets\primSumAllCol(\bm{z}_2).\]
Compute $\bm{z}_4\in\R^{2\primLength\lceil\log\primLength\rceil\times\primLength}$ by copying the first column to all other columns. Define
\[\bm{z}_4 \gets \primRepCol(\bm{z}_3, \primLength).\]
Then, we will take a Hadamard product of $\bm{z}_4$ with a binary representation matrix. Define
\[\bm{z}_5 \gets \texttt{BaseConv($\bm{z}_4, \bm{0}^{\primLength\times\primLength}, \bm{b}_{1}^{5}, 1, \bm{0}^{2\primLength\lceil\log\primLength\rceil\times\primLength}$)}.\]
In the above, $\bm{b}_{1}^{5}$ has the positions where a binary number with that has that bit set to a 1 could fall(and 0 if it in the bottom half of the matrix. We will define it in blocks where $1\leq i\leq2\lceil\log\primLength\rceil, 1\leq k\leq\primLength, 1\leq j\leq\primLength$

\[\bm{b}_{1}^{5}[(i,k),j]\equiv \begin{cases}
    1 & \text{if } j\mod2^{i}\geq2^{i-1} \text{ and } i\leq\lceil\log\primLength\rceil\\
    1 & \text{if } j\mod2^{i}<2^{i-1} \text{ and } i>\lceil\log\primLength\rceil\\
    
    0 & \text{otherwise}.
\end{cases}\]

Next, we will combine the two representations:
\[\bm{z}_6 \gets \texttt{BaseConv($\bm{z}_5, \bm{0}^{\primLength\times\primLength}, \bm{b}_1^6, \bm{h}^6, \bm{0}^{2\primLength\lceil\log\primLength\rceil\times\primLength}$)},\]

where $\bm{b}_1^6,\bm{h}^6$ are defined as:

\[\bm{b}_1^6=\begin{pmatrix}
    \bm{0}^{(\primLength\lceil\log\primLength\rceil)\times\primLength}\\\hline\\\bm{1}^{(\primLength\lceil\log\primLength\rceil)\times\primLength}
\end{pmatrix},\bm{h}^6=\begin{pmatrix}
    \bm{e}_1^{(\primLength\lceil\log\primLength\rceil)}\\
    \hline\\
    \bm{e}_1^{(\primLength\lceil\log\primLength\rceil)}
\end{pmatrix}.\]

We now specify the results of this kernel:

\begin{align*}
    \paren{\bm{h}^6 \ast \bm{z}_5} 
    &= \coeff\paren{\paren{1+X^{\primLength\lceil\log\primLength\rceil}}\cdot \bm{z_5}(x)}\\
    &= \coeff\paren{\bm{z_5}(x) + \bm{z_5}(x)\cdot X^{\primLength\lceil\log\primLength\rceil}}\\
    &= \bm{z_5} + \shiftDown(\bm{z_5}, X^{\primLength\lceil\log\primLength\rceil})
\end{align*}

By combining the possible positions based on a $0$ or $1$ being present in the binary number, $\bm{z}_6$ now stores the expanded binary representation in each row block in the bottom half of the matrix. We move this to the top half as shown below:
\[\bm{z}_7 \gets \shiftUp(\bm{z}_6, \primLength\lceil\log\primLength\rceil).\]

Finally we do a bit wise multiplication between corresponding rows. For $0\leq m<\lceil\log\log\primLength\rceil$ such that on the $m$'th iteration the following function is performed:

\[\bm{z}'_{8,m}\gets\baseconv(\bm{z}_{8,m-1}, \bm{I}^{2\primLength\lceil\log\primLength\rceil\times\primLength}, \bm{0}^{2\primLength\lceil\log\primLength\rceil\times\primLength}, \bm{h}_m,\bm{0}^{2\primLength\lceil\log\primLength\rceil\times\primLength})\] 
\[\bm{z}_{8,m}\gets\shiftUp(\bm{z}'_{8,m},\primLength2^{m})\]

Where $\bm{z}_{8,-1}\equiv\bm{z}_{7}$ and $\bm{h}_m$ is defined below:
\[\bm{h}_m = \begin{pmatrix}
    \bm{e}_1^{(2^m)}\\
    \hline\\
    \bm{e}_1^{(2^{m}-2\primLength\lceil\log\primLength\rceil)}
\end{pmatrix}.\]

The computation of this convolution will result in:
\begin{align*}
    \paren{\bm{h}_m \ast \bm{z}_{8,m}} 
    &= \coeff\paren{\paren{1+X^{(2^m)}}\cdot \bm{z}_{8,m}}(x)\\
    &= \coeff\paren{\bm{z}_{8,m}(x) + \bm{z}_{8,m}(x)\cdot X^{(2^m)}}\\
    &=\bm{z}_{8,m} + \shiftDown(\bm{z}_{8,m}, X^{(2^m)})
\end{align*}

$\bm{z}_7$ holds the possible positions of of the "$1$" in the one hot vector for each bit in the binary representation. This step takes a bit-wise and of the rows corresponding to the same binary representation so that we are left with the 1-hot encoding of the original binary representation. The idea is that for each bit in each binary representation of each value, there is a row in $\bm{z}_7$ that represents the possible positions, in the form of a bitmap, that this binary number can encode. When we bitwise AND all of these rows together, we are left with the position that satisfies all the constraints and is therefore the index that these binary numbers encoded. 

\end{proof}

\begin{proposition}[The Remembering Primitive]
\label{prop: prim-remember}
For any $\bm{x} \in \R^{\basedn \times \primDim}, \bm{v}_1 \in \R^{\vSize \times \primDim}, \bm{v}_2\in\R^{m-\vSize}$ where $n= \remEnd-\remStart$ contained in some $\bm{y} \in \R^{\primLength \times \primDim}$ such that $\bm{v}_1$ is in the first $\vSize$ rows, $\bm{x}$ is in the next $\basedn$ rows, 0s fill up the next $s$ rows, and $\bm{v}_2$ are in the next $m-\vSize$ rows, for some $3\basedn+3m+2s+2t \le \primLength$ so that for $\bm{h} \in \R^{n \times d}$ and $\bm{W} \in \R^{\primDim\times \primDim}$ with $\bm{x} \ast \bm{h} \in \R^{(\basedn+s) \times \primDim}$ and $\bm{v} \ast \bm{h} \in \R^{(m+t) \times \primDim}$, where $\bm{v}\in\R^{m\times\modelDim}$ is defined as $\bm{v}_2 +$\texttt{shift\_down($\bm{v}_1, m-\vSize$)}, there exists a $\coyoteTuple{\primLength}{8}{\primDim}{\primLength}{\primDim}$ that computes $\texttt{remember}(\bm{y}, \remStart, \remEnd, f)$, where $f$ can be implemented in 1 layer of $\baseconv$ through the parameters $\bm{W}\in\R^{\primDim\times\primDim}, \bm{h}\in\R^{\primLength\times\primDim}, \bm{b}_1\in\R^{\primLength\times\primDim}, \bm{b}_2\in\R^{\primLength\times\primDim}$ as defined below:
\[f(\bm{u})=\paren{\begin{pmatrix}\bm{uW}\\\bm{0}^{s\times\primDim}\end{pmatrix}+\begin{pmatrix}\bm{b}_1\\\bm{1}^{s\times\primDim}\end{pmatrix}} \odot \paren{\bm{u}\ast\bm{h}+\begin{pmatrix}\bm{b}_2\\\bm{0}^{s\times\primDim}\end{pmatrix}}\]
\end{proposition}
\begin{proof} First, we will convert the $\bm{y}$ so that it stores $\bm{v}$ in consecutive rows to get $\bm{z}_1$.
Recall 

\[            \bm{y} =
                    \begin{pmatrix}
                    \leftarrow\bm{v}_1\rightarrow\\
                    \hline\\
                     \leftarrow\bm{x}\rightarrow \\
                        \hline\\
                        \bm{0}^{s\times\primDim}\\
                        \hline\\
                        \leftarrow\bm{v}_2\rightarrow\\
                        \hline\\
                        \bm{0}^{(\primLength-m-s-n)\times\primDim}
                    \end{pmatrix}. \]

We compute $\bm{z}_1$ as:
\[\bm{z}_{1}\gets \baseconv (\bm{y}, \bm{0}^{\primDim\times\primDim}, \bm{b}_{1}^{1}, \bm{h}^{1}, \bm{0}^{\primLength\times\primDim}),\]

where the kernels $\bm{h}^{1}\in\R^{\primLength\times\primDim}$ and $\bm{b}_{1}^{1}\in\R^{\primLength\times\primDim}$ are given by:
\[
    \bm{h}^{1} 
    \gets
    \begin{pmatrix}
        \bm{e}_{1}^{(n)} \\
        \hline\\
        \bm{0}^{m}\\
        \hline\\
        \bm{0}^{s}\\
        \hline\\
        \bm{e}_{1}^{(n)}\\
        \hline\\
        \bm{0}^{n}\\
        \hline\\
        \cdots\\
        \hline\\
        \bm{0}^{n}\\
    \end{pmatrix},
    \bm{b}_{1}^{1} 
    \gets
    \begin{pmatrix}
        \bm{0}^{\vSize\times\primDim}\\
        \hline\\
        \bm{1}^{n\times\primDim}\\
        \hline\\
        \bm{0}^{s\times\primDim}\\
        \hline\\
        \bm{1}^{(m-\vSize)\times\primDim}\\
        \hline\\
        \bm{1}^{\vSize\times\primDim}\\
        \hline\\
        \bm{0}^{n\times\primDim}\\
        \hline\\
        \bm{0}^{s\times\primDim}\\
        \hline\\
        \bm{0}^{(m-\vSize)\times\primDim}\\
        \hline\\
        \hdots\\
        \hline\\
        \bm{0}^{n\times\primDim}\\   
    \end{pmatrix}
    .
    \]

We now specify the result of this kernel:
\begin{align*}
    \paren{\bm{h}^1 \ast \bm{y}} &= \mathrm{coeff}\paren{(1+X^{n+s+m}) \cdot \paren{\bm{v}_1(X) + \bm{x}(X) \cdot X^{\vSize} + \bm{v}_2(X) \cdot X^{n+s+\vSize}}}\\
    &= \mathrm{coeff}\paren{\bm{v}_1(X) + \bm{x}(X) \cdot X^{\vSize} + \bm{v}_2 \cdot X^{n+s+\vSize} + \bm{v}_1(X) \cdot X^{n+s+m} + \bm{x}(X) \cdot X^{n+s+m+\vSize}+ \bm{v}_2 \cdot X^{2n+2s+m+\vSize}}\\
    &= \bm{v}_1+\shiftDown(\bm{x},r)+\shiftDown(\bm{v}_2,n+s+r)+\shiftDown(\bm{v}_1,n+s+m)\\& +\shiftDown(\bm{x},n+s+m+r)+\shiftDown(\bm{v}_2,2n+2s+m+r).
\end{align*}
With this we have:
    \[
    \bm{z}_1 = \paren{\bm{y}\cdot\bm{0}^{\primDim\times\primDim}+\bm{b}_1^1} \odot \paren{\bm{h}^1 \ast \bm{y}+\bm{0}^{\primLength\times\primDim}} = \bm{b}_1^1 \odot \paren{\bm{h}^1 \ast \bm{y}}
     = 
    \begin{pmatrix}
        \bm{0}^{\vSize\times\primDim}\\
        \hline\\
        \bm{1}^{n\times\primDim}\\
        \hline\\
        \bm{0}^{s\times\primDim}\\
        \hline\\
        \bm{1}^{(m-\vSize)\times\primDim}\\
        \hline\\
        \bm{1}^{\vSize\times\primDim}\\
        \hline\\
        \bm{0}^{n\times\primDim}\\
        \hline\\
        \bm{0}^{s\times\primDim}\\
        \hline\\
        \bm{0}^{(m-\vSize)\times\primDim}\\
        \hline\\
        \hdots\\
        \hline\\
        \bm{0}^{n\times\primDim}\\
        
    \end{pmatrix} \odot 
    \begin{pmatrix}
        \bm{v}_1\\
        \hline\\
        \bm{x} \\
        \hline\\
        \bm{0}^{s\times\primDim}\\
        \hline\\
        \bm{v}_2\\
        \hline\\
        \bm{v}_1\\
        \hline\\
        \bm{x} \\
        \hline\\
        \bm{0}^{s\times\primDim}\\
        \hline\\
        \bm{v}_2\\
        \hline\\
        \hdots\\
        \hline\\
        \bm{0}^{n\times\primDim}\\
    \end{pmatrix} =  
    \begin{pmatrix}
        \bm{0}^{\vSize\times\primDim}\\
        \hline\\
        \bm{x} \\
        \hline\\
        \bm{0}^{s\times\primDim}\\
        \hline\\
        \bm{v}_2\\
        \hline\\
        \bm{v}_1\\
        \hline\\
        \bm{0}^{n\times\primDim}\\
        \hline\\
        \bm{0}^{s\times\primDim}\\
        \hline\\
        \bm{0}^{(m-\vSize)\times\primDim}\\
        \hline\\
        \hdots\\
        \hline\\
        \bm{0}^{n\times\primDim}\\
    \end{pmatrix}=  
    \begin{pmatrix}
        \bm{0}^{\vSize\times\primDim}\\
        \hline\\
        \bm{x} \\
        \hline\\
        \bm{0}^{s\times\primDim}\\
        \hline\\\\
        \bm{v}\\\\
        \hline\\
        \bm{0}^{n\times\primDim}\\
        \hline\\
        \bm{0}^{s\times\primDim}\\
        \hline\\
        \bm{0}^{(m-\vSize)\times\primDim}\\
        \hline\\
        \hdots\\
        \hline\\
        \bm{0}^{n\times\primDim}\\
    \end{pmatrix}.
    \]

Next compute $\bm{z}_2$ : 
\[\bm{z}_2 \gets \shiftUp(\bm{z}_1, \vSize),\] as seen in \cite{arora2023zoology} Proposition \ref{prop: prim-shift}.

At this point $\bm{z}_2$ looks like:
\[\begin{pmatrix}
    \bm{x}\\
    \hline\\
    \bm{0}^{s\times\primDim}\\
    \hline\\
    \bm{v}\\
    \hline\\
    \bm{0}^{n\times\primDim}\\
    \hline
    \vdots\\
    \hline\\
    \bm{0}^{n\times\primDim}
\end{pmatrix}
\]

Next, we will apply $f$ to $\bm{z}_2$ to get $f(\bm{x})$ but also retain a unchanged version of $\bm{v}$. Define 

\[\bm{z}_{3}\gets \baseconv (\bm{z}_2, \bm{W}, \bm{b}_{1}^{3}, \bm{h}^{3}, \bm{b}_{2}^{3}),\]

with the kernels $\bm{h}^{3}$, $\bm{b}_{1}^{3}$, and $\bm{b}_{2}^{3}$ for this layer are given by:

\[
    \bm{h}^{3} 
    \gets
    \begin{pmatrix}
        \bm{h}\\
        \hline\\
        \bm{0}^{s\times\primDim} \\
        \hline\\
        \bm{0}^{m\times\primDim}\\
        \hline\\
        \bm{0}^{t\times\primDim}\\
        \hline\\
        \bm{e}_1^{(n)}\\
        \hline\\
        \bm{0}^{s\times\primDim}\\
        \hline\\
        \bm{0}^{n\times\primDim}\\
    \end{pmatrix},
    \bm{b}_{1}^{3} 
    \gets
    \begin{pmatrix}
        \bm{b}_1\\
        \hline\\
        \bm{1}^{s\times\primDim}\\
        \hline\\
        \bm{0}^{m\times\primDim}\\
        \hline\\
        \bm{0}^{t\times\primDim}\\
        \hline\\
        \bm{0}^{n\times\primDim}\\
        \hline\\
        \bm{0}^{s\times\primDim}\\
        \hline\\
        \bm{1}^{m\times\primDim}\\   
    \end{pmatrix},
        \bm{b}_{2}^{3} 
    \gets
    \begin{pmatrix}
        \bm{b}_2\\
        \hline\\
        \bm{0}^{s\times\primDim}\\
        \hline\\
        \bm{0}^{m\times\primDim}\\
        \hline\\
        \bm{0}^{t\times\primDim}\\
        \hline\\
        \bm{0}^{n\times\primDim}\\
        \hline\\
        \bm{0}^{s\times\primDim}\\
        \hline\\
        \bm{0}^{m\times\primDim}\\   
    \end{pmatrix}.
    \]
Remember that $\bm{W}, \bm{h}, \bm{b}_1, \bm{b}_2$ come from the definition of $f$.\\
We specify the result of this kernel as: 

\begin{align*}
    \paren{\bm{h}^3 \ast \bm{z}_2} &= \mathrm{coeff}\paren{(\bm{h}(X)+X^{n+s+m+t}) \cdot \paren{\bm{x}(X) + \bm{v}(X) \cdot X^{n+s}}}\\
    &= \mathrm{coeff}\paren{\bm{h}\cdot\bm{x}(X)+\bm{h}\cdot\bm{v}(X)\cdot X^{n+s}
    + \bm{x}(X)\cdot X^{n+s+m+t} + \bm{v}(X) \cdot X^{2n+2s+m+t}}\\
    &= \bm{h}\ast\bm{x}+\shiftDown(\bm{h}\ast\bm{v},n+s)\\
    &+\shiftDown(\bm{x},n+s+m+t)+\shiftDown(\bm{v},2n+2s+m+t).
\end{align*}

With this we have
    \begin{align*}
    \bm{z}_3 = \paren{\bm{z}_2\bm{W}+\bm{b}_1^3} \odot \paren{\bm{h}^3 \ast \bm{z}_2 + \bm{b}_2^3}
     &= 
    \paren{\begin{pmatrix}
        \bm{xW}\\
        \hline\\
        \bm{0}^{s\times\primDim}\\
        \hline\\
        \bm{vW}\\
        \hline\\
        \bm{0}^{t\times\primDim}\\
        \hline\\
        \bm{0}^{n\times\primDim}\\
        \hline\\
        \bm{0}^{s\times\primDim}\\
        \hline\\
        \bm{0}^{m\times\primDim}\\
    \end{pmatrix} +
        \begin{pmatrix}
        \bm{b}_1\\
        \hline\\
        \bm{1}^{s\times\primDim}\\
        \hline\\
        \bm{0}^{m\times\primDim}\\
        \hline\\
        \bm{0}^{t\times\primDim}\\
        \hline\\
        \bm{0}^{n\times\primDim}\\
        \hline\\
        \bm{0}^{s\times\primDim}\\
        \hline\\
        \bm{1}^{m\times\primDim}\\   
    \end{pmatrix}} \odot
    \paren{
    \begin{pmatrix}
        \bm{h}\ast\bm{x}\\ \\ \\
        \hline\\
        \bm{h}\ast\bm{v}\\ \\ \\ 
        \hline\\
        \bm{x}\\
        \hline\\
        \bm{0}^{s\times\primDim}\\
        \hline\\
        \bm{v}\\
    \end{pmatrix} +  
     \begin{pmatrix}
        \bm{b}_2\\
        \hline\\
        \bm{0}^{s\times\primDim}\\
        \hline\\
        \bm{0}^{m\times\primDim}\\
        \hline\\
        \bm{0}^{t\times\primDim}\\
        \hline\\
        \bm{0}^{n\times\primDim}\\
        \hline\\
        \bm{0}^{s\times\primDim}\\
        \hline\\
        \bm{0}^{m\times\primDim}\\   
    \end{pmatrix}}\\ &=
    \begin{pmatrix}
        \paren{\linxPad+\begin{pmatrix}\bm{b}_1\\\bm{1}^{s\times\primDim}\end{pmatrix}}\odot\paren{\bm{h}\ast\bm{x}+\begin{pmatrix}
        \bm{b}_2\\\bm{0}^{s\times\primDim}
        \end{pmatrix}}\\ \\ \\ 
        \hline\\
        \linvPad\odot\bm{h}\ast\bm{v}\\ \\ \\ 
        \hline\\
        \bm{0}^{n\times\primDim}\\
        \hline\\
        \bm{0}^{s\times\primDim}\\
        \hline\\
        \bm{v}\\
    \end{pmatrix}.
    \end{align*}

Note that $\paren{\linxPad+\begin{pmatrix}
    \bm{b}_1\\\bm{0}^{s\times\primDim} \end{pmatrix}}\odot\paren{\bm{h}\ast\bm{x}+\begin{pmatrix}
        \bm{b}_2\\\bm{0}^{s\times\primDim}
        \end{pmatrix}}$ is $f(\bm{x})$ as defined.  
This next step will mask out duplicate and unnecessary $\bm{x}$ and $\bm{v}$ values. Define

\[\bm{z}_{4}\gets \baseconv (\bm{z}_3, \bm{I}^{\primLength\times\primDim}, \bm{0}^{\primLength\times\primDim}, \bm{0}^{\primLength\times\primDim}, \bm{b}_{2}^{4}),\]

where the kernel $\bm{b}_2^{4}\in\R^{\primLength\times\primDim}$ for this layer is given by:

\[
    \bm{b}_2^{4} 
    \gets
    \begin{pmatrix}
        \bm{1}^{(n+s)\times\primDim}\\
        \hline\\
        \bm{0}^{(m+t)\times\primDim}\\
        \hline\\
        \bm{0}^{(n+s)\times\primDim}\\
        \hline\\
        \bm{1}^{(m)\times\primDim}\\
    \end{pmatrix}.
    \]
We will specify the output of this layer:
    \[
    \bm{z}_4 = \bm{z}_3 \odot \bm{b}_2^4
     = 
    \begin{pmatrix}
        f(\bm{x})\\
        \hline\\
        \linvPad\odot\bm{h}\ast\bm{v}\\
        \hline\\
        \bm{0}^{(n+s)\times\primDim}\\
        \hline\\
        \bm{v}\\
    \end{pmatrix} \odot 
    \begin{pmatrix}\\
        \bm{1}^{(n+s)\times\primDim}\\
        \hline\\
        \bm{0}^{(m+t)\times\primDim}\\\\
        \hline\\
        \bm{0}^{(n+s)\times\primDim}\\
        \hline\\
        \bm{1}^{m\times\primDim}\\
    \end{pmatrix} =  
    \begin{pmatrix}
        f(\bm{x})\\
        \hline\\
        \bm{0}^{(m+t)\times\primDim}\\\\
        \hline\\
        \bm{0}^{(n+s)\times\primDim}\\
        \hline\\
        \bm{v}\\
    \end{pmatrix}.
    \]

In the next step we will reorder the information such that $f(\bm{x})$ and $\bm{v}$ are contained in contiguous rows by copying it with a convolution. Define 

\[\bm{z}_{5}\gets \baseconv (\bm{z}_4, \bm{0}^{\primDim\times\primDim}, \bm{b}_1^5, \bm{h}^{5}, \bm{0}^{\primLength\times\primDim}),\]

wehre the kernels $\bm{h}^{5}\in\R^{\primLength\times\primDim}$ and $\bm{b}_1^5\in\R^{\primLength\times\primDim}$ for this layer is given by:
\[
    \bm{h}^{5} 
    \gets
    \begin{pmatrix}
        \bm{e}_{1}^{(n)}\\
        \hline\\
        \bm{0}^{s\times\primDim} \\
        \hline\\
        \bm{0}^{m\times\primDim}\\
        \hline\\
        \bm{0}^{t\times\primDim}\\
        \hline\\
        \bm{e}_{1}^{(n)}\\
        \hline\\
        \bm{0}^{s\times\primDim}\\
        \hline\\
        \bm{0}^{n\times\primDim}\\
    \end{pmatrix}, 
    \bm{b}_1^5
    \gets
    \begin{pmatrix}\\
        \bm{0}^{(n+s)\times\primDim}\\
        \hline\\
        \bm{0}^{(m+t)\times\primDim}\\
        \hline\\
        \bm{1}^{(n+s)\times\primDim}\\\\
        \hline\\
        \bm{1}^{m\times\primDim}\\
        \hline\\
        \bm{0}^{(m+t)\times\primDim}\\
        \hline\\
        \bm{0}^{(n+s)\times\primDim}\\
        \hline\\
        \bm{0}^{m\times\primDim}\\
    \end{pmatrix}
    .
    \]

\begin{align*}
    \paren{\bm{h}^5\ast\bm{z}_4}&=\mathrm{coeff}\paren{(1+X^{n+s+m+t})\cdot \paren{f(\bm{x})(X) + \bm{v}(X)\cdot X^{2n+2s+m+t}}}\\
    &= \mathrm{coeff}(f(\bm{x})(X) + f(\bm{x})(X)\cdot X^{n+s+m+t} + \bm{v}(X)\cdot X^{2n+2s+m+t} + \bm{v}(X)\cdot X^{3n+3s+2m+2t}) \\
    &= f(\bm{x}) + \shiftDown(f(\bm{x}), n+s+m+t)+\shiftDown(\bm{v}, 2n+2s+m+t)\\
    &+\shiftDown(\bm{v}, 3n+3s+2m+2t).
\end{align*}

    \[
    \bm{z}_5 = \bm{b}_1^5 \odot \paren{\bm{h}^5 \ast \bm{z}_4}
     =
    \begin{pmatrix}\\
        \bm{0}^{(n+s)\times\primDim}\\
        \hline\\
        \bm{0}^{(m+t)\times\primDim}\\
        \hline\\
        \bm{1}^{(n+s)\times\primDim}\\\\
        \hline\\
        \bm{1}^{m\times\primDim}\\
        \hline\\
        \bm{0}^{(m+t)\times\primDim}\\
        \hline\\
        \bm{0}^{(n+s)\times\primDim}\\
        \hline\\
        \bm{0}^{m\times\primDim}\\
    \end{pmatrix} \odot
    \begin{pmatrix}
        f(\bm{x})\\
        \hline\\
        \bm{0}^{(m+t)\times\primDim}\\
        \hline\\
        f(\bm{x})\\
        \hline\\
        \bm{v}\\
        \hline\\
        \bm{0}^{(m+t)\times\primDim}\\
        \hline\\
        \bm{0}^{(n+s)\times\primDim}\\
        \hline\\
        \bm{v}\\
    \end{pmatrix} = 
    \begin{pmatrix}\\
        \bm{0}^{(n+s)\times\primDim}\\
        \hline\\
        \bm{0}^{(m+t)\times\primDim}\\
        \hline\\
        f(\bm{x})\\
        \hline\\
        \bm{v}\\
        \hline\\
        \bm{0}^{(m+t)\times\primDim}\\
        \hline\\
        \bm{0}^{(n+s)\times\primDim}\\
        \hline\\
        \bm{0}^{m\times\primDim}\\
    \end{pmatrix}.
    \]

The next step we will duplicate the entries so we can position $\bm{v}_1$ and $\bm{v}_2$ in the same position relative to the enacted upon portion of the matrix as in the input. Define
\[\bm{z}_{6}\gets \baseconv (\bm{z}_5, \bm{0}^{\primDim\times\primDim}, \bm{b}_1^6, \bm{h}^{6}, \bm{0}^{\primLength\times\primDim}),\] 

where the kernels $\bm{h}^{6}$ and $\bm{b}_1^6$ for this layer is given by:
\[
    \bm{h}^{6} 
    \gets
    \begin{pmatrix}
        \bm{e}_1^{(2n+2s+2m+t)}\\
        \hline\\
        \bm{e}_1^{(\primLength-2n-2s-2m-t)} \\
    \end{pmatrix}, 
    \bm{b}_1^6
    \gets
\begin{pmatrix}
        \bm{0}^{(n+s)\times\primDim}\\
        \hline\\
        \bm{0}^{(m+t)\times\primDim}\\
        \hline\\
        \bm{0}^{(n+s)\times\primDim}\\\\
        \hline\\
        \bm{0}^{(m-\vSize)\times\primDim}\\
        \hline\\
        \bm{1}^{r\times\primDim}\\
        \hline\\
        \bm{1}^{(n+s)\times\primDim}\\\\
        \hline\\
        \bm{1}^{(m-\vSize)\times\primDim}\\
        \hline\\
        \bm{0}^{\vSize\times\primDim}\\
    \end{pmatrix}
    .\]

Specifically the convolution does this:
\begin{align*}
    \paren{\bm{h}^6\ast\bm{z}_5}&=\mathrm{coeff}\paren{(1+X^{2n+2s+2m+t})\cdot \paren{((f(\bm{x}))(X))\cdot X^{n+m+s+t}+ \bm{v}(X)\cdot X^{2n+m+2s+t}}}\\
    &= \mathrm{coeff}((f(\bm{x}))(X))\cdot X^{n+m+s+t}+\mathrm{coeff}((f(\bm{x})(X))\cdot X^{3n+3m+3s+2t})\\
    &+\bm{v}(X)\cdot X^{2n+1m+2s+t}+\bm{v}(X)\cdot X^{4n+3m+4s+2t}\\
    &=\shiftDown((f(\bm{x})(X)),n+m+s+t)\\
    &+\shiftDown((f(\bm{x})(X)),3n+3m+3s+2t)\\
    &+\shiftDown(\bm{v},2n+m+2s+t)+\shiftDown(\bm{v}, 4n+3m+4s+2t).
\end{align*}

   \[
    \bm{z}_6 = \bm{b}_1^6 \odot \paren{\bm{h}^6 \ast \bm{z}_5}
     =
    \begin{pmatrix}
        \bm{0}^{(n+s)\times\primDim}\\
        \hline\\
        \bm{0}^{(m+t)\times\primDim}\\
        \hline\\
        \bm{0}^{(n+s)\times\primDim}\\\\
        \hline\\
        \bm{0}^{(m-\vSize)\times\primDim}\\
        \hline\\
        \bm{1}^{r\times\primDim}\\
        \hline\\
        \bm{1}^{(n+s))\times\primDim}\\\\
        \hline\\
        \bm{1}^{(m-\vSize)\times\primDim}\\
        \hline\\
        \bm{0}^{\vSize\times\primDim}\\
    \end{pmatrix} \odot
    \begin{pmatrix}
        \bm{0}^{(n+s)\times\primDim}\\
        \hline\\
        \bm{0}^{(m+t)\times\primDim}\\
        \hline\\
        f(\bm{x})\\
        \hline\\
        \bm{v}_2\\
        \hline\\
        \bm{v}_1\\
        \hline\\
        f(\bm{x})\\
        \hline\\
        \bm{v}_2\\
        \hline\\
        \bm{v}_1\\
    \end{pmatrix} = 
    \begin{pmatrix}
        \bm{0}^{(n+s)\times\primDim}\\
        \hline\\
        \bm{0}^{(m+t)\times\primDim}\\
        \hline\\
        \bm{0}^{(n+s)\times\primDim}\\\\
        \hline\\
        \bm{0}^{(m-\vSize)\times\primDim}\\
        \hline\\
        \bm{v}_1\\
        \hline\\
        f(x)\\
        \hline\\
        \bm{v}_2\\
        \hline\\
        \bm{0}^{r\times\primDim}\\
    \end{pmatrix}.
    \]

Finally we get, 
\[\bm{z}_7 \gets \shiftUp(\bm{z}_6, 2n+2m+2s+t-r)\] The final output of this layer is:

\[    \bm{z}_7 \gets
    \begin{pmatrix}
        \bm{v}_1\\
        \hline\\
        f(x)\\
        \hline\\
        \bm{v}_2\\
        \hline\\
        \bm{0}\\
        \hline
        \vdots\\
        \hline\\ 
        \bm{0}
    \end{pmatrix},
    \]
which is our final output.
\end{proof}

\begin{corollary}\label{remember:extended}
Let $\bm{y}$ be as in \Cref{prop: prim-remember} but now let $f$ be implemented with $\BaseConv(\primLength, L, \primDim, \primLength, \primDim)$. Then $\primRem(\bm{y}, r, t, f)$ where $t-r=n$ can be implemented with $\BaseConv(\primLength, O(L), \primDim, \primLength, \primDim)$.
\end{corollary} 
\begin{proof}
        The Remember primitive can be used to implement any number of BaseConv layers. As shown, the remember primitive can perform a BaseConv operation on a portion of a matrix while maintaining the values of the rest. This output matrix can then be fed through another remember primitive repeatedly such that any number of BaseConv layers can be performed through remember. 
\end{proof}

\begin{definition}
Being that primitives 1-7 can be implemented using BaseConv layers and that remember can apply BaseConv to a continuous subsection of a matrix, we can implement these primitives on subsections of any matrix "through" remember. This will be represented as \texttt{Remember($i,j,f$)} where $i$ and $j$ are the start and end rows that will be effected, respectively, and $f$ is the function which will be applied to them. 
\end{definition}

\subsubsection{Proof of \Cref{MQAR:math}}
We first mathematically state the major steps, after which we will show how to implement each step using BaseConv layers. 
\begin{enumerate}
    \item $\ip \mQ, \mK, \mV$. \\
    $\op \mQ, \mK', \mV$ where $\mK' \in \R^{\inputLength \times \modelDim\bN}$ is defined below. \\
    In steps 1.1 and 1.2, for $1\leq i\leq\inputLength$, replace each $1$ in $\mK[i,:]$ by $\bin(i)^{\top}$ to get $\overline{\mK}\in\R^{\inputLength\times\modelDim\bN}$. Then in step 1.3 then we compute ${\mK}'$ where every row is the sum of all previous rows and itself in $\overline{\mK}$. 
    \begin{enumerate}[label=1.\arabic*]
        \item $\ip \mQ, \mK, \mV$. \\
    $\op \mQ, \overline{\mK}', \mV$ where $\overline{\mK}' \in \R^{\inputLength \times \modelDim\bN}$ is defined below. \\
    \[\overline{\mK}':=\primRepCol(\mK,\bN).\]
    \item $\ip \mQ, \overline{\mK}', \mV$. \\
    $\op \mQ, \overline{\mK}, \mV$ where $\iMap\in\R^{\inputLength\times\modelDim\bN}, \overline{\mK}' \in \R^{\inputLength \times \modelDim\bN}$ are defined below. \\
    \begin{equation}\label{equation: bin rep}
    {\iMap}[i,j\bN+1:(j+1)\bN] := \bin(i)^{\top}   \text{ for all } 1 \leq i\le N\text{ and } 1\leq j\le\modelDim.\end{equation}
    \[\overline{\mK}:=\iMap\odot\overline{\mK}'.\]

    \item $\ip \mQ, \overline{\mK}, \mV$. \\
    $\op \mQ, \mK', \mV$ where $\mK' \in \R^{\inputLength \times \modelDim\bN}$ is defined below. \\
    \[\mK'[i,:] := \sum_{j=1}^{i} \overline{\mK}[j,:]  \text{ for all } 1 \leq i\leq N.\]
    \end{enumerate}

    \item $\ip \mQ, \mK', \mV$. \\
    $\op \overline{\mM}, \mV$ where $\overline{\mM}\in\R^{\inputLength \times \modelDim\bN}$ is defined as follows. \\ In steps 2.1-2.3, compute $\overline{\mM}[i,:]$ so that $\overline{\mM}[i,1:\bN]=\bin(j)^\top$ where $\mQ[i,:]=\mK[j,:]$ for every $1\leq i\leq\inputLength.$ Note that by assumption (ii) only one such $j$ exists.

    \begin{enumerate}[label=2.\arabic*]
        \item $\ip \mQ, \mK', \mV. \\\op \mQ', \mK', \mV.\\$Compute $\mQ'\in\R^{\inputLength\times\modelDim\bN}$ to be $\mQ$ with each column repeated $\bN$ times.
        \begin{align*}
            \mQ'=\primRepCol(\mQ, \bN).
        \end{align*}
        \item $\ip \mQ', \mK', \mV. \\\op \overline{\mM}', \mV.\\$ Compute $\overline{\mM}'$ as it is defined below:
            \[\overline{\mM}'=\mQ'\odot {\mK}'.\]
    Some column block of $\overline{\mM}'$ holds $\bin(j)$ such that $\mQ[i,:]$ matches $\mK[j,:]$, we now move it to the first column block in step 2.3.
    \item $\ip \overline{\mM}', \mV.\\ \op \overline{\mM}, \mV.$ Compute $\overline{\mM}$ as it is defined below:
    \[\overline{\mM}=\primSumColBlo(\overline{\mM}', \bN).\]
    \end{enumerate}
    
    The first $\bN$ entries in the $i$'th row $\overline{\mM}$ holds $\bin(j)$ such that $\mQ[i,:]$ matches $\mK[j,:]$.

\item $\ip \overline{\mM}, \mV$.\\
$\op \mL, \mV$ where $\mL \in \R^{\inputLength\times\inputLength}$. \\
Compute $\mL$ from $\overline{\mM}$ as defined below:
\[\mL=\mC \odot (\mQ\mK^{\top}).\]

Compute $\mL\in\R^{\inputLength\times\inputLength}$ from $\overline{\mM}$ such that the binary representation of $j$ in the $i$'th block is converted into $1$-hot encoding of $j$ in the $i'$th row of $\overline{\mM}$. Define
\[\mL \gets \primHot(\overline{\mM})\]




So we now have computed $\mL = \mC \odot (\mQ\mK^{\top})$. All that's left to do is to compute $\mL \times \mV$. While mathematically this is a simple operation, we will implement this in multiple steps so that it is easy to implement this with BaseConv layers on input $\begin{pmatrix}
    \mL\\
    \hline
    \mV
\end{pmatrix}$.

\item $\ip \mL, \mV$.\\ $\op \mL, \overline{\mV}$, where $\overline{\mV} \in \R^{\inputLength\bb\times \modelDim}$ is defined below.\\
In steps 4.1 and 4.2, compute $\overline{\mV}$ from $\mV$ where the $v$'th column holds the information for $\bin(v)\in \{0,1\}^{\bb}$, for every $1\leq v\leq d:$
\begin{enumerate}[label=4.\arabic*]
    \item $\ip \mL, \mV$.\\ $\op \mL, \overline{\mV}_1$, where $\overline{\mV}_1 \in \R^{\inputLength\bb\times \modelDim}$ is defined below.\\
    \[\overline{\mV}_1:=\primRepMat(\mV, \bb).\]
    \item $\ip \mL, \overline{\mV}_1$.\\ $\op \mL, \overline{\mV}$, where $\overline{\mV}_1 \in \R^{\inputLength\bb\times \modelDim}$ is defined below. First, define $\iMap'\in^{\inputLength\bb\times\modelDim}$ for $1\leq i\leq\bb, 1\leq k\leq\inputLength, 1\leq j\leq\modelDim$ as:\\

    \begin{equation}\label{eq: possible binary}
    \iMap'[(i,k),j] \equiv 
    \begin{cases}
        1 & \text{if } j\mod2^{i}\geq2^{i-1}\\
        0 & \text{otherwise.}
    \end{cases}
    \end{equation}
    Then
    \[\overline{\mV}:=\overline{\mV}_1\odot\iMap'.\]
\end{enumerate}

\item $\ip \mL, \overline{\mV}$.\\ $\op \mL, \mV'$, where $\mV' \in \R^{\inputLength\bb \times \modelDim}$ is defined below.\\
Compute $\mV'$ from $\overline{\mV}$ such that all the non-zero encodings of columns of $\overline{\mV}$ are moved to $1$st column and the other columns are zeroed out, specifically for $1\leq i\leq\inputLength\bN$:

    \[ \mV'[:,i] := \begin{cases}
        \sum_{j=1}^{\modelDim} \overline{\mV}[:,j] & \text{if } i=1 \\
        \bm{0}^{\inputLength\bb} &\text{otherwise}.
    \end{cases}
    \]
The summation of each column of $\overline{\mV}$ performs this desired move action because each row of $\overline{\mV}$ has at most 1 non zero value as each row of $\iMap'$ is a 1-hot encoding.
\item $\ip \mL, \mV'$.\\ $\op \mL, \mV_1$ where $\mV_1\in\R^{\inputLength\times\inputLength\bb}$ is defined below. \\
Compute $\mV_1$ from $\mV'$ in steps 6.1-6.6 such that for $1\leq k\leq\inputLength$ and $\bm{W}\in\R^{\inputLength\bb\times\inputLength\bb}$:
    \[\bm{W}[i,j] =  
    \begin{cases}
        1 & \text{if } i=\paren{\paren{(j-1)N+\left\lfloor\frac{i-1}{N}\right\rfloor}\mod{\inputLength\bb}}+1\\
        0 & \text{otherwise.}
    \end{cases}
    \]
\[ \mV_1[k,:] := (\mV'[:,1]^{\top})\bm{W}.\]
$\bm{W}$ is a permutation matrix which reorders the values such that values relating to the same index rather than values relating to the same copy of the matrix, made in step 4.1, are adjacent. 

\begin{enumerate}[label=6.\arabic*]
\item  $\ip \mL, \mV'$.\\ $\op \mL, \mV_2.\\$ Compute $\mV_2\in\R^{\inputLength\bb\times\inputLength\bb}$ from $\mV'$ where $\mV_2$ has the $1$st column of $\mV'$ repeated $\inputLength\bb$ times, as defined below for $1\leq j\leq\inputLength\bb:$
\[\mV_{2}[:,j] = \mV'[:,1]\]
\item $\ip \mL,  \mV_2$.\\ $\op \mL, \mV_3.\\$ Compute $\mV_3\in\R^{\inputLength\bb\times\inputLength\bb}$ from $\mV_2$ by zeroing out all but the diagonal:
\[ \mV_3=\bm{I}^{\inputLength\bN\times\inputLength\bN}\odot \mV_2. \]
\item $\ip \mL,  \mV_3$.\\ $\op \mL, \mV_4.\\$ Compute $\mV_4\in\R^{\inputLength\bb\times\inputLength\bb}$ from $\mV_3$ and $\bm{W}$ such that the values are permuted such that the values grouped by which row they represent, rather than the matrix repeat number they were in. Where $\mV_4$ is defined as: 

    \[\mV_4= \mV_3 \bm{W}.\]

\item $\ip \mL, \mV_4$.\\ $\op \mL, \mV_5.\\$ Compute $\mV_5\in\R^{\inputLength\bb\times\inputLength\bb}$ from $\mV_4$ such that $\mV_{5}[1,:]$ is the sum of all rows by making $\mV_{5}[\inputLength\bb,:]$ the sum of all rows of $\mV_{4}$ followed by zeroing the rest of rows. I.e. for all $1\leq i,j\leq\inputLength\bb:$
\[
\mV_5[i,:] :=
\begin{cases}
    \sum_{k=1}^{N\bb} \mV_4[k,:] &\text{if }i = \inputLength\bb\\
    \bm{0} &\text{otherwise}.
\end{cases}
\]

\item $\ip \mL, \mV_5$.\\ $\op \mL, \mV_6.\\$ Compute $\mV_6\in\R^{\inputLength\bb\times\inputLength\bb}$ from $\mV_5$ such that $\mV_{6}[1,:]=\mV_{5}[\inputLength\bb,:]$ and the rest of the rows be zeroed out. I.e. for all $1\leq i,j\leq\inputLength\bb:$
\[
\mV_6[i,:] :=
\begin{cases}
    \mV_5[\inputLength\bb,:] &\text{if }i = 1\\
    \bm{0} &\text{otherwise}.
\end{cases}
\]

\item $\ip \mL,  \mV_6$.\\ $\op \mL, \mV_1.\\$ Compute $\mV_{1}$ from $\mV_{6}$ by copying $\mV_{6}[1,:]$ to rest of the rows.
\begin{align*}
    \mV_{1}[i,:] := \mV_{6}[1,:] &\text{ for all }1\leq i\leq N.
\end{align*}
At this point, each row of $\bm{V}_1$ now has the same values as the first column of $\bm{V}'$ permuted in the way that was stated in 6.3.
\end{enumerate}

\item $\ip \mL, \mV_1$.\\ $\op \overline{\mL}$ where $\overline{\mL}\in R^{\inputLength\times\inputLength\bN}$ is defined below.\\
Compute $\overline{\mL}$ such that the single $1$ in $\mL [i,:]$ (say at position $0\leq j<N$) is replaced by $\bin(j)^{\top}$. In other words, the $i$'th query were to match a key in position $j$, if the $i$'th row of $\overline{\mL}$ would have a representation of the matching value at the $j$'th block.
\begin{enumerate}[label=7.\arabic*]
    \item $\ip \mL, \mV_1$.\\ $\op \mL',  \mV_1\\$ Compute $\mL'\in R^{\inputLength\times\inputLength\bb}$ by repeating each column of $\mL$ $\bb$ times:
    \begin{align*}
    \mL'[:,j]=\mL\left[:, \left\lceil\dfrac{j}{\bb}\right\rceil\right] & \text{ if } 1\leq j\leq\inputLength\bb.
    \end{align*}
    \item $\ip \mL', \mV_1$.\\ $\op \overline{\mL}\\$Compute $\overline{\mL}\leftarrow \mL' \odot \mV_{1}$.
\end{enumerate}

\item $\ip \overline{\mL}.\\\op \overline{\mL}_1=\mL\times\mV,$ the final output.
\begin{enumerate}[label=8.\arabic*]
    \item $\ip\overline{\mL}.\\\op\overline{\mL}_2\in\R^{\inputLength\times(\inputLength\times\bb)}$ by summing up $\bb$ chunks of columns of $\overline{\mL}$ and store the result in $1$st block column and zero out remaining columns to get $\overline{\mL}_{2}$, specifically for $1\le j \le\bb$:
\[
\overline{\mL}_2[:,(i,j)] \equiv
\begin{cases}
    \sum_{k=1}^{\inputLength} \mL[:,(k,j)] &\text{if }i=1\\
    \bm{0}^{\inputLength\times\bb} &\text{otherwise}.
\end{cases}
\]
    \item $\ip\overline{\mL}_2\\\op\overline{\mL}_1$ by replacing the binary representation in $1$st block column by corresponding $1$-hot encoding. Compute $\overline{\mL}_1$ from $\overline{\mL}_2$ such that 
    for $1\le i \le\inputLength,$ $\overline{\mL}_{1}[i,:]=\bm{e}_{\ell}^{(d)}$ where $\overline{\mL}_{2}[i,(1,:)]=\bin(\ell)^{\top}.$
\end{enumerate}
\end{enumerate}

Next, using the primitives defined in \Cref{prims}, we show how each step above would be implemented with $\baseconv$ layers. Instead of 3 separate matrices as shown in the math layout, we will have a single matrix, $\qkv\in\R^{3\inputLength\times\modelDim}$ which contains the information of all three. 
\[\qkv =\    
        \begin{pmatrix}
        \mQ\\
        \hline\\
        \mK\\
        \hline\\
        \mV\\
    \end{pmatrix}.\]


The internal dimension will be $(4\combDim\log(\combDim),\combDim\log(\combDim))$. For notational convenience, we define \[\posSize=\combDim\log(\combDim).\] Specifically, the input $\bm{Y}\in\R^{3\posSize\times\modelDim}$ has the top left matrix of $3\inputLength\times\modelDim$ holding
$\begin{pmatrix}
    \bm{Q}\\\bm{K}\\\bm{V}
\end{pmatrix}$.

Define
\[\qkv_0\gets\baseconv(\qkv, \bm{0}^{4\posSize\times\posSize}, \bm{b}_1^0, \bm{h}_0, \bm{0}^{4\posSize\times\posSize}),\]

where $\bm{b}_1^0\in\R^{4\posSize\times\posSize}$ and $\bm{h}_0\in\R^{4\posSize\times\posSize}$ are defined as:

\[\bm{b}_1^0\gets\begin{pmatrix}
    \bm{1}^{\inputLength\times\posSize}\\
    \hline\\
    \bm{0}^{(\posSize-\inputLength)\times\posSize}\\
    \hline\\
    \bm{1}^{\inputLength\times\posSize}\\
    \hline\\
    \bm{0}^{(\posSize-\inputLength)\times\posSize}\\
    \hline\\
    \bm{1}^{\inputLength\times\posSize}\\
    \hline\\
    \bm{0}^{(\posSize-\inputLength)\times\posSize}\\
    \hline\\
    \bm{0}^{\inputLength\times\posSize}\\
    \hline\\
    \bm{0}^{(\posSize-\inputLength)\times\posSize}\\

\end{pmatrix},
\bm{h}_0\gets\begin{pmatrix}\\
    \bm{e}_1^{(\posSize-\inputLength)}\\\\
    \hline\\\\
    \bm{e}_1^{(\posSize-\inputLength)}\\\\
    \hline\\\\
    \bm{e}_1^{(\posSize-\inputLength)}\\\\
    \hline\\\\
    \bm{0}^{\posSize+3\inputLength}\\\\
\end{pmatrix}.
\]

The vectors that make up the convolution have dimension $\posSize-\inputLength$ because $\mK$ and $\mV$ are not top left justified in the original matrix. This produces a matrix where $\bm{Q,K}$ and $\bm{V}$ all sit in the top left position in their own sub-matrix surrounded by zeros. The structure is:
\[\qkv_0 \equiv\    
        \begin{pmatrix}
        \mQ^{p}\\
        \hline\\
        \mK^{p}\\
        \hline\\
        \mV^{p}\\
        \hline\\
        \bm{0}^{\posSize\times\posSize}
    \end{pmatrix}.\]
where
\[
\mQ^{p}=\begin{pmatrix}
    \bm{Q}, \bm{0}^{\inputLength\times(\posSize)}\\
    \bm{0}^{(\posSize-\inputLength)\times(\posSize)}
\end{pmatrix},
\mK^{p}=\begin{pmatrix}
    \bm{K}, \bm{0}^{\inputLength\times(\posSize)}\\
    \bm{0}^{(\posSize-\inputLength)\times(\posSize)}
\end{pmatrix},
\mV^{p}=\begin{pmatrix}
    \bm{V}, \bm{0}^{\inputLength\times(\posSize)}\\
    \bm{0}^{(\posSize-\inputLength)\times(\posSize)}
\end{pmatrix}.
\]

\begin{enumerate}
    \item Compute $\qkv_1$ in steps 1.1-1.2.
    \begin{enumerate}[label=1.\arabic*]
        \item Compute $\qkv_1'\gets$ \texttt{remember($\qkv_0, z+1, 2\posSize, f_1$)} where $f_1$ is defined as:
        \[\texttt{repeat-columns$(\overline{\qkv}_0, \bN)$}.\]
        This results in 
        \[\qkv_1' \equiv\    
        \begin{pmatrix}
        \mQ^{p}\\
        \hline\\
        \overline{\mK}'\\
        \hline\\
        \mV^{p}\\
        \hline\\
        \bm{0}^{\posSize\times\posSize}
    \end{pmatrix}.\]
        This repeats the columns of $\mK$ $\bb$ times and doesn't change $\overline{\mQ}$ or $\overline{\mV}$ with $\BaseConv(4\posSize, O(1), \posSize, 4\posSize, \posSize)$ via \Cref{remember:extended}.
        \item Compute $\qkv_{1}''\gets$ \texttt{remember($\qkv'_1,\posSize+1, 2\posSize ,f'_1$)} where $f'_1$ is defined as:
        \[\baseconv (\overline{\qkv}'_1, \bm{I}^{\posSize\times\posSize}, \bm{0}^{\posSize\times\posSize}, \bm{0}^{\posSize\times\posSize}, \iMap).\]
        Where $\iMap$ is defined as it was in \eqref{equation: bin rep}. This results in
        \[\qkv_1'' \equiv\    
        \begin{pmatrix}
        \mQ^{p}\\
        \hline\\
        \overline{\mK}\\
        \hline\\
        \mV^{p}\\
        \hline\\
        \bm{0}^{\posSize\times\posSize}
    \end{pmatrix}.\]

        This replaces each $1$ in $\mK$ with the binary representation of the row with $\BaseConv(4\posSize, O(1), \posSize, 4\posSize, \posSize)$ via \Cref{prop: prim-remember}.
        \item Compute $\qkv_{1}\gets$ \texttt{remember($\qkv''_1,\posSize+1, 2\posSize ,f''_1$)} where $f''_1$ is defined as:
        \[\primCumSum(\overline{\qkv}''_1).\]
        This results in 
        \[\qkv_1 \equiv\    
        \begin{pmatrix}
        \mQ^{p}\\
        \hline\\
        \mK'\\
        \hline\\
        \mV^{p}\\
        \hline\\
        \bm{0}^{\posSize\times\posSize}
    \end{pmatrix}.\]
        \end{enumerate}

    \item Compute $\qkv_2$ in steps 2.1-2.3.
    \begin{enumerate}[label=2.\arabic*]
        \item Compute $\qkv'_2\gets$ \texttt{Remember($\qkv_1, 1, \posSize, f_2$)} where $f_2$ is defined as:
        \[\texttt{repeat-columns($\overline{\qkv}_1,\bN$)}.\]
        This results in 
        \[\qkv_2' \equiv\    
        \begin{pmatrix}
        \mQ'\\
        \hline\\
        \mK'\\
        \hline\\
        \mV^{p}\\
        \hline\\
        \bm{0}^{\posSize\times\posSize}
    \end{pmatrix}.\]
        This repeats the columns of $\mQ$ $\bb$ times with $\BaseConv(4\posSize, O(1), \posSize, 4\posSize, \posSize)$ via \Cref{remember:extended}.
        \item Compute $\qkv_2$ as the Hadamard product of the first and second position stored in the second position. This can be done with the following substeps:
        \[\qkv_{2}\gets \baseconv (\overline{\qkv}'_2, \bm{I}^{\posSize\times\posSize}, \bm{0}^{4\posSize\times\posSize}, \bm{h}^{2}, \bm{0}^{4\posSize\times\posSize}).\]
        Where $\bm{h}^2\in\R^{4\posSize\times\posSize}$ is defined as:
    \[
    \bm{h}^{2}
    \equiv
    \begin{pmatrix}
        \bm{0}^{\posSize\times\posSize}\\
        \hline\\
        \bm{e}_1^{(\posSize)}\\
        \hline\\
        \bm{0}^{\posSize\times\posSize}\\
        \hline\\
        \bm{0}^{\posSize\times\posSize}\\
    \end{pmatrix}.
    \]
This layer computes:
        \[
    \qkv''_2 = \qkv'_2 \odot \paren{\bm{h}^2 \ast \qkv'_2}
     =
    \begin{pmatrix}
        \mQ'\\
        \hline\\
        \mK'\\
        \hline\\
        \mV\\
        \hline\\
        \bm{1}^{\posSize}\\
    \end{pmatrix} \odot
    \begin{pmatrix}
        \bm{0}^{\posSize\times\posSize}\\
        \hline\\
        \mQ'\\
        \hline\\
        \mK'\\
        \hline\\
        \mV\\
    \end{pmatrix} \equiv 
    \begin{pmatrix}
        \bm{0}^{\posSize\times\posSize}\\
        \hline\\
        \mQ'\odot\mK'\\
        \hline\\
        \mK' \odot\mV\\
        \hline\\
        \mV\\
    \end{pmatrix}.
    \]

Next, we mask out the unnecessary position using
 \[\qkv^3_{2}\equiv \baseconv (\overline{\qkv}''_2, \bm{b}_1^{2}, \bm{0}^{4\posSize\times\posSize}, \bm{e}_{1}^{4\posSize\times\posSize}, \bm{0}^{4\posSize\times\posSize}).\]
Where $\bm{b}_1^{2}\in\R^{4\posSize\times\posSize}$ is defined as:
\[
    \bm{b}_1^{2} 
    \equiv
    \begin{pmatrix}
        \bm{0}^{\posSize\times\posSize}\\
        \hline\\
        \bm{1}^{\posSize\times\posSize}\\
        \hline\\
        \bm{0}^{\posSize\times\posSize}\\
        \hline\\
        \bm{1}^{\posSize\times\posSize}\\
    \end{pmatrix}.
    \]
This layer computes:
        \[
    \qkv^3_2 = \bm{b}_1^2 \odot \qkv''_2
     =
    \begin{pmatrix}
        \bm{0}^{\posSize\times\posSize}\\
        \hline\\
        \bm{1}^{\posSize\times\posSize}\\
        \hline\\
        \bm{0}^{\posSize\times\posSize}\\
        \hline\\
        \bm{1}^{\posSize}\\
    \end{pmatrix} \odot
    \begin{pmatrix}
        \bm{0}^{\posSize\times\posSize}\\
        \hline\\
        \mQ'\odot\mK'\\
        \hline\\
        \mK' \odot\mV\\
        \hline\\
        \mV\\
    \end{pmatrix} \equiv
    \begin{pmatrix}
        \bm{0}^{\posSize\times\posSize}\\
        \hline\\
        \mQ'\odot\mK'\\
        \hline\\
        \bm{0}^{\posSize\times\posSize}\\
        \hline\\
        \mV\\
    \end{pmatrix} \equiv
    \begin{pmatrix}
        \bm{0}^{\posSize\times\posSize}\\
        \hline\\
        \overline{\mM}'\\
        \hline\\
        \bm{0}^{\posSize\times\posSize}\\
        \hline\\
        \mV\\
    \end{pmatrix}.
    \]

Finally, we shift up:
\[\qkv^4_2 \gets \texttt{shift-up($\qkv^3_2$, $\posSize$)}.\]
This results in: 
\[  \qkv^4_2 =
    \begin{pmatrix}
        \overline{\mM}'\\
        \hline\\
        \bm{0}^{\posSize\times\posSize}\\
        \hline\\
        \mV\\
        \hline\\
        \bm{0}^{\posSize\times\posSize}\\
    \end{pmatrix}.
\]
This was done using $\BaseConv(4\posSize, O(1), \posSize, 4\posSize, \posSize)$ via the vanilla BaseConv layers and \Cref{prop: prim-shift}.

\item Sum up block columns in the first position to move the binary representations to the first column blocks through $\primRem(\qkv^4_2, 1, \posSize, f_2).$ Where $f_2$ is defined as:
\[\primSumColBlo(\overline{\qkv}_2^4, \bN).\]
This results in: 
\[  \qkv_2 \equiv
    \begin{pmatrix}
        \overline{\mM}\\
        \hline\\
        \bm{0}^{\posSize\times\posSize}\\
        \hline\\
        \mV\\
        \hline\\
        \bm{0}^{\posSize\times\posSize}\\
    \end{pmatrix}.
\]
    \end{enumerate}
    Through this step, we have computed $\overline{\mM}$ with $\BaseConv(4\posSize, O(1), \posSize, 4\posSize, \posSize)$ via \Cref{remember:extended}..
\item Compute $\qkv_3\gets$ \texttt{Remember($\qkv_2, 1, \posSize$, $f_3$)} where $f_3$ is defined by:
\[\texttt{\primHot($\overline{\qkv}_2$)}\]
This step was computed with $\BaseConv(4\posSize, O(\lceil\log\log\overline{N}\rceil, \posSize, 4\posSize, \posSize)$ via \Cref{remember:extended}.
This converts $\overline{\bm{M}}$ to be 1-hot encoded in $O(\combDim\log\combDim)$ BaseConv layers. This results in:
\[ \qkv_3 \equiv
    \begin{pmatrix}
        \mL\\
        \hline\\
        \bm{0}^{\posSize\times\posSize}\\
        \hline\\
        \mV\\
        \hline\\
        \bm{0}^{\posSize\times\posSize}\\
    \end{pmatrix}.
\]
We move this binary representation to the first column block with $\BaseConv(4\posSize, O(1), \posSize, 4\posSize, \posSize)$ via \Cref{prop: prim-shift}.

\item Compute $\qkv_4$ in steps 4.1 and 4.2.
\begin{enumerate}[label=4.\arabic*]
    \item Compute $\qkv'_4 \gets$ \texttt{remember($\qkv_3, 2\posSize+1, 3\posSize, f_4'$)} where $f_4'$ is defined as:
    \[\texttt{repeat-matrix($\overline{\qkv}_3, \bb)$}\]
     This results in:
\[
    \begin{pmatrix}
        \mL\\
        \hline\\
        \bm{0}^{\posSize\times\posSize}\\
        \hline\\
        \overline{\mV}'\\
        \hline\\
        \bm{0}^{\posSize\times\posSize}\\
    \end{pmatrix}.
\]

    We repeat $\bm{V}$ matrix $\bb$ times with $\BaseConv(4\posSize, O(1), \posSize, 4\posSize, \posSize)$ via \Cref{remember:extended}.
    \item Compute $\qkv_4\gets$ \texttt{remember($\qkv'_4, 2\posSize+1, 3\posSize, f_4$)} where $f_4$ is defined as:
    \[\texttt{BaseConv($\overline{\qkv}'_4, \bm{I}^{\posSize\times\posSize}, \bm{0}^{\posSize\times\posSize}, \bm{0}^{\posSize\times\posSize}, \iMap')$}.\]
    Where $\iMap'$ is defined as it was in \eqref{eq: possible binary}.

    This results in:
\[
    \begin{pmatrix}
        \mL\\
        \hline\\
        \bm{0}^{\posSize\times\posSize}\\
        \hline\\
        \overline{\mV}\\
        \hline\\
        \bm{0}^{\posSize\times\posSize}\\
    \end{pmatrix}.
\]

This step can be done with $\BaseConv(4\posSize, O(1), \posSize, 4\posSize, \posSize)$ via \Cref{prop: prim-remember}.

\end{enumerate}

\item Compute $\qkv_5\gets$ \texttt{remember($\qkv_4, 2\posSize+1, 3\posSize, f_5$)})where $f_5$ is defined as:
\[\texttt{sum-all-columns($\overline{\qkv}_4$)} \]
This results in:
\[  \qkv_5\equiv
    \begin{pmatrix}
        \mL\\
        \hline\\
        \bm{0}^{\posSize\times\posSize}\\
        \hline\\
        \mV'\\
        \hline\\
        \bm{0}^{\posSize\times\posSize}\\
    \end{pmatrix}.
\]
Now each row of $\bm{V}'$ has that one moved to the first column if it existed with $\BaseConv(4\posSize, O(1), \posSize, 4\posSize, \posSize)$ via \Cref{remember:extended}.

\item Compute $\qkv_6$ through in steps 6.1-6.6.
\begin{enumerate}[label=6.\arabic*]
\item  Compute $\qkv'_6\gets$ \texttt{remember($\qkv_5, 2\posSize+1, 3\posSize, f_6'$)} where $f_6'$ is defined as:
\[\texttt{repeat\_columns($\overline{\qkv}_5, \inputLength \bb$)}.\]
This results in:
\[  \qkv_6'\equiv
    \begin{pmatrix}
        \mL\\
        \hline\\
        \bm{0}^{\posSize\times\posSize}\\
        \hline\\
        \mV_2\\
        \hline\\
        \bm{0}^{\posSize\times\posSize}\\
    \end{pmatrix}.
\]
Here we repeat the columns of $\bm{V}'$ with $\BaseConv(4\posSize, O(1), \posSize, 4\posSize, \posSize)$ via \Cref{remember:extended}.

\item  Compute $\qkv''_6\gets$ \texttt{remember($\qkv'_6, 2\posSize+1, 3\posSize, f_6''$)} where $f_6''$ is defined as:
 \[\texttt{BaseConv($\overline{\qkv}'_6, \bm{0}, \bm{I}^{\posSize\times\posSize}, \bm{e}_1^{(z)}, \bm{0}^{})$}.\]
 
This results in:
\[  \qkv_6''\equiv
    \begin{pmatrix}
        \mL\\
        \hline\\
        \bm{0}^{\posSize\times\posSize}\\
        \hline\\
        \mV_3\\
        \hline\\
        \bm{0}^{\posSize\times\posSize}\\
    \end{pmatrix}.
\]
 
Here we zeroed out everything except the main diagonal in $\bm{V}$ with $\BaseConv(4\posSize, O(1), \posSize, 4\posSize, \posSize)$ via \Cref{prop: prim-remember}. 

\item Compute $\qkv^3_6\gets$ \texttt{remember($\qkv_6'', 2\posSize+1, 3\posSize, f_6^3$)} where $f_6^3$ is defined as:
 \[\texttt{BaseConv($\overline{\qkv}''_6, \bm{W_6}, \bm{0}^{4\posSize\times\posSize}, \bm{0}^{4\posSize\times\posSize}, \bm{1}^{4\posSize\times\posSize})$}.\]
Where $\bm{W}_6$ is defined as it was in math step 6.

This results in:
\[  \qkv_6^3\equiv
    \begin{pmatrix}
        \mL\\
        \hline\\
        \bm{0}^{\posSize\times\posSize}\\
        \hline\\
        \mV_4\\
        \hline\\
        \bm{0}^{\posSize\times\posSize}\\
    \end{pmatrix}.
\]

Here, because we can only repeat whole matrices and not each row, we are permuting the values on the main diagonal to reorganize them to be as if they were repeated each row at a time with $\BaseConv(4\posSize, O(1), \posSize, 4\posSize, \posSize)$ via \Cref{remember:extended}.

\item Compute $\qkv^4_6\gets$ \texttt{remember($\qkv^3_6, 2\posSize, 3\posSize-1, f_6^4$)} where $f_6^4$ is defined as:
\[ \texttt{cumulative\_sum($\overline{\qkv}^3_6$)}.\]

This results in:
\[  \qkv_6^4\equiv
    \begin{pmatrix}
        \mL\\
        \hline\\
        \bm{0}^{\posSize\times\posSize}\\
        \hline\\
        \mV_5\\
        \hline\\
        \bm{0}^{\posSize\times\posSize}\\
    \end{pmatrix}.
\]

Here, we use $\primCumSum$ so that the final row in the matrix stores each value in each column with $\BaseConv(4\posSize, O(1), \posSize, 4\posSize, \posSize)$ via \Cref{remember:extended}.

\item Compute $\qkv^5_6\gets$ \texttt{remember($\qkv_6^4, 2\posSize, 3\posSize-1, f_6^5$)} where $f_6^5$ is defined as:
\[\texttt{shift\_up($\overline{\qkv}_6^4, \inputLength\bb-1$)}.\]

This results in:
\[  \qkv_6^5\equiv
    \begin{pmatrix}
        \mL\\
        \hline\\
        \bm{0}^{\posSize\times\posSize}\\
        \hline\\
        \mV_6\\
        \hline\\
        \bm{0}^{\posSize\times\posSize}\\
    \end{pmatrix}.
\]

Here, we shift this final row up to be in the first row with $\BaseConv(4\posSize, O(1), \posSize, 4\posSize, \posSize)$ via \Cref{prop: prim-shift}.

\item Compute $\qkv_6\gets$ \texttt{remember($\qkv_6^5,2\posSize,3\posSize-1, f_6^6$)} where $f_6^6$ is defined as:
\[ \texttt{cumulative\_sum($\overline{\qkv}^5_6$)}.\]

This results in:
\[  \qkv_6\equiv
    \begin{pmatrix}
        \mL\\
        \hline\\
        \bm{0}^{\posSize\times\posSize}\\
        \hline\\
        \mV_1\\
        \hline\\
        \bm{0}^{\posSize\times\posSize}\\
    \end{pmatrix}.
\]

Here we copied the first row to each row in $\bm{V}$ with $\BaseConv(4\posSize, O(1), \posSize, 4\posSize, \posSize)$ via \Cref{remember:extended}.
This results in:
\[ \bm{\qkv_6}\equiv
    \begin{pmatrix}
        \mL\\
        \hline\\
        \bm{0}^{\posSize\times\posSize}\\
        \hline\\
        \mV_1\\
        \hline\\
        \bm{0}^{\posSize\times\posSize}\\
    \end{pmatrix}.
\]
\end{enumerate}
This step is computed with with $\BaseConv(4\posSize, O(1), \posSize, 4\posSize, \posSize)$ via \Cref{remember:extended}.

\item Compute $\qkv_7$ in steps 7.1-7.3.
\begin{enumerate}[label=7.\arabic*]
    \item Compute $\qkv'_7\gets$ \texttt{remember($\qkv_6, 1, \posSize, f_7$)} where $f_7$ is defined as:
    \[ \texttt{repeat\_columns($\overline{\qkv}_6, \bb$)}.\]

    This results in:
    \[ \bm{\qkv_7'}\equiv
        \begin{pmatrix}
            \mL'\\
            \hline\\
            \bm{0}^{\posSize\times\posSize}\\
            \hline\\
            \mV_1\\
            \hline\\
            \bm{0}^{\posSize\times\posSize}\\
        \end{pmatrix}.
    \]
    
    Here we repeat the columns of $\bm{L}$ with $\BaseConv(4\posSize, O(1), \posSize, 4\posSize, \posSize)$ via \Cref{remember:extended}.
    \item We want to Hadamard the first and third position which we can do with:
    \[\qkv''_{7}\equiv \baseconv (\qkv'_7, \bm{I}^{\posSize\times\posSize}, \bm{0}^{4\posSize\times\posSize}, \bm{h}^{7}, \bm{0}^{4\posSize\times\posSize}).\]
        Where $\bm{h}^7\in\R^{\posSize\times\posSize}$ is defined as:
    \[
    \bm{h}^{7} 
    \equiv
    \begin{pmatrix}
        \bm{0}^{\posSize\times\posSize}\\
        \hline\\
        \bm{0}^{\posSize}\\
        \hline\\
        \bm{e}_{1}^{\posSize}\\
        \hline\\
        \bm{0}^{\posSize}\\
    \end{pmatrix}.
    \] 
    Here we Hadamard the $\bm{L}$ and $\bm{V}_1$ together. Which results in:

    \[
        \bm{Y}_7''=\begin{pmatrix}
        \bm{0}^{\posSize\times\posSize}\\
        \hline\\
        \bm{0}^{\posSize\times\posSize}\\
        \hline\\
        \overline{\mL}\\
        \hline\\
        \bm{0}^{\posSize\times\posSize}\\
    \end{pmatrix}\]
This is done with with $\BaseConv(4\posSize, 1, \posSize, 4\posSize, \posSize)$ via a single BaseConv layer.

    \item Finally, we \texttt{shift-up($\qkv''_{7}, 2\posSize$)}.
    This results in:
    \[
    \qkv_7 \equiv
    \begin{pmatrix}
        \overline{\mL}\\
        \hline\\
        \bm{0}^{\posSize}\\
        \hline\\
        \bm{0}^{\posSize}\\
        \hline\\
        \bm{0}^{\posSize}\\
    \end{pmatrix}.
\]
This is done with with $\BaseConv(4\posSize, O(1), \posSize, 4\posSize, \posSize)$ via \Cref{prop: prim-shift}.
\end{enumerate}

\item Convert $\qkv_7$ to the final output.
\begin{enumerate}[label=8.\arabic*]
    \item Here we will move everything to the first block using:
    \[\bm{\qkv}'_8 \equiv\primSumColBlo(\overline{\qkv}_{7}, \bN).\]
    \[ \bm{\qkv_8'}\equiv
    \begin{pmatrix}
        \overline{\mL}_2\\
        \hline\\
        \bm{0}^{\posSize\times\posSize}\\
        \hline\\
        \bm{0}^{\posSize\times\posSize}\\
        \hline\\
        \bm{0}^{\posSize\times\posSize}\\
    \end{pmatrix}.
\]
    
    This is done with with $\BaseConv(4\posSize, 1, \posSize, 4\posSize, \posSize)$ via \Cref{remember:extended}.
    
    \item Compute $\qkv_8\gets$ \texttt{remember($\qkv'_8,1,\posSize, f_8$)} where $f_8$ is defined as:
\[ \texttt{\primHot($\overline{\qkv}'_8$)}.\]
This converts $\overline{\bm{L}}$ to be 1-hot encoded in $O(\combDim\log\combDim)$ BaseConv layers. This results in:
\[ \bm{\qkv_8}\equiv
    \begin{pmatrix}
        \overline{\mL}_1\\
        \hline\\
        \bm{0}^{\posSize\times\posSize}\\
        \hline\\
        \bm{0}^{\posSize\times\posSize}\\
        \hline\\
        \bm{0}^{\posSize\times\posSize}\\
    \end{pmatrix}.
\]
\end{enumerate}
    This is done with with $\BaseConv(4\posSize, O(\lceil\log\log\overline{N}\rceil), \posSize, 4\posSize, \posSize)$ via \Cref{remember:extended}.

\paragraph{Overall cost:} We can solve the MQAR problem with $\BaseConv(4\posSize, O(\lceil\log\log\overline{N}\rceil)+O(1), \posSize, 4\posSize, \posSize)$ by stacking the layers presented above.
\end{enumerate}

\clearpage

\begin{table}[h!]
    \caption{\sysname Training Settings}
    \centering
    \begin{tabular}{rccc}
    \toprule
    {} & 355M & 1.4B  \\
    \midrule
    Optimizer & \multicolumn{3}{c}{Adam} \\
    Optimizer momentum & \multicolumn{3}{c}{$\beta_1, \beta_2=0.9, 0.95$} \\
    Optimizer eps & \multicolumn{3}{c}{$1e-8$} \\
    Precision &  \multicolumn{3}{c}{BFloat16} \\
    \midrule
    Warmup & \multicolumn{3}{c}{1\%} \\
    Learning rate decay & \multicolumn{3}{c}{Cosine} \\
    Learning rate (min, base) & \multicolumn{3}{c}{8e-5, 8e-4} \\
    Global batch size & \multicolumn{3}{c}{256} \\ 
    Weight decay & \multicolumn{3}{c}{0.1} \\
    \midrule
    Num Layers  & 27   & 36   \\
    Hidden Size & 1024 & 1792 \\
    MLP Activation   &\multicolumn{3}{c}{SwiGLU}   \\
    MLP Width   & \multicolumn{3}{c}{2}  \\
    \midrule
    Num. Linear Attn Layers  & 5    & 7   \\
    Num. Linear Attn Heads   & \multicolumn{2}{c}{16}  \\
    Taylor Feature Dimension & \multicolumn{2}{c}{16} \\
    Linear Attn Positional Encodings & \multicolumn{2}{c}{None} \\
    \midrule
    Num. Sliding Window Layers & 5    & 7  \\
    Sliding Window Size & 64 & 16 \\
    Sliding Window Heads & \multicolumn{2}{c}{16} \\
    Sliding Window Positional Encodings & \multicolumn{2}{c}{Rotary} \\
    \midrule
    Num. \BaseConv\ Layers & 17  & 22 \\
    \BaseConv\ Projection Expansion Factor & \multicolumn{2}{c}{4} \\
    \BaseConv\ Filter Size & \multicolumn{2}{c}{3} \\
     \BaseConv\ Activation & \multicolumn{2}{c}{SiLU} \\
    \bottomrule 
    \end{tabular}
    \label{tab:based-training-details}
\end{table}
\begin{table}[h!]
    \caption{Attention Training Settings}
    \centering
    \begin{tabular}{rccc}
    \toprule
    {} & 355M & 1.4B  \\
    \midrule
    Optimizer & \multicolumn{3}{c}{Adam} \\
    Optimizer momentum & \multicolumn{3}{c}{$\beta_1, \beta_2=0.9, 0.95$} \\
    Optimizer eps & \multicolumn{3}{c}{$1e-8$} \\
    Precision &  \multicolumn{3}{c}{BFloat16} \\
    \midrule
    Warmup & \multicolumn{3}{c}{1\%} \\
    Learning rate decay & \multicolumn{3}{c}{Cosine} \\
    Learning rate (min, base) & \multicolumn{3}{c}{8e-5, 8e-4} \\
    Global batch size & \multicolumn{3}{c}{256} \\ 
    Weight decay & \multicolumn{3}{c}{0.1} \\
    \midrule
    Num Layers  & 24    & 36 \\
    Hidden Size & 1024  & 1680 \\
    Num Heads   & 16 & 24  \\
    RMSNorm     & \multicolumn{3}{c}{True} \\
    MLP Bias    & \multicolumn{3}{c}{False}  \\
    Flash Attn  & \multicolumn{3}{c}{True}  \\
    Rotary Emb. Fraction & \multicolumn{3}{c}{0.5}  \\
    MLP Activation & \multicolumn{3}{c}{SwiGLU} \\
    MLP Width & \multicolumn{3}{c}{4}  \\
    \bottomrule 
    \end{tabular}
    \label{tab:attn-training-details}
\end{table}

\begin{table}[h!]
    \caption{Mamba Training Settings}
    \centering
    \begin{tabular}{rccc}
    \toprule
    {} & 355M & 1.4B  \\
    \midrule
    Optimizer & \multicolumn{3}{c}{Adam} \\
    Optimizer momentum & \multicolumn{3}{c}{$\beta_1, \beta_2=0.9, 0.95$} \\
    Optimizer eps & \multicolumn{3}{c}{$1e-8$} \\
    Precision &  \multicolumn{3}{c}{BFloat16} \\
    \midrule
    Warmup & \multicolumn{3}{c}{1\%} \\
    Learning rate decay & \multicolumn{3}{c}{Cosine} \\
    Learning rate (min, base) & \multicolumn{3}{c}{8e-5, 8e-4} \\
    Global batch size & \multicolumn{3}{c}{256} \\ 
    Weight decay & \multicolumn{3}{c}{0.1} \\
    \midrule
    Num Layers   & \multicolumn{3}{c}{46}    \\
    Hidden Size  & 1024    & 2048   \\
    RMSNorm      & \multicolumn{3}{c}{True}  \\
    Norm Epsilon & \multicolumn{3}{c}{$1e-5$}  \\
    Dt State     & \multicolumn{3}{c}{$16$} \\ 
    Dt (Min, Max) & \multicolumn{3}{c}{$(0.001, 0.1)$} \\
    Dt Init. Strategy & \multicolumn{3}{c}{Random} \\
    Dt Init. Floor & \multicolumn{3}{c}{$1e-4$} \\
    Dt Scale & \multicolumn{3}{c}{$1.0$} \\
    Dt Softplus & \multicolumn{3}{c}{True} \\
    Projection Expansion Factor & \multicolumn{3}{c}{2} \\
    Short Conv Filter Size & \multicolumn{3}{c}{4} \\
    \bottomrule 
    \end{tabular}
    \label{tab:mamba-training-details}
\end{table}
\begin{table}[h!]
    \caption{Hyena Training Settings}
    \centering
    \begin{tabular}{rc}
    \toprule
    {} & 355M   \\
    \midrule
    Optimizer                 & Adam \\
    Optimizer momentum        & $\beta_1, \beta_2=0.9, 0.95$ \\
    Optimizer eps             & $1e-8$ \\
    Precision                 &  BFloat16 \\
    \midrule
    Warmup                    & 1\%    \\
    Learning rate decay       & Cosine \\
    Learning rate (min, base) & 8e-5, 8e-4 \\
    Global batch size         & 256 \\ 
    Weight decay              & 0.1  \\
    \midrule
    Num Layers  & 29  \\
    Hidden Size & 1024   \\
    Num Heads & 1 \\
    MLP Width & 2 \\
    Short Conv. Filter Size      & 3 \\
    Exp. Mod. Decay (Fast, Slow) & 0.3, 1.2 \\
    Filter Sine Freq. (w)        & 14 \\
    Filter Order                 & 64 \\
    Filter Inner MLP             & 2 \\
    \bottomrule 
    \end{tabular}
    \label{tab:hyena-training-details}
\end{table}
\begin{table}[h!]
    \caption{Hyena Training Settings}
    \centering
    \begin{tabular}{rc}
    \toprule
    {} & 355M   \\
    \midrule
    Optimizer                 & Adam \\
    Optimizer momentum        & $\beta_1, \beta_2=0.9, 0.99$ \\
    Optimizer eps             & $1e-8$ \\
    Precision                 &  BFloat16 \\
    \midrule
    Warmup                    & 1\%    \\
    Learning rate decay       & Cosine \\
    Learning rate (min, base) & 8e-5, 8e-4 \\
    Global batch size         & 256 \\ 
    Weight decay              & 0.1  \\
    \midrule
    Num Layers  & 24 (No Attention Layers) \\
    Hidden Size & 1024   \\
    Num Heads & 16 \\
    MLP Width & 4 \\
    \bottomrule 
    \end{tabular}
    \label{tab:h3-training-details}
\end{table}
\begin{table}[h!]
    \caption{Hyena Training Settings}
    \centering
    \begin{tabular}{rc}
    \toprule
    {} & 355M   \\
    \midrule
    Optimizer                 & Adam \\
    Optimizer momentum        & $\beta_1, \beta_2=0.9, 0.99$ \\
    Optimizer eps             & $1e-8$ \\
    Precision                 &  BFloat16 \\
    \midrule
    Warmup                    & 1\%    \\
    Learning rate decay       & Cosine \\
    Learning rate (min, base) & 8e-5, 8e-4 \\
    Global batch size         & 256 \\ 
    Weight decay              & 0.1  \\
    \midrule
    Num Layers  & 19  \\
    Hidden Size & 1024   \\
    MLP Width & 3.5 \\
    \bottomrule 
    \end{tabular}
    \label{tab:rwkv-training-details}
\end{table}
\begin{table}[h!]
    \caption{Gated Linear Attention (GLA) Training Settings}
    \centering
    \begin{tabular}{rc}
    \toprule
    {} & 355M   \\
    \midrule
    Optimizer          & Adam \\
    Optimizer momentum & $\beta_1, \beta_2=0.9, 0.95$ \\
    Optimizer eps      & $1e-8$ \\
    Precision          &  BFloat16 \\
    \midrule
    Warmup                    & 1\%    \\
    Learning rate decay       & Cosine \\
    Learning rate (min, base) & 8e-5, 8e-4 \\
    Global batch size         & 256 \\ 
    Weight decay              & 0.1  \\
    \midrule
    Num Layers  & 24  \\
    Hidden Size & 1024   \\
    Num Heads   & 4  \\
    MLP Width & 2 \\
    \bottomrule 
    \end{tabular}
    \label{tab:gla-training-details}
\end{table}

\end{document}